\def\eqref#1{equation~\ref{#1}}
\def\1{\bm{1}}
\DeclareMathAlphabet{\mathsfit}{\encodingdefault}{\sfdefault}{m}{sl}
\SetMathAlphabet{\mathsfit}{bold}{\encodingdefault}{\sfdefault}{bx}{n}
\DeclareMathOperator*{\argmax}{arg\,max}
\DeclareMathOperator*{\argmin}{arg\,min}
\newif{\ificlr}
\newif{\ifjmlr}
\newif{\ifpreprint}
\newif{\ifparallel}
\appto\appendix{\booltrue{inappendix}}
\newif{\ifShowIndividualAndCmiPreliminary}
\newif{\ifShowViT}
\newif{\ifShowMoreHyperparameters}
\newcommand{\revision}{0}
    \newcommand{\BlackBox}{\rule{1.5ex}{1.5ex}}  
        \renewenvironment{proof}{\par\noindent{\bf Proof\ }}{\hfill\BlackBox\\[2mm]}
        \newenvironment{proof}{\par\noindent{\bf Proof\ }}{\hfill\BlackBox\\[2mm]}
    \newtheorem{theorem}{Theorem}
    \newtheorem{lemma}{Lemma} 
    \newtheorem{proposition}{Proposition} 
    \newtheorem{remark}{Remark}
    \newtheorem{corollary}{Corollary}
    \newtheorem{definition}{Definition}
    \newtheorem{principle}{Principle}
    \newtheorem{insight}{Insight}
\newcommand{\defeq}{\coloneq}
\newcommand{\defto}{\eqcolon}
\newcommand{\set}[1]{\left\{#1\right\}}
\newcommand{\size}[1]{\left|#1\right|}
\newcommand{\abs}[1]{\left|#1\right|}
\newcommand{\norm}[1]{\left\|#1\right\|}
\NewDocumentCommand{\ex}{O{} m}{\mathbb{E}_{#1}\left[#2\right]}
\NewDocumentCommand{\var}{O{} m}{\mathbb{V}_{#1}\left[#2\right]}
\NewDocumentCommand{\prob}{m}{P_{#1}}
\NewDocumentCommand{\coupling}{m O{}}{Q_{#1}^{#2}}
\newcommand{\reals}{\mathbb{R}}
\newcommand{\nats}{\mathbb{N}}
\newcommand{\samples}{\mathcal{Z}}
\newcommand{\weights}{\mathcal{W}}
\newcommand{\weight}{W}
\newcommand{\sample}{Z}
\newcommand{\instanceWeight}{w}
\newcommand{\instanceSample}{z}
\newcommand{\retroWeight}{\breve{W}}
\newcommand{\sgldWeight}{\tilde{W}}
\newcommand{\anyWeight}{\bar{W}}
\newcommand{\instanceSgldWeight}{\tilde{w}}
\newcommand{\sampleDistribution}{\mu}
\newcommand{\trainingSet}{S}
\newcommand{\superSample}{\tilde{S}}
\newcommand{\instanceTrainingSet}{s}
\newcommand{\indices}{U}
\newcommand{\instanceIndices}{u}
\newcommand{\subTestingSet}{\trainingSet_{\indices}}
\newcommand{\instanceSubTestingSet}{\instanceTrainingSet_{\instanceIndices}}
\newcommand{\subTrainingSet}{\trainingSet_{\indices}^c}
\newcommand{\instanceSubTrainingSet}{\instanceTrainingSet_{\instanceIndices}^c}
\newcommand{\algo}{\prob{\weight|\trainingSet}}
\NewDocumentCommand{\gen}{O{\sampleDistribution^n} O{\algo} O{}}{\operatorname{gen}(#1, #2 #3)}
\newcommand{\kl}[2]{\mathcal{D}_{\operatorname{KL}}\left(#1\parallel #2\right)}
\newcommand{\sampleLoss}{\ell}
\newcommand{\loss}{\mathcal{L}}
\NewDocumentCommand{\emploss}{O{\trainingSet} m O{}}{\hat{\loss}_{#1}(#2 #3)}
\NewDocumentCommand{\poploss}{O{\mu} m O{}}{\loss_{#1}(#2 #3)}
\NewDocumentCommand{\empgrad}{O{} O{\trainingSet^{#1}} O{\weight_T^{#1}} O{}}{\nabla \emploss[#2]{#3}{#4}}
\NewDocumentCommand{\popgrad}{O{} O{\mu} O{\weight_T^{#1}} O{}}{\nabla \poploss[#2]{#3}{#4}}
\NewDocumentCommand{\emphessian}{O{\trainingSet}}{\hat{H}_{#1}}
\NewDocumentCommand{\pophessian}{O{\sampleDistribution}}{H_{#1}}
\newcommand{\concat}{\circ}
\NewDocumentCommand{\joint}{O{\mu^n} O{\algo}}{#1 \concat #2}
\NewDocumentCommand{\gaussian}{O{0} O{I}}{\mathcal{N}(#1, #2)}
\newcommand{\trace}[1]{\operatorname*{tr}\left(#1\right)}
\NewDocumentCommand{\wass}{O{} m m}{\mathbb{W}_{#1}\left(#2, #3\right)}
\newcommand{\retroM}{\breve{M}}
\newcommand{\sgldM}{\tilde{M}}
\NewDocumentCommand{\one}{O{d}}{1_d}
\newcommand{\convexOptimization}{\mathcal{C}}
\newcommand{\GDalgo}{\algo^{\text{GD}_{n, \eta, T}}}
\newcommand{\genGD}{\gen[\mu^n][\GDalgo]}
\NewDocumentCommand{\proj}{O{\weights}}{\Pi_{#1}}
\newcommand{\uniform}[1]{U[#1]}
\newcommand{\wI}{I_{\mathbb{W}_2}}
\newcommand{\stability}{\epsilon^{\operatorname{stability}}}
\newcommand{\accumulatedWeight}{\weight^{\operatorname{acc}}}
\newcommand{\flatnessOptionalH}{\modification[\rone]{\tilde{H}_{\operatorname{flat}}}}
\newcommand{\penaltyOptionalH}{\modification[\rone]{\tilde{H}_{\operatorname{pen}}}}
\newcommand{\deltaH}{\Delta H}
\newcommand{\etc}{\textit{etc.}}
\newcommand{\eg}{\textit{e.g.,}}
\newcommand{\ie}{\textit{i.e.,}}
\newcommand{\wrt}{\textit{w.r.t.}}
\newcommand{\rhs}{RHS}
\newcommand{\lhs}{LHS}
\definecolor{myblue}{RGB}{130,176,210}
\definecolor{myred}{RGB}{250,127,111}
\definecolor{mygray}{RGB}{200,200,200}
\newcommand{\unimportant}[1]{\textcolor{mygray}{#1}}
\newcommand{\sep}[1][1]{%
  \ifthenelse{\equal{\currentColor}{myred}}{%
        \def\currentColor{myblue}%
        \def\secondColor{blue}%
      }{%
        \def\currentColor{myred}%
        \def\secondColor{red}%
      }%
  \ifthenelse{\equal{#1}{}}{}{{\color{black}\bf [#1]}}
  \color{\currentColor!60!\secondColor}%
}
\newcommand{\checkPageLimit}[2]{%
  \ifnum\value{page}>#2
    \PackageWarning{PageLimitExceeded}{"#1" exceeds the page limit of #2. Current page: \thepage.}%
  \else
    \typeout{OK at identifier "#1". Current page: \thepage. Limit: #2.}%
  \fi
}
\NewDocumentCommand{\myUnderbrace}{m m}{\smash{\underbrace{#1}_{#2}}\vphantom{#1}}
\NewDocumentCommand{\assumesubgaussianloss}{}{Assume $\ell(w, \cdot)$ is $R$-sub-Gaussian on $\mu$ for any $w \in \weights$}
\crefname{equation}{Eq.}{Eqs.}
\crefname{lemma}{Lemma}{Lemmas}
\crefname{theorem}{Theorem}{Theorems}
\crefname{proposition}{Proposition}{Propositions}
\crefname{corollary}{Corollary}{Corollaries}
\crefname{principle}{Principle}{Principles}
\crefname{insight}{Insight}{Insights}
\crefname{section}{Section}{Sections}
\crefname{definition}{Definition}{Definitions}
\crefname{figure}{Figure}{Figures}
\crefname{table}{Table}{Tables}
\crefname{appendix}{Appendix}{Appendices}
\crefname{remark}{Remark}{Remarks}
\NewDocumentCommand{\lineLegend}{m O{}}{
    \centering
    \begin{tikzpicture}
        \node (0,0) {};
        \draw[#1, thick,#2] (0,0) -- (0.3,0);
    \end{tikzpicture}
}
\NewDocumentCommand{\pointLegend}{m O{}}{
        \centering
        \begin{tikzpicture}
            \node (0,0) {};
            \node[#1,fill=#1,minimum size=0.5em,inner sep=0pt,#2] (0.18,0) {}; 
        \end{tikzpicture}
}
\newcommand{\legendsize}{\tiny}
\newcommand{\rescaled}{}
\NewDocumentCommand{\meaningOfRescaled}{O{}}{}
\NewDocumentCommand{\verticalAxis}{O{0.5in}}{%
    \begin{modified}%
    \raisebox{#1}{%
        \raisebox{-0.11in}{\rotatebox{90}{\,\,\,\tiny Gen. \modification[\rfour]{Gap}/Bound}}
        \raisebox{-0.025in}{\rotatebox{90}{\tiny (Cross Entropy)}}%
    }
    \end{modified}%
}
\NewDocumentCommand{\verticalAxisLeft}{O{0.5in}}{%
    \raisebox{#1}{%
        \raisebox{-0.23in}{\rotatebox{90}{\quad\tiny Gen. \modification[\rfour]{Gap} Bound}}
        \raisebox{-0.075in}{\rotatebox{90}{\tiny (Cross Entropy)}}%
    }
}
\NewDocumentCommand{\verticalRiskAxisLeft}{O{0.5in}}{%
    \raisebox{#1}{%
        \raisebox{-0.23in}{\rotatebox{90}{\quad\tiny Gen. \modification[\rfour]{Risk} Bound}}
        \raisebox{-0.05in}{\rotatebox{90}{\tiny (Cross Entropy)}}%
    }
}
\NewDocumentCommand{\verticalAxisRight}{O{1in}}{%
    \begin{modified}[\Bnc]%
    \raisebox{#1}{%
        \raisebox{-0.0in}{\rotatebox{-90}{\tiny \modification[\rfour]{(Cross Entropy)}}}
        \raisebox{0.075in}{\rotatebox{-90}{\tiny Estimated Gen. \modification[\rfour]{Gap}}}%
    }
    \end{modified}%
}
\newcommand{\trajLegends}{%
\bgroup
\def\arraystretch{0.2}
\setlength\tabcolsep{0.1em}
\begin{tabular}{clclclcl}
    \lineLegend{myblue} & \legendsize Loss Contour &
    \pointLegend{}[circle,minimum size=0.15em] & \legendsize  SGD Output Weight &
\end{tabular}%
\egroup
}
\renewcommand{\unimportant}[1]{}
\newcommand{\CXNU}{orange}
\newcommand{\rone}{\CXNU}
\newcommand{\UMN}{Cerulean}
\newcommand{\Bnc}{ForestGreen}
\newcommand{\aqMJ}{rgb,255:red,128;green,0;blue,255}
\newcommand{\rfour}{\aqMJ}
\NewDocumentCommand{\modification}{O{red} O{1} m}{{\ifthenelse{\equal{#2}{\revision}}{\color{#1}}{}#3}}
\NewDocumentEnvironment{modified}{O{red} O{1}}{%
    \ifthenelse{\equal{#2}{\revision}}{\bgroup \color{#1}}{\bgroup}%
}{%
    \egroup%
}
\newcommand{\resnetPrefix}{/test}
\newcommand{\resnetOneVariablePrefix}{}
\newcommand{\mnistPrefix}{/test}
\newcommand{\mnistOneVariablePrefix}{}
\newcommand{\opabs}[1]{#1}
\def\ReplaceStr#1{%
  \IfSubStr{#1}{XY}{%
    \StrSubstitute{#1}{XY}{$\to$}}{#1}}
\newcommand{\symbollist}{\dagger,\ddagger,\lozenge}
\def\usedsymbols{}
\def\affiliations{}
\newcommand{\affiliation}[2]{%
    \def\affilmark{}%
    \foreach \x in \symbollist {%
        \ifx\affilmark\empty
            \IfSubStr{\usedsymbols}{\x}{}{
                \xdef\affilmark{\x}
                \xdef\usedsymbols{\usedsymbols,\x}
            }%
        \fi%
    }%
    \expandafter\xdef\csname affiliation#1\endcsname{$^{\affilmark}$}%
    \xdef\affiliationItem{$^{\affilmark}$ & #2}%
    \ifx\affiliations\empty
        \edef\affiliations{\affiliationItem}%
    \else
        \edef\affiliations{\affiliations\\\affiliationItem}%
    \fi
}
\NewDocumentCommand{\redhighlight}{m}{\textcolor{red}{#1}}
\NewDocumentCommand{\bluehighlight}{m}{\textcolor{blue}{#1}}
\title{Leveraging Flatness to Improve Information-Theoretic Generalization Bounds for SGD}
\author{%
Ze Peng \& Jian Zhang \\
State Key Laboratory for Novel Software Technology,\\
Nanjing University,
Nanjing, China \\
\texttt{\{\href{mailto:pengze@smail.nju.edu.cn}{pengze}, \href{mailto:zhangjian7369@smail.nju.edu.cn}{zhangjian7369}\}@smail.nju.edu.cn} \\
\And
Yisen Wang\\
School of Intelligence Science and Technology,\\
Peking University,
Beijing, China \\
\texttt{\href{mailto:yisen.wang@pku.edu.cn}{yisen.wang@pku.edu.cn}}\\
\And
Lei Qi \\
School of Computer Science and Engineering, \\
Southeast University,
Nanjing, China \\
\texttt{\href{mailto:qilei@seu.edu.cn}{qilei@seu.edu.cn}} \\
\And
Yinghuan Shi\thanks{Corresponding author.}  \, \& Yang Gao\\
State Key Laboratory for Novel Software Technology, \\
Nanjing University,
Nanjing, China \\
\texttt{\{\href{mailto:syh@nju.edu.cn}{syh}, \href{mailto:gaoy@nju.edu.cn}{gaoy}\}@nju.edu.cn}
}
\affiliation{nju}{State Key Laboratory for Novel Software Technology, Nanjing University}
\affiliation{pku}{State Key Lab of General Artificial Intelligence, \\ & School of Intelligence Science and Technology, Peking University}
\affiliation{seu}{School of Computer Science and Engineering, Southeast University}
\begin{document}

\maketitle

\begin{abstract}
    Information-theoretic (IT) generalization bounds have been used to study the generalization of learning algorithms. 
        These bounds are intrinsically data- and algorithm-dependent so that one can exploit the properties of data and algorithm to derive tighter bounds.
    However, we observe that although the flatness bias is crucial for SGD’s generalization, these bounds fail to capture the improved generalization under better flatness and are also numerically loose.
    This is caused by the inadequate leverage of SGD's flatness bias in existing IT bounds.
    This paper derives a more flatness-leveraging IT bound for the flatness-favoring SGD.
        The bound indicates the learned models generalize better if the large-variance directions of the final weight covariance have small local curvatures in the loss landscape.
        Experiments on deep neural networks show our bound not only correctly reflects the better generalization when flatness is improved, but is also numerically much tighter. 
    This is achieved by a flexible technique called ``omniscient trajectory''.
    When applied to Gradient Descent's minimax excess risk on convex-Lipschitz-Bounded problems, it improves representative IT bounds' $\Omega(1)$ rates to $O(1/\sqrt{n})$. It also implies a by-pass of memorization-generalization trade-offs.
    \footnote{Codes are available at \url{https://github.com/peng-ze/omniscient-bounds}.} 
\end{abstract}

\ifjmlr
    \begin{keywords}
    information theory, implicit bias,  deep learning theory, generalization theory
\end{keywords}
\fi

\section{Introduction}\label{sec:intro}

Over-parameterized deep models trained by Stochastic Gradient Descent (SGD) are observed to generalize well, contradicting classic statistical learning theories that over-parameterization leads to overfitting \citep{VC_dimension,rademacher,covering_number}.
The drawback of these theories is \modification[\rfour]{that} they are too general and unable to leverage the specific properties of learning algorithms and data \citep{uniform_convergence_failure}.
Therefore, modern learning theories have turned to data- and algorithm-dependent bounds that leverage the properties of data and popular algorithms (\eg{} the limited hypothesis subset reached by SGD, various norms of matrix weights, low-rankness and sparsity of parameters or hidden representations, \etc{}) to derive tighter bounds specific to them \citep{generalization_on_linearly_separable,allen-zhu_learning_2019,arora_fine-grained_2019,cao_generalization_2019,neyshabur_role_2019,pesme_implicit_2021,muthukumar_sparsity-aware_2023,alquier_user-friendly_2023}.

Recently, generalization bounds have been developed using information-theoretic measures \citep{survey_hellstrom_generalization_2024}, because these measures are defined with the data distribution and the conditional distributions of the output hypothesis given the training data and are naturally data- and algorithm-dependent. Representative examples include the PAC-Bayesian bounds \citep{pac_bayesian} and the bounds using mutual information (MI) between the training data and the output of the algorithm \citep{bias_russo,xu_information-theoretic_2017}.
Thanks to the dependence, PAC-Bayesian approaches have led to the first non-vacuous numerical generalization bound for deep networks \citep{dziugaite_computing_2017}, later tightened \citep{perez-ortiz_tighter_2021} and scaled up \citep{zhou_non-vacuous_2018,lotfi_pac-bayes_2022,lotfi2024nonvacuousgeneralizationboundslarge}.
MI bounds, the focus of this paper, can be seen as PAC-Bayesian bounds with optimal priors \citep{alquier_user-friendly_2023}.
Tighter variants of the MI bound have been developed, \eg{} its chaining variants \citep{asadi_chaining_2018}, its individual-sample variants \citep{individual_technique}, the conditional MI framework (CMI) \citep{steinke_CMI}, the evaluated variants \modification[\rone]{\citep{harutyunyan_information-theoretic_2021,hellstrom2022a,wang_tighter_2023}}, and bounds using other measures like rate-distortion \citep{sefidgaran_rate-distortion_2022} and Wasserstein distance \citep{wang_information-theoretic_2019}.
By bounding the measures for specific algorithms, they have been applied to SGLD \citep{pensia_generalization_2018,negrea_data-dependent_prior,wang_analyzing_2021,futami_time-independent_2023}, discretized SDE \citep{wang_two_2023}, and SGD \citep{neu_information-theoretic_2021,wang_generalization_2021}.

Another line of research on algorithmic properties finds SGD favors flat minima \citep{flat_minima,on_large-batch_training}.
Flat minima are minima at wide and flat basins of the loss landscape, and they are robust to loss landscape changes between the training and testing set. 
Consequently, flatness has been used to understand and improve the generalization of SGD-trained deep models \citep{achille_emergence_2018,jiang_fantastic_2019,SAM,cha_swad_2021,zhao_penalizing_2022}, where it is formulated with the Hessians \modification[\rfour]{$\emphessian, \pophessian$} of empirical and population losses \citep{achille_emergence_2018,orvieto_anticorrelated_2022}.
Therefore, as algorithm-dependent bounds, information-theoretic bounds for SGD should leverage the flatness as an important algorithmic property.

\newcommand{\legendWidth}{1.0}
\NewDocumentCommand{\experimentsOfExistingBound}{m m m m m O{0.24} O{} O{} m}{
    \begin{figure}%
        \centering%
        \def\width{#6}%
        \def\dataName{#5}%
        \def\datasetName{#4}%
        \def\fullBoundName{#1}%
        \StrSubstitute{\fullBoundName}{_population_Hessian}{}[\boundName]%
        \def\resultName{#2}%
        \def\archName{#3}%
        \def\internalArchName{#9}

        \ifthenelse{\equal{\internalArchName}{resnet}}{
            \def\imagePrefix{\resnetPrefix}
        }{
            \def\imagePrefix{\mnistPrefix}
        }

        \ifbool{inappendix}{%
            \includegraphics[width=\legendWidth\linewidth,trim={0cm 2.5cm 0cm 2.5cm},clip]{pic/experiments/lrbs\imagePrefix/\dataName_\internalArchName_\boundName_vanilla_bound_legend.pdf}%
        }{%
            \includegraphics[width=0.42\linewidth,trim={4cm 5.4cm 4cm 2.5cm},clip]{pic/experiments/lrbs\imagePrefix/\dataName_\internalArchName_\boundName_vanilla_bound_legend.pdf}
            \includegraphics[width=0.5\linewidth,trim={0cm 2.5cm 0cm 5.3cm},clip]{pic/experiments/lrbs\imagePrefix/\dataName_\internalArchName_\boundName_vanilla_bound_legend.pdf}%
        }%
        \\
        \centering
        \verticalAxisLeft[0.5in]
        \begin{subfigure}{\width\linewidth}%
            \centering%
            \includegraphics[width=\linewidth]{pic/experiments/lrbs\imagePrefix/\dataName_\internalArchName_\boundName_vanilla_bound.pdf}%
            \caption{Bound\rescaled{}}\label{fig:\dataName_\internalArchName_\fullBoundName_bound}
        \end{subfigure}
        \begin{subfigure}{\width\linewidth}
            \centering
            \includegraphics[width=\linewidth]{pic/experiments/lrbs\imagePrefix/\dataName_\internalArchName_\boundName_vanilla_trajectory.pdf}
            \caption{Trajectory Term\rescaled{}}\label{fig:\dataName_\internalArchName_\fullBoundName_trajectory_term}
        \end{subfigure}
        \begin{subfigure}{\width\linewidth}
            \centering
            \includegraphics[width=\linewidth]{pic/experiments/lrbs\imagePrefix/\dataName_\internalArchName_\boundName_vanilla_flatness.pdf}
            \caption{Flatness Term\rescaled{}}\label{fig:\dataName_\internalArchName_\fullBoundName_flatness_term}
        \end{subfigure}%
        \verticalAxisRight[0.9in]
        \caption{\resultName{} for \archName{} on \datasetName{} under varied flatness. \meaningOfRescaled[#7] #8}\label{fig:\dataName_\internalArchName_\fullBoundName_results}
    \end{figure}%
}
\bgroup
\renewcommand{\legendWidth}{0.6}
\experimentsOfExistingBound{gradient_dispersion}{\citet{wang_generalization_2021}'s bound}{ResNet-18}{CIFAR-10}{cifar10}[0.2][ for easier tendency comparison]{resnet}%
\egroup

However, existing information-theoretic bounds for SGD do not adequately leverage the flatness bias.
By controlling flatness through varying learning rate and batch size \citep{jastrzebski_three_2018,he_control_2019,wu_alignment_2022}, we empirically observe how the generalization under varying flatness is captured by \citet{wang_generalization_2021}'s bound (which is tighter than \citet{neu_information-theoretic_2021}'s)
\begin{align}
    \text{[Generalization Error]} \le \inf_{\sigma > 0} \sqrt{\frac{2 R^2}{n \sigma^2}\smash[b]{\underbrace{\sum_{t \in [T]} \var{g_t}}_{\text{\tiny Trajectory Term}}} \vphantom{\sum_{t \in [T]}}} + \frac{\sigma^2}{2} \cdot \underbrace{T \cdot \opabs{\operatorname*{tr} \ex{\emphessian - \pophessian}}}_{\text{Flatness Term}}, \vphantom{\underbrace{ \sum_i}_{\text{\scalebox{0.1}{\tiny Trajectory Term}}}}
\end{align}
\modification[\rfour]{with $g_t$ being the update at step $t$.}
As shown in \cref{fig:cifar10_resnet_gradient_dispersion_bound}, as batch size decreases, the actual generalization error decreases while \citet{wang_generalization_2021}'s bound increases. 
\ifShowViT
    Similar misalignment can be found for VisionTransformer (ViT, \citet{dosovitskiy2021an}) (see \cref{fig:cifar10_vit_lrbs}).
\fi
That is, their result misaligns with the true generalization under varied flatness. 
The bound consists of two terms: the trajectory term and the flatness term.
\cref{fig:cifar10_resnet_gradient_dispersion_flatness_term,fig:cifar10_resnet_gradient_dispersion_trajectory_term} show the flatness term can capture the generalization to some extent while the trajectory term cannot, causing the misalignment. 
Unlike the flatness term that depends on Hessians, which are the explicit formulation of flatness, the trajectory term depends on gradient variance that is an implicit measure of flatness and requires extra conditions (\eg{} near-zero losses) to approximate the Hessian traces \citep{zhu_anisotropic_2019,martens_new_2020,feng_activityweight_2023}.
Therefore, we postulate it is the trajectory term's implicit dependence on flatness that causes the misalignment, and we intend to make the whole bound fully explicitly depend on and leverage flatness.
To this end, Proposition 8 of \citet{neu_information-theoretic_2021} (see \cref{lemma:neu}) is a good start, which involves finer-grained optimization and depends on more algorithmic properties, potentially including flatness. 
It is proved by a technique of auxiliary trajectory that is constructed by randomly perturbing the SGD trajectory with \emph{independent} Gaussian noises of covariance $\Sigma$.
The bound optimizes $\Sigma$ as a parameter and takes the form
\begin{align}
    \text{[Generalization Error]} \le \inf_{\Sigma} \ex[\weight]{f(\weight; \Sigma)},
\end{align}
\ie{} a mixture of an optimization over $\Sigma$ and an expectation over output weight $\weight$.
By constructing $\Sigma$ from the Hessians of empirical and population losses, the bound will fully explicitly depend on the flatness.
However, we find the bound still has two drawbacks: 
Firstly, it is suboptimal because the optimization is outside the expectation and cannot depend on or leverage the specific properties of $\weight$ instances.
Secondly, accurate estimation is crucial for evaluating, comparing bounds and helping algorithm design.
Yet, the bound is not easy to estimate:
It is similar to optimizing the ``population risk'' of ``sample'' $\weight$ over ``hypothesis'' $\Sigma$, which requires sampling some $\weight$ and minimizing the ``empirical risk'' (ERM). Since only a few $\weight$ can be sampled due to the computation and data cost of training deep models, the ``ERM'' is prone to ``overfitting'', \ie{} negative bias in estimation.

Both issues can be addressed by moving the optimization inside the expectation (\ie{} bounds like $\ex[\weight]{\inf_{\Sigma} f(\weight; \Sigma)}$) so that the optimization depends on $W$: 
The inside optimization can leverage the specific properties (\eg{} population Hessian instance) of $\weight$, leading to a tighter bound; moreover, the empirical mean $\frac{1}{k} \sum_{i=1}^k \inf_{\Sigma^i} f(\weight^i; \Sigma^i)$ of the inside (instance-overfitting) optimization is an unbiased estimator and overfitting is no longer a source of negative bias in estimations.
To implement this interchange, we find it is the independence of the Gaussian auxiliary perturbation that makes $\Sigma$ the same when conditioned on any $\weight$ instance, \ie{} $\Sigma$ cannot adapt to $\weight$ instances and unable to leverage their specific properties.
Therefore, we extend the auxiliary trajectory technique by building it from an auxiliary perturbation that is no longer independent but at least depends on $\weight$. This is equivalent to moving the optimization inside (check $\inf_{\Sigma(\instanceWeight)} \ex{f(\weight; \Sigma(\weight))} = \ex{\inf_{\Sigma} f(\weight; \Sigma)}$).
Pushing this insight further, we make the perturbation depend on \emph{all} random variables (\eg{} the training data) in the SGD training process (\eg{} to leverage the instance empirical Hessian that depends on training data), leading to the ``omniscient trajectory''\footnote{\modification[\Bnc]{The above anisotropic Gaussian noises with covariance $\Sigma$ are only an example of auxiliary trajectory and \cref{theorem:omniscient_trajectory} generalizes the technique to more general auxiliary trajectories. Therefore, note the technique is about neither anisotropic noises \citep{neu_information-theoretic_2021,wang_two_2023} nor general noises \citep{sefidgaran_rate-distortion_2022}, but the auxiliary trajectory's extra dependence on all randomness and the instance-level optimization.}}, which leads to tighter bounds if well optimized.

The technique yields a tighter generalization error bound for SGD that explicitly and fully depends on the flatness of the SGD output instance. 
Intuitively, it indicates the algorithm generalizes well when the output weights are flat and the flatness aligns with the covariance of output weights. 
Here, the covariance of output weights is computed after independently training multiple models, and the alignment means that the variance is low along sharp directions and high along flat directions, as illustrated in \cref{fig:aligned,fig:misaligned}.
\modification[\rone]{As discussed in \cref{sec:analysis}, our alignment better leverages the flatness compared to similar notions of alignments \citep{wang_two_2023,wang2021optimizing}.}
The bound is evaluated on ResNet-18 trained by CIFAR-10.
When varying batch size, it aligns well with the actual generalization error, indicating better exploitation of flatness.
Moreover, our bound is numerically tight by being only a few percentages looser than the truth across hyperparameters.

Our omniscient trajectory technique has a simple nature, namely, interchanging the order of the expectation and optimization.
Thanks to the simplicity, our technique can be flexibly combined with many existing techniques. 
Furthermore, recent works \citep{livni2024information,haghifam_limitations_2023,attias_information_2024} have highlighted the limitations of information-theoretic bounds for having an $\Omega(1)$ lower-bound for GD or any accurate learners on some convex-Lipschitz-Bounded (CLB) problems. It can also be seen as an information-accuracy trade-off highlighting the complex relationship between memorization and generalization.
We find our simple and flexible technique yields an $O(1/\sqrt{n})$ minimax rate for GD on CLB problems.
Therefore, despite being simple, it provides asymptotic improvements and addresses a significant limitation of existing information-theoretic generalization theory. It also implies a by-pass of the trade-off: although accurate learners themselves memorize a lot, they are quite close to some oracle learners that memorize little. 

Our contributions are summarized as follows:
1) We derive an information-theoretic generalization bound for SGD that better leverages its flatness bias and is numerically tighter;
2) our bound shows how the direction of flatness affects generalization;
3) we introduce a flexible omniscient trajectory technique that also 4) yields an $O(1/\sqrt{n})$ information-theoretic bound for GD on CLB problems.

\section{Preliminary}\label{sec:preliminary}
We first introduce basic notations.
To present existing information-theoretic bounds for SGD and discuss important insights behind them for our (re)use in \cref{sec:pre_auxiliaries}, we first introduce the algorithm of interest in \cref{sec:pre_algorithm} and the formulation and properties of flatness in \cref{sec:pre_flatness}.
For $k \in \nats^+$, let $[k] \defeq \set{1, 2, \dots, k}$.
For sequence $a_0, a_{1}, \dots$, let $a_{l:r} \defeq (a_i)_{i=l}^r$ and let $a_{-i}$ denote the rest after excluding $a_i$. 
For vector $x \in \reals^k$ and matrix $A \in \reals^{k \times k}$, let $\norm{x} \defeq \sqrt{x^\top x}$, and, with an abuse, $\norm{x}_{A}^2 \defeq x^\top A x$.
Random variables, realizations, and domains are denoted by capital, lowercase, and calligraphic letters, respectively.
For example, $Z$ is a random sample, $\samples$ is the sample space, and $\instanceSample$ is a realization. Let $\mu$ over $\samples$ be the sample distribution.
Let $X'$ denote an I.I.D. copy of random variable $X$.
Let $d$ be the number of model parameters and $\weights \subseteq \reals^{d}$ be the hypothesis space. If not specified otherwise, assume $\weights = \reals^d$.
Let $\trainingSet \defeq (\sample_1, \dots, \sample_n) \sim \mu^n$ be the training set of $n$ I.I.D. samples.
A stochastic learning algorithm is formulated as conditional distribution $\algo$. 


Let $\ell: \weights \times \samples \to \reals$ be a loss function.
The ultimate goal of the learning algorithm is to optimize the \emph{population risk} $\poploss{\instanceWeight}[,\ell] \defeq \ex[\sample \sim \mu]{\ell(\instanceWeight, \sample)}$ over $\instanceWeight \in \weights$.
Since $\mu$ is not fully accessible, one uses \emph{empirical risk minimization} (ERM) by sampling a training set $\trainingSet \in \samples^n$ from $\mu$ and optimizing the \emph{empirical risk} defined by
$
    \emploss[\instanceTrainingSet]{\instanceWeight}[,\ell] \defeq \sum_{i=1}^{\size{\instanceTrainingSet}} \ell(\instanceWeight, \instanceSample_i) / \size{s}.
$
The difference in expectation over training sets and algorithmic randomness is the \emph{(expected) generalization error \modification[\rfour]{(gap)}}
$
    \gen[\mu^n][\algo][,\ell] \defeq \ex[(\trainingSet, \weight) \sim \joint]{\poploss{\weight}[,\ell] - \emploss{\weight}[,\ell]},
$
where $\joint$ denotes the joint distribution determined by $\mu^n$ and $\algo$. 
We may omit the loss function $\ell$ when it is clear from the context.
Let the loss difference under transnational perturbation $\gamma \in \reals^d$ and Gaussian perturbation $\xi$ with covariance $\Sigma \in \reals^{d \times d}$ be
$
    \Delta_{\gamma}^{\Sigma}(\instanceWeight, \instanceTrainingSet) 
    \defeq \ex[\xi \sim \gaussian[0][\Sigma]]{\emploss[\instanceTrainingSet]{\instanceWeight + \gamma + \xi} - \emploss[\instanceTrainingSet]{\instanceWeight}}.
$
We write $\sigma^2$ instead of $\Sigma$ in the superscript if $\Sigma = \sigma^2 I$, and omit $\gamma$ or $\Sigma$ if they are zero.

\subsection{Iterative Stochastic Algorithms}\label{sec:pre_algorithm}

To facilitate analysis, we assume an abstract form of iterative stochastic algorithms to hide unnecessary details and improve generality. We assume the algorithm first prepares an independent random variable $V$ for internal randomness, then starts from an independent initial hypothesis $\weight_0 \in \weights$ and updates the hypothesis iteratively for $T \in \nats^+$ steps by
$
    \weight_{t} \defeq \weight_{t-1} - g_t(\weight_{t-1}, \trainingSet, V, \weight_{0:t-2}) \in \weights \label{def:stochastic_algo}
$
for $t \in [T]$, and finally outputs $\weight \defeq \weight_T$. Here, $g_t$ is a deterministic function.
The algorithm specifies a random process $\weight_{0:T}$, referred to as the \emph{original trajectory}.
We may omit $g_t$'s dependence on $(\weight_{t-1}, \trainingSet, V, \weight_{0:t-2})$ to save space.
SGD with batch size $b$ can be obtained by first generating the indices $B_{1:T} \in ([n]^b)^T$ for each mini-batch, then saving them in $V$ and finally letting $g_t$ compute the gradients in the mini-batch defined by $B_t$. \modification[\Bnc]{Lastly, with access to past weights $W_{0: t-2}$ and all randomnesses $V$, $g_t$ can recover past gradients and covers momentum or Adam, \etc{}} \modification[\rone]{As a result, our theoretical results can be directly applied to these algorithms.}

\subsection{Flatness}\label{sec:pre_flatness}

The flatness at $\instanceWeight \in \weights$ is formulated by the Hessians \modification[\rone]{$\emphessian(\instanceWeight)$ and $\pophessian(\instanceWeight)$} of the empirical and population losses, respectively.
Empirically, the flatness of the empirical loss is highly anisotropic for deep models: after SGD training, \emph{most} empirical Hessian eigenvectors have small eigenvalues while only the rest \emph{few} have large eigenvalues \citep{sagun_empirical_2018,papyan_full_2019}. We distinguish these eigenvectors by ``flat'' versus ``sharp'' directions \citep{jastrzebski_relation_2019,wu_alignment_2022}. 
Perturbations in the weight space along the sharp directions cause large loss changes, while those along the flat directions cause only slight loss changes.

\subsection{Information-Theoretic Bounds and Application on SGD}\label{sec:IT_bounds}\label{sec:pre_auxiliaries}

A random variable $X$ is $R$-sub-Gaussian if $\ex{e^{\lambda (X - \ex{X})}} \le e^{\lambda^2 R^2/2}$ for any $\lambda \in \reals$. 
Let $I(A; B)$ be the mutual information (MI) between a pair of random variables $(A, B)$
\citep{Cover2006}.
The MI between the training set and the output weight can bound the generalization error: 
\begin{lemma}[\citet{xu_information-theoretic_2017}]\label{lemma:MI_bound}
    \assumesubgaussianloss. The generalization error of $\algo$ on $\mu$ is bounded by
    $
        \opabs{\gen} \le \sqrt{2 R^2 I(\weight; \trainingSet) / n}.
    $
\end{lemma}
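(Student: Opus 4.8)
The plan is to prove this through the standard ``decoupling'' estimate between the joint law $\joint$ of $(\trainingSet,\weight)$ and the product of its marginals $\prob{\weight}\otimes\mu^n$. First I would note that for a \emph{fixed} hypothesis $w$ the empirical risk $\emploss[\trainingSet]{w}=\frac1n\sum_{i=1}^n\ell(w,Z_i)$ is an unbiased estimate of $\poploss{w}$ when $\trainingSet\sim\mu^n$ is drawn afresh, so $\ex[\prob{\weight}\otimes\mu^n]{\emploss{\weight}}=\ex[\weight\sim\prob{\weight}]{\poploss{\weight}}$. Consequently $\gen=\ex[\prob{\weight}\otimes\mu^n]{\emploss{\weight}}-\ex[\joint]{\emploss{\weight}}$, and the problem reduces to bounding the difference of the expectations of the single function $(w,s)\mapsto\emploss[s]{w}$ under the coupled versus the decoupled distribution.

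Second I would isolate the key lemma: if $g(X,Y)$ is such that $g(\bar X,\bar Y)$ is $\sigma$-sub-Gaussian when $(\bar X,\bar Y)\sim P_X\otimes P_Y$, then $\abs{\ex[P_{X,Y}]{g}-\ex[P_X\otimes P_Y]{g}}\le\sqrt{2\sigma^2 I(X;Y)}$. This follows from the Donsker--Varadhan variational formula $\kl{P}{Q}=\sup_h\bigl(\ex[P]{h}-\log\ex[Q]{e^h}\bigr)$ applied with $P=P_{X,Y}$ and $Q=P_X\otimes P_Y$, for which $\kl{P}{Q}=I(X;Y)$, and test functions $h=\lambda g$: the sub-Gaussian hypothesis gives $\log\ex[Q]{e^{\lambda g}}\le\lambda\ex[Q]{g}+\lambda^2\sigma^2/2$, hence $\lambda\bigl(\ex[P]{g}-\ex[Q]{g}\bigr)-\lambda^2\sigma^2/2\le I(X;Y)$ for every $\lambda\in\reals$; optimizing this quadratic over $\lambda$, and repeating with $-\lambda$ for the opposite sign, yields the two-sided bound.

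Finally I would apply the lemma with $X=\weight$, $Y=\trainingSet$, and $g(w,s)=\emploss[s]{w}$. Under $\prob{\weight}\otimes\mu^n$ one can condition on $\weight$, since there $\weight\perp\trainingSet$; for each fixed value $w$ the quantity $\emploss[\trainingSet]{w}$ is an average of $n$ i.i.d.\ $R$-sub-Gaussian random variables $\ell(w,Z_i)$, hence $(R/\sqrt n)$-sub-Gaussian, and this holds uniformly in $w$, so $g(\weight,\trainingSet)$ is $(R/\sqrt n)$-sub-Gaussian under the product law. Substituting $\sigma=R/\sqrt n$ into the lemma gives $\abs{\gen}\le\sqrt{2R^2 I(\weight;\trainingSet)/n}$. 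There is no deep obstacle here: the Donsker--Varadhan step is routine; the only point requiring care is the last one — correctly transferring per-$w$ sub-Gaussianity of $\ell(w,\cdot)$ to sub-Gaussianity of $g(\weight,\trainingSet)$ under the decoupled measure and tracking the $1/\sqrt n$ variance reduction coming from the i.i.d.\ averaging, which is exactly what produces the $1/n$ inside the square root.
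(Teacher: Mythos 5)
There is a genuine gap in the final step, in the transfer of per-$w$ sub-Gaussianity to sub-Gaussianity of $g(\weight,\trainingSet)=\emploss[\trainingSet]{\weight}$ under the product measure. Conditional on $\weight=w$ the random variable $\emploss[\trainingSet]{w}$ is indeed $(R/\sqrt n)$-sub-Gaussian, but it is centered at the $w$-dependent conditional mean $\poploss{w}$. Sub-Gaussianity of the unconditional random variable $g(\weight,\trainingSet)$ requires controlling $\ex[Q]{e^{\lambda(g-\ex[Q]{g})}}$, and here $\ex[Q]{g}=\ex[\weight]{\poploss{\weight}}$ is the \emph{marginal} mean. Writing
\begin{align}
\ex[Q]{e^{\lambda(g-\ex[Q]{g})}} \le e^{\lambda^2R^2/2n}\cdot\ex[\weight]{e^{\lambda(\poploss{\weight}-\ex{\poploss{\weight}})}},
\end{align}
the second factor is $\ge 1$ by Jensen and can be arbitrarily large (or even infinite) if the fluctuation of $\poploss{\weight}$ over $\weight$ is heavy-tailed. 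The assumption made in the lemma controls $\ell(w,\cdot)$ for each fixed $w$; it says nothing about the tails of $\poploss{\weight}$ as a function of $\weight$. So the claim ``this holds uniformly in $w$, so $g$ is $(R/\sqrt n)$-sub-Gaussian under the product law'' does not follow, and your intermediate lemma cannot be applied to $g$ as written.

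The repair is small and standard: replace $g(w,s)=\emploss[s]{w}$ by the \emph{centered} function $g(w,s)=\emploss[s]{w}-\poploss{w}$. Under the product measure this has conditional mean zero for every $w$, so the per-$w$ bound $\ex[\bar\trainingSet]{e^{\lambda g(w,\bar\trainingSet)}}\le e^{\lambda^2 R^2/2n}$ integrates over $w$ to give $\ex[Q]{e^{\lambda g}}\le e^{\lambda^2R^2/2n}$, i.e.\ $g$ is genuinely $(R/\sqrt n)$-sub-Gaussian under $Q$. Meanwhile $\ex[Q]{g}=0$ and $\ex[P]{g}=\ex{\emploss{\weight}}-\ex{\poploss{\weight}}=-\gen$, so the Donsker--Varadhan step you describe, applied verbatim to the centered $g$, yields $\abs{\gen}\le\sqrt{2R^2 I(\weight;\trainingSet)/n}$. (Equivalently, keep your original $g$ but put the compensator $-\lambda\poploss{w}$ into the Donsker--Varadhan test function; since it depends on $w$ alone it has the same expectation under $P$ and $Q$.) With this one change your argument is correct and is the standard proof; the rest of the plan — the decoupling observation, the variational inequality, and the $1/\sqrt n$ variance reduction from i.i.d.\ averaging — is all right.
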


\ifShowIndividualAndCmiPreliminary
    Two tighter variants of \cref{lemma:MI_bound} include the individual-sample MI bound \citep{individual_technique}
    \begin{align}
        \opabs{\gen} \le \frac{R}{n} \sum_{i=1}^n \sqrt{2 I(\weight; \sample_i)} \le \frac{R}{n} \sum_{i=1}^n \sqrt{2 I(\weight; \sample_i \mid \sample_{-i})},
    \end{align}
    and the conditional mutual information (CMI) framework \citep{steinke_CMI}
    \begin{align}
        \opabs{\gen} \le \sqrt{\frac{8R^2}{n} I(\weight; \indices \mid \superSample)} \le \sqrt{8 R^2 \log 2}.
    \end{align}
\fi


MI is hard to compute for SGD and one must bound it to apply \cref{lemma:MI_bound}.
However, directly bounding MI on SGD is not enough because the MI of SGD can be infinite \citep{survey_hellstrom_generalization_2024}, leading to a vacuous generalization bound.
\citet{neu_information-theoretic_2021} propose auxiliary trajectories to address this problem.
Auxiliary trajectories are perturbed versions of the original trajectory, aiming to decrease the MI between the training set and the output weight. 
If well designed, the larger the perturbation is, the more the MI decreases.
It is often easier to bound the MI of the auxiliary trajectory, which finally becomes the ``trajectory term'' in the bound.  
Given the auxiliary trajectory, to transform the bound of it into a bound for the original trajectory, one must pay a penalty term of the loss difference between their output weights.
The penalty comprises of several instances of $\Delta_\gamma^\Sigma$ terms, which are approximated by Taylor expansion to 
$
    \Delta_\gamma^\Sigma(\instanceWeight, \instanceTrainingSet) \approx \ex{\gamma^\top \nabla \emploss[\instanceTrainingSet]{\instanceWeight} + (\gamma + \xi)^\top \times \modification[\rone]{\emphessian[\instanceTrainingSet](\instanceWeight)} \times (\gamma + \xi) / 2}.
$
As a result, the penalty is related to the flatness: It will be very large if the perturbation $(\gamma + \xi)$ has large projections onto the top eigenvectors of $\emphessian[\instanceTrainingSet]$, \ie{} the sharp directions, while it is safe to have large projections onto the flat directions.
Trading off between the trajectory and penalty terms, we have \cref{insight:auxiliary} for designing auxiliary perturbation and trajectories.
\begin{insight}\label{insight:auxiliary}
    The ideal perturbation for the auxiliary trajectory should have large projections onto flat directions to reduce MI and the trajectory term while maintaining small projections onto sharp directions to keep the penalty term small.
\end{insight}

We now present the existing information-theoretic bounds for SGD with the help of auxiliary trajectories. \footnote{They are slightly modified to have simpler, similar and more comparable forms to each other. These modifications only make the bounds tighter, and will not cause unfair comparisons.}
\citet{neu_information-theoretic_2021} propose isotropic Gaussian as the perturbation for the auxiliary trajectory:
\begin{definition}[SGLD-Like Trajectory]\label{def:sgld_trajectory}
    The SGLD-like trajectory $\sgldWeight_{0:T}$ of any trajectory $\anyWeight_{0:T}$ is
    \begin{align}
        \sgldWeight_0 \defeq \anyWeight_0, \quad
        \sgldWeight_{t} \defeq \sgldWeight_{t-1} +  (\anyWeight_{t} - \anyWeight_{t-1}) + N_t,
    \end{align}
    where $N_t \sim \gaussian[0][\sigma_t^2 I]$ is an \emph{independent} isotropic Gaussian noise with $\sigma_t > 0$. 
\end{definition}
After building the SGLD-like auxiliary trajectory of the original trajectory, \citet{neu_information-theoretic_2021} exploit the properties of Gaussian noises to bound the MI of the SGLD-like output weight. 
Based on the same SGLD-like trajectory, \citet{wang_generalization_2021} improve the technique for bounding MI, providing a numerically tighter result in \cref{lemma:wang}.
\begin{proposition}[Theorem 2 of \citet{wang_generalization_2021}%
]\label{lemma:wang}
    SGD's generalization error is bounded by
    \begin{align}
        \gen \le \sqrt{\frac{R^2}{n} \smash{\underbrace{\sum_{t=1}^T \frac{1}{\sigma_t^2} \ex{\norm{g_t - \ex{g_t}}^2}}_{\text{MI bound (Trajectory Term)}}} \vphantom{\sum_{t=1}^T}} + \underbrace{\opabs{\ex{\Delta^{\sum_t \sigma_t^2}(\weight_T, \trainingSet) - \Delta^{\sum_t \sigma_t^2}(\weight_T, \trainingSet')}}}_{\text{Penalty for the SGLD-like trajectory (Flatness Term)}},
    \end{align}
    where the expectations are over the randomness of training set sampling, initialization, and $V$.
\end{proposition}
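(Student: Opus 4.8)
Fix any $\sigma_1,\dots,\sigma_T>0$ and consider the SGLD-like auxiliary trajectory $\sgldWeight_{0:T}$ of \cref{def:sgld_trajectory} built from SGD's original trajectory $\weight_{0:T}$ (same increments $\weight_t-\weight_{t-1}=-g_t$, plus a fresh independent Gaussian $N_t\sim\gaussian[0][\sigma_t^2 I]$ at step $t$). The plan is to (i) split $\gen$ into the generalization error of the \emph{auxiliary} output $\sgldWeight_T$ plus a deterministic ``translation penalty'', and identify the penalty with the claimed flatness term; (ii) control the auxiliary generalization error with the mutual-information bound \cref{lemma:MI_bound}; and (iii) bound the resulting mutual information by the trajectory term.

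For (i): an easy induction on $t$ gives $\sgldWeight_t-\weight_t=\sum_{s\le t}N_s$, so $\xi\defeq\sgldWeight_T-\weight_T\sim\gaussian[0][(\sum_t\sigma_t^2)I]$ is independent of $(\weight_T,\trainingSet)$ and of a fresh I.I.D. copy $\trainingSet'$ of $\trainingSet$. Since the expected test loss of a hypothesis equals its expected empirical loss on $\trainingSet'$ when the two are independent, the definition of $\Delta$ gives $\ex{\poploss{\sgldWeight_T}-\poploss{\weight_T}}=\ex{\Delta^{\sum_t\sigma_t^2}(\weight_T,\trainingSet')}$ and, directly, $\ex{\emploss[\trainingSet]{\sgldWeight_T}-\emploss[\trainingSet]{\weight_T}}=\ex{\Delta^{\sum_t\sigma_t^2}(\weight_T,\trainingSet)}$. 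Inserting $\pm\poploss{\sgldWeight_T}$ and $\pm\emploss[\trainingSet]{\sgldWeight_T}$ into $\gen=\ex{\poploss{\weight_T}-\emploss[\trainingSet]{\weight_T}}$ then yields
\begin{align}
    \gen=\underbrace{\ex{\poploss{\sgldWeight_T}-\emploss[\trainingSet]{\sgldWeight_T}}}_{\text{(gen.\ error of $\sgldWeight_T$)}}+\ex{\Delta^{\sum_t\sigma_t^2}(\weight_T,\trainingSet)-\Delta^{\sum_t\sigma_t^2}(\weight_T,\trainingSet')},
\end{align}
so that $\gen\le\abs{\text{(gen.\ error of $\sgldWeight_T$)}}+\abs{\ex{\Delta^{\sum_t\sigma_t^2}(\weight_T,\trainingSet)-\Delta^{\sum_t\sigma_t^2}(\weight_T,\trainingSet')}}$, where the second summand is exactly the flatness term.

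For (ii)--(iii): the map $\trainingSet\mapsto\sgldWeight_T$ is a genuine stochastic algorithm and $\ell(w,\cdot)$ is $R$-sub-Gaussian for every $w\in\weights$, so \cref{lemma:MI_bound} bounds the auxiliary generalization error by $\sqrt{2R^2 I(\sgldWeight_T;\trainingSet)/n}$; it then remains to show $I(\sgldWeight_T;\trainingSet)\le\tfrac12\sum_t\sigma_t^{-2}\ex{\norm{g_t-\ex{g_t}}^2}$, after which substituting into $\sqrt{2R^2 I/n}$ absorbs the $\tfrac12$ and produces the trajectory term. To get the mutual-information bound I would pass to the larger quantity $I(\sgldWeight_{0:T};\trainingSet,V)\ge I(\sgldWeight_T;\trainingSet)$ (adding $V$ can only increase the mutual information, and $\sgldWeight_T$ is a function of $\sgldWeight_{0:T}$), chain-rule it over steps, and note that conditioning on $(\sgldWeight_{0:t-1},\trainingSet,V)$ determines $\weight_{0:t-1}$ and hence $g_t$, so the increment $\sgldWeight_t-\sgldWeight_{t-1}=-g_t+N_t$ is \emph{exactly} $\gaussian[-g_t][\sigma_t^2 I]$; the $\sgldWeight_0$ term vanishes by independence of the initialization. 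Applying the variational inequality $I(A;B\mid C)\le\ex{\kl{P_{A\mid B,C}}{Q}}$ with the fixed prior $Q=\gaussian[-\ex{g_t}][\sigma_t^2 I]$ and the closed-form Gaussian KL gives the per-step bound $\ex{\norm{g_t-\ex{g_t}}^2}/(2\sigma_t^2)$; summing over $t$ finishes (ii)--(iii).

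I expect the main obstacle to be precisely the information-theoretic bookkeeping in the last step: decomposing $I(\sgldWeight_T;\trainingSet)$ step-by-step \emph{directly} fails, because conditioning only on past SGLD weights leaves $g_t$ random through the internal randomness $V$, so the increment is a Gaussian \emph{mixture} rather than a single Gaussian; the fix is to work with $I(\sgldWeight_{0:T};\trainingSet,V)$ and to check that inflating $\trainingSet$ to $(\trainingSet,V)$ and tracking the whole path preserves both the chain rule and the variational bound. Everything else --- the induction for $\xi$, the population-versus-empirical identities via a fresh $\trainingSet'$, and the Gaussian computation $\kl{\gaussian[\mu_1][\sigma^2 I]}{\gaussian[\mu_2][\sigma^2 I]}=\norm{\mu_1-\mu_2}^2/(2\sigma^2)$ --- is routine.
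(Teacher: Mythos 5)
Your proof is correct. The paper does not prove \cref{lemma:wang} in isolation: it is obtained as the $\Delta g_t\equiv 0$ specialization of \cref{theorem:omniscient_trajectory}, whose proof in \cref{appendix:proofs} has the same skeleton as yours --- an exact decomposition of $\gen$ into the SGLD-like terminal's generalization error plus the $\Delta^{\sum_t\sigma_t^2}$ penalty, an application of \cref{lemma:MI_bound}, and a chain-rule decomposition of the resulting mutual information followed by a per-step Gaussian-KL bound. The one place you genuinely diverge is the obstacle you flag. The paper chain-rules $I(\sgldWeight_{0:T};\trainingSet)$ directly and bounds each $I(\sgldWeight_t;\trainingSet\mid\sgldWeight_{0:t-1})$ with \cref{lemma:key}, an extension of Lemma~4 of \citet{wang_generalization_2021} that is built to tolerate the fact that the conditional increment is a Gaussian \emph{mixture} over the latent $(V,\weight_{0:T})$, absorbed through its extra argument $\Delta$. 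You instead enlarge to $I(\sgldWeight_{0:T};\trainingSet,V)$: since $\sgldWeight_0=\weight_0$ already sits in the conditioning set $\sgldWeight_{0:t-1}$, knowing $(\trainingSet,V,\sgldWeight_{0:t-1})$ reconstructs $\weight_{0:t-1}$ and pins down $g_t$, so the conditional increment is a single Gaussian and the elementary variational inequality with prior mean $\sgldWeight_{t-1}-\ex{g_t}$ delivers the same per-step term $\tfrac{1}{2\sigma_t^2}\ex{\norm{g_t-\ex{g_t}}^2}$. The two routes produce the identical trajectory term, so nothing is lost for this proposition; yours is more self-contained and avoids an external lemma, while the paper's is packaged as a reusable lemma precisely because the same conditional-MI bound is re-applied with richer conditioning sets in \cref{corollary:individual}, \cref{corollary:CMI} and \cref{corollary:dd_prior}. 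One small simplification at the end: your decomposition of $\gen$ is an exact identity and \cref{lemma:MI_bound} already supplies a one-sided upper bound on the first summand, so you may conclude $\gen\le\sqrt{2R^2 I(\sgldWeight_T;\trainingSet)/n}+\ex{\Delta^{\sum_t\sigma_t^2}(\weight_T,\trainingSet)-\Delta^{\sum_t\sigma_t^2}(\weight_T,\trainingSet')}$ directly, without wrapping the flatness term in an absolute value, which is what the stated proposition actually asserts.
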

Since the trajectory term and the flatness term correspond to the MI bound and the penalty of the SGLD-like trajectory, respectively, these two terminologies will be used interchangeably.
The SGLD-like trajectory decreases MI at the cost of the penalty term. The flatness is exploited to ensure the penalty is not very large.
However, as shown in \cref{fig:cifar10_resnet_gradient_dispersion_results}, \cref{lemma:wang}'s trajectory term does not adequately exploit the flatness.
This drawback is partially due to the isotropic Gaussian that adds perturbations of the same strength along all directions, violating \cref{insight:auxiliary}.
To exploit the anisotropy of flatness, one can use non-isotropic Gaussian noises as in \cref{lemma:neu}.
\begin{proposition}[Proposition 8 of \citet{neu_information-theoretic_2021}]\label{lemma:neu}
    SGD's generalization error is bounded by
    \begin{align}
        \gen \le \inf_{\redhighlight{\Sigma} \in \mathcal{S}_+} 
            \sqrt{\frac{R^2}{n} \ex{\norm{\weight_T - \ex{\weight_T}}^2_{\redhighlight{\Sigma}^{-1}}}}
            + \opabs{\ex{\Delta^{\redhighlight{\Sigma}}(\weight, \trainingSet) - \Delta^{\redhighlight{\Sigma}}(\weight, \trainingSet')}}
        ,\label{eq:prop8}
    \end{align}
    where $\mathcal{S}_+ \subseteq \reals^{d \times d}$ is the set of symmetric positive definite matrices.
\end{proposition}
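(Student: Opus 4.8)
The plan is to prove \cref{lemma:neu} via the auxiliary-algorithm route. Fix $\Sigma \in \mathcal{S}_+$; since the left-hand side does not depend on $\Sigma$, it suffices to prove the two-term bound for this fixed $\Sigma$ and then take the infimum. Draw $\xi \sim \gaussian[0][\Sigma]$ independently of all other randomness and consider the auxiliary (randomized) algorithm that outputs the perturbed weight $\bar\weight \defeq \weight_T + \xi$ instead of $\weight_T$ (equivalently, the auxiliary trajectory that copies the original one and adds $\xi$ at the last step). The strategy is: (i) bound the generalization error of this auxiliary algorithm using \cref{lemma:MI_bound}; (ii) pay a loss-perturbation penalty to transfer that bound back to the true output $\weight_T$.

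\textbf{Step 1 (penalty decomposition).} Let $\trainingSet' \sim \mu^n$ be a fresh i.i.d.\ copy of the training set, independent of $(\weight_T, \xi)$. Using $\ex[\xi]{\emploss[s]{\weight_T + \xi}} - \emploss[s]{\weight_T} = \Delta^{\Sigma}(\weight_T, s)$ (the definition of $\Delta^{\Sigma}$, with $\xi$ independent of everything) for $s = \trainingSet$ and $s = \trainingSet'$, together with $\ex{\poploss{\weight_T}} = \ex{\emploss[\trainingSet']{\weight_T}}$ and $\ex{\poploss{\bar\weight}} = \ex{\emploss[\trainingSet']{\bar\weight}}$ (because $\weight_T$ and $\bar\weight$ are independent of $\trainingSet'$, so averaging the empirical risk over the fresh set returns the population risk), a short rearrangement gives the exact identity
\[
  \gen \;=\; \ex{\poploss{\bar\weight} - \emploss[\trainingSet]{\bar\weight}} \;+\; \ex{\Delta^{\Sigma}(\weight_T, \trainingSet) - \Delta^{\Sigma}(\weight_T, \trainingSet')}.
\]
Hence $\gen \le \abs{\ex{\poploss{\bar\weight} - \emploss[\trainingSet]{\bar\weight}}} + \abs{\ex{\Delta^{\Sigma}(\weight_T, \trainingSet) - \Delta^{\Sigma}(\weight_T, \trainingSet')}}$, and the second term is already the penalty/flatness term of \eqref{eq:prop8}.

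\textbf{Step 2 (information term).} Because $\weights = \reals^d$, the sub-Gaussian hypothesis covers $\bar\weight$, so \cref{lemma:MI_bound} applied to the auxiliary algorithm gives $\abs{\ex{\poploss{\bar\weight} - \emploss[\trainingSet]{\bar\weight}}} \le \sqrt{2R^2 \, I(\bar\weight; \trainingSet)/n}$. To bound $I(\bar\weight; \trainingSet)$, I use the variational inequality $I(\bar\weight; \trainingSet) \le \ex[\trainingSet]{\kl{P_{\bar\weight \mid \trainingSet}}{Q}}$, valid for any fixed reference $Q$, and choose $Q \defeq \mathcal{N}(\ex{\weight_T}, \Sigma)$. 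The conditional law $P_{\bar\weight \mid \trainingSet}$ is the mixture of $\mathcal{N}(\weight_T, \Sigma)$ over $\weight_T \sim P_{\weight_T \mid \trainingSet}$; by joint convexity of KL the mixture can be pulled outside, and since $\kl{\mathcal{N}(w, \Sigma)}{\mathcal{N}(\ex{\weight_T}, \Sigma)} = \tfrac12 \norm{w - \ex{\weight_T}}_{\Sigma^{-1}}^2$ (equal covariances, only the mean-shift term survives), we get $I(\bar\weight; \trainingSet) \le \tfrac12 \ex{\norm{\weight_T - \ex{\weight_T}}_{\Sigma^{-1}}^2}$, hence $\abs{\ex{\poploss{\bar\weight} - \emploss[\trainingSet]{\bar\weight}}} \le \sqrt{\tfrac{R^2}{n}\ex{\norm{\weight_T - \ex{\weight_T}}_{\Sigma^{-1}}^2}}$. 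Substituting into Step 1 and minimizing over $\Sigma \in \mathcal{S}_+$ yields \eqref{eq:prop8}.

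\textbf{Main obstacle.} Both steps are individually routine; the only place needing genuine care is the information term, where $P_{\bar\weight \mid \trainingSet}$ is a Gaussian \emph{mixture} rather than a Gaussian, so no direct Gaussian-to-Gaussian KL formula applies — the fix is to take the \emph{reference} $Q$ Gaussian with exactly the perturbation covariance $\Sigma$ and invoke convexity of KL, which is precisely what converts the mutual information into the Mahalanobis-type trajectory term $\ex{\norm{\weight_T - \ex{\weight_T}}_{\Sigma^{-1}}^2}$. A secondary subtlety is the domain: $\bar\weight = \weight_T + \xi$ can exit any bounded $\weights$, so the clean statement needs $\weights = \reals^d$ (otherwise an extra projection/truncation argument is required). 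Finally, the infimum over $\Sigma$ legitimately comes outside the expectation only because $\Sigma$ here is a global constant — exactly the looseness the paper afterwards removes with the ``omniscient trajectory''.
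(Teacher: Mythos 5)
Your proof is correct. Note that the paper itself does not reprove this statement — it is imported verbatim as Proposition~8 of Neu et al.\ (2021), and the paper only recovers it as a special case of its own machinery by setting $\Delta G = 0$ in \cref{theorem:terminal_variance} (which in turn comes from \cref{theorem:omniscient_trajectory}). The route you take is the classical ``one-shot auxiliary algorithm'' argument and differs in a small but genuine way from the paper's: you perturb only the terminal weight, $\bar\weight = \weight_T + \xi$ with $\xi \sim \gaussian[0][\Sigma]$, and bound $I(\bar\weight;\trainingSet)$ directly via the golden formula $I(\bar\weight;\trainingSet) \le \ex[\trainingSet]{\kl{P_{\bar\weight\mid\trainingSet}}{Q}}$ with a fixed Gaussian reference $Q = \gaussian[\ex{\weight_T}][\Sigma]$ plus convexity of KL; the paper instead builds a \emph{stepwise} SGLD-like trajectory, applies the data-processing inequality and the chain rule of MI across steps, and then bounds each step's conditional MI through \cref{lemma:key}, only afterwards re-summing the per-step statistics into the variance of $\Delta\weight_T$. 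The two reach the same inequality, but the paper's stepwise decomposition is the scaffolding that lets the omniscient perturbation $\Delta g_t$ be inserted per step — the feature your direct argument does not expose. Your derivation is cleaner for the stated proposition on its own; the paper's is the one that generalizes. Two cosmetic points: the $\opabs$ macro in the paper's statement is defined as the identity (\verb|\newcommand{\opabs}[1]{#1}|), so the displayed penalty term has no absolute value and your Step~1 identity $\gen = A + B$ combined with $A \le \sqrt{\cdot}$ already gives exactly what is claimed, without your last triangle-inequality step; and your remark about $\weights = \reals^d$ is the right one — it is exactly the standing assumption made in \cref{sec:preliminary}, so nothing extra is needed there.
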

Here, $\Sigma$ is the covariance of the Gaussian. By setting, for example, $\Sigma = \ex{\emphessian}^{-1}$, the noise has large variances along the directions with small curvatures, better exploiting the flatness.
Nevertheless, we find \cref{lemma:neu} still has room for improvement, as elaborated in \cref{sec:motivation}.

\section{Proposed Omniscient Trajectory}\label{sec:bounds}
\subsection{Analysis on Existing Results' Drawbacks}\label{sec:motivation}

To see the inefficiencies of \cref{lemma:neu}, we recall a generic principle:
\begin{principle}\label{principle:knowledge_is_power} 
Being specific enhances optimization:
    $
        \redhighlight{\min_{A}} \bluehighlight{\ex[X]{\textcolor{black}{f(X; A)}}} \ge \bluehighlight{\ex[X]{\redhighlight{\min_A} \textcolor{black}{f(X;A)}}}. \label{eq:knowledge_is_power}
    $
\end{principle}
Through the lens of this principle, \cref{lemma:neu} is closer to \lhs{} of \cref{eq:knowledge_is_power} since $\Sigma$ to be optimized is shared among all instances in the expectation, leading to trade-offs between these instances.
As a result, it is suboptimal compared to a fully specific and dependent one.

Moreover, \cref{lemma:neu} is prone to overfitting when estimated in numerical studies. Since the distribution of output weights is not fully accessible,  one must sample some of them before optimizing over $\Sigma$.
This procedure is identical to ERM, leading to potential overfitting to the sampled weights.
Sampling output weights involves training multiple models on multiple training sets. The computational costs and data requirements of training deep models limit the number of samples.
In contrast, $\mathcal{S}_+$ is a $\Theta(d^2)$-dimensional manifold. Therefore, the overfitting is severe at least in the classic view and the estimated bound has a severe negative bias. 
Although the overfitting may be benign as it is in deep learning, a meta-generalization theory is needed to ensure it, which is too complex and may even induce a meta-meta-overfitting problem with more parameters.
To our knowledge, \cref{lemma:neu} lacks numerical results, possibly due to the large parameter count and the overfitting.

Some examples of the two drawbacks can be found in \cref{appendix:motivational_examples}.
As indicated by \cref{eq:knowledge_is_power}, both issues can be addressed by moving the optimization inside the expectation.
Regarding the suboptimality, \cref{eq:knowledge_is_power} explicitly shows better tightness.
Regarding overfitting, the \rhs{} of \cref{eq:knowledge_is_power} is already the result of overfitting to instance trajectories, eliminating overfitting as a bias source.
Furthermore, depending on more random variables, such as the training set, makes it easier to access and exploit the empirical flatness.
Motivated by these benefits, we propose fully dependent auxiliary trajectories in the form of functions of \emph{all} random variables in the training process. With knowledge of all random variables, the trajectories are termed ``omniscient (auxiliary) trajectories''.


\subsection{Omniscient Trajectory}\label{sec:omniscient_trajectory}



\begin{figure}
    \begin{subfigure}{0.6\linewidth}
        \newcommand{\trajLen}{4}
        \newcommand{\drawTrajectory}[3]{%
            \node (#1-center) #2 {};
            \node (#1-head) [left=0.5em of #1-center] {#1};
            \node (#1-0) [right=0.5em of #1-head] {$#3_0$};
            \node (#1-1) [right=2em of #1-0] {$#3_1$};
            \node (#1-2) [right=2em of #1-1] {$#3_2$};
            \node (#1-3) [right=2em of #1-2] {$\cdots$};
            \node (#1-4) [right=2em of #1-3] {$#3_T$};
            \node[gray] (#1-gen) [right=1em of #1-4] {$\operatorname{gen}(#3_T)$};
            \foreach \i in {1,...,\trajLen}{
                \pgfmathtruncatemacro\prev{\i-1}
                \draw[-Latex] (#1-\prev) -- (#1-\i);
            }
            \draw[-Latex,gray] (#1-\trajLen) -- (#1-gen);
        }
        \newcommand{\noiseInjection}[3]{%
            \node (N-#1) [below= 0.75em of SGLD-like-#2] {#3};
            \draw[-Latex]  (N-#1) -- (SGLD-like-#2);
        }

        \centering
        \hypersetup{hidelinks}
        \scalebox{0.65}{\begin{tikzpicture}
            \drawTrajectory{Original}{}{\weight}
            \drawTrajectory{Omniscient}{[below=1.8em of Original-center]}{\retroWeight}
            \drawTrajectory{SGLD-like}{[below=1.8em of Omniscient-center]}{\sgldWeight}
            \path[draw,double, double distance=2pt, myred] (Original-0) -- node[left=1.55em] {\tiny \cref{def:omniscient_trajectory}} (Omniscient-0);
            \path[draw,double, double distance=2pt, myblue] (Omniscient-0) -- node[left=1.55em] {\tiny \cref{def:sgld_trajectory}} (SGLD-like-0);
            \node (S) [above = 0.5em of Original-2] {$S$};
            \foreach \i in {1,...,\trajLen}{
                \draw[-Latex] (S) -- (Original-\i);
            }
            \foreach \i in {0,...,\trajLen}{
                \foreach \j in {1,...,\trajLen}{
                    \draw[-Latex, myred] (Original-\i) -- (Omniscient-\j);
                }
            }
            \foreach \i in {1,...,\trajLen}{
                \ifthenelse{\equal{\i}{2}}{
                    \draw[-Latex, myred] (S) to [out=-60,in=60] (Omniscient-\i);
                }{
                    \ifthenelse{\equal{\i}{\trajLen}}{
                        \draw[-Latex, myred] (S) to [out=-18,in=130] (Omniscient-\i);
                    }{
                        \draw[-Latex, myred] (S) -- (Omniscient-\i);
                    }
                }
            
            }
            \foreach \i in {1,...,2}{
                \noiseInjection{\i}{\i}{$N_\i$}
            }
            \noiseInjection{dots}{3}{$\cdots$}
            \noiseInjection{T}{4}{$N_T$}
            \foreach \i in {1,...,\trajLen}{
                \pgfmathtruncatemacro\prev{\i-1}
                \draw[-Latex, myblue] (Omniscient-\prev) -- (SGLD-like-\i);
                \draw[-Latex, myblue] (Omniscient-\i) -- (SGLD-like-\i);
            }
            \newcommand{\makeAnchors}[1]{%
                \path (#1.north west) -- (#1.north) coordinate[pos=0.36] (#1-mid-north-west);%
                \path (#1.south west) -- (#1.south) coordinate[pos=0.36] (#1-mid-south-west)%
            }%
            \makeAnchors{SGLD-like-gen};
            \makeAnchors{Original-gen};
            \makeAnchors{Omniscient-gen};
            \path[draw,-Latex,gray] (SGLD-like-gen-mid-north-west) -- node[right,text width=3cm] {\scalebox{0.75}{\tiny $+\Delta^{\Sigma}(\retroWeight_T, \trainingSet)$}\\[-6pt]\scalebox{0.75}{\tiny $-\Delta^{\Sigma}(\retroWeight_T, \trainingSet')$}} node[left,text width=3cm,align=right] {\scalebox{0.75}{\tiny Flatness}\\[-6pt]\scalebox{0.75}{\tiny Term}} (Omniscient-gen-mid-south-west);
            \path[draw,-Latex,gray] (Omniscient-gen-mid-north-west) -- node[right,text width=3cm] {\scalebox{0.75}{\tiny $+\Delta_{\Gamma_T}(\weight_T, \trainingSet)$}\\[-6pt]\scalebox{0.75}{\tiny $-\Delta_{\Gamma_T}(\weight_T, \trainingSet')$}} node[left,text width=3cm,align=right] {\scalebox{0.75}{\tiny Penalty}\\[-6pt]\scalebox{0.75}{\tiny Term}} (Original-gen-mid-south-west);
            \node[gray] (MI-bound) [below=0.55em of SGLD-like-gen] {\tiny $\sqrt{\frac{2 R^2}{n} I(\sgldWeight_T; \trainingSet)}$};
            \path[gray]  (MI-bound) -- node {\rotatebox{270}{\tiny$\le$}} (SGLD-like-gen);
        \end{tikzpicture}}
        \caption{The relationships between trajectories.} \label{fig:trajectories}
    \end{subfigure}%
    \begin{subfigure}{0.4\linewidth}
        \centering
        \trajLegends\\
        \begin{subfigure}[b]{\linewidth}
            \centering
            \begin{subfigure}[b]{0.5\linewidth}
                \centering
                \includegraphics[width=0.8\linewidth]{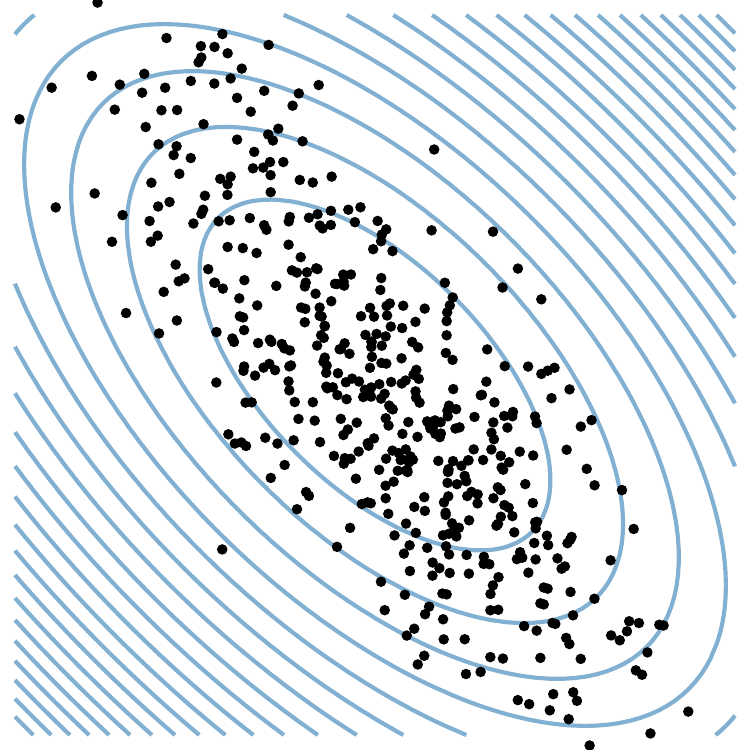}
                \caption{Aligned.}\label{fig:aligned}
            \end{subfigure}%
            \begin{subfigure}[b]{0.5\linewidth}
                \centering
                \includegraphics[width=0.8\linewidth]{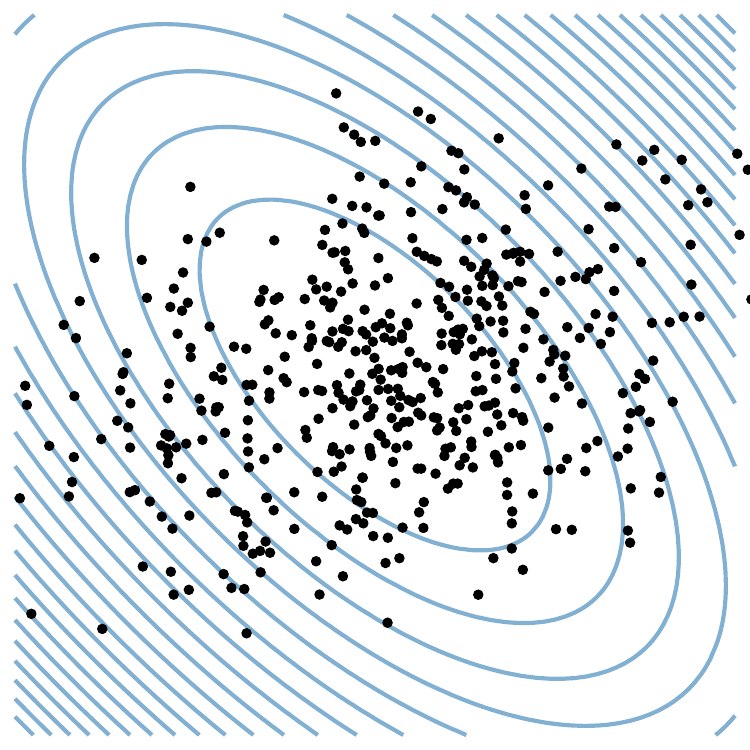}
                \caption{Not aligned.}\label{fig:misaligned}
            \end{subfigure}%
        \end{subfigure}%
    \end{subfigure}%
    \caption{The relationship between the original, omniscient, and SGLD-like trajectories, and illustrative examples of of alignment and misalignment between flatness and output weight covariance.
        \modification[\UMN]{The two trajectories have decoupled roles: the omniscient trajectory optimizes the whole bound while the SGLD-like trajectory bounds the MI for the omniscient trajectory.}
    }
\end{figure}

The omniscient trajectory that depends on all random variables in the training is defined as follows:
\begin{definition}[Omniscient Trajectory]\label{def:omniscient_trajectory}
    The omniscient trajectory $\retroWeight_{0:T}$ of $(\trainingSet, V, g_{1:T}, \weight_{0:T})$ specified by \emph{omniscient perturbation} $\Delta g_{1:T}$ is given by
    \begin{align}
        \retroWeight_0 \defeq& \weight_0,\quad
        \retroWeight_t 
        \defeq \retroWeight_{t-1} - (g_t(\weight_{t-1}, \trainingSet, V, \weight_{0:t-2}) - \Delta g_t(\trainingSet, V, g_{1:T}, \weight_{0:T})),
    \end{align}
    where $\Delta g_t$ is a deterministic function. Let $\Gamma_t \defeq \sum_{\tau=1}^t \Delta g_\tau(\trainingSet, V, g_{1:T}, \weight_{0:T})$, then $\retroWeight_t = \weight_t + \Gamma_t$.
\end{definition}

\unimportant{
    With a change of trajectory, we transform bounding the generalization error on the original trajectory into that on the omniscient trajectory at the cost of a penalty term for the change:
    \begin{align}
        \gen 
        =&  \gen[\mu^n][\prob{\retroWeight_T|\trainingSet}] + \left(\gen - \gen[\mu^n][\prob{\retroWeight_T|\trainingSet}]\right)\\
        \le&    \gen[\mu^n][\prob{\retroWeight_T|\trainingSet}] + \underbrace{\ex{\Delta_{\Gamma_T}(\weight_T, \trainingSet) - \Delta_{\Gamma_T}(\weight_T, \trainingSet')}}_{\text{Penalty for the change to the omniscient trajectory}},
    \end{align}
    where the second step is due to the definition of $\Delta$. The penalty term measures the robustness of the original terminal under the change of terminal, which is connected to the flatness of the original terminal.

    To bound the generalization error of the omniscient trajectories, we reuse the a-change-of-trajectory on the omniscient trajectory, facilitated by \cref{def:sgld_trajectory}.
}

To bound the generalization error of the original SGD trajectory, we transform the generalization error of the original output weight into that of the omniscient output weight at the cost of a ``penalty term'' for the translational perturbation $\Gamma_T$.
\modification[\UMN]{To bound generalization error and MI of the omniscient output weight, we construct an SGLD-like auxiliary trajectory $\sgldWeight_{0:T}$ based on the omniscient trajectory and transform the generalization of the omniscient trajectory into that of the SGLD-like trajectory, at the cost of another penalty, \ie{} the flatness term.}
We then bound the generalization error of the SGLD-like output weight using the MI $I(\sgldWeight; \trainingSet)$.
The relationships between the trajectories and their generalization errors are summarized in \cref{fig:trajectories}.
To bound the MI, we extend the key Lemma 4 of \citet{wang_generalization_2021} that bounds the mutual information using variances in \cref{lemma:key} so that it can be used for the double-layered auxiliary trajectories despite its heavy dependence to $\trainingSet$.
Putting these problem transformations and bounds together, we obtain a generalization bound \cref{theorem:omniscient_trajectory} using trajectory statistics similar to \cref{lemma:wang} but modified by the omniscient perturbation.
\begin{theorem}\label{theorem:omniscient_trajectory}
    Assume $R$-sub-Gaussianity for $\sampleLoss$. For any $\Delta g_{1:T}$ and any $\sigma_{1:T} \in \left(\reals^{>0}\right)^T$, we have 
    \begin{multline}
        \gen 
        \le \sqrt{\smash[b]{\frac{R^2}{n} \underbrace{\sum_{t=1}^T \frac{1}{\sigma_t^2} \ex{\norm{g_t - \ex{g_t} - \redhighlight{\Delta g_t}}^2}}_{\text{MI bound (Trajectory term)}}} \vphantom{\frac{R^2}{n} \sum_{t=1}^T \frac{1}{\sigma_t^2} \ex{\norm{g_t - \ex{g_t} - \Delta g_t}^2}}} + \underbrace{\opabs{\ex{\Delta_{\redhighlight{\Gamma_T}}(\weight_T, \trainingSet) - \Delta_{\redhighlight{\Gamma_T}}(\weight_T, \trainingSet')}}}_{\textrm{Penalty for the omni. trajectory (Penalty Term)}} \vphantom{\underbrace{\sum_{t=1}^T \frac{1}{\sigma_t^2} \ex{\norm{g_t - \ex{g_t} - \Delta g_t}^2}}_{\text{MI bound (Trajectory term)}}} \\+ \underbrace{\opabs{\ex{\Delta^{\sum_{t} \sigma_t^2}(\weight_T + \redhighlight{\Gamma_T}, \trainingSet) - \Delta^{\sum_t \sigma_t^2}(\weight_T + \redhighlight{\Gamma_T}, \trainingSet')}}}_{\text{Penalty for the SGLD-like trajectory (Flatness Term)}}.
    \end{multline}
\end{theorem}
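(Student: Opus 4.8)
The plan is to run the three-stage change-of-trajectory argument of \cref{fig:trajectories}. Stage one replaces the original SGD trajectory $\weight_{0:T}$ by the omniscient trajectory $\retroWeight_{0:T}$ of \cref{def:omniscient_trajectory} and pays the penalty term; stage two replaces $\retroWeight_{0:T}$ by its SGLD-like trajectory $\sgldWeight_{0:T}$ of \cref{def:sgld_trajectory} (built on top of $\retroWeight_{0:T}$, not on $\weight_{0:T}$) and pays the flatness term; stage three bounds the generalization error of $\sgldWeight_T$ by $\sqrt{2R^2 I(\sgldWeight_T;\trainingSet)/n}$ via \cref{lemma:MI_bound} and then controls $I(\sgldWeight_T;\trainingSet)$ by the extended key lemma (\cref{lemma:key}), producing the trajectory term (the Gaussian smoothing in stage two is what keeps this mutual information finite). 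Adding the three contributions gives the stated inequality; note that, since the theorem quantifies over arbitrary fixed $\Delta g_{1:T}$ and $\sigma_{1:T}$, no optimization is needed.

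Stages one and two share the same identity. Writing $\gen = \gen[\mu^n][\prob{\retroWeight_T|\trainingSet}] + \big(\gen - \gen[\mu^n][\prob{\retroWeight_T|\trainingSet}]\big)$ and using $\retroWeight_T = \weight_T + \Gamma_T$, the relation $\poploss{w} = \ex[\trainingSet']{\emploss[\trainingSet']{w}}$ for a fixed $w$ and an i.i.d.\ copy $\trainingSet'\sim\mu^n$, and the definition of $\Delta_\gamma$, the bracketed difference equals $\ex{\Delta_{\Gamma_T}(\weight_T,\trainingSet) - \Delta_{\Gamma_T}(\weight_T,\trainingSet')}$; the subtlety is that $\weight_T$ and $\Gamma_T = \Gamma_T(\trainingSet,V,g_{1:T},\weight_{0:T})$ are functions of $\trainingSet$ only, so they are the same constants when $\trainingSet$ is swapped for $\trainingSet'$ inside the loss's data slot. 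Bounding by the absolute value gives the penalty term. Stage two is identical after telescoping the increments of $\sgldWeight_{0:T}$ to get $\sgldWeight_T = \retroWeight_T + \xi$ with $\xi := \sum_{t} N_t \sim \gaussian[0][(\sum_t \sigma_t^2)I]$ independent of everything else: the same computation, now with a Gaussian perturbation of covariance $(\sum_t\sigma_t^2)I$ at the point $\weight_T + \Gamma_T$, turns $\gen[\mu^n][\prob{\retroWeight_T|\trainingSet}] - \gen[\mu^n][\prob{\sgldWeight_T|\trainingSet}]$ into $\ex{\Delta^{\sum_t\sigma_t^2}(\weight_T+\Gamma_T,\trainingSet) - \Delta^{\sum_t\sigma_t^2}(\weight_T+\Gamma_T,\trainingSet')}$, whose absolute value is the flatness term.

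For stage three, \cref{lemma:MI_bound} applied to $\prob{\sgldWeight_T|\trainingSet}$ (legitimate because $\sampleLoss$ is $R$-sub-Gaussian) gives $\gen[\mu^n][\prob{\sgldWeight_T|\trainingSet}] \le \sqrt{2R^2 I(\sgldWeight_T;\trainingSet)/n}$. The increments of $\sgldWeight_{0:T}$ are $\sgldWeight_t - \sgldWeight_{t-1} = -(g_t - \Delta g_t) + N_t$ with $N_t$ an independent isotropic Gaussian, so \cref{lemma:key} — our extension of Lemma 4 of \citet{wang_generalization_2021} to this doubly perturbed, heavily $\trainingSet$-dependent process — yields $I(\sgldWeight_T;\trainingSet) \le \tfrac{1}{2}\sum_t \tfrac{1}{\sigma_t^2}\ex{\norm{g_t - \ex{g_t} - \Delta g_t}^2}$, where centering at the constant $\ex{g_t}$ rather than the variance-minimizing conditional mean only loosens the bound. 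Substituting reproduces the trajectory term, and the theorem follows by combining the three pieces.

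The crux is \cref{lemma:key}. Because the omniscient perturbation $\Delta g_t$ is allowed to depend on \emph{all} of the run's randomness — the full $\trainingSet$, the internal randomness $V$, and the entire sequence $\weight_{0:T}$, including ``future'' iterates — the process $\sgldWeight_{0:T}$ loses the ``fresh data and noise at each step'' Markov structure that Wang's proof relies on: conditioning on $\sgldWeight_{t-1}$ no longer makes the $t$-th increment conditionally independent of $\trainingSet$ in the usual way. The route through is that $N_t$ remains independent of everything, so the Gaussian-channel plus data-processing chain $I(\sgldWeight_T;\trainingSet) \le \sum_t I(\sgldWeight_t;\trainingSet\mid\sgldWeight_{t-1}) \le \sum_t \tfrac{1}{2\sigma_t^2}\ex{\norm{(g_t-\Delta g_t) - \ex{g_t - \Delta g_t \mid \sgldWeight_{t-1}}}^2}$ still goes through; \cref{lemma:key} is the careful statement of this, after which one bounds the conditional variance by the second moment around $\ex{g_t}$. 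A minor but necessary check running through all stages is that the $\Delta$-expectations are finite — guaranteed once the isotropic Gaussian smoothing is in place — and that the $\trainingSet\leftrightarrow\trainingSet'$ swap touches only the loss's data argument.
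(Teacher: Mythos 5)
Your proposal is correct and follows essentially the same route as the paper's own proof: the paper likewise performs the original $\to$ omniscient $\to$ SGLD-like change of trajectory in one chain of equalities, paying the penalty and flatness terms via the $\trainingSet \leftrightarrow \trainingSet'$ identity, then invokes \cref{lemma:MI_bound} on $\sgldWeight_T$, decomposes $I(\sgldWeight_T;\trainingSet)$ by data-processing plus the MI chain rule conditioning on the full prefix $\sgldWeight_{0:t-1}$, and applies \cref{lemma:key} with $\Omega(\instanceSgldWeight_{0:t-1}) = \instanceSgldWeight_{t-1} - \ex{g_t}$ to recover the trajectory term. The only nit is notational: your stepwise MI terms should condition on $\sgldWeight_{0:t-1}$ rather than $\sgldWeight_{t-1}$ (as you observe, the omniscient perturbation destroys Markovianity), but since you defer to \cref{lemma:key} for the precise statement and then relax the conditional mean to the constant $\ex{g_t}$, the argument is sound.
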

Setting $\Delta g_t \equiv 0$ recovers \cref{lemma:wang}. We further optimize it to tighten the bound.

\subsection{Optimizing the Omniscient Trajectory and Exploiting Flatness}\label{sec:omniscient_trajectory_optimization}

To avoid optimizing over $T$ elements $\Delta g_{1:T}$, we first simplify \cref{theorem:omniscient_trajectory} with $\Delta g_t = \left(g_t - \ex{g_t}\right) - \frac{1}{T} \sum_{\tau=1}^T (g_\tau - \ex{g_\tau}) + \frac{1}{T} \Delta G$. As a result, for any deterministic function $\Delta G$ of $(S, V, g_{1:T}, \weight_{0:T})$ and $\sigma > 0$, we have \cref{theorem:terminal_variance} that can be informally stated as (with $\Delta \weight_t \defeq \weight_t - \weight_0$)
\begin{align}
    \operatorname{gen} 
    \le \sqrt{\frac{R^2}{n \sigma^2 T} \ex{\norm{\Delta \weight_T - \ex{\Delta \weight_T} + \Delta G}^2}} + \ex{\Delta^{\sigma^2 T}_{\Delta G}(\weight_T, \trainingSet) - \Delta^{\sigma^2 T}_{\Delta G}(\weight_T, \trainingSet')}.
\end{align}
\modification[\rone]{\cref{appendix:selection}} verifies such $\Delta g_t$ is optimal under the constraint $\Gamma_T = \Delta G$, and the generality is not harmed by this simplification.
Setting $\Delta G = 0$ recovers the isotropic version of \cref{lemma:neu}\footnote{Using non-isotropic Gaussian noises for the SGLD-like trajectory fully recovers \cref{lemma:neu}.}.
\unimportant{This optimality indicates the trajectory statistics used by \cref{lemma:wang} is suboptimal than the output weight statistics used by \cref{lemma:neu}. This is because the gradients are noisy and often cancel each other in the output weight, but is}

We then optimize over $\Delta G$. It can decrease the trajectory term by canceling $\Delta \weight_T - \ex{\Delta \weight_T}$.
However, large $\Delta G$ would increase the penalty term, leading to a trade-off.
Fortunately, near flat minima, most directions are flat, and the penalty term is insensitive to perturbations along them. Following \cref{insight:auxiliary}, we confine $\Delta G$ to align with the flat directions.
This is done by approximating the penalty terms to the second order, where Hessians emerge, and solving an optimization problem formed by the output weights and the Hessians (see \cref{appendix:optimization}). \cref{theorem:flatness} presents the result, where the three expectations correspond to the penalty, flatness, and trajectory terms, respectively.

\begin{theorem}\label{theorem:flatness}
    \assumesubgaussianloss{}, $\sampleLoss(\cdot, \instanceSample)$ and $\poploss{\cdot}$ are thirdly continuously differentiable \wrt{} $\instanceWeight$ for any $\instanceSample \in \samples$, and there is a constant $D > 0$ such that $D \norm{\Delta w}^4$ bounds the residuals in the third-order Taylor expansions of $\sampleLoss(\cdot, \instanceSample)$ and $\poploss{\cdot}$ for any $\instanceSample \in \samples$.  Then for any $\lambda > 0$ trading-off between the trajectory and penalty terms, we have
    \begin{multline}
        \gen \le 
            \ex{\Delta_{\Delta G}(\weight_T, \trainingSet) - \Delta_{\Delta G}(\weight_T, \trainingSet')}
            + \frac{3}{2} \sqrt[3]{\frac{R^2}{n}\opabs{\ex{\trace{\Delta H(\weight_T + \Delta G)}}}} \\\times \sqrt[3]{\ex{\norm{(I - (I + \penaltyOptionalH / 2 \lambda C)^{-1})(\Delta \weight_T - \ex{\Delta \weight_T}) - (2 \lambda C I + \penaltyOptionalH)^{-1} J}^2}} + r, \label{eq:flatness_bound}
    \end{multline}
    \modification[\rone]{
    where $\Delta H(w) \defeq \emphessian(w) - \pophessian(w)$, $\flatnessOptionalH$ is chosen from $\Delta H(\weight_T)$ and $\emphessian(\weight_T)$, $\penaltyOptionalH$ is chosen from $\deltaH(\weight_T)$ and $\emphessian(\weight_T)$,} and $J$ is chosen from $\nabla(\emploss{\weight_T} - \poploss{\weight_T})$ and $\emploss{\weight_T}$, $C \defeq \scriptstyle \frac{3}{2}{\left(\frac{R^2}{n}\abs{\trace{\flatnessOptionalH}}\right)}^{1/3}$, $\Delta G = \scriptstyle -\left(2 \lambda C I + \penaltyOptionalH\right)^{-1} (2 \lambda C (\Delta \weight_T - \ex{\Delta \weight_T}) + J)$, and 
    $r = O(d^2 \sigma_*^4)$
    is the residual in a second-order approximation. See \cref{eq:best_sigma} in \cref{proof:theorem_flatness} for the form of $\sigma_*$. 
\end{theorem}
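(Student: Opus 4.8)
The plan is to obtain \cref{eq:flatness_bound} by feeding a carefully designed family of perturbations $\Delta G$ into the already-optimized-over-$\Delta g_{1:T}$ bound of \cref{theorem:terminal_variance}, then Taylor-expanding only the Gaussian part of the penalty and optimizing the noise scale in closed form. Recall that \cref{theorem:terminal_variance}, writing $\tau \defeq \sigma^2 T$, states that for every $\tau > 0$ and every deterministic function $\Delta G$ of $(\trainingSet, V, g_{1:T}, \weight_{0:T})$,
\[
\gen \le \tfrac{1}{\sqrt{\tau}}\sqrt{\tfrac{R^2}{n}\ex{\norm{\Delta\weight_T - \ex{\Delta\weight_T} + \Delta G}^2}} + \ex{\Delta^{\tau}_{\Delta G}(\weight_T, \trainingSet) - \Delta^{\tau}_{\Delta G}(\weight_T, \trainingSet')}.
\]
Since this holds for \emph{every} admissible $\Delta G$ and $\tau$, the whole task reduces to picking a good $\Delta G$, bounding the penalty, and choosing $\tau$; no new concentration or information-theoretic step is needed.

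First I would split the penalty as $\Delta^{\tau}_{\Delta G} = \Delta_{\Delta G} + (\Delta^{\tau}_{\Delta G} - \Delta_{\Delta G})$, keep $\ex{\Delta_{\Delta G}(\weight_T,\trainingSet) - \Delta_{\Delta G}(\weight_T,\trainingSet')}$ exactly (this is the first term of \cref{eq:flatness_bound}), and Taylor-expand the Gaussian piece $\Delta^{\tau}_{\Delta G}(\instanceWeight, \instanceTrainingSet) - \Delta_{\Delta G}(\instanceWeight, \instanceTrainingSet) = \ex[\xi\sim\gaussian[0][\tau I]]{\emploss[\instanceTrainingSet]{\instanceWeight + \Delta G + \xi}} - \emploss[\instanceTrainingSet]{\instanceWeight + \Delta G}$ around $\instanceWeight + \Delta G$: the linear term vanishes by symmetry of $\xi$, the quadratic term gives $\tfrac{\tau}{2}\trace{\emphessian[\instanceTrainingSet](\instanceWeight + \Delta G)}$, the cubic term vanishes in expectation, and the quartic remainder is $O(d^2\tau^2)$ by the assumed $D\norm{\cdot}^4$ bound together with $\ex{\norm{\xi}^4} = O(d^2\tau^2)$. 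Taking the independent-copy expectation replaces the $\trainingSet'$ Hessian by $\pophessian$, so this piece contributes $\tfrac{\tau}{2}\ex{\trace{\deltaH(\weight_T + \Delta G)}}$ up to an $O(d^2\tau^2)$ error. Now $\tau$ enters the bound only through $\tfrac{1}{\sqrt{\tau}}\sqrt{\tfrac{R^2}{n}\ex{\norm{\cdot}^2}} + \tfrac{\tau}{2}\bigl|\ex{\trace{\deltaH(\weight_T + \Delta G)}}\bigr|$, and the elementary inequality $a\tau^{-1/2} + b\tau \ge \tfrac{3}{2}(2a^2b)^{1/3}$ (equality at $\tau^{3/2} = a/2b$, which defines $\sigma_* = \sqrt{\tau_*}$ as in \cref{eq:best_sigma}) collapses these two terms into $\tfrac{3}{2}\sqrt[3]{\tfrac{R^2}{n}\bigl|\ex{\trace{\deltaH(\weight_T+\Delta G)}}\bigr|}\sqrt[3]{\ex{\norm{\cdot}^2}}$, with the residual inheriting the $O(d^2\sigma_*^4)$ scaling and going into $r$; the $\bigl|\,\cdot\,\bigr|$ covers the case where the trace expectation is negative (in which case the bound is only loosened).

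It remains to design $\Delta G$; this is the content of \cref{appendix:optimization} and the part that makes the bound useful rather than merely valid. Following \cref{insight:auxiliary}, $\Delta G$ should cancel as much of $\Delta\weight_T - \ex{\Delta\weight_T}$ as possible while staying small in sharp directions. To make the minimization tractable I would (i) replace the exact penalty $\ex{\Delta_{\Delta G}(\weight_T,\trainingSet) - \Delta_{\Delta G}(\weight_T,\trainingSet')}$ by its second-order Taylor proxy $\ex{\Delta G^\top J + \tfrac12\Delta G^\top \penaltyOptionalH \Delta G}$ (the options for $J$ and $\penaltyOptionalH$, and likewise for $\flatnessOptionalH$ inside $C$, coming from the $\trainingSet$-versus-$\trainingSet'$ split together with a relax-to-positive-semidefinite step valid near a population minimum), (ii) linearize the square root by AM-GM, and (iii) absorb the AM-GM constant and the still-free $\tau$ into a single trade-off parameter $\lambda > 0$ and composite coefficient $2\lambda C$ with $C \defeq \tfrac32\bigl(\tfrac{R^2}{n}\bigl|\trace{\flatnessOptionalH}\bigr|\bigr)^{1/3}$. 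The resulting convex quadratic in $\Delta G$ is minimized pointwise at $(2\lambda C I + \penaltyOptionalH)\Delta G = -\bigl(2\lambda C(\Delta\weight_T - \ex{\Delta\weight_T}) + J\bigr)$, i.e. the $\Delta G$ in the statement, and substituting it into $\Delta\weight_T - \ex{\Delta\weight_T} + \Delta G$, using $2\lambda C(2\lambda C I + \penaltyOptionalH)^{-1} = (I + \penaltyOptionalH/2\lambda C)^{-1}$, yields the bracketed vector inside the second cube root. Since this $\Delta G$ is admissible for every $\lambda > 0$, plugging it back into the (exact) bound from the previous paragraph and collecting the single Taylor remainder into $r = O(d^2\sigma_*^4)$ gives \cref{eq:flatness_bound}; the detailed residual accounting is in \cref{proof:theorem_flatness}.

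I expect the main obstacle to be conceptual rather than computational: one must keep straight which quantities are carried exactly (the translation penalty $\ex{\Delta_{\Delta G}(\weight_T,\trainingSet) - \Delta_{\Delta G}(\weight_T,\trainingSet')}$ and the Hessian $\deltaH(\weight_T + \Delta G)$ inside the cube root) and which are only heuristic (the second-order proxy, the AM-GM step, the relax-to-positive-semidefinite substitutions, and the use of $\flatnessOptionalH$, $\penaltyOptionalH$ at $\weight_T$ inside the definition of $\Delta G$), so that the only genuine error in the final inequality is the one $O(d^2\sigma_*^4)$ quartic-Gaussian remainder and everything else is covered by ``$\Delta G$ is an arbitrary admissible choice''. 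A secondary but necessary check is admissibility itself: the chosen $\Delta G$ must be a function of $(\trainingSet, V, g_{1:T}, \weight_{0:T})$ and must not peek at $\trainingSet'$, which holds because $J$, $\penaltyOptionalH$, $C$ and $\Delta\weight_T - \ex{\Delta\weight_T}$ are all functions of the training process alone.
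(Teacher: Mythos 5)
Your proposal takes essentially the same route as the paper's own proof (Appendix B.6--B.7): start from Corollary~\ref{theorem:terminal_variance}, split the penalty so that $\ex{\Delta_{\Delta G}(\weight_T,\trainingSet)-\Delta_{\Delta G}(\weight_T,\trainingSet')}$ is kept exact, Taylor-expand the remaining Gaussian-smoothing piece at $\weight_T+\Delta G$ to isolate $\tfrac{\sigma^2 T}{2}\trace{\deltaH}$ plus a quartic remainder, optimize the noise scale via $A/\sigma+\sigma^2B\ge 3(A/2)^{2/3}B^{1/3}$, and plug in the $\Delta G$ obtained as the closed-form minimizer of the quadratic surrogate in Appendix~\ref{appendix:optimization}, relying on the fact that any admissible $\Delta G$ makes the bound valid so the heuristics in its derivation need not be justified. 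Two minor inaccuracies do not affect correctness: the surrogate linearizes a cube root (after $\sigma$-optimization) rather than a square root and does so by dropping the root rather than via AM-GM, and your $O(d^2\tau^2)=O(d^2T^2\sigma_*^4)$ remainder is actually the more careful bookkeeping (the paper's displayed $Dd(d{+}2)\sigma^4$ suppresses the $T^2$ factor coming from $\xi\sim\gaussian[0][\sigma^2 T I]$).
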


\subsection{Discussion and Comparison}\label{sec:analysis}

To see the dependence on the flatness more clearly, we approximate the result of \cref{theorem:flatness}.
\modification{To simplify the approximation, \emph{only for this subsection},} we assume $J \defeq \nabla \emploss{\weight_T} - \nabla \poploss{\weight_T}$, and the model is sufficiently trained so that $\nabla \emploss{\weight_T} \approx 0$. 
We also assume $\lambda$ is large enough so that $2\lambda C$ surpasses $\penaltyOptionalH$'s top singular value to approximate $I - (\I + \penaltyOptionalH/2\lambda C)^{-1} \approx \penaltyOptionalH / 2 \lambda C$. 
\modification{Be noted that the above assumptions are used \emph{only in this subsection}, \ie{} \cref{theorem:flatness} and experiments in \cref{sec:experiments} do not require them. Results without some of these assumption can be found in \cref{appendix:approximation}.}
Approximating the penalty term to the second order leads to the following result:
\begin{multline}
    \operatorname{gen}
    \lessapprox 
        \underbrace{\ex{
            \norm{\Delta \weight_T - \Delta \ex{\weight_T}}_{\abs{\redhighlight{\Delta H}\bluehighlight{(\weight_T)}} - \redhighlight{\deltaH}(\ex{\weight_T})}^2
            + \norm{\nabla \poploss{\weight_T}}^2_{\frac{\abs{\redhighlight{\Delta H}\bluehighlight{(\weight_T)}}}{4 \lambda^2 C^2} - \frac{I}{2 \lambda C}}
        }}_{\text{Corresponding to Penalty Term}}
    \\+
        \frac{3}{2} \sqrt[3]{\frac{2 R^2}{n}\smash{
                \underbrace{\ex{\abs{\trace{\redhighlight{\Delta H}\bluehighlight{(\weight_T + \Delta G)}}}}}_{\text{Corresponding to Flatness Term}}} 
            \cdot
                \ex{\myUnderbrace{\norm{\Delta \weight_T - \ex{\Delta \weight_T}}_{\frac{\redhighlight{\penaltyOptionalH^2}\bluehighlight{(\weight_T)}}{4 \lambda^2 C^2}}^2 + \norm{\nabla \poploss{\weight_T}}^2_{\frac{I}{4 \lambda^2 C^2}}}{\text{Corresponding to Trajectory Term}}}}
        ,\label{eq:approximation}
\end{multline} 
where $\abs{\cdot}$ replaces the eigenvalues of a matrix with their absolute values. 
The details can be found in \cref{appendix:approximation}. It can be seen that all terms except the two population gradient norms depend on Hessians. 
Particularly, the ``norms'' of deviation $\Delta \weight_T - \ex{\Delta \weight_T}$ are defined by the Hessians. 
As a result, the components of the deviation along flat directions contribute little to the bound.
The flatter the minima are, the more and flatter the flat directions there are, and more components of the deviations contribute little to the bound.
Moreover, if the flatness is aligned with the covariance, most large deviations can be found near the ``flat subspace'', leading to a smaller bound.
As a result, generalization is better if minima are flat and the flatness aligns with the covariance. 

\modification[\rone]{This focus on alignment is similar to the work by \citet{wang_two_2023} and \citet{wang2021optimizing}. These alignments are compared in \cref{appendix:alignment_comparison} in detail. Briefly, the main difference is that our alignment directly relies on flatness.}
Our technique is also similar to the rate-distortion bound \citep{sefidgaran_rate-distortion_2022} as both involve weight-dependent perturbations. However, our omniscient trajectory depends on more random variables, such as the training data that is crucial for leveraging the empirical Hessian.
Besides, the existing chaining \citep{asadi_chaining_2018} and evaluated \modification[\rone]{\citep{harutyunyan_information-theoretic_2021,hellstrom2022a,wang_tighter_2023}} bounds, which can leverage the similarity between adjacent hypotheses, are potentially useful because the flatness of a minimum reflects a similarity with neighboring weights. However, chaining requires partitioning the hypothesis space, which is difficult for deep neural networks; the evaluated bounds directly rely on model losses, obscuring insights in the language of weights and flatness.

\section{Experimental Study}\label{sec:experiments}

In this section, we experimentally show how our bound captures the true generalization error/\modification[\rfour]{gap} \modification[\rfour]{measured by Cross Entropy (0-1 loss does not have Hessians and is motivationally incompatible)} compared to the existing bounds.
We vary the hyperparameter and train $6$ independent ResNet-18 models on CIFAR-10 at each hyperparameter.
To ensure sub-Gaussianity, capped cross-entropy (CE) is used in testing and bound estimation, while vanilla CE is used for training for efficient training.
We estimate \cref{theorem:flatness} with $\lambda \in \set{1, 10^3, 10^9}$, \modification[\rone]{$\flatnessOptionalH = \penaltyOptionalH = \emphessian(\weight_T)$ to make estimation easier and $J = \empgrad - \popgrad$ for numerical tightness as detailed in \cref{appendix:estimation_bound}.}
\modification[\rfour]{Estimating the bounds requires splitting datasets: The 6 models are trained by 6 random splits of the training set. Terms involving population statistics (\eg{} the population Hessians in the flatness terms and the population gradient in the penalty term) are approximated to the second order and estimated on validation sets. But for existing bounds, we assume $\poploss{\weight} \le \ex[\xi \sim \gaussian[0][\sigma^2 I]]{\poploss{\weight + \xi}}$ \citep{wang_generalization_2021} and avoid the population Hessian. The true generalization error is estimated on a separate test set.} See \cref{appendix:estimation} for more details and \modification[\rfour]{the results with population Hessians.}
\ifShowViT
    We also experiment with ViT, but the results are deferred to \cref{appendix:more_vit_experiments}.
\fi

\bgroup
\begin{figure}
    \centering
    \hphantom{\verticalAxisLeft}
    \includegraphics[width=0.41\linewidth,trim={4cm 5.45cm 4cm 2.3cm},clip]{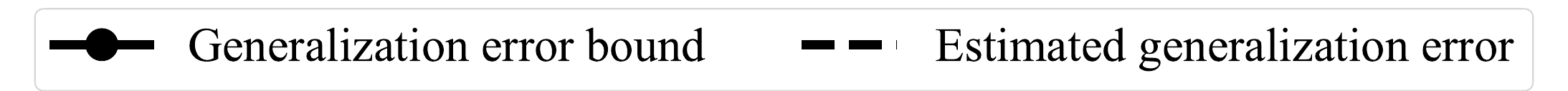}
    \includegraphics[width=0.5\linewidth,trim={0cm 2.5cm 0cm 5.3cm},clip]{pic/experiments/lrbs\resnetPrefix/cifar10_resnet_terminal_dispersion+flatness+cross_dispersion_full_utilization+reweight1000000000.000_vanilla_bound_legend.pdf}
    \\
    \centering
    \verticalAxisLeft
    \begin{subfigure}{0.22\linewidth}
        \centering
        \includegraphics[width=\linewidth]{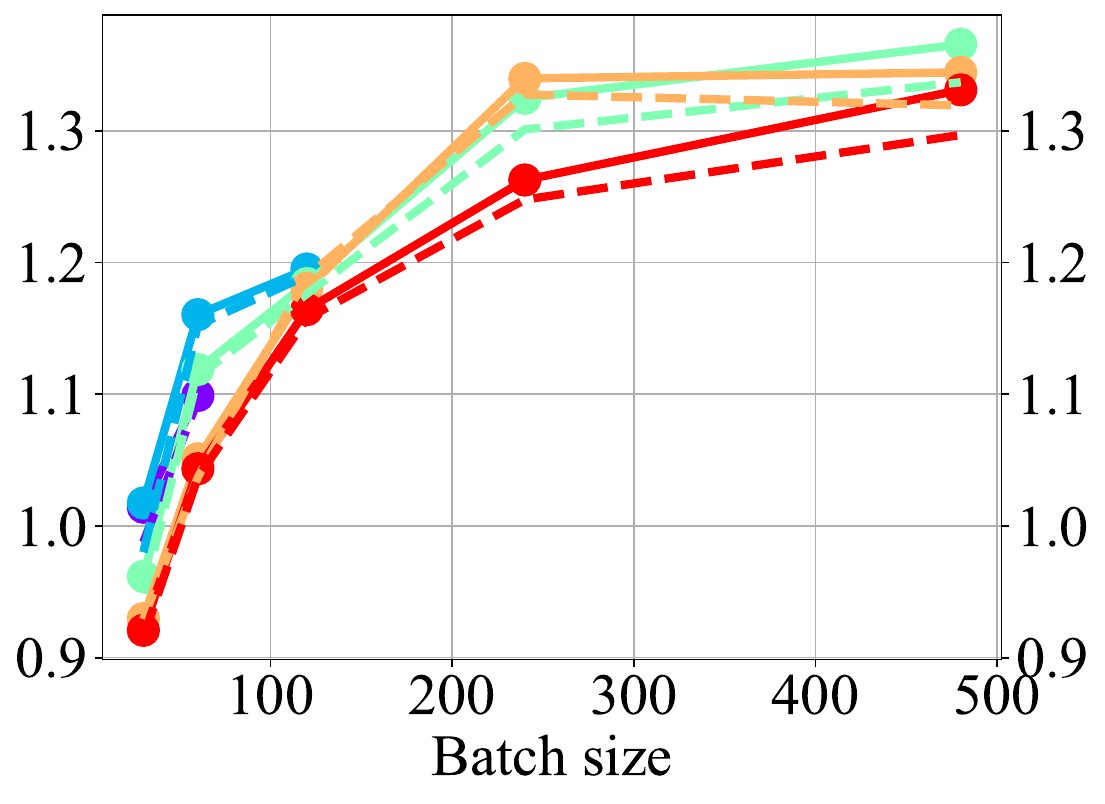}
        \caption{Bound}\label{fig:cifar10_lrbs_bound}
    \end{subfigure}
    \begin{subfigure}{0.22\linewidth}
        \centering
        \includegraphics[width=\linewidth]{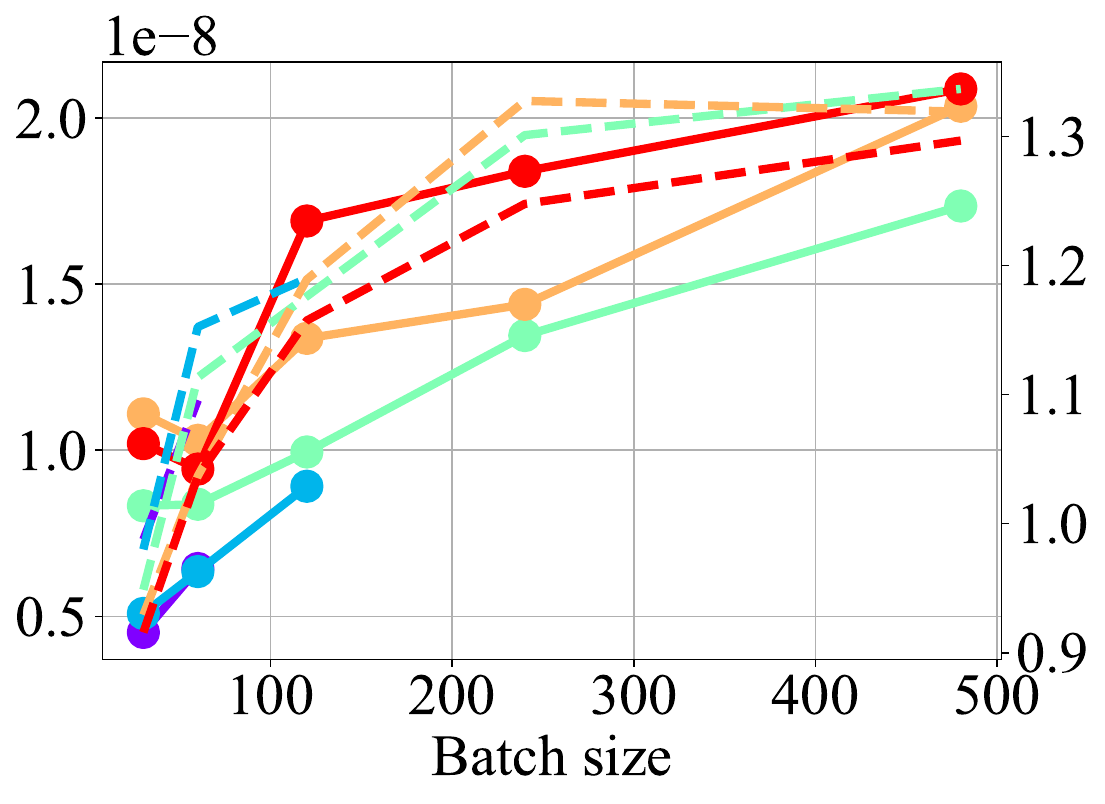}
        \caption{Trajectory Term\rescaled{}}\label{fig:cifar10_lrbs_trajectory}
    \end{subfigure}
    \begin{subfigure}{0.22\linewidth}
        \centering
        \includegraphics[width=\linewidth]{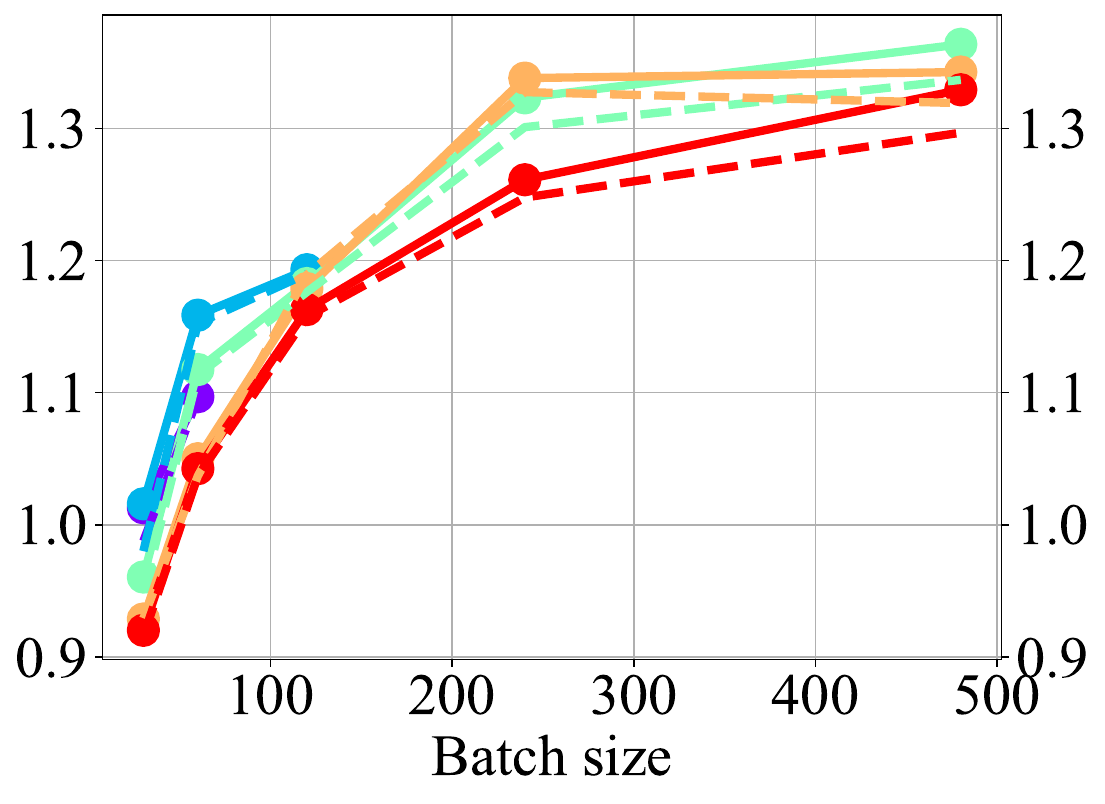}
        \caption{Penalty Term}\label{fig:cifar10_lrbs_punishment}
    \end{subfigure}
    \begin{subfigure}{0.22\linewidth}
        \centering
        \includegraphics[width=\linewidth]{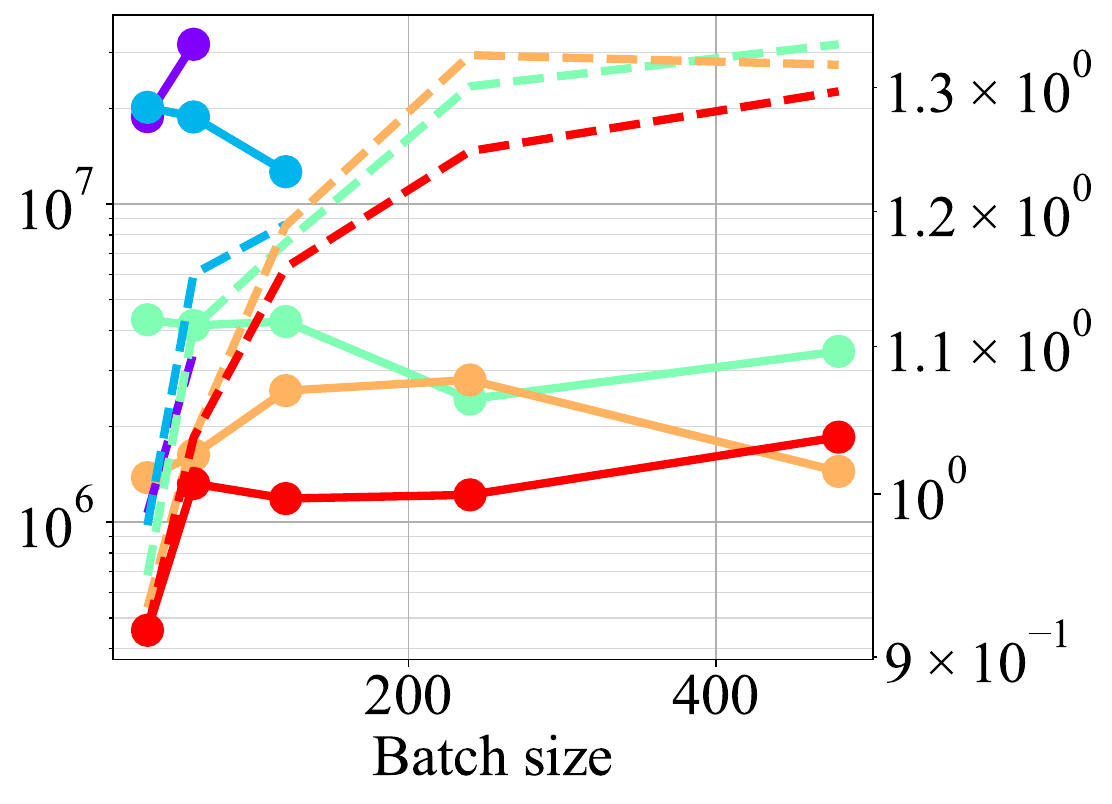}
        \caption{Flatness Term\rescaled{}}\label{fig:cifar10_lrbs_flatness}
    \end{subfigure}
    \verticalAxisRight
    \caption{Thm. \ref{theorem:flatness} ($\lambda = 10^9$) on CIFAR-10 and ResNet-18 with varied learning rate and batch size. 
    \meaningOfRescaled{} 
    }\label{fig:cifar10_lrbs}
\end{figure}
\egroup

We first evaluate whether our bound can better capture the generalization under varying flatness.
As shown in \cref{fig:cifar10_lrbs}, the bound correctly captures the tendency of generalization error under varied batch size and learning rate. Specifically, the tendency of the trajectory term \wrt{} the batch size is corrected. The penalty term for the omniscient trajectory correlates well with the generalization error \wrt{} both learning rate and batch size, thanks to the Hessians in \cref{eq:approximation}. 
The flatness term also generally has the correct tendency \wrt{} both hyperparameters.
Unfortunately, the tendency of the trajectory term \wrt{} learning rate is still incorrect. \modification[\UMN]{As shown in \cref{sec:analysis}, the trajectory term is essentially the product between Hessians and the variance of the last-step weight. We conjecture it is the increased variance when learning rate increases that overpowers the improved flatness and makes the tendency uncorrected. In contrast, the variance is less sensitive to batch size. Detailed discussion can be found in \cref{appendix:lr}.} Nevertheless, after multiplied together, they contribute little to the bound and the uncorrected tendency does not affect the bound very much.

\begin{figure}
    \centering
    \includegraphics[width=\linewidth,trim={0 1cm 0 3cm},clip]{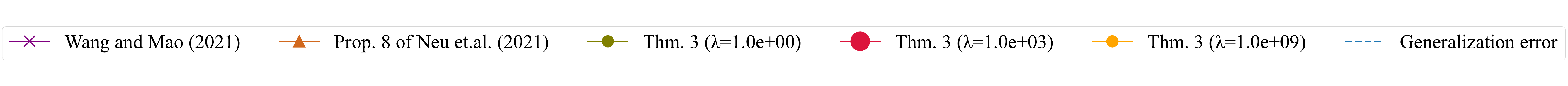}
    \\
    \centering
    \ifShowMoreHyperparameters%
        \newcommand{\widthPerSubfigure}{0.18}
        \scalebox{0.9}{\verticalAxis[0.2in]}%
    \else%
        \newcommand{\widthPerSubfigure}{0.2}%
        \verticalAxis[0.2in]%
    \fi%
    \ifShowMoreHyperparameters%
        \begin{subfigure}{\widthPerSubfigure\linewidth}
            \centering
            \includegraphics[width=\linewidth]{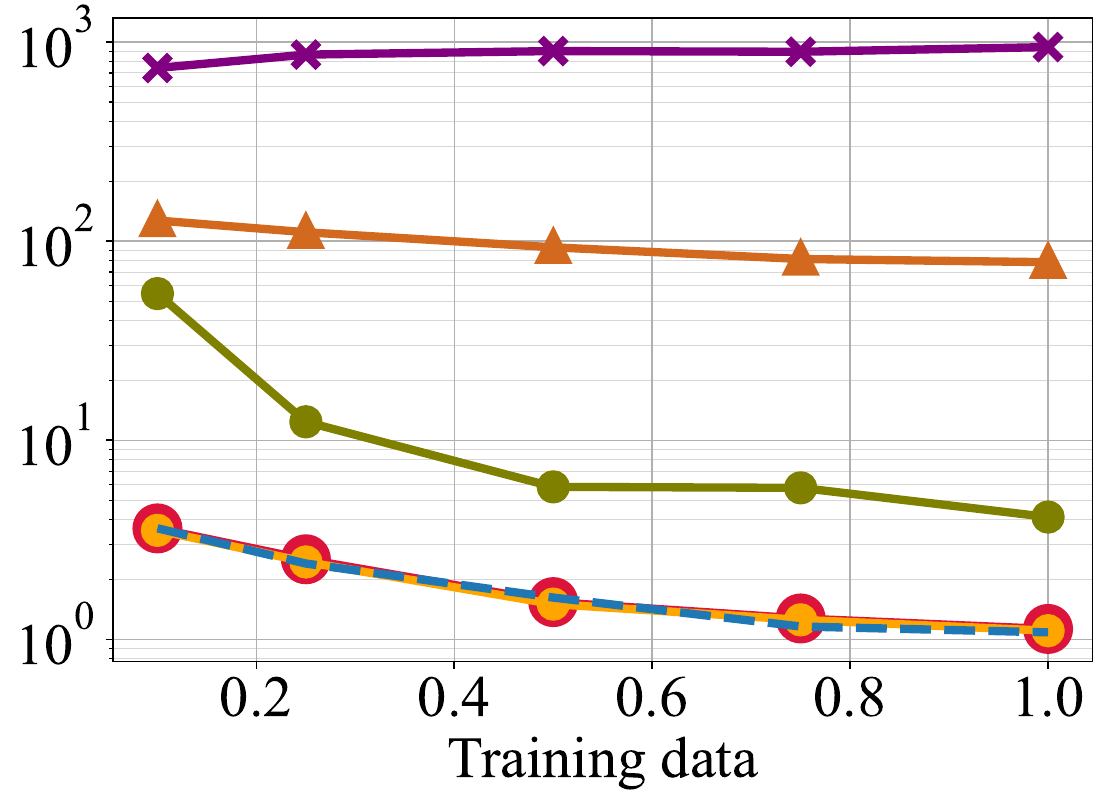}
        \end{subfigure}
    \fi%
    \begin{subfigure}{\widthPerSubfigure\linewidth}
        \centering
        \includegraphics[width=\linewidth]{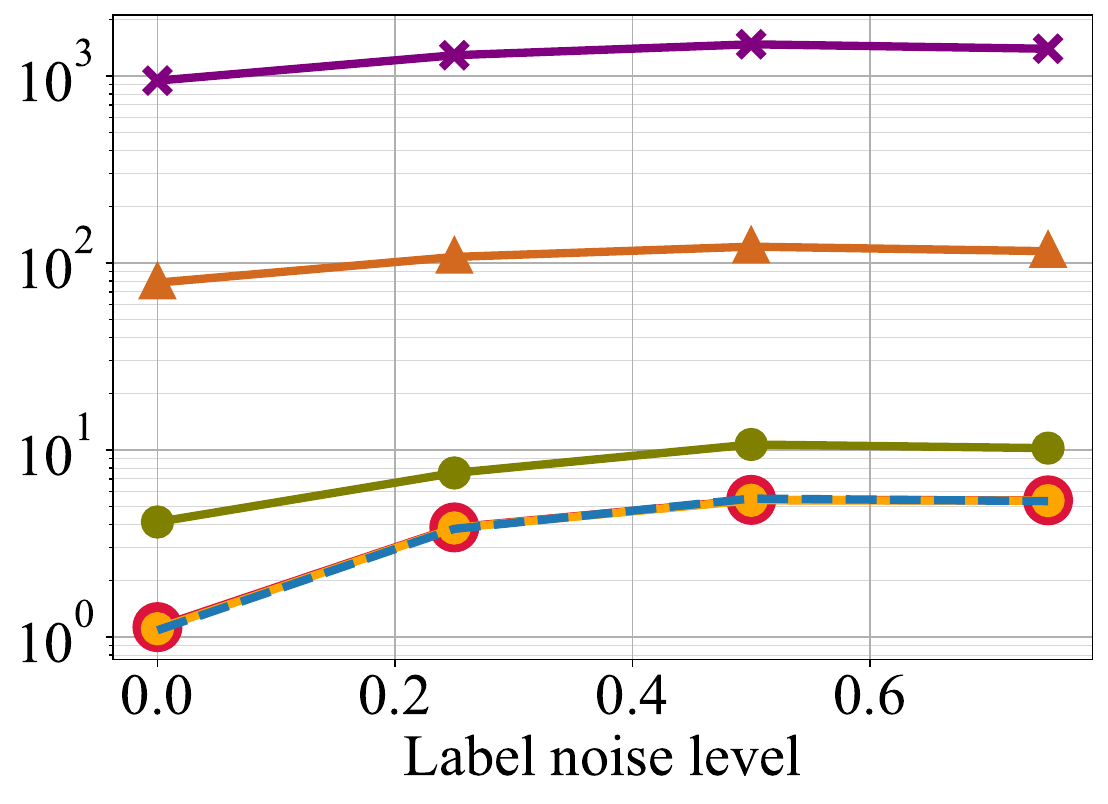}
    \end{subfigure}
    \begin{subfigure}{\widthPerSubfigure\linewidth}
        \centering
        \includegraphics[width=\linewidth]{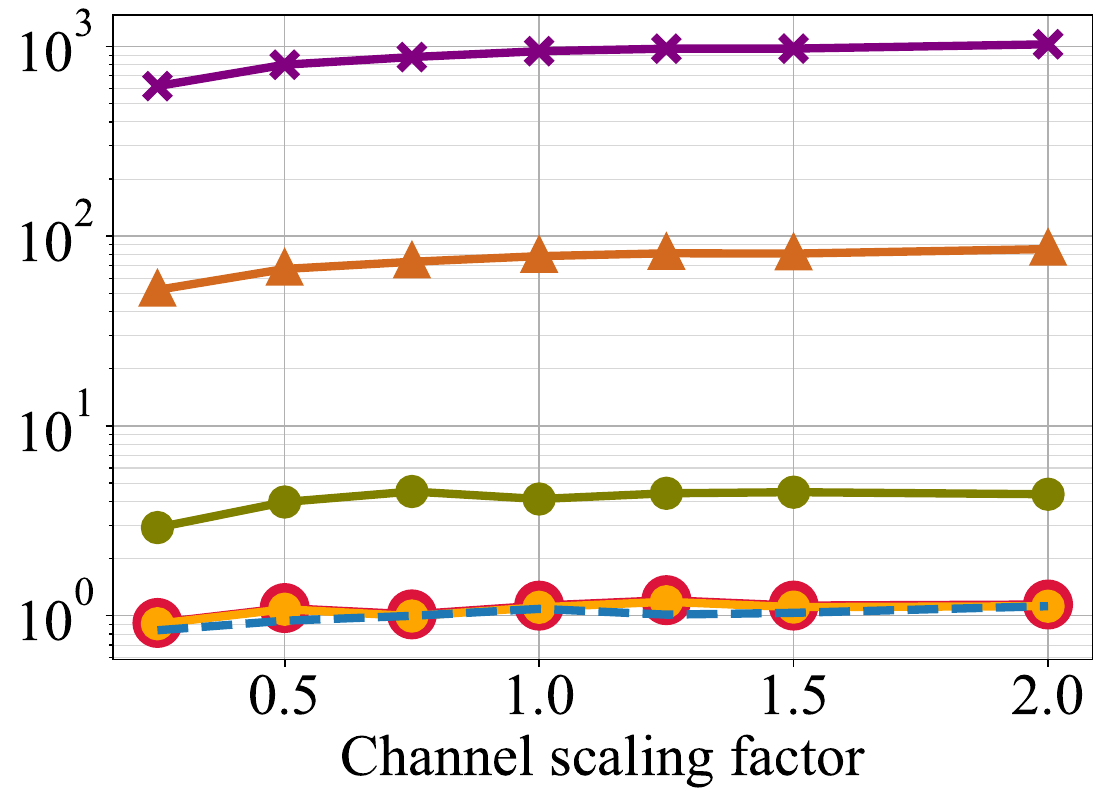}
    \end{subfigure}
    \ifShowMoreHyperparameters
        \begin{subfigure}{\widthPerSubfigure\linewidth}
            \centering
            \includegraphics[width=\linewidth]{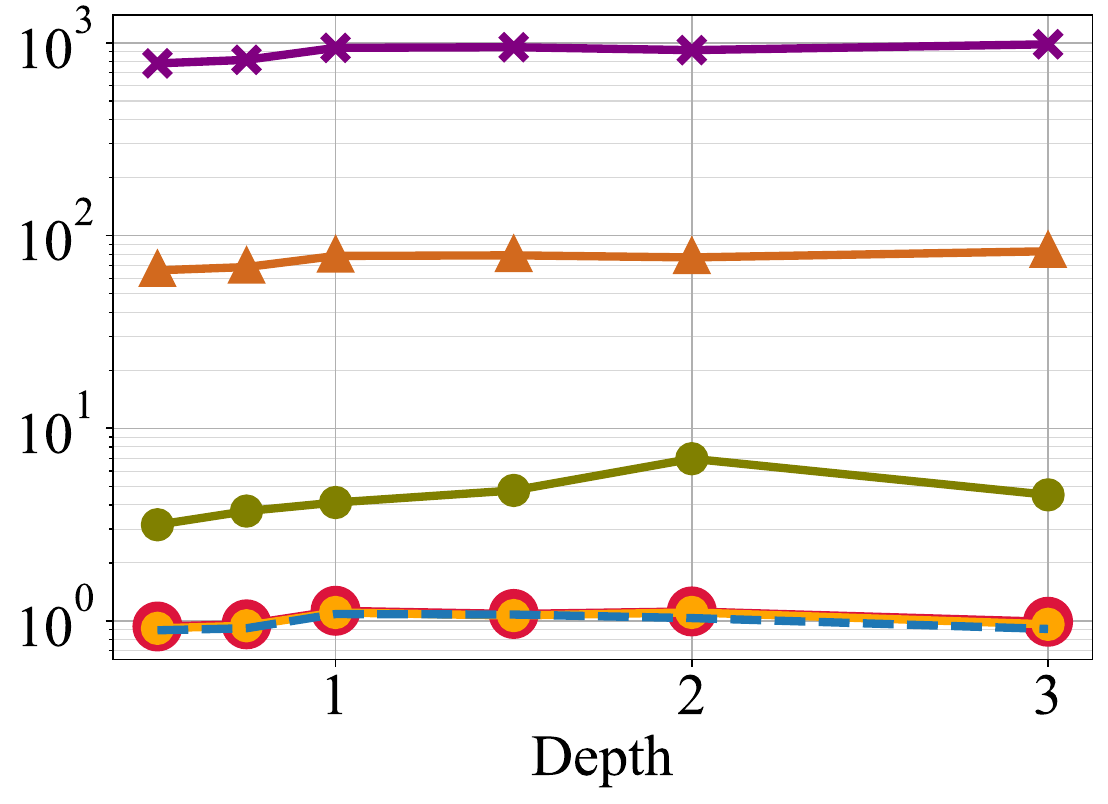}
        \end{subfigure}
    \fi
    \begin{subfigure}{\widthPerSubfigure\linewidth}
        \centering
       \includegraphics[width=\linewidth]{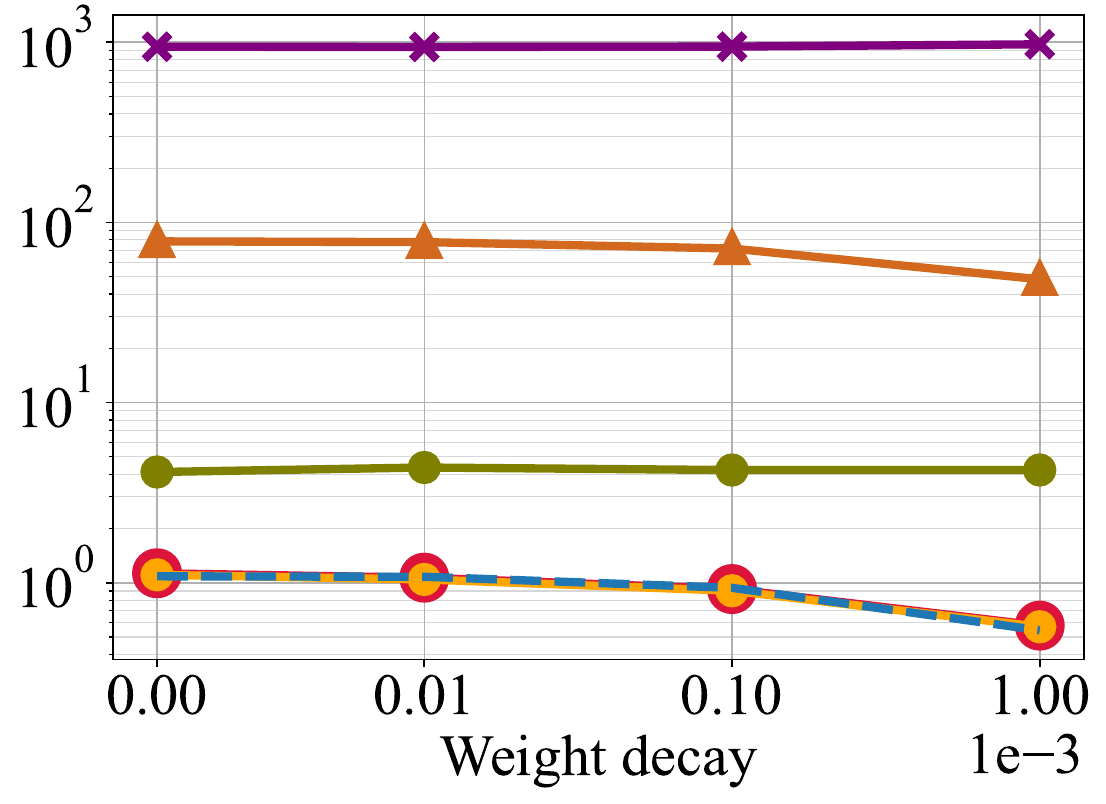}
    \end{subfigure}
    \caption{
        Numerical results for ResNet-18 on CIFAR-10 under varied
        \ifShowMoreHyperparameters
            training data usage,
        \fi
        label noise level, width,
        \ifShowMoreHyperparameters
            depth,
        \fi
        and weight decay.
        The isotropic version of \cref{lemma:neu} is used.
    }\label{fig:cifar10_hyperparameters}
\end{figure}

\cref{fig:cifar10_hyperparameters} shows how our bound captures the generalization under varied 
\ifShowMoreHyperparameters
    generalization-critical hyperparameters.
\else
    label noises, over-parameterization, and regularizations.
\fi
It can be seen that our bound with large $\lambda$ (\ie{} putting more weights on reducing the trajectory term) can well capture the generalization under these variations, while the some existing bounds fail to capture the improved generalization under stronger weight decay.

Results for MLP on MNIST
\ifShowViT
and ViT on CIFAR-10
\fi
can be found in \cref{appendix:more_experiments}. According to \cref{fig:cifar10_lrbs,fig:cifar10_hyperparameters} and results in the appendix, our bounds are numerically much tighter.
The main improvement comes from the trajectory term, as a result of both \cref{lemma:neu} and the optimized omniscient trajectory leveraging flatness. See \cref{appendix:more_cifar10_experiments} for detailed discussions.
\modification[\rfour]{In \cref{appendix:scaling}, we evaluate our bounds under weight scaling, where the generalization is still well captured.}

The trajectory term and the flatness almost diminish after multiplied together, and the bound mainly comprises of the penalty term. In \cref{theorem:flatness} and \cref{eq:approximation}, the term heavily relies on the population gradients. 
We consider this reliance to be the major limitation of our bound.

\section{Extensions}\label{sec:extension}

Our technique is so flexible that it can be combined with existing techniques, such as the individual-sample technique and CMI framework.
We can also combine it with the data-dependent prior technique of \citet{negrea_data-dependent_prior} for SGLD to yield an SGD bound.
The combined results are listed in \cref{appendix:extension}.
Notably, the omniscient trajectories in these results depend on all new random variables introduced by the variants. As a result, the omniscient trajectory can leverage the features of the variants and optimize more specifically.

Our technique can also address the known limitations of representative existing information-theoretic bounds on convex-Lipschitz-Bounded (CLB) problems in stochastic convex optimization (SCO).
An SCO problem is a triplet $(\weights, \samples, \ell)$, where $\weights$ is convex and $\ell(\cdot, \instanceSample)$ is convex given any $\instanceSample \in \samples$. 
The CLB subclass $\convexOptimization_{L, D}$ of SCO further requires $\weights$ is closed and bounded with a diameter $D$, while $\ell(\cdot, \instanceSample)$ is $L$-Lipschitz given any $\instanceSample \in \samples$.
The goal is to minimize the \emph{excess} (population) risk $\ex[\weight]{\poploss{\weight}} - \inf_{\instanceWeight \in \weights} \poploss{\instanceWeight}$ under an unknown sample distribution, which can be bounded by the excess optimization error and the generalization error. The minimax excess risk reflects the worst-case generalization:
$
    \inf_{\algo \in \mathcal{A}} \sup_{\mu \in \mathcal{M}_1(\samples)} \ex[\weight \sim \mu^n \circ \algo]{\poploss{\weight}} - \inf_{\instanceWeight \in \weights} \poploss{\instanceWeight},
$ 
where $\mathcal{A}$ is a family of algorithms and $\mathcal{M}_1(\samples)$ is the set of all distributions over $\samples$.

An $O(L D/\sqrt{n})$ minimax rate has been obtained for Gradient Descent (GD) using uniform stability.
However, \citet{haghifam_limitations_2023} have shown there exist CLB instances where PAC-Bayesian bounds, the vanilla (C)MI bounds, their Gaussian-perturbed variants \modification[\Bnc]{(\cref{lemma:wang,lemma:neu} fall in this category if adapted to CLB settings)}, and representative variants all have an $\Omega(1)$ lower-bound, meaning these representative bounds cannot explain the generalization of GD on CLB problems. 
\modification[\rone]{
\citet{attias_information_2024} have shown any accurate CLB learner with low excess generalization risk must have high CMI, \ie{} the CMI-accuracy trade-off.
Recently, \citet{clb_example-conditioned_2023} have addressed the counterexample by \citet{haghifam_limitations_2023} and all CLB problems by augmenting CMI bounds with stability. However, a lot of complicated CMIs are involved and due to this complexity it is hard to intuitively interpret their relationship with the CMI-accuracy trade-off. Therefore, we try our technique to see whether it leads to a similar but simpler result.}
Combining individual-sample bounds with omniscient trajectory, we can easily resemble the stability argument:
\begin{theorem}\label{theorem:minimax}
    (Informal)
    For CLB problem $(\weights, \samples, \ell)$ and distribution $\mu$ over $\samples$, we have
    \begin{multline}
            \operatorname{gen}
            \le \inf_{\Delta G^{1:n}} \frac{1}{n}  {\sum_{i=1}^n} \left( LD \sqrt{
                    2 I\left(\weight_T + \Delta G^i; \sample_i \mid \sample_{-i}\right)
                }  +  \opabs{\ex{\Delta_{\Delta G^i}(\weight_T, \sample_i) - \Delta_{\Delta G^i}(\weight_T, \trainingSet')}}\right)\\
            \stackrel{\modification[\rone]{(\text{\tiny for stable algorithms})}}{\le} \frac{2 L}{n}  {\textstyle \sum_{i=1}^n} \ex{\norm{\weight_T - \ex{\weight_T \mid \sample_{-i}}} }
            \underset{\modification[\rone]{(\text{\tiny with $T$ steps of size $\eta$})}}{\stackrel{\modification[\rone]{(\text{for GD})}}{\le}} 8 L^2 \sqrt{T} \eta + 
                8 L^2 T \eta / n
    \end{multline}
\end{theorem}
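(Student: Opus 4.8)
The plan is to establish the displayed chain link by link. The engine is \emph{not} the full \cref{theorem:omniscient_trajectory} (no SGLD-like layer is needed here) but the bare omniscient-trajectory idea — shifting the output weight by a data-dependent translation and paying a $\Delta$-penalty — applied on top of the \emph{individual-sample} MI bound instead of \cref{lemma:MI_bound}.

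\textbf{Step 1 (omniscient shift inside the individual-sample bound).} First I would normalize the loss: since $\operatorname{gen}$ is invariant under replacing $\ell(w,z)$ by $\ell(w,z)-\ell(w_0,z)$ for a fixed reference $w_0\in\weights$ (the correction telescopes between population and empirical risk), and since this leaves every $\Delta_\gamma$ unchanged, we may assume $\ell(\cdot,\cdot)$ takes values in $[-LD,LD]$ by $L$-Lipschitzness and $\operatorname{diam}\weights=D$; in particular $\ell(w,Z)$ is $LD$-sub-Gaussian for every fixed $w$. Then, for any deterministic functions $\Delta G^{1:n}$ of $(\trainingSet,V,g_{1:T},\weight_{0:T})$, write $\operatorname{gen}=\frac1n\sum_i\ex{\poploss{\weight_T}-\ell(\weight_T,\sample_i)}$ and, in each summand, insert $\retroWeight^{(i)}\defeq\weight_T+\Delta G^i$:
\[
\ex{\poploss{\weight_T}-\ell(\weight_T,\sample_i)}=\ex{\poploss{\retroWeight^{(i)}}-\ell(\retroWeight^{(i)},\sample_i)}+\ex{\Delta_{\Delta G^i}(\weight_T,\sample_i)-\Delta_{\Delta G^i}(\weight_T,\trainingSet')},
\]
exactly as in the a-change-of-trajectory derivation preceding \cref{theorem:omniscient_trajectory} (using $\ex[\trainingSet']{\Delta_{\Delta G^i}(\weight_T,\trainingSet')}=\ex[\sample'\sim\mu]{\ell(\weight_T+\Delta G^i,\sample')-\ell(\weight_T,\sample')}$ because $\trainingSet'$ is a fresh copy independent of $\Delta G^i$). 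The individual-sample MI bound applied to $\retroWeight^{(i)}$ with sub-Gaussianity constant $LD$, followed by the standard monotonicity $I(\,\cdot\,;\sample_i)\le I(\,\cdot\,;\sample_i\mid\sample_{-i})$, bounds the first group by $LD\sqrt{2I(\retroWeight^{(i)};\sample_i\mid\sample_{-i})}$. Averaging over $i$ and taking $\inf$ over $\Delta G^{1:n}$ yields the first line.

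\textbf{Step 2 (a clever $\Delta G^i$ kills the MI).} The point of the $\inf$ is that we may take $\Delta G^i\defeq\ex{\weight_T\mid\sample_{-i}}-\weight_T$, which is a legitimate omniscient perturbation (a deterministic function of $\trainingSet$ and $\weight_T$). Then $\retroWeight^{(i)}=\ex{\weight_T\mid\sample_{-i}}$ is $\sigma(\sample_{-i})$-measurable, so $I(\retroWeight^{(i)};\sample_i\mid\sample_{-i})=0$ and the MI term disappears. For the penalty, $L$-Lipschitzness of $\ell(\cdot,\sample_i)$ and of $\emploss[\trainingSet']{\cdot}$ (an average of $L$-Lipschitz functions) gives $\abs{\Delta_{\Delta G^i}(\weight_T,\sample_i)},\abs{\Delta_{\Delta G^i}(\weight_T,\trainingSet')}\le L\norm{\Delta G^i}$, so the $i$-th penalty is at most $2L\ex{\norm{\weight_T-\ex{\weight_T\mid\sample_{-i}}}}$; summing and dividing by $n$ gives the second line. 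This is precisely the sense in which the technique "resembles the stability argument": the omniscient freedom converts mutual information into on-average resampling stability.

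\textbf{Step 3 (GD's resampling stability) and the minimax corollary.} For GD I would pass, by Jensen and convexity of the norm, from $\ex{\norm{\weight_T-\ex{\weight_T\mid\sample_{-i}}}}$ to the replace-one sensitivity $\ex_{\sample_{-i},\sample_i,\sample_i'}\!\norm{\weight_T(\sample_{-i},\sample_i)-\weight_T(\sample_{-i},\sample_i')}$, and bound the latter by the non-smooth convex GD recursion. Coupling the two runs to share $\weight_0$, writing the common part of the gradient as $\nabla F$ with $F$ convex and $L$-Lipschitz, and using that the projection onto $\weights$ is non-expansive together with monotonicity of $\nabla F$, one gets $\delta_t\le\sqrt{\delta_{t-1}^2+4L^2\eta^2}+\tfrac{2L\eta}{n}$ with $\delta_0=0$; an induction comparing $\delta_t$ against $a_t=2L\eta\sqrt t$ gives $\delta_T\le 2L\eta\sqrt T+\tfrac{2L\eta T}{n}$, hence the third line up to absolute constants. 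Combining this $\operatorname{gen}$ bound with GD's $O(D^2/(\eta T)+\eta L^2)$ excess optimization error and optimizing $(T,\eta)$ (e.g.\ $T\asymp n^2$, $\eta\asymp D/(Ln^{3/2})$) then gives the $O(LD/\sqrt n)$ minimax excess risk claimed in the introduction.

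\textbf{Expected main obstacle.} Steps 1--2 are essentially bookkeeping plus the one key choice of $\Delta G^i$; the delicate part is Step 3, because non-smooth GD is \emph{not} $O(\eta T/n)$-stable — one must run the recursion on squared distances (rather than applying the triangle inequality) to extract the $\sqrt T\,\eta$ term, and keep careful track that the differing sample contributes with weight $1/n$ at \emph{every} step, which is exactly what produces the two-term $\sqrt T\eta + T\eta/n$ shape.
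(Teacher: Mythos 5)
Your proposal matches the paper's proof in all its essential moves: start from the individual-sample MI bound (you correctly note that no SGLD-like layer is needed because the omniscient terminal will carry zero conditional MI), choose $\Delta G^i=\ex{\weight_T\mid\sample_{-i}}-\weight_T$ so that $\weight_T+\Delta G^i$ becomes $\sigma(\sample_{-i})$-measurable and the MI vanishes, bound the penalty by $2L\,\ex{\norm{\weight_T-\ex{\weight_T\mid\sample_{-i}}}}$ via Lipschitzness, then control the resulting on-average replace-one distance of GD. Two places differ in flavor but not in substance. First, the paper handles the non-sub-Gaussianity of CLB losses by invoking Theorem~11 of \citet{haghifam_limitations_2023} (a Lipschitz-and-bounded variant of the MI bound), whereas you normalize $\ell(w,z)\mapsto\ell(w,z)-\ell(w_0,z)$ to get an $[-LD,LD]$-bounded, hence $LD$-sub-Gaussian, surrogate and correctly observe that this leaves $\operatorname{gen}$, every $\Delta_\gamma$, and every MI term unchanged — a clean, self-contained alternative that buys the same $LD$ prefactor. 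Second, the paper passes from $\ex{\norm{\weight_T-\ex{\weight_T\mid\sample_{-i}}}}$ to $\ex{\norm{\weight_T-\weight_T'}}$ via \cref{corollary:expected_distance_to_center} and then cites Theorem~3.2 of \citet{bassily_stability_2020} for the $4L\sqrt{T}\eta+4LT\eta/n$ bound, while you re-derive that recursion from scratch (and in fact get a slightly tighter constant). One small point worth making explicit, which the paper does and you leave implicit: you need $\ex{\weight_T\mid\sample_{-i}}\in\weights$ so that the Lipschitz estimate on the penalty is legitimate, and this follows from convexity of $\weights$ together with the standing assumption that the whole GD trajectory remains inside $\weights$.
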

This bound recovers that of \citet{bassily_stability_2020} up to a constant. 
Following similar steps of \citet{haghifam_limitations_2023}, one can recover the best-achievable $O(L D / \sqrt{n})$ excess risk bound. \modification[\rone]{\cref{theorem:minimax} and its proof essentially state the original trajectory has a vanishing distance (measured by loss differences) to the omniscient trajectory of zero individual MI. Therefore, \cref{theorem:minimax} provides an alternative to the trade-off on GD or stable algorithms: although the accurate learners have high CMI, they are quite close to auxiliary (oracle) learners with low MI\footnote{The auxiliary oracle learners $\ex{\weight_T \mid \samples_{-i}}$ does not obey the MI-accuracy trade-off due to its access to $\mu$.}. Extending this intuition for more general accurate learners in \cref{theorem:epsilon_learner} of \cref{appendix:epsilon_learners}, we explore to replace CMI with a new information measure induced by our technique and break through the information-accuracy trade-off: this generalization-bounding new information measure vanishes together with risks. However, \cref{theorem:epsilon_learner} requires vanishing excess optimization error and is still partial and preliminary.}
\modification[\Bnc]{Lastly, when losses are smooth and convex, we also recover a stability-based $O(1/n)$ rate for SGD, as detailed in \cref{appendix:smooth}.}

\section{Conclusion and Limitation}\label{sec:limitation}\label{sec:conclusion}

In this paper, we address the inadequate leverage of flatness in existing information-theoretic generalization bounds for SGD.
By continuing the ``being more specific'' trend in generalization theory, we propose the omniscient trajectory that can be optimized depending on all random variables in the training process to better leverage the flatness.
Our bound shows that an algorithm generalizes better if its output flatness aligns with its output covariance.
When applied to deep neural networks, our bound aligns well with the generalization under varied flatness and is numerically tighter.
When applied to CLB problems for GD, the technique yields an $O(1\sqrt{n})$ minimax rate, addressing a recently highlighted limitation of information-theoretic generalization theory.

However, our bound relies on population gradients and Hessians. \modification[\Bnc]{Although tolerable for theories (\eg{} \citet{neu_information-theoretic_2021} and \citet{wang_generalization_2021,wang_two_2023} also rely on population statistics),} this problem prevents the bound from being a part of self-certified algorithms \citep{perez-ortiz_tighter_2021}.
Moreover, the most information-theoretic components vanish in both the CLB bound and the experiments (see \cref{eq:get_rid_of_MI} and \cref{fig:cifar10_lrbs_punishment,fig:mnist_lrbs_punishment}).
\modification[\Bnc]{We conjecture these two issues may be addressed in future works by information-theoretically bounding the generalization of higher-order statistics. More detailed discussions on the limitations can be found in \cref{appendix:limitations}.}

\checkPageLimit{main}{10}


\subsubsection*{Reproducibility Statement}

All proofs and full versions of informally stated results can be found in the appendices for the corresponding sections.
Details of experiments can be found in \cref{appendix:estimation}.
The codes for the experiments can be found in the supplementary material or at \url{https://github.com/peng-ze/omniscient-bounds}.

\subsubsection*{Acknowledgments}

This work was supported by the NSFC Project (62192783, 92370129, 62222604, 62206052, 62376010), China Postdoctoral Science Foundation (2024M750424), Fundamental Research Funds for the Central Universities (020214380120, 020214380128), State Key Laboratory Fund (ZZKT2024A14), the Postdoctoral Fellowship Program of CPSF (GZC20240252), Jiangsu Funding Program for Excellent Postdoctoral Talent (2024ZB242), Jiangsu Science and Technology Major Project (BG2024031), and Beijing Nova Program (20230484344, 20240484642).

\bibliography{ref,classics,dependent,flatness,pac_bayesian}
\bibliographystyle{iclr2025_conference}

\newpage
\appendix
\section{Technical Lemmas}

\begin{lemma}
    If $X, Y \in \reals^d$ are independent random variables, then we have $\ex{\norm{X - \ex{Y}}} \le \ex{\norm{X - Y}}$.
\end{lemma}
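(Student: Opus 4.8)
The plan is to condition on $X$ and apply Jensen's inequality to the convex map $v \mapsto \norm{v}$ on $\reals^d$. First I would dispose of the trivial case: if $\ex{\norm{X - Y}} = \infty$ there is nothing to prove, so assume it is finite; by the triangle inequality $\norm{X - \ex{Y}} \le \norm{X - Y} + \norm{Y - \ex{Y}}$ and a similar bound, all the expectations appearing below are finite and Tonelli/Fubini applies.

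Next, using independence of $X$ and $Y$, for (almost) every fixed realization $x$ of $X$ the conditional law of $Y$ given $X = x$ is just the law of $Y$, so
\begin{align}
    \ex{\norm{X - Y} \,\middle|\, X = x} = \ex[Y]{\norm{x - Y}}, \qquad \ex{Y \,\middle|\, X = x} = \ex{Y}.
\end{align}
Then I would write $x - \ex{Y} = \ex[Y]{x - Y}$ and apply Jensen's inequality to the convex function $\norm{\cdot}$:
\begin{align}
    \norm{x - \ex{Y}} = \norm{\ex[Y]{x - Y}} \le \ex[Y]{\norm{x - Y}} = \ex{\norm{X - Y} \,\middle|\, X = x}.
\end{align}

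Finally, I would take expectation over $X$ on both sides and use the tower property: $\ex{\norm{X - \ex{Y}}} \le \ex{\ex{\norm{X - Y} \mid X}} = \ex{\norm{X - Y}}$, which is the claim. There is no real obstacle here; the only point requiring a little care is the justification that the conditional expectations behave as stated, which is exactly where independence is used (both to replace $\ex{Y \mid X}$ by $\ex{Y}$ and to identify $\ex{\norm{X-Y}\mid X=x}$ with $\ex[Y]{\norm{x-Y}}$), together with the integrability bookkeeping noted above.
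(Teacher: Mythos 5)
Your proposal is correct and follows essentially the same argument as the paper: condition on $X$ (using independence to identify $\ex{\norm{X-Y}\mid X}$ with the expectation over $Y$ alone), apply Jensen's inequality to the convex map $\norm{\cdot}$, and take the outer expectation over $X$. The paper just writes this more compactly as $\ex{\norm{X-Y}} = \ex[X]{\ex[Y]{\norm{X-Y}}} \ge \ex[X]{\norm{\ex[Y]{X-Y}}} = \ex[X]{\norm{X-\ex{Y}}}$.
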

\begin{proof}
    By the convexity of the $L_2$ norm and the independence between $X$ and $Y$, we have
    \begin{align}
        \ex{\norm{X - Y}}
        =&  \ex[X]{\ex[Y]{\norm{X - Y}}}
        \ge    \ex[X]{\norm{\ex[Y]{X - Y}}}
        =       \ex[X]{\norm{X - \ex{Y}}}.
    \end{align}
\end{proof}
\begin{corollary}\label{corollary:expected_distance_to_center}
    If $X_1, X_2 \in \reals^d$ are I.I.D. copies of $X$, then we have $\ex{\norm{X - \ex{X}}} \le \ex{\norm{X_1 - X_2}}$.
\end{corollary}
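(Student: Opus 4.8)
The plan is to obtain this as an immediate consequence of the preceding lemma by an appropriate substitution. First I would invoke the lemma with the choice $X \leftarrow X_1$ and $Y \leftarrow X_2$: since $X_1$ and $X_2$ are I.I.D.\ copies of $X$, they are in particular independent, so the lemma's hypothesis is satisfied and it yields
\begin{align}
    \ex{\norm{X_1 - \ex{X_2}}} \le \ex{\norm{X_1 - X_2}}.
\end{align}
Next I would simplify the left-hand side: because $X_2$ has the same distribution as $X$, we have $\ex{X_2} = \ex{X}$, and because $X_1$ has the same distribution as $X$, the scalar quantity $\ex{\norm{X_1 - \ex{X}}}$ equals $\ex{\norm{X - \ex{X}}}$. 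Chaining these identities with the displayed inequality gives $\ex{\norm{X - \ex{X}}} = \ex{\norm{X_1 - \ex{X_2}}} \le \ex{\norm{X_1 - X_2}}$, which is the claim.

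There is essentially no obstacle here: the corollary is a direct specialization of the lemma, and the only "work" is the bookkeeping observation that an I.I.D.\ copy shares both the mean and the (univariate) distribution of the relevant norm, so each side may be rewritten in terms of $X$ alone. If one wanted to be maximally explicit one could instead unfold the lemma's two-line argument inline — applying Jensen's inequality to the convex map $y \mapsto \norm{X_1 - y}$ conditionally on $X_1$ and using independence of $X_2$ from $X_1$ — but reusing the lemma verbatim is cleaner and is the approach I would present.
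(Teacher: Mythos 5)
Your proposal is correct and matches the paper's intent exactly: the corollary is stated immediately after the lemma as a direct specialization, with the same substitution $X \leftarrow X_1$, $Y \leftarrow X_2$ and the same observation that an I.I.D.\ copy shares the mean and the distribution of the norm. Nothing further is needed.
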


\begin{lemma}\label{lemma:expected_squared_distance_to_center}
    If $X_1, X_2 \in \reals^d$ are I.I.D. copies of $X$, then we have $\ex{\norm{X - \ex{X}}^2} \le \frac{1}{2} \ex{\norm{X_1 - X_2}^2}$.
\end{lemma}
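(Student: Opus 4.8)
The plan is a direct bias--variance style expansion. Write $\mu \defeq \ex{X} = \ex{X_1} = \ex{X_2}$ and center the two copies by setting $Y_i \defeq X_i - \mu$, so that $\ex{Y_1} = \ex{Y_2} = 0$ and $Y_1, Y_2$ are I.I.D. First I would expand the squared norm of the difference:
\begin{align}
    \ex{\norm{X_1 - X_2}^2}
    = \ex{\norm{Y_1 - Y_2}^2}
    = \ex{\norm{Y_1}^2} - 2\,\ex{Y_1^\top Y_2} + \ex{\norm{Y_2}^2}.
\end{align}

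Next I would kill the cross term: since $Y_1$ and $Y_2$ are independent, $\ex{Y_1^\top Y_2} = \ex{Y_1}^\top \ex{Y_2} = 0$. Combining this with $\ex{\norm{Y_1}^2} = \ex{\norm{Y_2}^2} = \ex{\norm{X - \mu}^2}$ (both are copies of $X$) gives $\ex{\norm{X_1 - X_2}^2} = 2\,\ex{\norm{X - \ex{X}}^2}$, which is in fact the identity behind the claimed inequality; dividing by $2$ yields the stated bound (with equality).

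There is essentially no obstacle here: the only point requiring a word of care is justifying the interchange $\ex{Y_1^\top Y_2} = \ex{Y_1}^\top\ex{Y_2}$, which is the standard product rule for expectations of independent integrable random vectors, applied coordinatewise. One should also implicitly assume $\ex{\norm{X}^2} < \infty$ so that all the expectations above are finite and the expansion is legitimate; this is consistent with how the lemma is used elsewhere in the paper. An alternative, if one prefers to avoid even mentioning independence of the cross term, is to condition: $\ex{\norm{X_1 - X_2}^2} = \ex[X_1]{\ex[X_2]{\norm{X_1 - X_2}^2}} \ge \ex[X_1]{\norm{X_1 - \ex{X_2}}^2} = \ex{\norm{X - \ex{X}}^2}$ by Jensen, but that only gives the weaker constant $1$; to recover the sharp constant $\tfrac12$ one genuinely needs the vanishing cross term, so the expansion route above is the one I would present.
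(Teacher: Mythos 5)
Your argument is correct and is essentially the same as the paper's: both expand $\ex{\norm{X_1-X_2}^2}$ as a sum of second moments minus a cross term, use independence to factor the cross term, and obtain the identity $\ex{\norm{X_1-X_2}^2}=2\ex{\norm{X-\ex{X}}^2}$. Centering by $\mu$ first is a purely cosmetic reorganization of the paper's computation, and your observation that the claimed inequality is actually an equality matches what the paper's algebra shows.
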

\begin{proof}
    \begin{align}
        \ex{\norm{X_1 - X_2}^2}
        =&  \ex{\norm{X_1}^2} - 2 \ex{X_1^\top X_2} + \ex{\norm{X_2}^2}\\
        =&  \ex{\norm{X_1}^2} - 2 \ex{X_1}^\top \ex{X_2} + \ex{\norm{X_2}^2}\\
        =&  \ex{\norm{X}^2} - 2 \ex{X}^\top \ex{X} + \ex{\norm{X}^2}\\
        =&  2 \ex{\norm{X - \ex{X}}^2}.
    \end{align}
\end{proof}

\begin{modified}[\rone]
    \begin{lemma}\label{lemma:convex}
        Let $f: \weights \to \reals_{\ge 0}$ be a non-negative convex function. Let $W \in \weights$ be a random variable, then we have
        \begin{align}
            \ex{\abs{f(\ex{W}) - f(W)}} \le 2 \ex{f(W)}.
        \end{align}
    \end{lemma}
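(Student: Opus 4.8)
The plan is to decompose the absolute value into its positive and negative parts, $\abs{a-b} = (a-b)_+ + (b-a)_+$, applied with $a = f(\ex{W})$ and $b = f(W)$, and then bound each part separately by $\ex{f(W)}$. The only ingredients needed are the non-negativity of $f$ and Jensen's inequality (which uses convexity and, implicitly, that $\weights$ is convex so that $\ex{W} \in \weights$ and $f(\ex{W})$ is well-defined).

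First I would handle $\ex{(f(\ex{W}) - f(W))_+}$. Since $f(W) \ge 0$ pointwise, we get $f(\ex{W}) - f(W) \le f(\ex{W})$, and since $f(\ex{W}) \ge 0$ this yields $(f(\ex{W}) - f(W))_+ \le f(\ex{W})$. Taking expectations and then applying Jensen's inequality, $\ex{(f(\ex{W}) - f(W))_+} \le f(\ex{W}) = f(\ex{W}) \le \ex{f(W)}$.

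Next I would handle $\ex{(f(W) - f(\ex{W}))_+}$. Since $f(\ex{W}) \ge 0$, we have $f(W) - f(\ex{W}) \le f(W)$, hence $(f(W) - f(\ex{W}))_+ \le f(W)$, and therefore $\ex{(f(W) - f(\ex{W}))_+} \le \ex{f(W)}$. Summing the two estimates gives $\ex{\abs{f(\ex{W}) - f(W)}} \le 2\,\ex{f(W)}$, as claimed.

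There is essentially no hard step: the argument is a two-line estimate once the decomposition is set up. The only points worth a sentence of care are (i) confirming $\ex{W}$ lies in the convex domain $\weights$ so that $f(\ex{W})$ and Jensen's inequality are meaningful, and (ii) noting integrability — one may assume $\ex{f(W)} < \infty$, since the inequality is trivial otherwise. Neither of these uses convexity beyond what the hypothesis already grants, and the non-negativity of $f$ is exactly what makes both one-sided bounds collapse to $\ex{f(W)}$.
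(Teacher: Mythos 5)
Your proof is correct and uses the same ingredients as the paper's: non-negativity of $f$ to kill the sign, and Jensen's inequality to get $f(\ex{W}) \le \ex{f(W)}$. The paper applies the triangle inequality $\abs{f(\ex{W}) - f(W)} \le f(\ex{W}) + f(W)$ in one step and then Jensen once, whereas you split into positive and negative parts and bound each side separately — a minor presentational variant of the same two-line estimate, not a genuinely different route.
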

    \begin{proof}
        \begin{align}
            \ex{\abs{f(\ex{W}) - f(W)}} 
            \le& \ex{\abs{f(\ex{W})}} + \ex{\abs{f(W)}}\\
            =&  \ex{f(\ex{W})} + \ex{f(W)} & (f \ge 0)\\
            \le&    2 \ex{f(W)}. & (\text{convexity of } f)
        \end{align}
    \end{proof}
\end{modified}

\section{Proofs and Details for Sec. \ref{sec:bounds} Proposed Omniscient Trajectory}\label{appendix:proofs}

\subsection{Details for Section \ref{sec:motivation}} \label{appendix:motivational_examples}

In this section, we present illustrative examples of suboptimality and overfitting of \cref{lemma:neu} mentioned in \cref{sec:motivation}.

\subsubsection{Suboptimality}

Different $(\trainingSet, \weight)$ has different empirical and population Hessians with different sharp directions. For instance, assume $v_1$ is the only sharp direction for $(\instanceTrainingSet_1, \instanceWeight_1)$ and $v_2$ for $(\instanceTrainingSet_2, \instanceWeight_2)$. Then the non-specific $\Sigma$ would apply small noises in both directions. If $v_1$ and $v_2$ are nearly orthogonal, the noises along $v_2$ and $v_1$ are insufficient for $(\instanceTrainingSet_1, \instanceWeight_1)$ and $(\instanceTrainingSet_2, \instanceWeight_2)$, respectively.

\subsubsection{Overfitting}

When $k$ output weights are sampled, they lie in a $k$-dimensional subspace. As a result, noises along the remaining $d-k$ directions will not decrease the trajectory term. The optimal $\Sigma$ induces infinitely small noises along the remaining $d-k$ directions to decrease the penalty terms. Therefore, the optimal $\Sigma$ will have rank at most $k$ when $k$ samples are used. 
If one additional weight is sampled and it does not reside in the $k$-dimensional subspace,  $\Sigma$ cannot reduce the variance along the $(k+1)$-th direction. 
As a result, the bound optimized for $k$ samples is suboptimal for the $k+1$ samples, indicating overfitting.

\subsection{Lemma \ref{lemma:key}}

\cref{lemma:key} extends Lemma 4 of \citet{wang_generalization_2021} that bounds the mutual information using gradient variances.

\begin{lemma}\label{lemma:key}
    Let $X, Y \in \reals^{d_1}$, $\Delta \in \reals^{d_2}$ and $O \in \reals^{d_3}$ be arbitrary random variables. Let $N \sim \gaussian$ be a $d_1$-dimensional Gaussian noise that is independent of $(X, Y, \Delta, O)$.
    Then for any $\sigma > 0$, for any deterministic $\reals^{d_1}$ function $f$ of $(X, Y, \Delta)$, we have for any function $\Omega$ solely of $(Y, O)$,
    \begin{align}
        I(f(X, Y, \Delta) + \sigma N; X \mid Y, O) \le \frac{1}{2 \sigma^2}\ex{\norm{f(X, Y, \Delta) - \Omega(Y, O)}^2} \label{eq:key_with_other},
    \end{align}
    and for any function $\Omega$ solely of $Y$,
    \begin{align}
        I(f(X, Y, \Delta) + \sigma N; X \mid Y) \le \frac{1}{2 \sigma^2}\ex{\norm{f(X, Y, \Delta) - \Omega(Y)}^2} \label{eq:key}.
    \end{align}
\end{lemma}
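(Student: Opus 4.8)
The idea is the standard variational (``golden-formula'') upper bound on conditional mutual information, instantiated with a Gaussian prior centered at $\Omega$, with the auxiliary variable $\Delta$ absorbed by convexity of the KL divergence in its first argument. It suffices to prove \cref{eq:key_with_other}, since \cref{eq:key} is the special case in which $O$ is a (trivial) constant random variable: then a function of $(Y,O)$ is the same as a function of $Y$ and $I(\cdot; X\mid Y, O) = I(\cdot; X\mid Y)$.

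First recall that for random variables $A, B, C$ and any Markov kernel $Q_{A\mid C}$ one has $I(A; B \mid C) \le \ex[(B, C)]{\kl{P_{A\mid B, C}}{Q_{A\mid C}}}$, a consequence of the identity $\ex[C]{\kl{P_{A,B\mid C}}{Q_{A\mid C}\otimes P_{B\mid C}}} = I(A; B\mid C) + \ex[C]{\kl{P_{A\mid C}}{Q_{A\mid C}}}$ together with non-negativity of KL and the KL chain rule. I would apply this with $A = f(X, Y, \Delta) + \sigma N$, $B = X$, $C = (Y, O)$, and the Gaussian prior $Q_{A\mid Y, O} \defeq \gaussian[\Omega(Y, O)][\sigma^2 I]$, which is a legitimate conditioning kernel precisely because $\Omega$ is a deterministic function of $(Y, O)$.

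Next I would compute the inner divergence. Since $N$ is independent of $(X, Y, \Delta, O)$ \emph{jointly}, conditioning additionally on $\Delta$ leaves $N\sim\gaussian$, so $A\mid(X, Y, O, \Delta) \sim \gaussian[f(X, Y, \Delta)][\sigma^2 I]$; hence $P_{A\mid X, Y, O}$ is the $\Delta$-mixture of these Gaussians over $P_{\Delta\mid X,Y,O}$. Convexity of $\kl{\cdot}{Q}$ in the first slot then bounds $\kl{P_{A\mid X, Y, O}}{Q_{A\mid Y, O}}$ by $\ex[\Delta\mid X, Y, O]{\kl{\gaussian[f(X, Y, \Delta)][\sigma^2 I]}{\gaussian[\Omega(Y, O)][\sigma^2 I]}}$, and the closed form $\kl{\gaussian[\mu_1][\sigma^2 I]}{\gaussian[\mu_2][\sigma^2 I]} = \norm{\mu_1 - \mu_2}^2/(2\sigma^2)$ turns this into $\ex[\Delta\mid X, Y, O]{\norm{f(X, Y, \Delta) - \Omega(Y, O)}^2}/(2\sigma^2)$. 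Taking the expectation over $(X, Y, O)$ and chaining with the variational bound yields \cref{eq:key_with_other} (trivially so if the right-hand side is infinite).

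I do not anticipate a substantive obstacle; the proof is an assembly of (i) the variational MI bound, (ii) the observation that additive Gaussian noise makes the relevant channel Gaussian, and (iii) the Gaussian KL formula. The only point demanding care is bookkeeping the conditional independence: one must use the joint independence of $N$ from $(X, Y, \Delta, O)$ so that $A\mid(X,Y,O,\Delta)$ is \emph{exactly} $\gaussian[f(X,Y,\Delta)][\sigma^2 I]$, which forces the mixture-plus-convexity step (rather than a bare Gaussian-to-Gaussian KL) to be what handles $\Delta$ — this is exactly where this lemma goes beyond Lemma 4 of \citet{wang_generalization_2021}, which has neither $\Delta$ nor the extra conditioning variable $O$.
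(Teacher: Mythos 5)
Your proof is correct and complete, but it takes a genuinely different route from the paper's. The paper proves \cref{eq:key_with_other} by a \emph{reduction}: it pads the variables into $X' = (X, 0, 0)$, $Y' = (Y, 0, O)$, $\Delta' = (0, \Delta, 0) \in \reals^{d_1+d_2+d_3}$, and defines $f'$ and $\Omega'$ on the padded space so that $f'(Y'+\Delta', X') = f(X,Y,\Delta)$, $\Omega'(Y') = \Omega(Y,O)$, and $I(\cdot\,; X'\mid Y') = I(\cdot\,; X\mid Y, O)$. Since $(X',Y',\Delta')$ remains jointly independent of $N$, the inequality is then a literal instance of Lemma~4 of \citet{wang_generalization_2021}, and \cref{eq:key} follows by taking $O$ constant, exactly as you do. Your argument is instead self-contained: the variational (golden-formula) upper bound on conditional mutual information, with the deterministic reference kernel $\gaussian[\Omega(Y,O)][\sigma^2 I]$, plus convexity of $\kl{\cdot}{Q}$ in the first argument to integrate out the $\Delta$-mixture, plus the closed-form KL between equal-covariance Gaussians. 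Both are sound. The paper's reduction is shorter and inherits, before relaxing, the tighter $\tfrac{d}{2}\,\ex{\log(1+\cdot)}$ form from Wang et al.'s Lemma~4, whereas your direct proof only produces the stated linear bound; on the other hand, your proof is more transparent about exactly where the joint independence of $N$ from $(X,Y,\Delta,O)$ enters, makes it explicit that $\Delta$ is absorbed by KL convexity rather than by a dimensional trick, and generalizes immediately to any additive noise whose channel KL is computable.
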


\begin{proof}
We prove \cref{eq:key_with_other} first.
Let $0_k \in \reals^{k}$ denote $k$-dimensional zero vector.
Define $X' \defeq \begin{bmatrix} X^\top & 0_{d_2}^\top & 0_{d_3}^\top\end{bmatrix}^{\top}, Y' \defeq \begin{bmatrix} Y^\top & 0_{d_2} & O^\top \end{bmatrix}^\top$, $\Delta' \defeq \begin{bmatrix} 0_{d_1}^\top & \Delta^\top & 0_{d_3}^\top \end{bmatrix}^\top$ and $f': \reals^{d_1 + d_2 + d_3} \times \reals^{d_1} \to \reals^d$ and $\Omega': \reals^{d_1 + d_2 + d_3} \to \reals^{d_1}$ by
\begin{align}
    f'(v, u) \defeq f(u_{1:d_1}, v_{1:d_1}, v_{d_1+1:d_1+d_2}),\,
    \Omega'(v) \defeq \Omega(v_{1:d_1}, v_{d_1+d_2+1:d_1+d_2+d_3}),
\end{align}
where $0_d \in \reals^d$ is the zero vector.
By assumption, $(X', Y', \Delta')$ is independent of $N$. Consequently, by Lemma 4 of \citet{wang_generalization_2021}, we have
\begin{align}
    I(f'(Y' + \Delta', X') + \sigma N; X' \mid Y') 
    \le& \frac{d}{2} \ex{\log \left(\frac{\ex{\norm{f'(Y' + \Delta', X') - \Omega'(Y')} \mid Y'}}{d \sigma^2} + 1\right)}\\
    \le& \frac{1}{2 \sigma^2} \ex{\norm{f'(Y' + \Delta', X') - \Omega'(Y')}}.
\end{align}
By construction, we have $f'(Y'+\Delta', X') = f(X, Y, \Delta)$ and $\Omega'(Y') = \Omega(Y, O)$. Combined with the definition of $Y'$, they lead to \cref{eq:key} of \cref{lemma:key}.

To obtain \cref{eq:key}, let $O''$ be the constant $0_{d_2}$ and $\Omega''(y, o'') \defeq \Omega(y)$. Then by \cref{eq:key_with_other}, we have
\begin{align}
    I(f(X, Y, \Delta) + \sigma N; X \mid Y)
    =&      I(f(X, Y, \Delta) + \sigma N; X \mid Y, O'')\\
    \le&    \frac{1}{2 \sigma^2}\ex{\norm{f(X, Y, \Delta) - \Omega''(Y, O'')}^2}\\
    =&      \frac{1}{2 \sigma^2}\ex{\norm{f(X, Y, \Delta) - \Omega(Y)}^2},
\end{align}
which is exactly \cref{eq:key}.
\end{proof}

\subsection{Proof of Theorem \ref{theorem:omniscient_trajectory}}

By the technique of a change of trajectory, we first transform the generalization error of the original trajectory into that of the omniscient and SGLD-like trajectories at the cost of penalty terms:
\begin{align}
    &\gen\\
    \defeq& \ex{\poploss{\weight} - \emploss{\weight}}\\
    =&  \ex{\poploss{\retroWeight_T} - \emploss{\retroWeight_T}} + \ex{\emploss{\retroWeight_T} - \emploss{\weight_T}} -  \ex{\poploss{\retroWeight_T} - \poploss{\weight_T}}\\
    =&  \ex{\poploss{\sgldWeight_T} - \emploss{\sgldWeight_T}}
        +   \ex{\emploss{\sgldWeight_T} - \emploss{\retroWeight_T}} - \ex{\poploss{\sgldWeight_T} - \poploss{\retroWeight_T}}\\
        &+  \ex{\Delta_{\Gamma_T}(\weight_T, \trainingSet) - \Delta_{\Gamma_T}(\weight_T, \trainingSet')}\\
    =&  \ex{\poploss{\sgldWeight_T} - \emploss{\sgldWeight_T}}
        +   \ex{\Delta^{\sum_t \sigma_t^2} (\weight_T + \Gamma_T, \trainingSet) - \Delta^{\sum_t \sigma_t^2}(\weight_T + \Gamma_T, \trainingSet')}\\
        &+  \ex{\Delta_{\Gamma_T}(\weight_T, \trainingSet) - \Delta_{\Gamma_T}(\weight_T, \trainingSet')} \label{step:changes_of_traj}.
\end{align}
With the penalty terms matching those in \cref{theorem:omniscient_trajectory}, the goal reduces to bounding the generalization error of $\sgldWeight_T$. To this end, we apply \cref{lemma:MI_bound} that bounds the error by mutual information and then decompose the mutual information using the chain rule. Since $\ell(w, \cdot)$ is $R$-sub-Gaussian by assumption, we have
\begin{align}
    \ex{\poploss{\sgldWeight_T} - \emploss{\sgldWeight_T}}
    =   \gen[\mu^n][\prob{\sgldWeight_T \mid \trainingSet}][,\sampleLoss]
    \le&    \sqrt{\frac{2R^2}{n} I(\sgldWeight_T; \trainingSet)},
\end{align}
where
\begin{align}
    I(\sgldWeight_T; \trainingSet)
    \le&    I(\sgldWeight_{0:T}; \trainingSet) & \text{(Data-processing inequality)}\\
    =&  I(\sgldWeight_0; \trainingSet) + \sum_{t=1}^T I(\sgldWeight_t; \trainingSet \mid \sgldWeight_{0:t-1}) & \text{(Chain rule of MI)}\\
    =&  \sum_{t=1}^T I(\sgldWeight_t; \trainingSet \mid \sgldWeight_{0:t-1}) & \text{($\sgldWeight_0 = \weight_0$ is independent of $\trainingSet$)}\\
    =&  \sum_{t=1}^T I(\sgldWeight_{t-1} - (g_t - \Delta g_t) + N_t; \trainingSet \mid \sgldWeight_{0:t-1}). & \text{(\cref{def:sgld_trajectory})}
\end{align}

To bound the stepwise conditional mutual information, we apply \cref{lemma:key} with $X = \trainingSet$, $Y = \sgldWeight_{0:t-1}$, $\Delta = (W_{0:T}, V)$ and 
\begin{align}
    f(X, Y, \Delta) = \sgldWeight_{t-1} - (g_t(\weight_{t-1}, \trainingSet, V, \weight_{0:t-2}) - \Delta g_t(S, V, g_{1:T}, \weight_{0:T})).
\end{align}
Since $(X, Y, \Delta)$ is a function of $(\trainingSet, V, g_{1:T}, W_{0:T})$, which is independent with the SGLD noise $N_t \sim \gaussian[0][\sigma^2 I]$, by \cref{lemma:key}, we have
\begin{align}
    I(\sgldWeight_{t}; \trainingSet \mid \sgldWeight_{0:t-1}) 
    =& I(\sgldWeight_{t-1} - (g_t - \Delta g_t) + N_t; \trainingSet \mid \sgldWeight_{0:t-1})\\
    \le&    \frac{1}{2 \sigma^2} \ex{\norm{\sgldWeight_{t-1} - (g_t - \Delta g_t) - \Omega(\sgldWeight_{0:t-1})}^2} 
\end{align}
for any function $\Omega$ of $Y = \sgldWeight_{0:t-1}$.
To make things simpler by removing $\sgldWeight_{t-1}$ and minimize the expected squared norm, we set $\Omega(\instanceSgldWeight_{0:t-1}) = \instanceSgldWeight_{t-1} - \ex{g_t}$ and obtain
\begin{align}
    I(\sgldWeight_{t}; \trainingSet \mid \sgldWeight_{0:t-1}) 
    \le&    \frac{1}{2 \sigma^2} \ex{\norm{g_t - \ex{g_t} - \Delta g_t}^2}. \label{eq:bound_of_step_mi}
\end{align}

Plugging \cref{eq:bound_of_step_mi} back to \cref{step:changes_of_traj} leads to \cref{theorem:omniscient_trajectory}.

\begin{modified}[\rone]
    \begin{remark}
        The bound can be further tightened by setting $\Omega(\instanceSgldWeight_{0:t-1}) = \instanceSgldWeight_{t-1} - \ex{g_t} + \ex{\Delta g_t}$.
        However, this eventually leads to an extra $\ex{\Delta G}$ in the trajectory terms of \cref{theorem:terminal_variance}.
        This extra term creates a ``feedback'' between instances that hinders optimization: when optimizing $\Delta G$, we must consider all instances of $\Delta G (S, V, g_{1:T}, \weight_{0:T})$  even if we are optimizing against a single instance of trajectory to form \cref{theorem:flatness}.
        For simplicity, we only set $\Omega(\instanceSgldWeight_{0:t-1}) = \instanceSgldWeight_{t-1} - \ex{g_t}$. 
    \end{remark}
\end{modified}

\begin{modified}[\rone]
    
\subsection{Selection of $\Delta g_t$ Given $\Gamma_T$}\label{appendix:selection}
In the result of \cref{theorem:omniscient_trajectory}, the penalty and flatness terms only rely on $\Gamma_T$ among omniscient-related variables.
Therefore, we can first fix $\Gamma_T$ and optimize $\Delta g_{1:T}$.

Assume for this subsection that $\Gamma_T$ is fixed as $\Delta G$, a function of $(\trainingSet, V, g_{1:T}, \weight_{0:T})$, and that $\Delta g_{1:T}$ is constrained to satisfy $\Delta G = \sum_{i=1}^T \Delta g_{t}$.
Penalty and flatness terms are now fixed since they only rely on $\Gamma_T$ and we only need to optimize the trajectory term
\begin{align}
    \argmin_{\Delta g_{1:T}: \sum_t \Delta g_t = \Delta G}\sqrt{\frac{R^2}{n} \sum_{t=1}^T \frac{1}{\sigma_t^2} \ex{\norm{g_t - \ex{g_t} - \Delta g_t}^2}}
\end{align}
over $\Delta g_{1:T}$. We then set $\sigma_t = \sigma$, throw the square root and coefficients away, and move the summation inside the expectation, arriving at
\begin{align}
    \argmin_{\Delta g_{1:T}: \sum_t \Delta g_t = \Delta G}\ex{\sum_{t=1}^T \norm{g_t - \ex{g_t} - \Delta g_t}^2}
\end{align}
Set $\nu_t := g_t - \ex{g_t}$.
Since the constraints are instance-wise, we can also optimize $\Delta g_t$ instance-wisely:
\begin{align}
    \argmin_{\Delta g_{1:T}: \sum_t \Delta g_t = \Delta G} \sum_{t=1}^T \norm{\nu_t - \Delta g_t}^2
    =      T \cdot \ex[t \sim \uniform{T}]{\norm{\nu_t - \Delta g_t}^2}
    \ge&     T \cdot \norm{\ex[t \sim \uniform{T}]{\nu_t - \Delta g_t}}^2\\ =& T \cdot \norm{\ex[t \sim \uniform{T}]{\nu_t} - \frac{1}{T}\Delta G}^2,
\end{align}
where inequality is due to the convexity of the squared norm and takes equality if and only if $\nu_t - \Delta g_t$ is constant (\wrt{} $t$), say $\Delta$. Then we have $\sum_\tau \Delta g_\tau = (\sum_\tau \nu_\tau) - T \cdot \Delta = \Delta G$, which implies $\Delta g_t = g_t - \ex{g_t} -  \frac{1}{T} \sum_\tau (g_\tau - \ex{g_\tau}) + \frac{1}{T} \Delta G$.

\end{modified}

\subsection{Corollary \ref{theorem:terminal_variance}}

\begin{corollary}\label{theorem:terminal_variance}
    \assumesubgaussianloss. For any deterministic function $\Delta G$ of $(\trainingSet, V, g_{1:T}, \weight_{0:T})$ and any $\sigma > 0$, we have
    \begin{multline}
        \gen 
        \le \sqrt{\frac{R^2}{n \sigma^2} \frac{\ex{\norm{\Delta \weight_T - \ex{\Delta \weight_T} + \Delta G}^2}}{T}} + \ex{\Delta^{\sigma^2 T}_{\Delta G}(\weight_T, \trainingSet) - \Delta^{\sigma^2 T}_{\Delta G}(\weight_T, \trainingSet')}, \label{eq:terminal_variance_bound}
    \end{multline}
    where $\Delta \weight_T \defeq \weight_T - \weight_0$ is the change before and after the training. If we give up further optimization, \ie{} $\Delta G = 0$, then we have 
    \begin{align}
        \gen 
        \le& \sqrt{\frac{R^2}{n T \sigma^2} \var{\Delta W_T}}+ \ex{\Delta^{T \sigma^2}(\weight_T, \trainingSet) - \Delta^{T \sigma^2}(\weight_T, \trainingSet')}\label{eq:terminal_variance},
    \end{align}
    where $\var{X} \defeq \ex{\norm{X - \ex{X}}^2}$ denotes variance.
\end{corollary}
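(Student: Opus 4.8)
The plan is to obtain \cref{theorem:terminal_variance} as a specialization of \cref{theorem:omniscient_trajectory}: take a common noise level $\sigma_t \equiv \sigma$ and the omniscient perturbation $\Delta g_t \defeq (g_t - \ex{g_t}) - \tfrac{1}{T}\sum_{\tau=1}^T(g_\tau - \ex{g_\tau}) + \tfrac{1}{T}\Delta G$, which is a legitimate deterministic function of $(\trainingSet, V, g_{1:T}, \weight_{0:T})$ whenever $\Delta G$ is (each $g_\tau$ is such a function and $\ex{g_\tau}$ is a constant), so \cref{def:omniscient_trajectory} applies; that this choice is in fact optimal subject to $\Gamma_T = \Delta G$ is checked in \cref{appendix:selection}, but only admissibility is needed here.

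First I would evaluate $\Gamma_T$: summing the definition of $\Delta g_t$ over $t \in [T]$, the first two sums cancel and $\sum_t \tfrac1T\Delta G = \Delta G$, so $\Gamma_T = \Delta G$. Next I would simplify the trajectory term. From the update rule $\weight_t = \weight_{t-1} - g_t$ (see \cref{sec:pre_algorithm}) we get $\Delta\weight_T \defeq \weight_T - \weight_0 = -\sum_{t=1}^T g_t$, hence $\Delta\weight_T - \ex{\Delta\weight_T} = -\sum_{t=1}^T(g_t - \ex{g_t})$, and therefore $g_t - \ex{g_t} - \Delta g_t = -\tfrac1T(\Delta\weight_T - \ex{\Delta\weight_T} + \Delta G)$, which does not depend on $t$. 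Summing, $\sum_{t=1}^T \tfrac1{\sigma^2}\ex{\norm{g_t - \ex{g_t} - \Delta g_t}^2} = \tfrac{1}{\sigma^2 T}\ex{\norm{\Delta\weight_T - \ex{\Delta\weight_T} + \Delta G}^2}$, so the first term of \cref{theorem:omniscient_trajectory} becomes exactly the square-root term of \cref{eq:terminal_variance_bound} up to the factor $R^2/n$ inside the root.

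It then remains to collapse the two loss-difference terms of \cref{theorem:omniscient_trajectory} (the penalty for the omniscient trajectory and the flatness term) into the single term of \cref{eq:terminal_variance_bound}. With $\Gamma_T = \Delta G$ and $\sum_t\sigma_t^2 = T\sigma^2$, the key is the pointwise identity, valid for any fixed data argument $\instanceTrainingSet$, $\Delta_{\Delta G}(\weight_T, \instanceTrainingSet) + \Delta^{T\sigma^2}(\weight_T + \Delta G, \instanceTrainingSet) = \Delta^{T\sigma^2}_{\Delta G}(\weight_T, \instanceTrainingSet)$, which follows by writing out $\Delta_\gamma^\Sigma(\instanceWeight, \instanceTrainingSet) = \ex[\xi\sim\gaussian[0][\Sigma]]{\emploss[\instanceTrainingSet]{\instanceWeight+\gamma+\xi} - \emploss[\instanceTrainingSet]{\instanceWeight}}$ and noticing that the intermediate $\emploss[\instanceTrainingSet]{\weight_T + \Delta G}$ terms telescope. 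Applying this with $\instanceTrainingSet = \trainingSet$ and $\instanceTrainingSet = \trainingSet'$ and taking the expectation over all randomness shows those two terms add up to $\ex{\Delta^{\sigma^2 T}_{\Delta G}(\weight_T, \trainingSet) - \Delta^{\sigma^2 T}_{\Delta G}(\weight_T, \trainingSet')}$; because the identity is exact, no triangle inequality is invoked and no slack is lost. This yields \cref{eq:terminal_variance_bound}, and \cref{eq:terminal_variance} is the instance $\Delta G = 0$, where $\ex{\norm{\Delta\weight_T - \ex{\Delta\weight_T}}^2} = \var{\Delta \weight_T}$ by definition and $\Delta_0^{T\sigma^2} = \Delta^{T\sigma^2}$.

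Since everything reduces to substitution and bookkeeping, I do not expect a genuine obstacle; the only places demanding care are matching the sign convention of the update rule so that $\Delta\weight_T = -\sum_t g_t$, and verifying that the two separate penalty terms of \cref{theorem:omniscient_trajectory} merge pointwise — hence with no loss — rather than merely in expectation.
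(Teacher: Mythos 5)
Your proposal is correct and follows the same specialization of \cref{theorem:omniscient_trajectory} that the paper uses: the same choice of $\Delta g_t$, the same verification that $\Gamma_T = \Delta G$, and the same collapse of the trajectory term via $\sum_t g_t = -\Delta \weight_T$. The only place you supply detail that the paper elides is the exact pointwise merge of the omniscient and SGLD penalty terms into $\Delta^{T\sigma^2}_{\Delta G}$, which the paper leaves implicit in its closing ``putting everything together'' step; your telescoping identity is correct and introduces no slack, as you observe.
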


\begin{proof}
We prove \cref{theorem:terminal_variance} by instantiating $\Delta g_t$ of \cref{theorem:omniscient_trajectory}.

Since $\Delta g_t$ depend on $g_{1:T}$, we can use $\nu_t \defeq g_t - \ex{g_t}$ and $\bar{\nu} = \frac{1}{T} \sum_t \nu_t$ when constructing $\Delta g_t$.
Let $\Delta G$ be a deterministic function of $(\trainingSet, V, g_{1:T}, \weight_{0:T})$ and $\sigma > 0$ be a constant.
Then $\Delta g_t \defeq (\nu_t - \bar{\nu}) + \frac{1}{T} \Delta G$ is indeed a deterministic function of $(\trainingSet, V, g_{1:T}, \weight_{0:T})$.

With $\Gamma_T \defeq \sum_t \Delta g_t = \sum_{t} \nu_t - T \cdot \bar{\nu} + \Delta G = \Delta G$ and $\sigma_t$ set to constant $\sigma$, by \cref{theorem:omniscient_trajectory} we obtain
\begin{align}
    \gen
    \le&    \sqrt{\frac{R^2}{n \sigma^2} \sum_t \ex{\norm{\nu_t - (\nu_t - \bar{\nu}) - \frac{1}{T} \Delta G}^2}} + \ex{\Delta_{\Delta G}(\weight_T, \trainingSet) - \Delta_{\Delta G}(\weight_T, \trainingSet')} \\&+ \ex{\Delta^{T \sigma^2}(\weight_T + \Delta G, \trainingSet) - \Delta^{T \sigma^2}(\weight_T + \Delta G, \trainingSet')},
\end{align}
where 
\begin{align}
    \sum_t \ex{\norm{\nu_t - (\nu_t - \bar{\nu}) - \frac{1}{T} \Delta G}^2}
    =&  \sum_t \ex{\norm{\frac{1}{T} \sum_\tau (g_\tau - \ex{g_\tau}) - \frac{1}{T} \Delta G}^2}\\
    =&  T \cdot \frac{1}{T^2} \ex{\norm{\sum_t g_t - \ex{\sum_t g_t} - \Delta G}^2}. 
\end{align}
Since $\weight_T = \weight_0 - \sum_t g_t$, we have $\sum_t g_t = -\Delta W_T$ and 
\begin{align}
    \sum_t \ex{\norm{\nu_t - (\nu_t - \bar{\nu}) - \frac{1}{T} \Delta G}^2}
    =&  \frac{1}{T} \ex{\norm{\Delta W_T - \ex{\Delta W_T} + \Delta G}^2}. 
\end{align}
Putting everything together leads to \cref{theorem:terminal_variance}.
\end{proof}

\subsection{Details for Section \ref{sec:omniscient_trajectory_optimization}}\label{appendix:optimization}

Firstly, we need a explicit form of the penalty terms to balance the tradeoff. Therefore, the penalty terms are approximated to the second order in the surrogate optimization, where the Hessians at the output weight show up. The approximation is the same as \cref{proof:theorem_flatness}.
Secondly, we assume the \modification[\rfour]{empirical and population Hessians in the penalty term} for the SGLD-like trajectory does not change too much, namely $\emphessian(\weight_T + \Delta G) - \pophessian(\weight_T + \Delta G) \approx \emphessian(\weight_T) - \pophessian(\weight_T)$, to simplify the surrogate optimization.
These simplifications lead to the following surrogate optimization target \modification[\rone]{(be noted that we have not replacing terms with $\flatnessOptionalH, \penaltyOptionalH$ or $J$)}:
\begin{multline}
    \min_{\Delta G} \frac{3}{2}\sqrt[3]{\frac{R^2}{n} \abs{\ex{\trace{\emphessian(\weight_T) - \pophessian(\weight_T)}}} \ex{\norm{\Delta \weight_T - \ex{\Delta \weight_T} + \Delta G}^2}} \\ + \ex{\Delta G^\top \nabla (\emploss{\weight_T} - \poploss{\weight_T}) + \frac{1}{2} \Delta G^\top (\emphessian(\weight_T) - \pophessian(\weight_T)) \Delta G},
\end{multline}
where $\emphessian(\instanceWeight)$ is the Hessian of the empirical loss at the trajectory terminal and $\pophessian(\instanceWeight)$ is that of the population loss. \modification[\rone]{If we optimize the above surrogate target, we will rely on testing/validation sets to estimate the population gradient and Hessian. Therefore, one may want to partially avoid this reliance as much as possible by simply ignoring them. On the other hand, optimizing against validation sets leads to better numerical tightness. To accommodate both needs, we leave it as an option by letting $\flatnessOptionalH \in \set{\deltaH(\weight_T), \emphessian(\weight_T)}, \modification[\rone]{\penaltyOptionalH \in \set{\deltaH(\weight_T), \emphessian(\weight_T)}}$ and $J \in {\nabla (\emploss{\weight_T} - \poploss{\weight_T}), \nabla \emploss{\weight_T}}$ and replacing the empirical-population differences in the above surrogate losses.}
Another difficulty is that, one has to consider other independent runs when optimizing $\Delta G$ for one trajectory.
To further simplify optimization, we modify the surrogate optimization target so that only one trajectory is considered:
\begin{align}
    \min_{\Delta G} \frac{3}{2}\sqrt[3]{\frac{R^2}{n} \abs{\trace{\flatnessOptionalH}} \norm{\Delta \weight_T - \ex{\Delta \weight_T} + \Delta G}^2} + \Delta G^\top J + \frac{1}{2} \Delta G^\top \penaltyOptionalH \Delta G,
\end{align}
where $\Delta G$ is a function of the trajectory.
To obtain a convex optimization problem, we partially remove the cubic root, giving 
\begin{align}
    \min_{\Delta G} \lambda C \norm{\Delta \weight_T - \ex{\Delta \weight_T} + \Delta G}^2 + \Delta G^\top J + \frac{1}{2} \Delta G^\top \penaltyOptionalH \Delta G,
\end{align}
where $C \defeq \frac{3}{2} \sqrt[3]{\frac{R^2}{n} \abs{\trace{\flatnessOptionalH}}}$. Bound parameter $\lambda$ here is used to compensate the ``weight'' change after twisting the target: When the terms under the cubic root become smaller, the derivative of the cubic root increases fast; but after the cubic root is removed, the derivative of the norm becomes smaller when the term becomes smaller. Therefore, we should use large $\lambda$ to emphasize the reduction of trajectory term.
The convex optimization problem has a closed form solution
\begin{align}
    \Delta G = -\left(2 \lambda C I + \penaltyOptionalH \right)^{-1} (2 \lambda C (\Delta \weight_T - \ex{\Delta \weight_T}) + J).\label{eq:solution}
\end{align}
Since all variables in \cref{eq:solution} is a function of weights, empirical and population gradients and Hessians, which are functions of weights, the training set and the population distribution, \cref{eq:solution} indeed specifies a function of $(\trainingSet, V, g_{1:T}, \weight_{0:T})$.
This omniscient trajectory is used in \cref{theorem:flatness}.

\subsection{Proof of Theorem \ref{theorem:flatness}}\label{proof:theorem_flatness}

Based on \cref{theorem:terminal_variance}, \cref{theorem:flatness} considers the specific omniscient trajectory defined by $\Delta G = -(2\lambda C I + \penaltyOptionalH)^{-1}(2\lambda C (\Delta \weight_T - \ex{\Delta \weight_T}) + J)$ as \cref{eq:solution}.
We first optimize $\sigma$ given the specific $\Delta G$, for which we approximate the penalty for the SGLD-like trajectory to the third order. Specifically, for the empirical part we have
\begin{align}
    & \Delta^{\sigma^2 T}(\weight_T + \Delta G, \trainingSet)\\
    \defeq& \ex[\xi\sim \gaussian[0][\sigma^2 T I]]{\emploss{\weight_T + \Delta G + \xi} - \emploss{\weight_T + \Delta G}}\\
    =& \ex[\xi]{\xi^\top \nabla \emploss{\weight_T + \Delta G} + \frac{1}{2} \xi^\top \emphessian(\weight_T + \Delta G) \xi + \sum_{i \le j \le k} a_{i, j, k} \xi_i \xi_j \xi_j + r_0}\\
    =&  \frac{\sigma^2 T}{2} \trace{\emphessian(\weight_T + \Delta G)} + \ex[\xi]{r_0}.
\end{align}
where $r_0$ is the residual at the third order and the coefficients $a_{i, j, k}$ are independent $\xi$. By the assumption of \cref{theorem:flatness}, we have $\abs{r_0} \le D \norm{\xi}^4$ and $\abs{\ex[\xi]{r_0}} \le D \ex{\norm{\xi}^4} = D d (d+2) \sigma^4$.
One can obtain similar results for the population loss difference, and combining the both we have
\begin{align}
    \ex{\Delta^{\sigma^2 T}(\weight_T + \Delta G, \trainingSet) - \Delta^{\sigma^2 T}(\weight_T + \Delta G, \trainingSet')}
    =&  \frac{\sigma^2 T}{2} \trace{\emphessian(\weight_T + \Delta G) - \pophessian(\weight_T + \Delta G)} + r,
\end{align}
where $\abs{r} \le  D d (d + 2) \sigma^4 = O(d^2 \sigma^4)$.

Similarly to Theorem 2 of \citet{wang_generalization_2021}, we optimize $\sigma > 0$ to balance
\begin{align}
    \sqrt{\frac{R^2}{n \sigma^2} \frac{\ex{\norm{\Delta \weight_T - \ex{\Delta \weight_T} + \Delta G}^2}}{T}} \defto \frac{A}{\sigma}.
\end{align} 
and 
\begin{align}
    \frac{\sigma^2 T}{2} \opabs{\trace{\emphessian(\weight_T + \Delta G) - \pophessian(\weight_T + \Delta G)}} \defto \sigma^2 B
\end{align}
For $A, B > 0$, $A/\sigma + \sigma^2 B$ takes the minimum $3 (A/2)^{2/3} B^{1/3}$ at 
\begin{align}
    \sigma_* 
    =&  (A/2B)^{1/3}\\
    =&  \left(\frac{\sqrt{\frac{R^2}{n} \ex{\norm{(I - (I + \penaltyOptionalH / 2 \lambda C)^{-1})(\weight_T - \Delta \weight_T) - (2 \lambda C I + \penaltyOptionalH)^{-1} J}^2}}}{T^{3/2} \opabs{\trace{\deltaH(\weight_T + \Delta G)}}}\right)^{1/3} \label{eq:best_sigma}.
\end{align}
The proof is finished by setting $\sigma$ to this value and putting everything together.

\subsection{Details for Section \ref{sec:analysis}}\label{appendix:approximation}

We take $J \defeq \nabla (\emploss{\weight_T} - \poploss{\weight_T})$ and sufficiently large $\lambda > 0$ so that $\abs{\lambda_1(\penaltyOptionalH / 2\lambda C)} \ll 1$.
We assume the training is sufficient so that $\nabla \emploss{\weight_T} \approx 0$.
We also assume that the initial weight $\weight_0$ is fixed.

By applying $\norm{x + y}^2 \le 2 \norm{x}^2 + 2 \norm{y}^2$ and approximating the penalty for the omniscient trajectory to the second order, we obtain
\begin{align}
    &\gen\\
    \lessapprox &\frac{3}{2} \sqrt[3]{\frac{2 R^2}{n}\ex{\abs{\trace{\Delta H(\weight_T + \Delta G)}}} \ex{\norm{\Delta \weight_T - \ex{\Delta \weight_T}}_{(I-(I+\penaltyOptionalH/2\lambda C)^{-1})^2}^2 + \norm{J}^2_{(2\lambda C I + \penaltyOptionalH)^{-2}}}} \\
        &+ \ex{-2 \lambda C (\nabla \emploss{\weight_T} - \nabla \poploss{\weight_T})^\top (2 \lambda C I + \penaltyOptionalH)^{-1} (\Delta \weight_T - \ex{\Delta \weight_T})} \\
        &+ \ex{- (\nabla \emploss{\weight_T} - \poploss{\weight_T})^\top (2 \lambda C I + \penaltyOptionalH)^{-1} J}\\
        &+ \ex{\norm{\Delta \weight_T - \ex{\Delta \weight_T}}_{4 \lambda^2 C^2 (2\lambda C I + \penaltyOptionalH)^{-1} \abs{\Delta H} (2 \lambda C I + \penaltyOptionalH)^{-1}}^2 + \norm{J}^2_{(2 \lambda C I + \penaltyOptionalH)^{-1} \abs{\Delta H} (2\lambda C I + \penaltyOptionalH)^{-1}}}.
\end{align} 

By selection of $\lambda$, we have $(2\lambda C I + \penaltyOptionalH)^{-1} \approx (2 \lambda C I)^{-1} = I / 2 \lambda C$ and $I - (I + \penaltyOptionalH/2\lambda C)^{-1} = -\sum_{k=1}^\infty (-1)^{k} (\penaltyOptionalH/2\lambda C)^k \approx \penaltyOptionalH / 2 \lambda C$. 
These approximations lead to
\begin{modified}[\rone]
\begin{align}
    &\gen\\
    \lessapprox &\frac{3}{2} \sqrt[3]{\frac{2 R^2}{n}\ex{\abs{\trace{\Delta H(\weight_T + \Delta G)}}} \ex{\norm{\Delta \weight_T - \ex{\Delta \weight_T}}_{\penaltyOptionalH^2 / 4 \lambda^2 C^2}^2 + \norm{J}^2/4\lambda^2 C^2}} \\
        &+ \ex{-(\nabla \emploss{\weight_T} - \nabla \poploss{\weight_T})^\top (\Delta \weight_T - \ex{\Delta \weight_T})} \\
        &+ \ex{- (\nabla \emploss{\weight_T} - \poploss{\weight_T})^\top J / 2\lambda C}\\
        &+ \ex{\norm{\Delta \weight_T - \ex{\Delta \weight_T}}_{\abs{\Delta H}}^2 + \norm{J}^2_{\abs{\Delta H} / 4 \lambda^2 C^2}}\\
    = &\frac{3}{2} \sqrt[3]{\frac{2 R^2}{n}\ex{\abs{\trace{\Delta H(\weight_T + \Delta G)}}} \ex{\norm{\Delta \weight_T - \ex{\Delta \weight_T}}_{\penaltyOptionalH^2 / 4 \lambda^2 C^2}^2 + \norm{J}^2/4\lambda^2 C^2}} \\
        &-\ex{J^\top (\Delta \weight_T - \ex{\Delta \weight_T})} \\
        &- \ex{\norm{J}^2 / 2\lambda C}\\
        &+ \ex{\norm{\Delta \weight_T - \ex{\Delta \weight_T}}_{\abs{\Delta H}}^2 + \norm{J}^2_{\abs{\Delta H} / 4 \lambda^2 C^2}}.
\end{align}

The second term $\ex{\nabla \poploss{\weight_T}^\top (\Delta \weight_T - \ex{\Delta \weight_T})}$ can also be transformed to Hessian-induced norms by exploiting the symmetry of the terminal deviation $\Delta \weight_T - \ex{\Delta \weight_T}$. To this end, approximate the population gradient $\nabla \poploss{\weight_T}$ at the mean terminal weight $\ex{\weight_T}$ by $\nabla \poploss{\weight_T} \approx \nabla \poploss{\ex{\weight_T}} + \pophessian(\ex{\weight_T}) (\weight_T - \ex{\weight_T})$. Then we have
\begin{align}
    &\ex{J^\top (\Delta \weight_T - \ex{\Delta \weight_T})}\\
    \approx& \ex{(\nabla \emploss{\ex{\weight_T}} - \nabla \poploss{\ex{\weight_T}} + \deltaH(\ex{\weight_T}) (\weight_T - \ex{\weight_T}))^\top (\Delta \weight_T - \ex{\Delta \weight_T})}\\
    =&  \ex{\norm{\Delta \weight_T - \ex{\Delta \weight_T}}_{\deltaH(\ex{\weight_T})}^2}.
\end{align}
Plugging this back leads to
\begin{align}
    &\gen\\
    \lessapprox &\frac{3}{2} \sqrt[3]{\frac{2 R^2}{n}\ex{\abs{\trace{\Delta H(\weight_T + \Delta G)}}} \ex{\norm{\Delta \weight_T - \ex{\Delta \weight_T}}_{\penaltyOptionalH^2 / 4 \lambda^2 C^2}^2 + \norm{J}^2/4\lambda^2 C^2}} \\
        &-\ex{\norm{\Delta \weight_T - \ex{\Delta \weight_T}}_{\deltaH(\ex{\weight_T})}^2} \\
        &- \ex{\norm{J}^2 / 2\lambda C}
        + \ex{\norm{\Delta \weight_T - \ex{\Delta \weight_T}}_{\abs{\Delta H}}^2 + \norm{J}^2_{\abs{\Delta H} / 4 \lambda^2 C^2}}.
\end{align}

    The above results apply \emph{without} the assumption that $\nabla \emploss{\weight_T} \approx 0$. 

    Using the assumption $\nabla \emploss{\weight_T} \approx 0$ leads to \cref{eq:approximation}.
\end{modified}

\begin{modified}[\rone]
\subsubsection{Comparisons with Similar Alignments with Local Geometry}\label{appendix:alignment_comparison}

In \cref{sec:analysis}, \cref{theorem:flatness} is shown to connect with an alignment between terminal weight deviations/covariance and Hessians. This is similar to the work by \citet{wang_two_2023} and \citet{wang2021optimizing} on the alignment and the more fine-grained structure of local geometry.

\citet{wang_two_2023} provides two types of generalization bounds for discretized SDE: the trajectory-state bound based on the statistics along the full trajectories and terminal-state bound based on the statistics after the last update, which are separately compared below. 

The trajectory-state bound focuses on the alignment between the population gradient noise covariance (GNC) $\Sigma_t^\mu (\weight_{t}) \defeq \var[\sample' \sim \mu]{\nabla \sampleLoss(\weight_t, \sample')}$ that depends on $\weight_t$ and the empirical GNC $C_t(\weight_t, \trainingSet) \defeq \frac{n - b}{b (n-1)} \var[i \sim \uniform{n}]{\nabla \sampleLoss(\weight_t, \sample_i)}$ that depends on $(\weight_t, \trainingSet)$. Therefore, this alignment and ours are both alignment between weight/gradient distribution and the local geometry. They also have the same dependence: the weight/gradient statistics are both $W_{t-1}$-dependent and the local geometric properties are both $(W_{t-1}, S)$-dependent. Differences mainly lie in the type of geometric properties used and how directly they reflect flatness. \citet{wang_two_2023}'s trajectory-state bound uses the inverted GNC $C_t^{-1}$ to capture local geometry, which is an indirect measure of flatness. By comparing \citet{wang_two_2023}'s Figure 2 and 3, one can see that their bound does not correlate correctly with Hessian traces even along training steps. In contrast, we directly use Hessians, allowing us to better leverage the flatness as required by our motivation. Their choice of $C_t^{-1}$ is a result of the discretized SDE algorithm instead of bound optimization ($\Sigma_t^\mu$ is the ingredient corresponding to bound optimization). Therefore, it is determined by the algorithm and it is difficult to switch to Hessians or other statistics. Another result of the above determining is that $C_t^{-1}$ can only reflect the flatness at step $t$ instead of the flatness of terminal weights. In contrast, our alignment only involves the flatness at terminal weights. According to the empirical results from \citet{wang_two_2023}'s Figure 3 and \citet{jastrzebski_relation_2019}'s Figure 2, the eigenvalues of Hessians during the middle of training are much larger than those at terminal. Therefore, it is better to only use terminal flatness, as it is in our alignment.
Regarding \citet{wang_two_2023}'s terminal-state bound, both this bound and our bound involve terminal states. Still, their bound uses the inverted weight covariance instead of Hessians to reflect local geometries.

\citet{wang2021optimizing} optimize the noise in SGLD, with a new information-theoretic generalization bound as a surrogate for real risks. They show the square root of expected GNC is greedily optimal under their bound and use this noise to improve the optimization and generalization of SGLD and closes the gap with SGD. Their results indicate the importance of direction and alignment of noises. If we put their algorithms and bounds together, then the result will be very similar with \citet{wang_two_2023}'s. Therefore, our alignment has similar differences with \citet{wang2021optimizing} as with \citet{wang_two_2023}.

Moreover, our omniscient trajectory can be seen as a surrogate algorithm, and optimizing it is very similar to designing new algorithms with better generalization. As a result, additional similarities and differences can be found on the goal of introducing local geometry. In \citet{wang2021optimizing}'s work, before Theorem 1, the only use of local geometry is in Constraint 1 on not rising empirical risks. Therefore, it is used to guide how to increasing noises without sacrificing the empirical risks. In our bound, as discussed in \cref{insight:auxiliary}, we use local geometry to guide how to pull terminal weights together/closer without changing the losses too much. Therefore, local geometry in alignments has a similar role between \citet{wang2021optimizing}'s results and ours of guiding information reduction without sacrificing losses. Regarding differences, \citet{wang2021optimizing} decrease information measures by adding noises, while we do so by mapping things together or closer. 

\end{modified}
\section{Details for Sec. \ref{sec:experiments} Experimental Study}\label{appendix:estimation}

Codes are available in the supplementary material or at \url{https://github.com/peng-ze/omniscient-bounds}.

\begin{modified}[\rone]
    
\subsection{Hessian-Related Details}
In experiments, all Hessian traces are computed using PyHessian \citep{yao_pyhessian_2020}.

The optimized omniscient trajectory in \cref{theorem:flatness} requires products between an Hessian-related inverse matrix $(I + \penaltyOptionalH/2\lambda C)^{-1}$ and vectors, say, $u$.
Since Hessians of deep models are too large to store and compute with, it is ubiquitous to directly compute the inverse-vector product (iVP). For iVP, following \citet{dagréou2024howtocompute}, we use conjugate gradient method that only requires the product $(I + \penaltyOptionalH/2\lambda C) v$ of the original matrix with an arbitrary vector $v$, which is supplied with PyHessian's \texttt{hv\_product}.
More specifically, we approximate $v$ such that $(I + \penaltyOptionalH/2 \lambda C) v = u$ to obtain a $\hat{v}$. We run conjugate gradient method until the error $\norm{(I + \penaltyOptionalH/2 \lambda C) \hat{v} - u}^2$ is less than $1\%$ of $\norm{u}^2$, or the maximum iteration $20$ is reached.
\end{modified}

\subsection{Bound Estimation Details}\label{appendix:estimation_bound}

This section presents the details in estimating \cref{eq:flatness_bound} of \cref{theorem:flatness}.
For the randomness in weights, we train $k > 1$ models independently to obtain weights $\weight_T^{1:k}$. To ensure the training data for each model is I.I.D. sampled, we assume the whole training set is sampled in an I.I.D. manner, randomly partition the whole training set into $k$ subsets $\set{S^i}_{i=1}^k$ and use $\trainingSet^i$ to train $\weight_{0:T}^i$. 
For fair comparison, we then make unbiased or positively biased estimates for expectations in \cref{eq:flatness_bound} separately and then compute the bound using these estimates.
Being concave, the cubic root will bring negative bias. Nevertheless, this negative bias can also be found in baseline existing bounds. Moreover, in experiments, the expectation product under the cubic root is extremely small. Therefore, this negative bias will not influence the overall result too much and we leave it as it is for simplicity. 

\newcommand{\validationSet}{\trainingSet''}
When optimizing the bound to obtain $\Delta G$, we use a validation set $\validationSet$ to estimate $J$.

For the penalty terms, \ie{} $\ex{\Delta_{\Delta G}(\weight_T, \trainingSet) - \Delta_{\Delta G}(\weight_T, \trainingSet')}$ and $\ex{\trace{\Delta H(\weight_T + \Delta G)}}$, we unbiasedly estimate them with the test set. The same test set is used for all weights to fully use the data. The loss differences are directly computed by modifying the parameters and forwarding the testing data $\trainingSet'$ instead of approximating them.

For the flatness-optimized trajectory term, the estimate is more complicated, since we must also estimate $\ex{\Delta \weight_T}$ within the estimate and handle the matrix inverses and the population gradients in $J$.

To estimate $\ex{\Delta \weight_T}$ in expectations, one should use samples of $\weight_T$ that are independent of $\weight_T^i$. However, another draw of $k$ weights results in further partitioning on the training set, and the data used for each weight is much less.
To fully use the data, similarly to cross validation, we estimate the inner expectation by $\Delta \bar{\weight}_T^{-i} \defeq \frac{1}{k-1}\sum_{i' \in [k]}^{i'\neq i} \Delta \weight_T^{i'}$, which is independent of $\weight_T^i$.

To avoid the population Hessian in the inverses and make estimation easier, we select $\penaltyOptionalH \defeq \emphessian(\weight_T)$. In this way, the inverses solely depend on the training data and the terminal weight, and their interaction with the testing set is only linear, which is much more friendly to expectations. Define $E \defeq I-(I+\penaltyOptionalH/2\lambda C)^{-1}$ and $F \defeq (2 \lambda C I + \penaltyOptionalH)^{-1}$, which only depend on $(\trainingSet, \weight_T)$.
Initial experiments on CIFAR-10 indicate that it is necessary to include the population gradients \modification[\rone]{for numerical tightness}, and we set $J \defeq \nabla \emploss{\weight_T} - \nabla \poploss{\weight_T}$.
Under these specific settings, we will estimate
\begin{align}
    \ex{\norm{E (\Delta \weight_T - \ex{\Delta \weight_T} )  - F (\empgrad - \popgrad)}^2} \label{eq:trajectory_term_to_be_estimated}
\end{align}

Replacing the expectation with the empirical mean and the random variables with I.I.D. copies, we obtain an estimator of the trajectory term
\begin{align}
    \frac{1}{k} \sum_{i=1}^k \norm{E^i (\Delta \weight^i_T - \Delta \bar{\weight}_T^{-i}) - F^i (\empgrad[i] - \popgrad[i][\validationSet])}^2,
\end{align}
which is positively biased because
\begin{align}
    &       \ex{\frac{1}{k} \sum_{i=1}^k \norm{E^i (\Delta \weight^i_T - \Delta \bar{\weight}_T^{-i}) - F^i (\empgrad[i] - \popgrad[i][\validationSet])}^2}\\
    =&      \frac{1}{k} \sum_{i=1}^k \ex[\trainingSet^i, \weight_T^i]{\ex{\norm{E^i (\Delta \weight^i_T - \Delta \bar{\weight}_T^{-i}) - F^i (\empgrad[i] - \popgrad[i][\validationSet])}^2 \mid \trainingSet^i, \weight_T^i}}\\
    \ge&    \frac{1}{k} \sum_{i=1}^k \ex[\trainingSet^i, \weight_T^i]{\norm{\ex{E^i (\Delta \weight^i_T - \Delta \bar{\weight}_T^{-i}) - F^i (\empgrad[i] - \popgrad[i][\validationSet]) \mid \trainingSet^i, \weight_T^i}}^2}\\
    =&      \frac{1}{k} \sum_{i=1}^k \ex[\trainingSet^i, \weight_T^i]{\norm{E^i (\Delta \weight^i_T - \ex{\Delta \bar{\weight}_T^{-i}\mid \trainingSet^i, \weight_T^i} )  - F^i (\empgrad[i] - \ex{\popgrad[i][\validationSet]\mid \trainingSet^i, \weight_T^i})}^2}\\
    =&      \frac{1}{k} \sum_{i=1}^k \ex[\trainingSet^i, \weight_T^i]{\norm{E^i (\Delta \weight^i_T - \ex{\Delta \weight_T} )  - F^i (\empgrad[i] - \popgrad[i])}^2}\\
    =&  \ex{\norm{E (\Delta \weight_T - \ex{\Delta \weight_T} )  - F (\empgrad - \popgrad)}^2}
    ,
\end{align}
where the second step is because $\ex{X^\top X} \ge \ex{X}^\top \ex{X}$.

\begin{modified}[\rfour]
    Among the above estimations, the most loose step comes from the last estimation, \ie{} the estimation on the trajectory term, because other expectations are estimated unbiasedly and this expectation is positively biased.
    In principle, we can estimate it without bias. To see this, set $A \defeq E \Delta \weight_T - F(\empgrad - \popgrad)$ expand \cref{eq:trajectory_term_to_be_estimated} as
    \begin{align}
        &   \ex{\norm{E (\Delta \weight_T - \ex{\Delta \weight_T} )  - F (\empgrad - \popgrad)}^2}\\ 
        =& \ex{\norm{A - E \times \ex{\Delta\weight_T'}}^2}\\
        =&  \ex{\norm{A}^2} - 2 \ex{\ex{\Delta\weight_T'}^\top E^\top A} + \ex{\norm{E \times \ex{\Delta\weight_T'}}^2}\\
        =&  \ex{\norm{A}^2} - 2 \ex{\Delta\weight_T'}^\top \ex{E^\top A} + \ex{\norm{E \times \ex{\Delta\weight_T'}}^2},
    \end{align}
    which can be term by term estimated without bias.
    However, since $E$ involves Hessian inversion, this unbiased estimator requires more Hessian iVPs (iHVP): Computing $A$ requires two iHVPs, while $E^\top A$ and $E \times \ex{\Delta\weight_T}$ require another two iHVPs. As a result, the unbiased estimator requires $4k$ iHVPs. In contrast, the positively biased estimate only requires $2k$ iHVPs. Since IVPs, especially iHVPs, are extremely time-consuming, we still use the \emph{positively biased} estimator in the experiments.
\end{modified}

\subsection{Training Details}\label{appendix:training_details}

For both MNIST and CIFAR-10, we train $k=6$ independent models from the \emph{same} randomly chosen initial weight, which means each model receives $10,000$ training samples.
We use 2-layer MLPs for MNIST as \citet{wang_generalization_2021} while ResNet-18 for CIFAR-10. 
Since a bounded loss is naturally sub-Gaussian \citep{xu_information-theoretic_2017}, we employ cross-entropy (CE) losses capped at $12 \log c$ for both testing and bound estimation, where $c$ represents the class number. Here, $\log c$ corresponds to the expected cross-entropy at random initialization, which are heuristically the maximum of the average loss during the training. However, we still use vanilla CE losses for training. 
As \citet{wang_generalization_2021}, we use SGD with momentum of $0.9$.
The use of momentum does to violate the assumptions in \cref{sec:pre_algorithm} because each update can access history weights to recover gradients and compute momentum.
ResNet-18 models are trained for 200 epochs while 2-layer MLP models are trained for 500 epochs as \citet{wang_generalization_2021}.
We start from a base hyperparameter, where the learning rate is $0.01$, the batch size is 60, and no dropout or weight decay is used. For the 2-layer MLP, the base hidden width is $512$.
To enhance the generalization of the networks on CIFAR-10 for evaluating the bounds, we use random horizontal flip and random resized crop. This use of data augmentations does not violate the assumptions in \cref{sec:pre_algorithm}, as they can be achieved by passing in $V$ the random seeds for the augmentations and letting $g_t$ augment the samples using these seeds before computing the gradients.

For single-hyperparameter variations in \cref{fig:cifar10_hyperparameters,fig:mnist_fc1_hyperparameters}, we vary that single hyperparameter while keeping others the same as the base.
For \cref{fig:cifar10_lrbs,fig:mnist_fc1_lrbs}, we vary the learning rate and the batch size in a grid-search manner while keeping others the same as the base.
When the learning rate is too small or too large, or the batch size is too large, the model even cannot fit the training data well. Therefore, we exclude the training records whose final weight has a training accuracy less than $95\%$.

\ifShowViT
We also experiment with ViT.
We use a small-scaled ViT of model dimension $192$, $12$ attention heads, and $9$ layers.
Since the training of ViT is very different from MLP and CNN, we train it using AdamW, starting from a base hyperparameter where (max) learning rate is $5 \times 10^{-4}$, batch size is $120$, weight decay is $10^{-4}$, MLP dropout is $0.1$ and there is no attention dropout.
Data augmentations include RandAugment \citep{cubuk2020randaugment} and random horizontal flip.
Learning rate scheduling includes a warm-up of 10 epochs and cosine annealing.
We use gradient clip to stabilize the training.
\fi

Models are trained on 12 NVidia RTX4090D GPUs for 2 day with auto mixed precision (BF16) and \texttt{torch.compile()} to save memory and increase parallelization.
To ensure consistency of model and training details with previous results, our codes are modified based on \citet{wang_generalization_2021} with bound estimation modules totally re-implemented.

We must remark that the test accuracy of our models on CIFAR-10 is relatively low, because each model is trained by a one-sixth subset of CIFAR-10.

\subsection{More Experimental Results}\label{appendix:more_experiments}

\subsubsection{More Results for ResNet-18 on CIFAR-10 and More Discussions}\label{appendix:more_cifar10_experiments}\label{appendix:lr}

Here, we display more results of existing bounds for ResNet-18 on CIFAR-10.
\cref{fig:cifar10_resnet_terminal_dispersion_results} displays the result of the isotropic \cref{lemma:neu} of \citet{neu_information-theoretic_2021}.

\modification[\rfour]{From \cref{fig:cifar10_resnet_terminal_dispersion_results}, we find that the trajectory term of isotropic \cref{lemma:neu}  is insensitive to batch size, which is an improvement over the incorrect tendency of \cref{lemma:wang}. However, this improvement is only partial compared to our trajectory term that decreases as the batch size decreases.}
Thanks to the flatness term that has the correct tendency \wrt{} batch size under large learning rates, the bound also has the correct tendency \wrt{} batch size if learning rate is large enough.
\modification[\rfour]{Compared to this result, our bound can even better capture the generalization under all learning rates by having a more aligned trajectory term.}

\begin{modified}[\UMN]
Now we focus on the tendency \wrt{} learning rate. Our bound fails to fully correct the wrong tendency of the trajectory term of \cref{lemma:wang,lemma:neu} \wrt{} learning rate. Nevertheless, the wrong tendency seems partially corrected because the lines corresponding to the trajectory terms under different learning rates are well separated in \cref{fig:cifar10_resnet_gradient_dispersion_trajectory_term,fig:cifar10_resnet_terminal_dispersion_trajectory_term} of the existing bounds, while some of those from our bounds twist together in \cref{fig:cifar10_lrbs_trajectory}. 
However, this partial correction is not enough to invert the wrong tendency, unlike the tendency \wrt{} batch size.
We conjecture it is because the trajectory term is not solely determined by flatness, but is a ``product'' between the Hessians and the output weight variance (see \cref{eq:approximation}). Increasing the learning rate increases the step size, which then increases the variance of the output weight. This variance increase is empirically confirmed by \cref{fig:cifar10_resnet_terminal_dispersion_trajectory_term}: we display the trajectory term with $\sigma^{-1}$ excluded in this figure. As a result, what is displayed in this figure is proportional to the output weight variance. It clearly shows that the variance increases as the learning rate increases. Therefore, there is a competition between the increased variance and the improved flatness. Unfortunately, it seems the increased variance overpowers the improved flatness in this process and our bound fails to invert the wrong tendency \wrt{} learning rate.
On the other hand, decreasing the batch size seems to also increase the variance and we must explain why the tendency \wrt{} batch size is fixed. The reasons is that, the variance is, in fact, very \emph{in}sensitive to batch size: From \cref{fig:cifar10_resnet_terminal_dispersion_trajectory_term}, we can see that the variance is almost fixed when batch size changes. As a result, the trajectory term is dominated by the improved flatness when the batch size decreases.
\end{modified}

Regarding the tightness improvement (mainly due to the trajectory term), \cref{lemma:neu} has a much smaller trajectory term than \cref{lemma:wang} ($\sim 10^{-2}$ v.s. $\sim 10^{0}$).
It mainly happens because gradients of SGD are noisy and gradients from different steps often partially cancel. In \cref{lemma:wang}, the canceled parts are still counted in the trajectory term while in \cref{lemma:neu} where gradients are summed together before taking norms. The components canceled in the trajectory indeed cancel in the trajectory term and do not contribute to the bound.
This improvement is pushed further by the omniscient trajectory and the optimization under flatness ($\sim 10^{-2}$ v.s. $\sim 10^{-8}$).
Since \cref{theorem:flatness} can be seen as the combination of \cref{lemma:neu} and the omniscient trajectory (see \cref{theorem:terminal_variance} and how it connects to \cref{lemma:neu} by setting $\Delta G \equiv 0$), the improved tightness of our bound compared to \cref{lemma:wang} is two-folded, done by both \cref{lemma:neu} and the omniscient trajectory.

\bgroup
\renewcommand{\legendWidth}{0.6}
\experimentsOfExistingBound{terminal_dispersion}{the isotropic version of \cref{lemma:neu}}{ResNet-18}{CIFAR-10}{cifar10}{resnet}
\egroup

\ifShowViT
    \subsubsection{Results for ViT on CIFAR-10}\label{appendix:more_vit_experiments}

    The numerical results for ViT on CIFAR-10 are displayed in \cref{fig:cifar10_vit_gradient_dispersion_results,fig:cifar10_vit_terminal_dispersion_results,fig:cifar10_vit_lrbs,fig:cifar10_vit_hyperparameters}.
    Unlike ResNet, the generalization of ViT positively correlates with batch size and existing bounds can well capture this tendency.
    We conjecture it has something to do with the fact Adam(W) does not favor flat minima as much as SGD \citep{zhou_towards_2020}.
    Nevertheless, our bound is still much tighter than \cref{lemma:wang,lemma:neu} on ViT.

    \bgroup
    \renewcommand{\legendWidth}{0.5}
    \experimentsOfExistingBound{gradient_dispersion}{\citet{wang_generalization_2021}'s bound}{ViT}{CIFAR-10}{cifar10}{vit}

    \experimentsOfExistingBound{terminal_dispersion}{the isotropic version of \cref{lemma:neu}}{ViT}{CIFAR-10}{cifar10}{vit}
    \egroup

    \begin{figure}
        \centering
        \includegraphics[width=0.5\linewidth,trim={0 2.5cm 0 2.5cm},clip]{pic/experiments/lrbs\imagePrefix/cifar10_vit_terminal_dispersion+flatness+cross_dispersion_full_utilization+reweight1000000000.000_bound_legend.pdf}
        \\
        \centering%
        \verticalAxisLeft%
        \begin{subfigure}{0.22\linewidth}
            \centering
            \includegraphics[width=\linewidth]{pic/experiments/lrbs\imagePrefix/cifar10_vit_terminal_dispersion+flatness+cross_dispersion_full_utilization+reweight1000000000.000_bound.pdf}
            \caption{Bound}\label{fig:cifar10_vit_lrbs_bound}
        \end{subfigure}
        \begin{subfigure}{0.22\linewidth}
            \centering
            \includegraphics[width=\linewidth]{pic/experiments/lrbs/cifar10_vit_terminal_dispersion+flatness+cross_dispersion_full_utilization+reweight1000000000.000_vanilla_trajectory.pdf}
            \caption{Trajectory Term\rescaled{}}\label{fig:cifar10_vit_lrbs_trajectory}
        \end{subfigure}
        \begin{subfigure}{0.22\linewidth}
            \centering
            \includegraphics[width=\linewidth]{pic/experiments/lrbs/cifar10_vit_terminal_dispersion+flatness+cross_dispersion_full_utilization+reweight1000000000.000_punishment.pdf}
            \caption{Punishment Term}\label{fig:cifar10_vit_lrbs_punishment}
        \end{subfigure}
        \begin{subfigure}{0.22\linewidth}
            \centering
            \includegraphics[width=\linewidth]{pic/experiments/lrbs/cifar10_vit_terminal_dispersion+flatness+cross_dispersion_full_utilization+reweight1000000000.000_vanilla_flatness.pdf}
            \caption{Flatness Term\rescaled{}}\label{fig:cifar10_vit_lrbs_flatness}
        \end{subfigure}
        \caption{Numerical results of \cref{theorem:flatness} with $\lambda = 10^9$ on CIFAR-10 and ViT with varied learning rate and batch size.  
        \meaningOfRescaled{}
        }\label{fig:cifar10_vit_lrbs}
    \end{figure}

    \begin{figure}
        \centering
        \includegraphics[width=\linewidth,trim={0 1cm 0 3cm},clip]{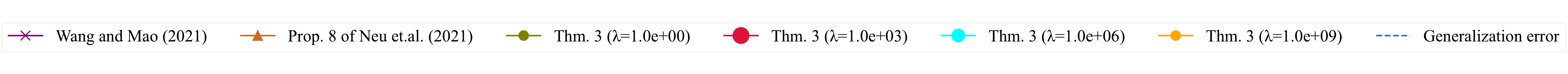}
        \\
        \verticalAxisLeft[0.3in]%
        \begin{subfigure}{0.24\linewidth}
            \centering
            \includegraphics[width=\linewidth]{pic/experiments/corrupt/cifar10_vit.pdf}
        \end{subfigure}
        \begin{subfigure}{0.24\linewidth}
            \centering
        \includegraphics[width=\linewidth]{pic/experiments/weight_decay/cifar10_vit.pdf}
        \end{subfigure}
        \caption{
            Numerical results for ViT on CIFAR-10 under varied label noise level and weight decay. The isotropic version of \citet{neu_information-theoretic_2021}'s Proposition 8 is used.
        }\label{fig:cifar10_vit_hyperparameters}
    \end{figure}
\fi

\subsubsection{Results for 2-Layer MLP on MNIST}\label{appendix:more_mnist_experiments}

The numerical results on MNIST \citep{MNIST} with the 2-layer MLP are displayed in \cref{fig:mnist_fc1_gradient_dispersion_results,fig:mnist_fc1_terminal_dispersion_results,fig:mnist_fc1_lrbs,fig:mnist_fc1_hyperparameters}.

\experimentsOfExistingBound{gradient_dispersion}{\citet{wang_generalization_2021}'s bound}{2-layer MLP}{MNIST}{mnist}{fc1}

\experimentsOfExistingBound{terminal_dispersion}{the isotropic version of \cref{lemma:neu}}{2-layer MLP}{MNIST}{mnist}{fc1}

\begin{figure}
    \centering
    \includegraphics[width=0.8\linewidth,trim={0 2.5cm 0 2.5cm},clip]{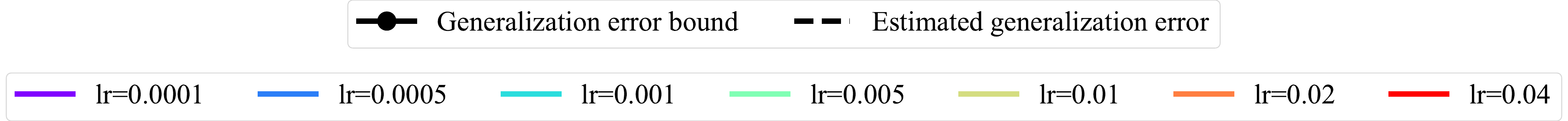}
    \\
    \centering%
    \verticalAxisLeft%
    \begin{subfigure}{0.22\linewidth}
        \centering
        \includegraphics[width=\linewidth]{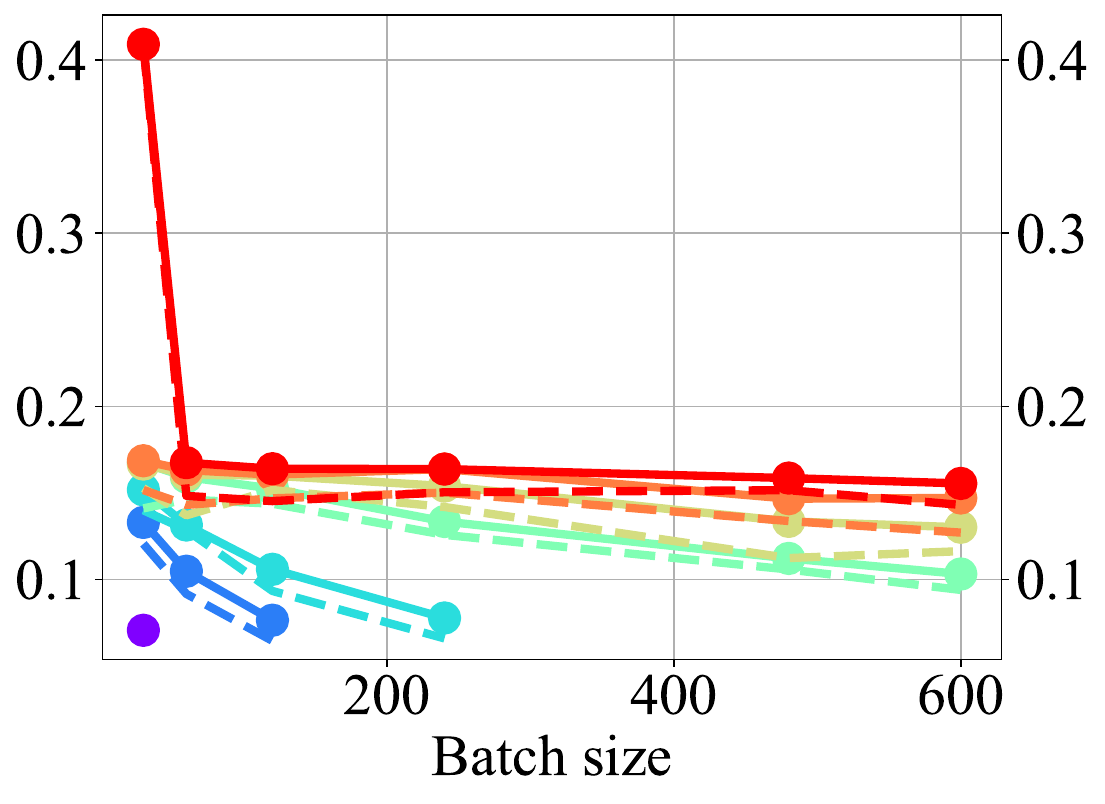}
        \caption{Bound}\label{fig:mnist_lrbs_bound}
    \end{subfigure}
    \begin{subfigure}{0.22\linewidth}
        \centering
        \includegraphics[width=\linewidth]{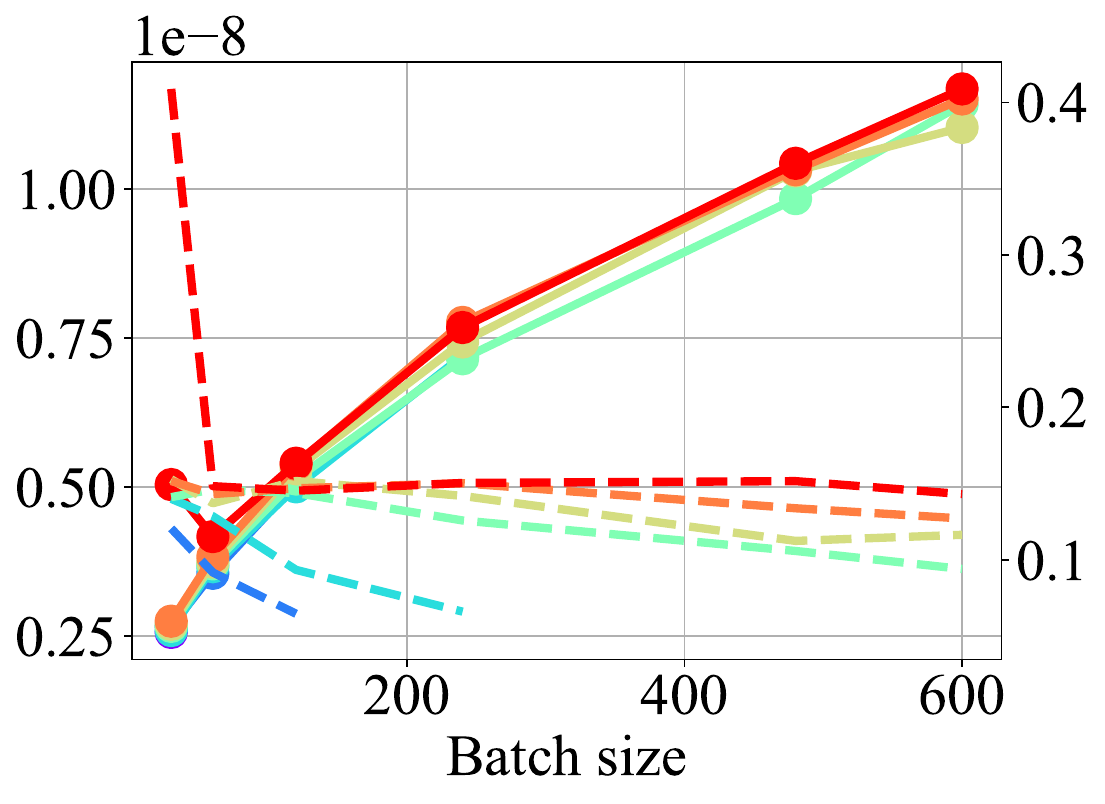}
        \caption{Trajectory Term\rescaled{}}\label{fig:mnist_lrbs_trajectory}
    \end{subfigure}
    \begin{subfigure}{0.22\linewidth}
        \centering
        \includegraphics[width=\linewidth]{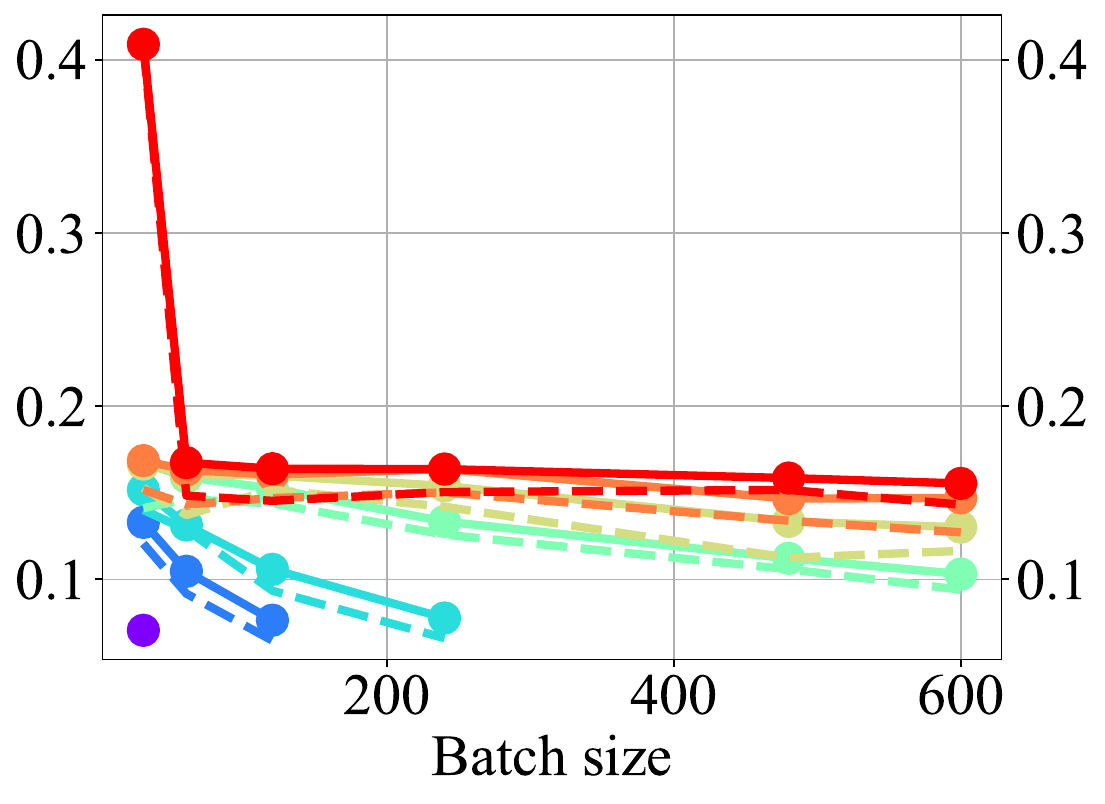}
        \caption{Punishment Term}\label{fig:mnist_lrbs_punishment}
    \end{subfigure}
    \begin{subfigure}{0.22\linewidth}
        \centering
        \includegraphics[width=\linewidth]{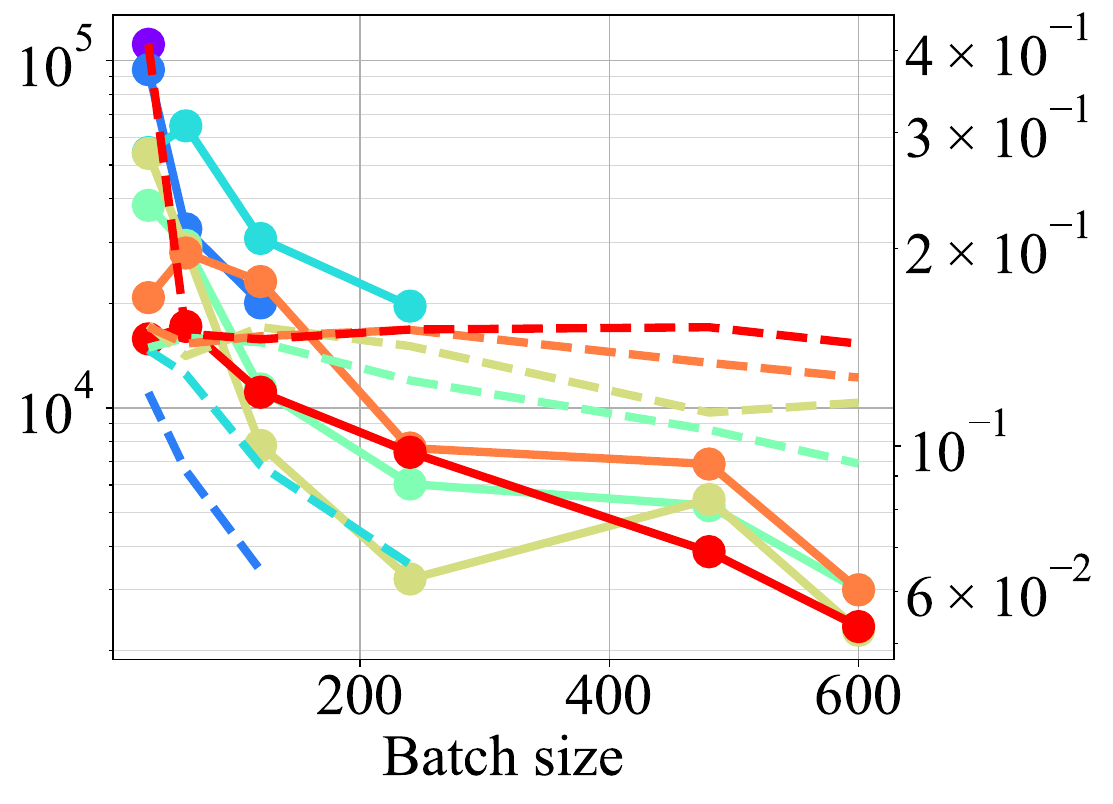}
        \caption{Flatness Term\rescaled{}}\label{fig:mnist_lrbs_flatness}
    \end{subfigure}
    \verticalAxisRight
    \caption{Numerical results of \cref{theorem:flatness} with $\lambda = 10^9$ on MNIST and a 2-layer MLP with varied learning rate and batch size.  
    \meaningOfRescaled{}
    }\label{fig:mnist_fc1_lrbs}
\end{figure}

\begin{figure}
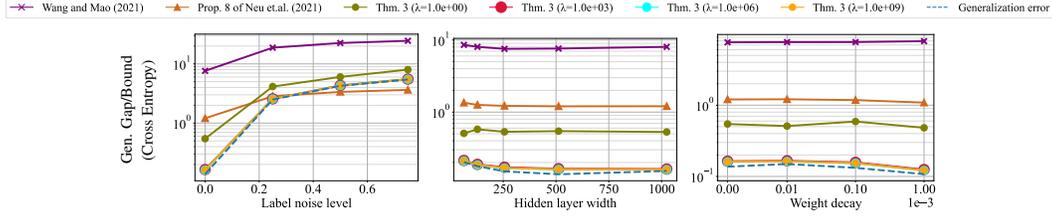

    \centering
    \includegraphics[width=\linewidth,trim={0 1cm 0 3cm},clip]{pic/experiments\mnistOneVariablePrefix/corrupt/mnist_fc1_legend.pdf}
    \\
    \verticalAxis[0.3in]%
    \ifShowMoreHyperparameters%
        \newcommand{\widthPerSubfigure}{0.24}
    \else%
        \newcommand{\widthPerSubfigure}{0.24}
    \fi%
    \ifShowMoreHyperparameters%
    \fi%
    \begin{subfigure}{\widthPerSubfigure\linewidth}
        \centering
        \includegraphics[width=\linewidth]{pic/experiments\mnistOneVariablePrefix/corrupt/mnist_fc1.pdf}
    \end{subfigure}
    \begin{subfigure}{\widthPerSubfigure\linewidth}
        \centering
        \includegraphics[width=\linewidth]{pic/experiments\mnistOneVariablePrefix/width/mnist_fc1.pdf}
    \end{subfigure}
    \begin{subfigure}{\widthPerSubfigure\linewidth}
        \centering
       \includegraphics[width=\linewidth]{pic/experiments\mnistOneVariablePrefix/weight_decay/mnist_fc1.pdf}
    \end{subfigure}
    \ifShowMoreHyperparameters%
    \fi%
    \caption{
        Numerical results for a 2-layer MLP on MNIST under varied label noise level, width, and weight decay. The isotropic version of \citet{neu_information-theoretic_2021}'s Proposition 8 is used.
    }\label{fig:mnist_fc1_hyperparameters}
\end{figure}

\begin{modified}[\rfour]
\subsection{Experiments of Existing Bounds with Population Hessian}\label{appendix:population_Hessian}

\renewcommand{\resnetPrefix}{/Hflat}
\renewcommand{\resnetOneVariablePrefix}{/Hflat}
\renewcommand{\mnistPrefix}{/Hflat}
\renewcommand{\mnistOneVariablePrefix}{/Hflat}

In \cref{fig:cifar10_resnet_gradient_dispersion_results,fig:mnist_fc1_terminal_dispersion_results,fig:cifar10_hyperparameters}, we have displayed results of existing bounds \emph{without} using population Hessians. That is, we are computing the bounds whose (approximated) flatness term is 
\begin{align}
    \frac{\sigma^2 T}{2} \ex{\trace{\emphessian(\weight_T)}}
\end{align}
instead of $\frac{\sigma^2 T}{2} \ex{\trace{\emphessian(\weight_T)} - \trace{\pophessian(\weight_T)}}$.
In contrast, our bound depends heavily on population statistics like the mean gradient and Hessians. One may doubts whether this comparison is unfair due to our extra dependence on population Hessians. This concern is amplified by the fact that the population Hessian potentially improves the numerical tightness of existing bounds:
If one assumes as \citet{wang_generalization_2021} that $\poploss{\weight_T} \le \ex[\xi \sim \gaussian[0][\sigma^2 I]]{\poploss{\weight_T + \xi}}$, one is essentially assuming $\pophessian(\weight_T)$ is positive semi-definite, whose trace is non-negative. In \cref{lemma:wang,lemma:neu}, the non-negative traces are subtracted from the bound, which is tighter than the case where it is not subtracted.

To address this concern, we also allow the existing bounds to depend on the population statistics, especially the population Hessian, and evaluate them in experiments. 
The results are surprising and confusing: the population Hessians indeed have positive traces but their traces are orders of magnitude larger than the traces of the empirical Hessians. As a result, the flatness term becomes negative. If one optimize the bound parameter $\sigma$, one would arrive at a $-\infty$ bound of the generalization error, which is obviously incorrect. We conjecture this is due to the second order approximation used in the expectation, which becomes inaccurate for population loss changes under the SGLD-like noise of variance $\sigma^2 T I$.
This is also a drawback of existing bounds to require the large noises to decrease the impact of the trajectory term in the $\sigma$-optimized bound. In contrast, our bound has an orders-of-magnitude smaller trajectory term and requires orders of magnitude smaller noises. This is supported by empirical results in \cref{tab:best_sigmas}, where the value of (average) $\sigma_*^2$ ($\sigma_*$ is the optimal values of $\sigma_t$ in $\sum_t \sigma_t^2$ of \cref{lemma:wang}, the $\sigma$ in $\Sigma = \sigma^2 T I$ of isotropic \cref{lemma:neu}, or the $\sigma$ in $\sigma^2 T$ in \cref{theorem:terminal_variance} / \cref{theorem:flatness}) in different experiments are presented. Be noted that although $\sigma_*^2$ values themselves seem already small for all experiments, the actual noises in the flatness term has covariance $\sigma_*^2 T I$, where $T$ is the number of training steps ($\sim 10^4$ for CIFAR10) and identity matrix $I$ has a size equal to the number of parameters ($\sim 10^6$ for ResNets). As a result, the magnitude of $\sigma_*^2$ have a significant impact on the second-order approximation in the flatness terms and the second-order approximation is much more accurate for evaluating our bound. 

\begin{table}
    \centering
    \newcommand{\magnitudes}{\times 10^{-9}}
    \scalebox{0.8}{
        \begin{tabular}{lrrrr}
            \toprule
                &   l.r. and b.s. &  label noise &  width & weight decay \\
            \midrule
            Prop. \ref{lemma:wang} w/o pop. Hess.       &   $15381.627\magnitudes$ &   $15372.131\magnitudes$ &   $19904.060\magnitudes$ &   $17674.105\magnitudes$\\
            Prop. \ref{lemma:neu}  w/o pop. Hess.       &   $1434.559\magnitudes$ &    $1278.293\magnitudes$  &   $1660.805\magnitudes$  &   $1341.889\magnitudes$\\
            \midrule
            Prop. \ref{lemma:wang} w/  pop. Hess.       &   $6659.479\magnitudes$ &    $6572.998\magnitudes$  &   $9175.748\magnitudes$  &   $9149.648\magnitudes$\\
            Prop. \ref{lemma:neu}  w/  pop. Hess.       &   $676.226\magnitudes$ &     $545.839\magnitudes$   &   $767.169\magnitudes$   &   $643.772\magnitudes$\\
            \midrule
            Thm. \ref{theorem:flatness} ($\lambda = 1$) &   $291.674\magnitudes$ &     $595.630\magnitudes$  &    $303.843\magnitudes$   &   $307.434\magnitudes$\\
            Thm. \ref{theorem:flatness} ($\lambda = 10^3$)& $4.354\magnitudes$ &       $6.269\magnitudes$   &     $3.866\magnitudes$     &   $3.167\magnitudes$\\
            Thm. \ref{theorem:flatness} ($\lambda = 10^9$)& $\mathbf{0.193}\magnitudes$ &   $\mathbf{0.204}\magnitudes$ &   $\mathbf{0.244}\magnitudes$   & $\mathbf{0.167}\magnitudes$\\
            \bottomrule
        \end{tabular}
    }
    \caption{The average value of $\sigma_*^2$ across bounds and CIFAR-10 experiments varying different hyperparameters. It can be seen that our bound requires orders of magnitude smaller noises for the SGLD-like trajectory's Gaussian noises when $\lambda$ is suitably selected. As a result, the distortion due to the second-order approximation has less impact to our bound.}\label{tab:best_sigmas}
\end{table}

To alleviate this problem, we take absolute values of the flatness terms to make them positive as in the main results of \citet{wang_generalization_2021} and \citet{neu_information-theoretic_2021}. That is, we now estimate the bound with flatness term replaced by
\begin{align}
    \frac{\sigma^2 T}{2} \abs{\ex{\trace{\emphessian(\weight_T)} - \trace{\pophessian(\weight_T)}}}.
\end{align}
The population Hessian is estimated on a new validation set.
After this modification, we display the numerical results for \cref{lemma:wang,lemma:neu} in \cref{fig:cifar10_resnet_gradient_dispersion_population_Hessian_results,fig:cifar10_resnet_terminal_dispersion_population_Hessian_results}.

\bgroup
\renewcommand{\legendWidth}{0.6}
\experimentsOfExistingBound{gradient_dispersion_population_Hessian}{\citet{wang_generalization_2021}'s bound with population Hessians}{ResNet-18}{CIFAR-10}{cifar10}[0.2][ for easier tendency comparison]{resnet}
\egroup

\bgroup
\renewcommand{\legendWidth}{0.6}
\experimentsOfExistingBound{terminal_dispersion_population_Hessian}{The isotropic version of \cref{lemma:neu} with population Hessians}{ResNet-18}{CIFAR-10}{cifar10}[0.2]{resnet}
\egroup

\begin{figure}
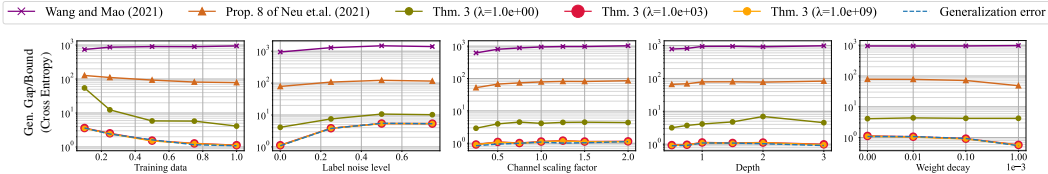

    \centering
    \includegraphics[width=\linewidth,trim={0 1cm 0 3cm},clip]{pic/experiments\resnetOneVariablePrefix/corrupt/cifar10_resnet_legend.pdf}
    \\
    \centering
    \ifShowMoreHyperparameters%
        \newcommand{\widthPerSubfigure}{0.18}
        \scalebox{0.8}{\verticalAxis[0.3in]}%
    \else%
        \newcommand{\widthPerSubfigure}{0.2}
        \verticalAxis[0.2in]%
    \fi%
    \ifShowMoreHyperparameters%
        \begin{subfigure}{\widthPerSubfigure\linewidth}
            \centering
            \includegraphics[width=\linewidth]{pic/experiments\resnetOneVariablePrefix/training_data_usage/cifar10_resnet.pdf}
        \end{subfigure}
    \fi%
    \begin{subfigure}{\widthPerSubfigure\linewidth}
        \centering
        \includegraphics[width=\linewidth]{pic/experiments\resnetOneVariablePrefix/corrupt/cifar10_resnet.pdf}
    \end{subfigure}
    \begin{subfigure}{\widthPerSubfigure\linewidth}
        \centering
        \includegraphics[width=\linewidth]{pic/experiments\resnetOneVariablePrefix/width/cifar10_resnet.pdf}
    \end{subfigure}
    \ifShowMoreHyperparameters
        \begin{subfigure}{\widthPerSubfigure\linewidth}
            \centering
            \includegraphics[width=\linewidth]{pic/experiments\resnetOneVariablePrefix/depth/cifar10_resnet.pdf}
        \end{subfigure}
    \fi
    \begin{subfigure}{\widthPerSubfigure\linewidth}
        \centering
       \includegraphics[width=\linewidth]{pic/experiments\resnetOneVariablePrefix/weight_decay/cifar10_resnet.pdf}
    \end{subfigure}
    \ifShowMoreHyperparameters
    \fi
    \caption{
        Numerical results of existing bounds with population Hessians for ResNet-18 on CIFAR-10 under varied 
        \ifShowMoreHyperparameters
            training data usage,
        \fi
        label noise level, width,
        \ifShowMoreHyperparameters
            depth,
        \fi
        and weight decay.
        \ifShowMoreHyperparameters
            The tendency \wrt{} weight scaling is also added. However, the results are taken from (unbiased) MLP on MNIST because ResNets are more complicated models and it takes time to develop a weight scaling scheme that does not essentially change the prediction.
        \fi
        The isotropic version of \cref{lemma:neu} (\citet{neu_information-theoretic_2021}'s Prop. 8) is used.
        If a model has a channel scaling factor of $c$, it means the model has $c$ times the number of channels at each layer compared to a standard ResNet-18.
    }\label{fig:cifar10_hyperparameters_population_Hessian}
\end{figure}

Since the modification only involves the flatness term, the trajectory term is almost the same as those in \cref{fig:cifar10_resnet_gradient_dispersion_trajectory_term,fig:cifar10_resnet_terminal_dispersion_trajectory_term} of the existing bounds without the population Hessian. Therefore, we mainly focus on the flatness term.  

Since the subtracted population Hessian traces are orders of magnitude larger than the empirical Hessian, the flatness terms with absolute value are dominated by the population Hessian traces and are orders of magnitude larger than those in \cref{fig:cifar10_resnet_gradient_dispersion_flatness_term,fig:cifar10_resnet_terminal_dispersion_flatness_term}. As a result, the existing bounds even become looser if fed with validation sets.

When it comes to the tendency, the flatness terms' tendencies \wrt{} batch size also become wrong when batch size is large or learning rate is small. As a result, \cref{lemma:neu}'s \citep{neu_information-theoretic_2021} partial improvement of trajectory term \wrt{} batch size can not compensate the wrong tendency of the flatness term, and the whole bound scales incorrectly \wrt{} batch size. 

The correct tendency of the flatness term without population Hessians and the wrong tendency of that with population Hessian indicate population Hessians have wrong tendencies when batch size is large or the learning rate is small. This somehow points to the notion the generalization and overfitting of higher order statistics, as the empirical Hessian has the correct tendency as shown in \cref{fig:cifar10_resnet_gradient_dispersion_flatness_term,fig:cifar10_resnet_terminal_dispersion_flatness_term} yet the population Hessian has the wrong tendency. Small batch size and large learning rate seem also helpful for this higher-order generalization.

As shown in \cref{fig:cifar10_hyperparameters_population_Hessian}, when weight decay is used, the existing bounds with population Hessian can better capture the tendency of generalization under regularization. However, when the hyperparameters in \cref{fig:cifar10_hyperparameters_population_Hessian} are varied, the existing bounds with population Hessians are times looser than those without the population Hessians.

To sum up, the existing bounds cannot fully exploit the population statistics even if we allow them to access the validation set. Therefore, the comparison between our bound and the existing bounds is still fair, at least for existing bounds in its current form estimated after the conventional \citep{wang_generalization_2021} second-order approximation.

\subsection{Experiments on Weight Scaling}\label{appendix:scaling}

\ifShowMoreHyperparameters
\begin{figure}
    \centering
    \includegraphics[width=\linewidth,trim={0 1cm 0 3cm},clip]{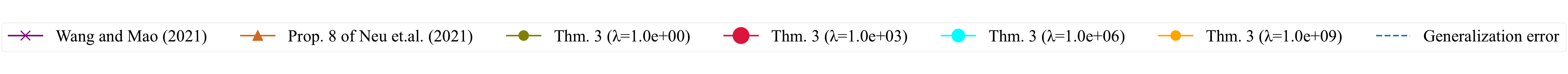}
    \\
    \centering
    \newcommand{\widthPerSubfigure}{0.35}%
    \scalebox{1.2}{\verticalAxis[0.6in]}%
    \begin{subfigure}{\widthPerSubfigure\linewidth}
        \centering
        \includegraphics[width=\linewidth]{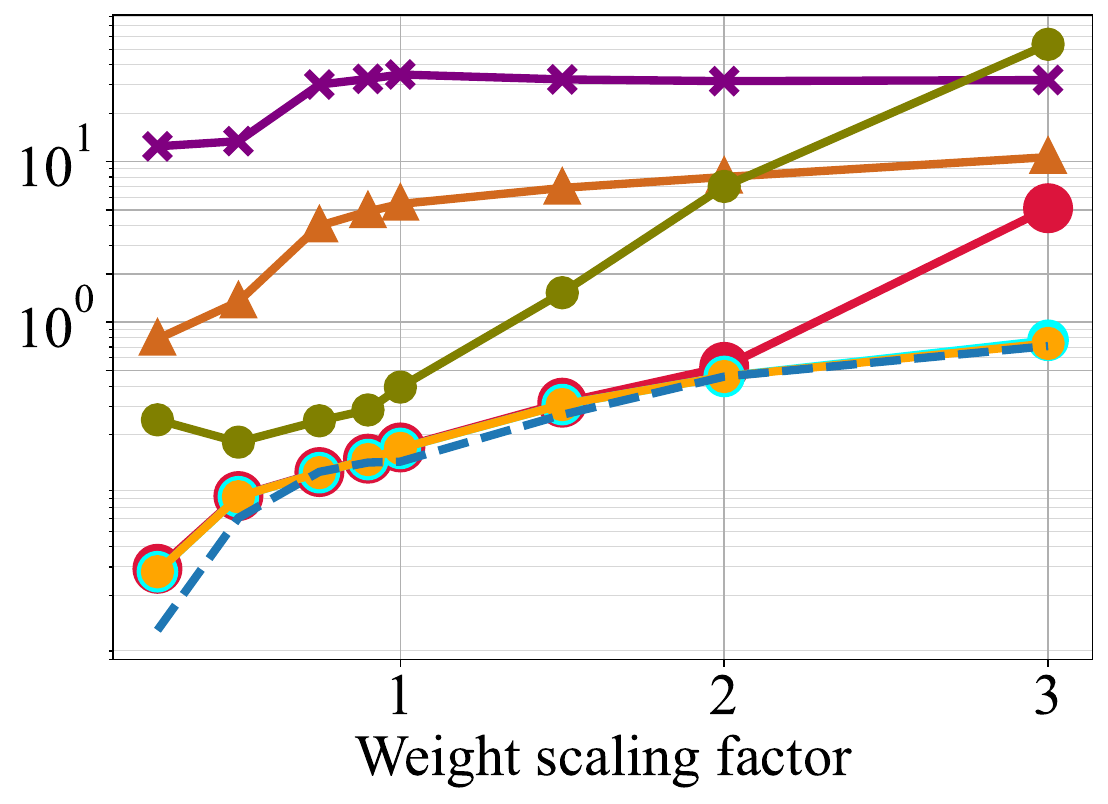}
        \caption{Generalization gap on all samples.}\label{fig:mnist_scaling_all}
    \end{subfigure}
    \begin{subfigure}{\widthPerSubfigure\linewidth}
        \centering
        \includegraphics[width=\linewidth]{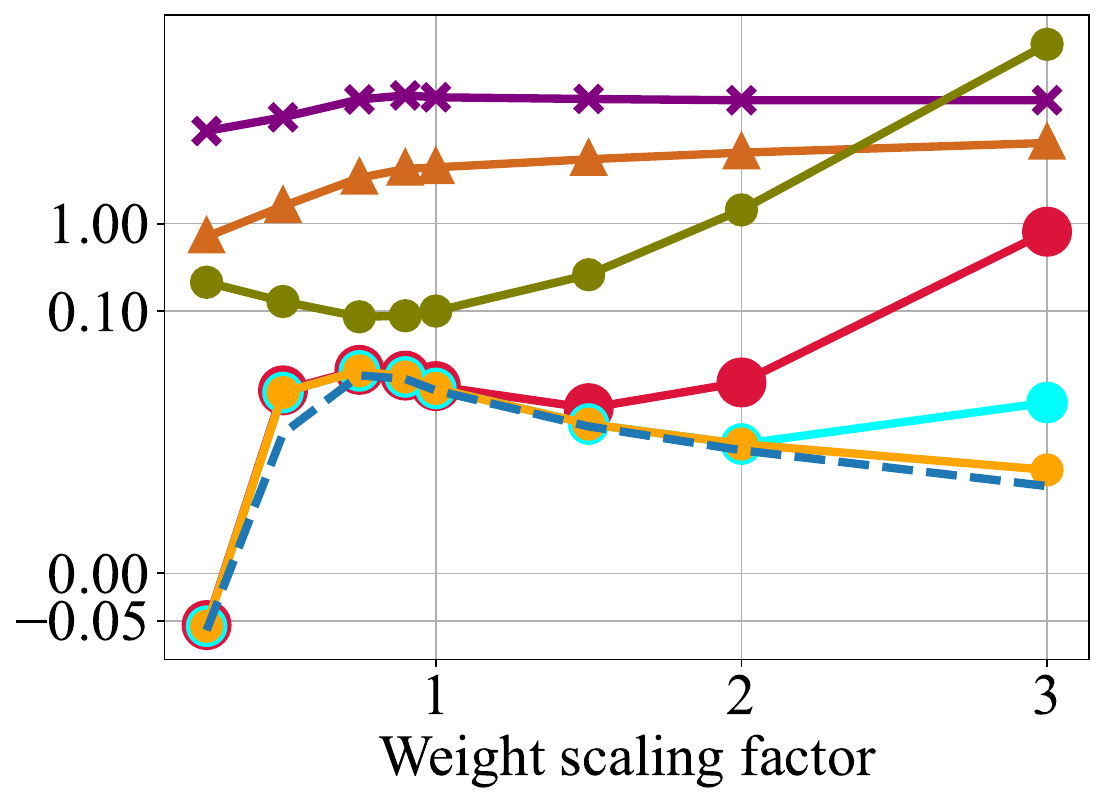}
        \caption{Gen. gap on correct classification.}\label{fig:mnist_scaling_correct_only}
    \end{subfigure}
    \caption{Cross Entropy generalization gap and bounds under weight scaling of MLPs on MNIST. \cref{fig:mnist_scaling_all} displays the generalization gap on all testing samples. \cref{fig:mnist_scaling_correct_only} displays the generalization gap on only the correctly classified testing samples. Since \cref{fig:mnist_scaling_correct_only} involves negative values, we use log-scale above $10^{-4}$ and linear scale below $10^{-4}$.}\label{fig:mnist_scaling}
\end{figure}
\fi

To test whether our flatness-related generalization bound captures the generalization under weight scaling, we train homogenous two-layer MLPs, whose output layer has no bias, on MNIST and scale the weight at the end as the last step of the training. The results are displayed in \cref{fig:mnist_scaling_all}, where generalization gap is computed for clipped Cross Entropy losses.
In this case, our bounds with large $\lambda$ can well capture the generalization.

However, the results in \cref{fig:mnist_scaling_all} out of expectation.
One would expect the generalization improves as the weight is scaled up because most MNIST samples would be correctly classified and the generalization risk measured by Cross Entropy would be decreased by increasing the ``confidence''. 
However, in \cref{fig:mnist_scaling_all}, Cross Entropy increases when scaling up the weights.  We conjecture it is because we split the training set into $k=6$ subsets, the generalization of these MLPs are in fact harmed and about $4\%$ of testing samples are misclassified. As a result, scaling the weight also increases the generalization risk on the misclassified testing samples. The increase on misclassified testing samples are arguably much faster than those on correctly classified samples, because weight scaling on misclassification pushes the loss to infinity while weight scaling on correct classification pushes the loss to zero. As a result, the generalization gap increases in \cref{fig:mnist_scaling_all}.

To see how our bound captures the improved generalization on fully correctly classified scenarios, we filter the testing sets and only keep the correctly classified samples, \emph{simulating} the cases where all samples are correctly classified. We implement this simulation using the following loss:
\begin{align}
    \underbrace{\mathbb{I}[\argmax_i \text{logits}_i = y]}_{\text{Filtering the correct samples}} \cdot \underbrace{\min(\text{CrossEntropy}(\text{logits}, y), 12 \log c)}_{\text{Clipped Cross Entropy}},
\end{align}
where $c$ is the number of classes.
It is bounded, thus sub-Gaussian and can be rigorously used in the existing and our bounds, whose results can be found in \cref{fig:mnist_scaling_correct_only}. In \cref{fig:mnist_scaling_correct_only}, when weights are scaled down, the training loss increases and increases faster (maybe the faster increase is related to overfitting), decreasing the gap. However, when weights are sufficiently scaled up, the generalization gap indeed decreases with the decreases of scaling (maybe because now the training loss is almost zero and its decrease slows down).  In the latter phase, the bound from \citet{wang_generalization_2021} and \citet{neu_information-theoretic_2021} all have weak or wrong dependency \wrt{} weight scaling. In contrast, our bound still well captures the improved generalization in both cases, if $\lambda$ is suitably selected. Our bounds even captures the strange negative generalization error. By inspecting the raw data, we find the negative value mainly comes from the empirical part of the penalty term, indicating that the negativeness of the bound value is not mainly due to the dependence on the validation set.

To sum up, our bounds can well capture the generalization under weight scaling in our experiments.

\end{modified}
\section{Proofs and Details for Sec. \ref{sec:extension} Extensions}\label{appendix:extension}

In this section, we present and prove the combined results of the omniscient trajectory wth existing results to demonstrate the flexibility of the technique.
The proofs generally follow those of the existing results.
But note that the omniscient and the SGLD-like trajectories may depend on random variables more than the training set $\trainingSet$. Since the existing results often assume the trajectory solely depends on $\trainingSet$, we must re-prove some of them instead of directly applying them to handle the extra dependence. Sometimes we must rearrange the order of the steps of the original proofs to insert the omniscient trajectory.
Setting the omniscient perturbation to zero will recover the existing results, which we will not bother to restate.

\subsection{Combination with the Individual-Sample Technique}

\begin{corollary}\label{corollary:individual}
    \assumesubgaussianloss. Recall $n$ is the number of training samples.
    Then for any family $\set{\Delta g^i_{1:T}}_{i=1}^n$ of omniscient perturbations and any $\sigma_{1:T} \in \left(\reals^{>0}\right)^{T}$, we have
    \begin{align}
        \gen 
        \le& \frac{R}{n} \sum_{i=1}^n \sqrt{2 I(\sgldWeight_T^i; Z_i )
        } + \frac{1}{n} \sum_{i=1}^n \ex{\Delta_{\Gamma^i_T}^{\sum_t \sigma_t^2}(\weight_T, \sample_i) - \Delta_{\Gamma^i_T}^{\sum_t \sigma_t^2}(\weight_T, \trainingSet')}\\
        \le& \frac{R}{n} \sum_{i=1}^n \sqrt{2 I(\sgldWeight_T^i; Z_i \mid Z_{-i})
        } + \frac{1}{n} \sum_{i=1}^n \ex{\Delta_{\Gamma^i_T}^{\sum_t \sigma_t^2}(\weight_T, \sample_i) - \Delta_{\Gamma^i_T}^{\sum_t \sigma_t^2}(\weight_T, \trainingSet')}\\
        \le& \frac{R}{n} \sum_{i=1}^n \sqrt{
            \sum_{t=1}^T \frac{1}{\sigma_t^2} \ex{\norm{g_t - \ex{g_t \mid \sample_{-i}} - \Delta g^i_t}^2}
        } \\ &\qquad\qquad\qquad\qquad+ \frac{1}{n} \sum_{i=1}^n \ex{\Delta_{\Gamma^i_T}^{\sum_t \sigma_t^2}(\weight_T, \sample_i) - \Delta_{\Gamma^i_T}^{\sum_t \sigma_t^2}(\weight_T, \trainingSet')} ,\label{eq:individual_bound}
    \end{align}
    where $\Delta g_t^i$ is a deterministic function of $(\trainingSet, V, g_{1:T}, \weight_{0:T}, \sample_{-i})$, and $\Gamma^i_t \defeq \sum_{\tau=1}^{t} \Delta g^i_{\tau}$.
\end{corollary}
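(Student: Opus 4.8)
The plan is to replay the proof of \cref{theorem:omniscient_trajectory} one training sample at a time, with the full mutual-information bound of \cref{lemma:MI_bound} replaced by the individual-sample bound of \citet{individual_technique}, and with the omniscient and SGLD-like trajectories allowed to depend on $\sample_{-i}$. First I would write $\gen=\frac1n\sum_{i=1}^n\bigl(\ex{\poploss{\weight_T}}-\ex{\ell(\weight_T,\sample_i)}\bigr)$ and, for each $i$, introduce the omniscient trajectory $\retroWeight^i_{0:T}$ of \cref{def:omniscient_trajectory} specified by $\Delta g^i_{1:T}$ (a deterministic function of $(\trainingSet,V,g_{1:T},\weight_{0:T},\sample_{-i})$, with $\Gamma^i_T=\sum_\tau\Delta g^i_\tau$ and $\retroWeight^i_T=\weight_T+\Gamma^i_T$) together with its SGLD-like trajectory $\sgldWeight^i_{0:T}$ of \cref{def:sgld_trajectory}. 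Exactly as in the proof of \cref{theorem:omniscient_trajectory}, two successive changes of trajectory give, for each $i$,
\begin{align*}
    \ex{\poploss{\weight_T}}-\ex{\ell(\weight_T,\sample_i)}
    =\bigl(\ex{\poploss{\sgldWeight^i_T}}-\ex{\ell(\sgldWeight^i_T,\sample_i)}\bigr)
     +\ex{\Delta^{\sum_t\sigma_t^2}_{\Gamma^i_T}(\weight_T,\sample_i)-\Delta^{\sum_t\sigma_t^2}_{\Gamma^i_T}(\weight_T,\trainingSet')},
\end{align*}
where the two single-change penalties are merged by the identity $\Delta_\gamma(\instanceWeight,\cdot)+\Delta^\Sigma(\instanceWeight+\gamma,\cdot)=\Delta^\Sigma_\gamma(\instanceWeight,\cdot)$, the population parts are rewritten with a ghost set $\trainingSet'$ via $\ex[\trainingSet']{\emploss[\trainingSet']{\cdot}}=\poploss{\cdot}$ (valid because $(\weight_T,\Gamma^i_T)\perp\trainingSet'$), and the Gaussian part uses that $\sum_tN_t\sim\gaussian[0][(\sum_t\sigma_t^2)I]$ is independent of $\retroWeight^i_T$.

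Next I would control the ``clean'' term $\ex{\poploss{\sgldWeight^i_T}}-\ex{\ell(\sgldWeight^i_T,\sample_i)}$. Writing $\poploss{\instanceWeight}=\ex[\sample'\sim\mu]{\ell(\instanceWeight,\sample')}$ with $\sample'$ a fresh draw independent of $\sgldWeight^i_T$, we see $\ex{\poploss{\sgldWeight^i_T}}$ is the expectation of $\ell(\sgldWeight^i_T,\sample_i)$ under the product $P_{\sgldWeight^i_T}\otimes P_{\sample_i}$, so $R$-sub-Gaussianity of $\ell(w,\cdot)$ and the decoupling estimate behind \citet{individual_technique} give $\ex{\poploss{\sgldWeight^i_T}}-\ex{\ell(\sgldWeight^i_T,\sample_i)}\le R\sqrt{2I(\sgldWeight^i_T;\sample_i)}$; averaging over $i$ yields the first displayed bound. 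The second follows from $I(\sgldWeight^i_T;\sample_i)\le I(\sgldWeight^i_T;\sample_i\mid\sample_{-i})$, which holds because the samples are i.i.d.: $I(\sgldWeight^i_T;\sample_i\mid\sample_{-i})-I(\sgldWeight^i_T;\sample_i)=I(\sample_i;\sample_{-i}\mid\sgldWeight^i_T)-I(\sample_i;\sample_{-i})=I(\sample_i;\sample_{-i}\mid\sgldWeight^i_T)\ge0$; monotonicity of $\sqrt{\cdot}$ and the fact that the penalty terms coincide then upgrade the first bound to the second. For the third bound I would estimate the conditional MI by trajectory variances: data processing gives $I(\sgldWeight^i_T;\sample_i\mid\sample_{-i})\le I(\sgldWeight^i_{0:T};\sample_i\mid\sample_{-i})$; the chain rule together with $\sgldWeight^i_0=\weight_0\perp(\sample_i,\sample_{-i})$ gives $I(\sgldWeight^i_{0:T};\sample_i\mid\sample_{-i})=\sum_{t=1}^TI(\sgldWeight^i_t;\sample_i\mid\sgldWeight^i_{0:t-1},\sample_{-i})$; and since $\sgldWeight^i_t=\sgldWeight^i_{t-1}-(g_t-\Delta g^i_t)+N_t$, \cref{lemma:key} (with $X=\sample_i$, $Y=\sgldWeight^i_{0:t-1}$, $O=\sample_{-i}$, $\Delta$ collecting the remaining randomness, $f=\sgldWeight^i_{t-1}-(g_t-\Delta g^i_t)$, and the choice $\Omega(\instanceSgldWeight^i_{0:t-1},\instanceSample_{-i})=\instanceSgldWeight^i_{t-1}-\ex{g_t\mid\sample_{-i}=\instanceSample_{-i}}$) bounds each term by $\tfrac1{2\sigma_t^2}\ex{\norm{g_t-\ex{g_t\mid\sample_{-i}}-\Delta g^i_t}^2}$. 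Summing over $t$ and then over $i$ produces \cref{eq:individual_bound}; setting $\Delta g^i_{1:T}\equiv0$ recovers the SGLD-like individual-sample bound.

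The step I expect to need the most care is the last one, applying \cref{lemma:key} under the extra dependence of $\Delta g^i_t$ (and hence of $f$ and of the conditioning history $\sgldWeight^i_{0:t-1}$) on $\sample_{-i}$: the plain version \cref{eq:key} does not suffice, and one must invoke the variant \cref{eq:key_with_other} whose reference function $\Omega$ is permitted to depend on the side information $O=\sample_{-i}$. One also has to verify the hypothesis of that lemma, namely that the injected Gaussian $N_t$ is independent of the whole tuple $(\sample_i,\sgldWeight^i_{0:t-1},\Delta,\sample_{-i})$; this holds because $N_t$ is drawn independently of $(\trainingSet,V,g_{1:T},\weight_{0:T})$, of which every other quantity above is a deterministic function. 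The remaining bookkeeping — keeping the penalty terms identical across the three displayed lines so they telescope out of the inequalities, and tracking the $R$-sub-Gaussianity constant — is routine and parallels the appendix's remark that these proofs ``generally follow'' the existing ones.
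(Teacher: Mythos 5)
Your proposal is correct and follows essentially the same route as the paper's proof: you extract the per-sample sum first, introduce sample-indexed omniscient and SGLD-like trajectories, apply \cref{lemma:MI_bound} to each one-sample problem, pass from $I(\sgldWeight^i_T;\sample_i)$ to the conditional MI, and then bound that via the chain rule and \cref{eq:key_with_other} of \cref{lemma:key} with $O=\sample_{-i}$ and $\Omega(\instanceSgldWeight^i_{0:t-1},\instanceSample_{-i})=\instanceSgldWeight^i_{t-1}-\ex{g_t\mid\sample_{-i}=\instanceSample_{-i}}$. The only cosmetic differences are that you explicitly merge the two change-of-trajectory penalties via $\Delta_\gamma(\instanceWeight,\cdot)+\Delta^\Sigma(\instanceWeight+\gamma,\cdot)=\Delta^\Sigma_\gamma(\instanceWeight,\cdot)$ (which the paper leaves implicit) and you spell out $I(\sgldWeight^i_T;\sample_i)\le I(\sgldWeight^i_T;\sample_i\mid\sample_{-i})$ via the interaction-information identity, which the paper asserts without elaboration.
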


\begin{remark}
    The omniscient trajectory additionally depends on $i$ and $\sample_{-i}$, the two featuring random variables of the individual-sample bounds.
\end{remark}

\begin{proof}
The crux of the individual-sample technique is to first extract the summation in the empirical risk and change its order with the expectation, before other things like using the MI bound or building the auxiliary trajectories:
\begin{align}
    &   \gen\\
    \defeq& \ex{\poploss{\weight_T} - \frac{1}{n} \sum_{i=1}^n \sampleLoss(\weight_T, \sample_i)}
    =       \frac{1}{n} \sum_{i=1}^n \ex{\poploss{\weight_T} - \sampleLoss(\weight_T, \sample_i)}\\
    =&      \frac{1}{n} \sum_{i=1}^n \ex{\poploss{\retroWeight^i_T} - \sampleLoss(\retroWeight^i_T, \sample_i)} + \frac{1}{n} \sum_{i=1}^n \ex{\Delta_{\Gamma^i_T}(\weight_T, \sample_i) - \Delta_{\Gamma^i_T}(\weight_T, \trainingSet')},
\end{align}
where $\retroWeight^i_t \defeq \weight_t + \Gamma^i_T$ is the $i$-th omniscient trajectory.
It can be seen that the omniscient penalty term has matched that in \cref{eq:individual_bound}. Thus, we will then bound the individual-sample generalization error of the omniscient trajectories. The process is similar to the proof of \cref{theorem:omniscient_trajectory}, \ie{} adding Gaussian noises to obtain the SGLD trajectories:
\begin{multline}
    \frac{1}{n} \sum_{i=1}^n \ex{\poploss{\retroWeight^i_T} - \sampleLoss(\retroWeight^i_T, \sample_i)}
    =  \frac{1}{n} \sum_{i=1}^n \ex{\poploss{\sgldWeight^i_T} - \sampleLoss(\sgldWeight^i_T, \sample_i)} \\+ \frac{1}{n} \sum_{i=1}^n \ex{\Delta^{\sum_t \sigma_t^2}(\weight_T + \Gamma^i_T, \sample_i) - \Delta^{\sum_t \sigma_t^2}(\weight_T + \Gamma^i_T, \trainingSet')},
\end{multline}
where $\sgldWeight^i_t \defeq \retroWeight^i_t + \sum_{\tau=1}^t N^i_t$, and $N^i_t \sim \gaussian[0][\sigma_t^2 I]$ is an independent Gaussian noise.
Again we throw away the already matched penalty terms and focus on bounding the individual-sample generalization error of the SGLD-like trajectory. To this end, we see each term in the error as the generalization error of an algorithm that takes one sample (but stealthily samples $n-1$ samples as algorithm's internal randomness) and apply \cref{lemma:MI_bound}:
\begin{align}
    \frac{1}{n} \sum_{i=1}^n \ex{\poploss{\sgldWeight^i_T} - \sampleLoss(\sgldWeight^i_T, \sample_i)}
    =       \frac{1}{n} \sum_{i=1}^n \gen[\mu^n][\prob{\sgldWeight^i_T \mid \sample_i}][, \ell]
    \le&     \frac{1}{n} \sum_{i=1}^n \sqrt{\frac{2 R^2}{1} I(\sgldWeight^i_T; \sample_i)}\\
    \le&    \frac{R}{n} \sum_{i=1}^n \sqrt{2 I(\sgldWeight^i_T; \sample_i \mid \sample_{-i})}.
\end{align}
We proceed by a similar arguments as in \cref{theorem:omniscient_trajectory} to bound the mutual information: for any $i \in [n]$, we have
\begin{align}
    I(\sgldWeight; Z_i \mid Z_{-i})
    \le&    I(\sgldWeight^i_0; Z_i \mid Z_{-i}) + \sum_{t=1}^T I(\sgldWeight^i_t; Z_i \mid \sgldWeight^i_{0:t-1}, Z_{-i})\\
    =&  \sum_{t=1}^T I(\sgldWeight^i_{t-1} - (g_t - \Delta g^i_t) + N^i_t; Z_i \mid \sgldWeight^i_{0:t-1}, Z_{-i}).
\end{align}
By instantiating $X^i = \sample_i, Y^i = \sgldWeight^i_{0:t-1}, \Delta^i=(\weight_{0:T}, V, \sample_{-i}), O^i = \sample_{-i}$ and 
\begin{multline}
    f^i(X^i, Y^i, \Delta^i) = \sgldWeight^{i}_{t-1} - \Bigl(g_t(\weight_{t-1}, (\sample_1, \dots, \sample_{i-1}, \sample_i, \sample_{i+1}, \dots, \sample_n), V, \weight_{0:t-2}) \\ - \Delta g_t^i(S, V, g_{1:T}, \weight_{0:T}, Z_{-i})\Bigr)
\end{multline}
and applying \cref{eq:key_with_other} in \cref{lemma:key}, for any deterministic function $\Omega^i$ of solely $(\sgldWeight^i_{0:t-1}, \sample_{-i})$, respectively, we have
\begin{align}
    I(\sgldWeight^i_{t-1} - (g_t - \Delta g^i_t) + N^i_t; Z_i \mid \sgldWeight^i_{0:t-1}, Z_{-i})
    \le \frac{1}{2 \sigma_t^2} \ex{\norm{\sgldWeight^i_{t-1} - (g_t - \Delta g_t^i) - \Omega^i(\sgldWeight^i_{0:t-1}, \sample_{-i})}^2}.
\end{align}
Setting $\Omega^i(\instanceSgldWeight^i_{0:t-1}, \instanceSample_{-i}) = \instanceSgldWeight^i_{t-1} - \ex{g_t \mid \sample_{-i} = \instanceSample_{-i}}$ leads to
\begin{align}
    I(\sgldWeight^i_{t-1} - (g_t - \Delta g^i_t) + N^i_t; Z_i \mid \sgldWeight^i_{0:t-1}, Z_{-i})
    \le&    \frac{1}{2 \sigma_t^2} \ex{\norm{g_t - \ex{g_t \mid \sample_{-i}} - \Delta g^i_t}^2}.
\end{align}
After putting everything together, the corollary is proved.
\end{proof}

\subsection{Combination with the CMI Framework}

Conditional MI (CMI) framework \citep{steinke_CMI} is developed to fundamentally solve the potential unboundedness of MI on some algorithms. CMI framework considers the following random process: Firstly, a supersample $\superSample = \tilde{\sample}_{1:n, 0:1} \in \samples^{n \times 2}$ of $2n$ samples are sampled as an effective discrete ``sample space''. Then,  $n$ independent indices $\indices_{1:n} \in \set{0, 1}^n$ are sampled as the discrete ``sample''. The training set is constructed by $\trainingSet \defeq (\tilde{\sample}_{i, \indices_i})_{i=1}^n$ using the supersample and indices. CMI then bounds the generalization by how much the output reveals the indices given the supersample. We can combine the omniscient trajectory with CMI framework and give the following result:

\begin{corollary}\label{corollary:CMI}
    \assumesubgaussianloss.
    Then for any omniscient trajectory $\Delta g_{1:T}$ that additionally depends on $(\superSample, \indices)$, and any $\sigma_{1:T} \in \left(\reals^{>0}\right)^{T}$, we have
    \begin{align}
        \gen
        \le& 2 \sqrt{\frac{2 R^2}{n} I(\sgldWeight_T; \indices \mid \superSample)} + \ex{\Delta^{\sum_t \sigma_t^2}_{\Gamma_T}(\weight_T, \sample_i) - \Delta^{\sum_t \sigma_t^2}_{\Gamma_T}(\weight_T, \trainingSet')}\\
        \le& 2 \sqrt{
            \frac{R^2}{n} \sum_{t=1}^T \frac{1}{\sigma_t^2} \ex{\norm{g_t - \ex{g_t \mid \superSample} - \Delta g_t}^2}
        }  \\& \qquad\qquad\qquad\qquad\quad + \ex{\Delta^{\sum_t \sigma_t^2}_{\Gamma_T}(\weight_T, \sample_i) - \Delta^{\sum_t \sigma_t^2}_{\Gamma_T}(\weight_T, \trainingSet')},\label{eq:CMI_bound}
    \end{align}
    where $\Delta g_t$ is a deterministic function of $(\trainingSet, V, g_{1:T}, \weight_{0:T}, \superSample, \indices)$.
\end{corollary}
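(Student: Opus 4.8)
The plan is to run the proof of \cref{theorem:omniscient_trajectory} almost verbatim, with two substitutions: (i) realize $(\trainingSet,\weight)$ through the CMI generative process, so that $\trainingSet=(\tilde{\sample}_{i,\indices_i})_{i=1}^n$ is a deterministic function of $(\superSample,\indices)$ and the omniscient perturbation $\Delta g_t$ of \cref{def:omniscient_trajectory} is a deterministic function of $(\trainingSet,V,g_{1:T},\weight_{0:T},\superSample,\indices)$; and (ii) replace the plain mutual-information bound \cref{lemma:MI_bound} by its conditional-MI counterpart of \citet{steinke_CMI}, carrying the extra conditioning on $\superSample$ through the chain-rule argument. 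Setting $\Delta g_t\equiv 0$ recovers the vanilla CMI bound for SGD.

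First I would perform the two changes of trajectory exactly as in \cref{theorem:omniscient_trajectory}. Starting from $\gen=\ex{\poploss{\weight_T}-\emploss{\weight_T}}$, insert the omniscient terminal $\retroWeight_T=\weight_T+\Gamma_T$ and then its SGLD-like version $\sgldWeight_T$ built by \cref{def:sgld_trajectory}; telescoping and the identities $\ex{\emploss[\trainingSet']{w}}=\poploss{w}$ (for $\trainingSet'$ an independent copy of $\trainingSet$) yield
\begin{align}
    \gen=\ex{\poploss{\sgldWeight_T}-\emploss{\sgldWeight_T}}+\ex{\Delta^{\sum_t\sigma_t^2}_{\Gamma_T}(\weight_T,\trainingSet)-\Delta^{\sum_t\sigma_t^2}_{\Gamma_T}(\weight_T,\trainingSet')},
\end{align}
where the single combined penalty arises from $\Delta_\gamma(w,s)+\Delta^\Sigma(w+\gamma,s)=\Delta^\Sigma_\gamma(w,s)$ applied to the omniscient penalty and the flatness penalty. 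It therefore remains to bound $\ex{\poploss{\sgldWeight_T}-\emploss{\sgldWeight_T}}$.

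Since, conditioned on $\superSample$, $\sgldWeight_T$ is a randomized function of $\indices$ (the remaining randomness $V$, the Gaussian noises $N_{1:T}$, and the entries of $\superSample$ are conditionally independent of $\indices$ given $\superSample$), the $R$-sub-Gaussian CMI bound of \citet{steinke_CMI} applies and gives $\ex{\poploss{\sgldWeight_T}-\emploss{\sgldWeight_T}}\le 2\sqrt{\tfrac{2R^2}{n}I(\sgldWeight_T;\indices\mid\superSample)}$. I would then bound the CMI by data-processing, $I(\sgldWeight_T;\indices\mid\superSample)\le I(\sgldWeight_{0:T};\indices\mid\superSample)$, and the chain rule, $I(\sgldWeight_{0:T};\indices\mid\superSample)=\sum_{t=1}^T I(\sgldWeight_t;\indices\mid\sgldWeight_{0:t-1},\superSample)$, where the $t=0$ term vanishes because $\sgldWeight_0=\weight_0$ is independent of $(\indices,\superSample)$. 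For each step, $\sgldWeight_t=\sgldWeight_{t-1}-(g_t-\Delta g_t)+N_t$ with $N_t\sim\gaussian[0][\sigma_t^2 I]$ independent of the whole process, so I apply the extended \cref{lemma:key} in the form \cref{eq:key_with_other} with $X=\indices$, $Y=\sgldWeight_{0:t-1}$, $O=\superSample$, $\Delta=(\weight_{0:T},V,\superSample)$, and $f(X,Y,\Delta)=\sgldWeight_{t-1}-(g_t-\Delta g_t)$ --- legitimate because $g_t-\Delta g_t$ is a deterministic function of $(\indices,\superSample,V,g_{1:T},\weight_{0:T})$ and hence of $(X,Y,\Delta)$. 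Choosing $\Omega(\instanceSgldWeight_{0:t-1},\tilde s)=\instanceSgldWeight_{t-1}-\ex{g_t\mid\superSample=\tilde s}$ to cancel $\sgldWeight_{t-1}$ and center $g_t$ conditionally on $\superSample$ yields $I(\sgldWeight_t;\indices\mid\sgldWeight_{0:t-1},\superSample)\le\tfrac{1}{2\sigma_t^2}\ex{\norm{g_t-\ex{g_t\mid\superSample}-\Delta g_t}^2}$; summing over $t$ and plugging back gives both lines of \cref{eq:CMI_bound}.

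I expect the only delicate points to be bookkeeping rather than mathematical depth: first, verifying that the Steinke--Zakynthinou bound indeed applies to $\sgldWeight_T$ even though it depends on $V$, on $N_{1:T}$, and on $\superSample$ beyond the bare indices --- i.e. checking that $\sgldWeight_T$ is, conditionally on $\superSample$, a Markov kernel driven by $\indices$ together with conditionally independent extra noise; and second, placing $\superSample$ simultaneously in the ``$\Delta$'' slot of \cref{lemma:key} (so that $g_t-\Delta g_t$ is a deterministic function of $(X,Y,\Delta)$) and in the ``$O$'' slot (so that the centering function $\Omega$ may still depend on $\superSample$). Everything else --- the telescoping, the combination of the two penalties, the data-processing and chain-rule steps, and the choice of $\Omega$ --- is identical to the proofs of \cref{theorem:omniscient_trajectory} and \cref{corollary:individual}.
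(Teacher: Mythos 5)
Your proposal matches the paper's proof in all essential respects: the two changes of trajectory collapsed into a single $\Delta_{\Gamma_T}^{\sum_t\sigma_t^2}$ penalty, the data-processing and chain-rule decomposition of $I(\sgldWeight_T;\indices\mid\superSample)$, and the application of the extended \cref{lemma:key} with $X=\indices$, $Y=\sgldWeight_{0:t-1}$, $O=\superSample$, and $\Omega(\instanceSgldWeight_{0:t-1},\tilde s)=\instanceSgldWeight_{t-1}-\ex{g_t\mid\superSample=\tilde s}$. The one place you diverge is the CMI step: you cite the Steinke--Zakynthinou theorem off the shelf, whereas the paper deliberately re-derives it. You correctly flag applicability as the delicate point, but the justification you offer --- that $\sgldWeight_T$ is, conditionally on $\superSample$, a randomized function of $\indices$ --- is not by itself enough to invoke the standard theorem, whose hypotheses require the output to depend on $(\superSample,\indices)$ only through $\trainingSet$ (this is what guarantees independence from the ghost samples and makes the usual population-loss identity exact). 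Here $\Delta g_t$ is allowed to read all of $\superSample$ and $\indices$, so $\sgldWeight_T$ may depend on the ghost samples and the black-box theorem does not apply verbatim. The paper's re-derivation is precisely the ``bookkeeping'' you are deferring: pass to the supersample average via $\ex{\poploss{\sgldWeight_T}-\emploss{\sgldWeight_T}}=2\ex{\emploss[\superSample]{\sgldWeight_T}-\emploss{\sgldWeight_T}}$, condition on $\superSample$, view $\indices$ as the ``training data'' and $\prob{\sgldWeight_T\mid\indices,\superSample}$ as the learner, apply \cref{lemma:MI_bound} conditionally, and pull the square root out of the $\superSample$-expectation by Jensen. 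Once you supply those few lines, the remainder of your argument coincides with the paper's.
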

\begin{remark}
    The omniscient perturbation additionally depends on $(\superSample, \indices)$, which are the featuring random variables of CMI bounds.
\end{remark}

\begin{proof}
After paying the penalties for changes of trajectories, we can focus on bounding the generalization error on the SGLD-like trajectory. 
With the omniscient trajectory depending on $(\superSample, \indices)$, the distribution of the SGLD-like trajectory is determined by $\prob{\sgldWeight_{0:T} \mid \superSample, \indices}$.
We will apply the following re-proved CMI bound that handles the extra dependence on $(\superSample, \indices)$ of $\sgldWeight_T$:
\begin{align}
    & \ex{\poploss{\sgldWeight_T} - \emploss{\sgldWeight_T}} 
    =   2 \ex{\emploss[\superSample]{\sgldWeight_T} - \emploss{\sgldWeight_T}}\\
    =&  2 \ex[\superSample]{\ex{\emploss[\superSample]{\sgldWeight_T} - \emploss{\sgldWeight_T} \mid \superSample}} 
    = 2 \ex[\superSample]{\gen[P_{\indices}][\prob{\sgldWeight_T \mid \indices, \superSample}] \mid \superSample} \\
    \le&    2 \ex[\superSample]{\sqrt{\frac{2 R^2}{n} I(\sgldWeight_T; \indices \mid \superSample = \superSample)} \mid \superSample} & (\text{\cref{lemma:MI_bound}})\\
    \le&    2 \sqrt{\frac{2 R^2}{n} \ex[\superSample]{I(\sgldWeight_T; \indices \mid \superSample = \superSample)}} = 2 \sqrt{\frac{2 R^2}{n} I(\sgldWeight_T; \indices \mid \superSample)}, 
\end{align}
where the first step is because
\begin{align}
    & \ex{\emploss[\superSample]{\sgldWeight_T} - \emploss{\sgldWeight_T}}\\
    =&  \ex{\frac{1}{n} \sum_{i=1}^n \frac{1}{2} \left(\emploss[\sample_{2(i-1)+\indices_i}]{\sgldWeight_T} + \emploss[\sample_{2(i-1)+1-\indices_i}]{\sgldWeight_T}\right) - \frac{1}{n} \sum_{i=1}^n \emploss[\sample_{2(i-1) + \indices_i}]{\sgldWeight_T}}\\
    =&  \frac{1}{2} \ex{\frac{1}{n} \sum_{i=1}^n \emploss[\sample_{2(i-1) + 1 - \indices_i}]{\sgldWeight_T} - \emploss{\sgldWeight_T}}
    =  \frac{1}{2} \ex{\poploss{\sgldWeight_T} - \emploss{\sgldWeight_T}}.
\end{align}
The following steps are similar to the proof of \cref{theorem:omniscient_trajectory} and \cref{corollary:individual}, including an application of the chain rule to the conditional MI and an application of \cref{lemma:key}, where $X = \trainingSet, Y = \sgldWeight_{0:t-1}$ $O = (\weight_{0:T}, V, \superSample, U)$ and $\Omega = \sgldWeight_t - \ex{g_t \mid \superSample}$.
\end{proof}

\subsection{Combination with Negrea et al.(2019)'s Data-Dependent Prior}

\NewDocumentCommand{\ddp}{O{\sgldWeight_{0:T}'} O{\instanceIndices, \instanceSubTestingSet, v}}{Q_{#1}(#2)}
\NewDocumentCommand{\condiddp}{O{t} O{\instanceIndices,  \instanceSubTestingSet, v}}{Q_{\sgldWeight_{#1}' \mid \sgldWeight_{0:#1-1}}(#2)}

To state \cref{corollary:dd_prior}, we need extra notations. For an array $a_{1:l}$ of length $l$ and a list $\indices \in [l]^k$ of indices, define $a_\indices \defeq (a_{\indices_i})_{i=1}^k$. For two lists of non-repeated indices $\indices^1 \in \nats^{k_1}$ and $\indices^2 \in \nats^{k_2}$, let $\indices^1 \setminus \indices^2$ be the list of indices that can be found in $\indices^1$ but not in $\indices^2$, ordered as in $\indices^1$. Let $\indices^1 \cap \indices^2$ be their intersection ordered as in $\indices^1$.

\begin{corollary}\label{corollary:dd_prior}
    \newcommand{\clipnabla}{\bar{\nabla}}
    \assumesubgaussianloss.
    Assume the following specific form for the algorithm: 
    \begin{itemize}
        \item The randomness $V = B_{1:T}$, where $B_t \in [n]^b$ specifies the samples in the $b$-sized batch at step $t$ with \emph{no repeated} entries;
        \item The update function can be written as
            \begin{align}
                g_t(\weight_{t-1}, \trainingSet, V, \weight_{0:t-2}) = \clipnabla \emploss[\trainingSet_{B_t}]{\weight_{t-1}},
            \end{align}
            where $\clipnabla$ stands for \emph{samplewisely} clipped gradient operator (\ie{} compute gradients for each sample, clip and then average).
    \end{itemize} 

    Let $\indices \in [n]^m$ be the result of $m$ uniform (over the remaining indices) sample indices \emph{without replacement} in $[n]$, independent of previous random variables.
    Define 
    \begin{align}
        \xi_t \defeq \frac{b -\size{\indices \cap B_t}}{b}\left(\clipnabla \emploss[\trainingSet_{B_t \setminus \indices}]{\weight_{t-1}} - \clipnabla \emploss[\trainingSet_\indices]{\weight_{t-1}}\right)
    \end{align}
    to be the gradient \emph{incoherence} (see it as the difference between the true update at step $t$ and a prediction of the update using only samples in $U$). 
    For omniscient trajectories defined by \emph{bounded} $\Delta g_{1:T}$ that additionally \emph{depends on $\indices$}, then we have
    \begin{align}
        \gen\le
        \sqrt{\frac{R^2}{n - m} \sum_{t=1}^T \frac{1}{\sigma_t^2} \ex{\norm{\xi_t - \Delta g_t}^2}} + \ex{\Delta_{\Gamma_T}^{\sum_t \sigma_t^2}(\weight_T, \subTrainingSet) - \Delta_{\Gamma_T}^{\sum_t \sigma_t^2}(\weight_T, \trainingSet')}. 
    \end{align}
\end{corollary}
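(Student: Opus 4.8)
The plan is to run the proof of \cref{theorem:omniscient_trajectory} \emph{inside} the data-dependent prior argument of \citet{negrea_data-dependent_prior}: the omniscient and SGLD-like changes of trajectory are carried out exactly as before, but the mutual information of the SGLD-like output is now controlled by a ``ghost'' trajectory that only sees the subsampled data $\trainingSet_\indices$ (and the batch indices), which is what turns the variance of the gradients into the variance of the gradient \emph{incoherence} $\xi_t$ and replaces $n$ by $n-m$. As in the proofs of \cref{corollary:individual,corollary:CMI}, the base lemmas must be re-derived in a conditional form so that they tolerate the auxiliary trajectories' extra dependence on $\indices$, $V$, and the original iterates.

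First I would reduce to an estimate on a noised, translated trajectory. Because $\indices$ is uniform and independent of everything else, each index lies in $\indices^c$ with probability $(n-m)/n$, so $\ex[\indices]{\emploss[\subTrainingSet]{w}}=\emploss{w}$ for every fixed $(w,\trainingSet)$, and hence $\gen=\ex{\poploss{\weight_T}-\emploss[\subTrainingSet]{\weight_T}}$. Then, exactly as in the proof of \cref{theorem:omniscient_trajectory}, I would change to the omniscient trajectory (terminal $\weight_T+\Gamma_T$) and then to the SGLD-like trajectory $\sgldWeight_{0:T}$ (adding the independent noises $N_t\sim\gaussian[0][\sigma_t^2 I]$), each at the price of a $\Delta$-penalty taken against $\subTrainingSet$ on the empirical side and an I.I.D. copy $\trainingSet'$ on the population side. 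Using the identity $\Delta^{\Sigma}_{\Gamma_T}(w,s)=\Delta^{\Sigma}(w+\Gamma_T,s)+\Delta_{\Gamma_T}(w,s)$, the two penalties merge into $\ex{\Delta^{\sum_t\sigma_t^2}_{\Gamma_T}(\weight_T,\subTrainingSet)-\Delta^{\sum_t\sigma_t^2}_{\Gamma_T}(\weight_T,\trainingSet')}$, so what remains is to bound $\ex{\poploss{\sgldWeight_T}-\emploss[\subTrainingSet]{\sgldWeight_T}}$.

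For this I would condition on the bundle $(\indices,\trainingSet_\indices,V)$ --- which is independent of $\subTrainingSet$, so $\subTrainingSet\sim\mu^{n-m}$ given it --- and apply the conditional form of \cref{lemma:MI_bound} to get $\ex{\poploss{\sgldWeight_T}-\emploss[\subTrainingSet]{\sgldWeight_T}}\le\sqrt{\frac{2R^2}{n-m}\,I(\sgldWeight_T;\subTrainingSet\mid\indices,\trainingSet_\indices,V)}$. The information term is then bounded by comparing the true SGLD-like trajectory with the ghost trajectory whose step-$t$ drift replaces the minibatch gradient $g_t$ with the $(\indices,\trainingSet_\indices,V)$-measurable prediction $\hat g_t=\frac{\size{\indices\cap B_t}}{b}\bar\nabla\emploss[\trainingSet_{B_t\cap\indices}]{\cdot}+\frac{b-\size{\indices\cap B_t}}{b}\bar\nabla\emploss[\trainingSet_\indices]{\cdot}$, so that $g_t-\hat g_t=\xi_t$, while keeping the same batches, the same omniscient correction $\Delta g_t$, and the same noise scales $\sigma_t$. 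Data processing, the chain rule of conditional mutual information along the trajectory, and a step-wise application of \cref{lemma:key} (with $X=\subTrainingSet$, $Y=\sgldWeight_{0:t-1}$, the remaining variables routed into the $\Delta$/$O$ slots, and $\Omega$ chosen so that $Y-\Omega$ realises the ghost drift) then contribute $\frac{1}{2\sigma_t^2}\ex{\norm{\xi_t-\Delta g_t}^2}$ at step $t$; summing over $t$ and inserting into the square root gives the stated bound, while $\Delta g_t\equiv 0$ recovers the bound of \citet{negrea_data-dependent_prior}, which is the consistency check I would keep in mind throughout.

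The hard part is exactly this mutual-information step. Unlike in plain SGLD, the step-$t$ transition of $\sgldWeight_{0:T}$ is driven by the \emph{original} iterate $\weight_{t-1}$ (and by $\Delta g_t$, $B_t$), which is not measurable with respect to $(\sgldWeight_{0:t-1},\indices,\trainingSet_\indices,V)$, so neither the ghost trajectory nor the function $\Omega$ in \cref{lemma:key} can literally reference $\weight_{t-1}$. Making the incoherence form $\norm{\xi_t-\Delta g_t}^2$ come out exactly therefore requires routing the original path and gradients carefully through the conditioning --- as is already done when \cref{corollary:individual,corollary:CMI} re-derive the base lemmas to carry $\sample_{-i}$ or $(\superSample,\indices)$ --- and checking both that the information quantity actually bounded still dominates $I(\sgldWeight_T;\subTrainingSet\mid\indices,\trainingSet_\indices,V)$ and that the ghost's transition kernels remain $(\indices,\trainingSet_\indices,V)$-measurable, so that they constitute a legitimate prior. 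The rest (the two changes of trajectory, the subsampling identity, the Gaussian--Gaussian KL computation) is routine bookkeeping parallel to \cref{theorem:omniscient_trajectory} and \citet{negrea_data-dependent_prior}.
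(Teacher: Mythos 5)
You correctly identify the high-level skeleton (reduce $\gen$ to the error against $\subTrainingSet$, change to omniscient and SGLD-like trajectories, condition on $(\indices,\trainingSet_\indices,V)$, apply an MI bound, decompose by the chain rule), and you correctly flag the step that is genuinely hard: the drift of $\sgldWeight_t$ is $g_t(\weight_{t-1},\ldots)$, evaluated at the \emph{original} iterate $\weight_{t-1}$, which is not a function of $(\sgldWeight_{0:t-1},\indices,\trainingSet_\indices,V)$. But having flagged it, you then propose to discharge it by ``a step-wise application of \cref{lemma:key} with $\Omega$ chosen so that $Y-\Omega$ realises the ghost drift,'' and that is precisely where the plan breaks. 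In \cref{lemma:key} the prior function $\Omega$ is constrained to depend \emph{solely} on $(Y,O)=(\sgldWeight_{0:t-1},\indices,\trainingSet_\indices,V)$; you cannot route $\weight_{0:T}$ into $O$, because after the chain rule the conditional MI to be bounded at step $t$ is $I(\sgldWeight_t;\subTrainingSet\mid\sgldWeight_{0:t-1},\indices,\trainingSet_\indices,V)$ — adding $\weight_{0:T}$ to the conditioning would give a different quantity that no longer telescopes to $I(\sgldWeight_T;\subTrainingSet\mid\indices,\trainingSet_\indices,V)$. Since the ghost drift $\hat g_t$ must be evaluated at $\weight_{t-1}$ to produce the incoherence $\xi_t=g_t-\hat g_t$, and $\weight_{t-1}$ is not determined by $(Y,O)$, no admissible $\Omega$ produces $\xi_t$. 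The analogy to \cref{corollary:individual,corollary:CMI} does not save you: in those proofs $\Omega$ is merely a conditional mean $\ex{g_t\mid\cdot}$, never a prediction that must be evaluated at a latent weight, so that obstruction never arises there.

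The paper takes a different route through exactly this bottleneck. Instead of \cref{lemma:key}, it uses the ``golden formula'' $I(X;Y)\le\ex[Y]{\kl{P_{X\mid Y}}{Q_X}}$ to trade the MI for a KL divergence against an \emph{arbitrary} data-dependent prior process, decomposes the KL step-wise via Proposition~2.6 of \citet{negrea_data-dependent_prior}, and then bounds each step-wise KL using the coupling lemma (\cref{lemma:coupling}, Lemma~4 of \citet{neu_information-theoretic_2021}). The crucial gain of \cref{lemma:coupling} over \cref{lemma:key} is that it permits \emph{any} coupling between the two Gaussian centers with the correct marginals: the paper draws a common latent $M_{0:T}\sim P_{\weight_{0:T}\mid\cond}$, sets the true drift and the ghost drift both as functions of $M_{t-1}$, and takes expectation over the joint. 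That is the ``tracing back to $\weight_{t-1}$ and averaging'' the paper describes, and it is what actually makes $\norm{\xi_t-\Delta g_t}^2$ appear. One still has to verify that the marginal of the ghost center is $(\indices,\trainingSet_\indices,V)$-measurable, hence a legitimate prior, which the paper does by checking that all samples feeding $\hat g_t$ lie in $\subTestingSet$. Note also that the ``bounded $\Delta g_{1:T}$'' hypothesis in the corollary exists precisely to satisfy the bounded-second-moment assumption of \cref{lemma:coupling}; it plays no role if one uses \cref{lemma:key}. So: your setup, the merging of penalties, and the subsampling identity are fine, but the MI step needs the golden-formula plus coupling-lemma argument, not \cref{lemma:key}.
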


\cref{corollary:dd_prior} is an application of the following more general result:

\begin{corollary}\label{corollary:dd_prior_full}
    \assumesubgaussianloss. 
    Assume samples have no interactions in updates in the following sense:
    \begin{itemize}
        \item The randomness $V$ in the update is $V = B_{1:T}$, where $B_t \in [n]^b$ specifies the samples in the $b$-sized batch at step $t$ with \emph{no repeated} entries;
        \item The update function can be written as
            \begin{align}
                g_t(\weight_{t-1}, \trainingSet, V, \weight_{0:t-2}) = p_t(\weight_{t-1}, \trainingSet_{B_t}, \weight_{0:t-2}), 
            \end{align}
            where $p_t: \weights \times \bigcup_{i=0}^\infty \samples^i  \times \weights^{t-1} \to \weights$ is defined by
            \begin{align}
                p_t(\instanceWeight_{t-1}, \instanceTrainingSet, \instanceWeight_{0:t-2}) \defeq \begin{cases}
                    0   &   \size{\instanceTrainingSet} = 0,\\
                    \frac{1}{\size{\instanceTrainingSet}} \sum_{i=1}^{\size{\instanceTrainingSet}} f_{t}(\instanceWeight_{t-1}, s_i, \instanceWeight_{0: t-2})  &   \size{\instanceTrainingSet} > 0,
                \end{cases}
            \end{align}
            and $f_t: \weights \times \samples \times \weights^{t-1} \to \weights$ is a deterministic function.
    \end{itemize} 

    Also, let $\indices \in [n]^m$ be the result of $m$ uniform (over the remaining indices) samples \emph{without replacement} in $[n]$, independent of previous random variables.
    Lastly, assume that given $(\indices, \subTestingSet, \subTrainingSet, V, \sgldWeight_{0:t-1})$, then $p_t(\weight_{t-1}, \trainingSet_{B_t}, \weight_{0:t-2}), p_t(\weight_{t-1}, \trainingSet_{\indices}, \weight_{0:t-2})$ and $p_t(\weight_{t-1}, \trainingSet_{B_t \cap \indices}, \weight_{0:t-2})$ all have bounded second moments (\eg{} by samplewise gradient clipping).

    Define 
    \begin{align}
        \xi_t \defeq \frac{b-\size{\indices \cap B_t}}{b}\left(p_t(\weight_{t-1}, \trainingSet_{B_t \setminus \indices}, \weight_{0:t-2}) - p_t(\weight_{t-1}, \trainingSet_{\indices}, \weight_{0:t-2})\right)
    \end{align}
    to be the gradient \emph{incoherence}. 
    For omniscient trajectories defined by $\Delta g_{1:T}$ that additionally depend on $\indices$, if $\Delta g_t$ also satisfies the above conditional bounded second moments (\eg{} bounded in value), then we have
    \begin{align}
        \gen\le
        \sqrt{\frac{R^2}{n - m} \sum_{t=1}^T \frac{1}{\sigma_t^2} \ex{\norm{\xi_t - \Delta g_t}^2}} + \ex{\Delta_{\Gamma_T}^{\sum_t \sigma_t^2}(\weight_T, \subTrainingSet) - \Delta_{\Gamma_T}^{\sum_t \sigma_t^2}(\weight_T, \trainingSet')}. 
    \end{align}
\end{corollary}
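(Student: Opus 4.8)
The plan is to mirror \citet{negrea_data-dependent_prior}'s analysis of SGLD with data-dependent priors, but with the SGLD-like auxiliary process replaced by the SGLD-like trajectory of the \emph{omniscient} trajectory of \cref{theorem:omniscient_trajectory}, so that the omniscient perturbation $\Delta g_{1:T}$ appears in the trajectory term. First I would peel off the $m$ prior samples: since $\indices$ is a uniform size-$m$ subset of $[n]$ drawn independently of $(\trainingSet, W_{0:T})$, the identity $\ex[\indices]{\emploss[\subTrainingSet]{w}} = \emploss{w}$ (each index survives in $\subTrainingSet$ with probability $\tfrac{n-m}{n}$) gives $\operatorname{gen} = \ex[\indices]{\ex{\poploss{W_T} - \emploss[\subTrainingSet]{W_T} \mid \indices}}$, so it suffices to control the generalization gap of $W_T$ evaluated only against the $n-m$ samples in $\subTrainingSet$. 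Next I would run the change-of-trajectory argument from the proof of \cref{theorem:omniscient_trajectory} twice — original $\to$ omniscient $\to$ SGLD-like — which replaces $W_T$ by $\sgldWeight_T$ at the cost of exactly the penalty $\ex{\Delta_{\Gamma_T}^{\sum_t \sigma_t^2}(W_T, \subTrainingSet) - \Delta_{\Gamma_T}^{\sum_t \sigma_t^2}(W_T, \trainingSet')}$ (the translation by $\Gamma_T$ and the Gaussian of covariance $\sum_t\sigma_t^2 I$ combine into a single $\Delta$-term, just as in that proof), leaving only the $\subTrainingSet$-generalization gap of $\sgldWeight_T$ to bound.

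For that residual gap I would invoke a data-dependent-prior generalization lemma in the style of \citet{negrea_data-dependent_prior}: for any ``prior'' that is blind to $\subTrainingSet$, the $\subTrainingSet$-gap of $\sgldWeight_T$ is at most $\sqrt{\tfrac{2R^2}{n-m}\,\ex{\kl{P_{\sgldWeight_{0:T}\mid\trainingSet}}{Q_{\sgldWeight_{0:T}}}}}$, the effective sample size being $n-m$ because the prior is correlated with the $m$ prior samples but not with the $n-m$ evaluation samples. As the prior I would take the ghost trajectory that runs the same recursion as $\sgldWeight_{0:T}$ but, at step $t$, uses the $\subTestingSet$-only surrogate $-(g_t-\xi_t)$ in place of the true increment $-(g_t-\Delta g_t)$: it predicts the $B_t\setminus\indices$ part of the batch update by the average update over the $\indices$-samples, so predicting $g_t$ incurs error exactly $\xi_t$, while the omniscient correction $\Delta g_t$ — a function of the whole training set — is discarded, making the discrepancy between the posterior and prior increments equal to $\xi_t-\Delta g_t$. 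Applying the chain rule for KL along the trajectory and the Gaussian formula $\kl{\gaussian[a][\sigma_t^2 I]}{\gaussian[b][\sigma_t^2 I]} = \tfrac{1}{2\sigma_t^2}\norm{a-b}^2$ at each step yields $\ex{\kl{\cdot}{\cdot}} = \sum_t \tfrac{1}{2\sigma_t^2}\ex{\norm{\xi_t-\Delta g_t}^2}$, which, after the factors of $\tfrac12$ cancel, is exactly the first term of the claimed bound. Setting $\Delta g_{1:T}\equiv 0$ recovers \citet{negrea_data-dependent_prior}, and specializing $f_t$ to the clipped per-sample gradient recovers \cref{corollary:dd_prior} (samplewise clipping supplies the bounded-second-moment hypotheses, hence finiteness of every per-step KL).

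The main obstacle I anticipate is the bookkeeping of exactly what the prior is allowed to depend on. The surrogate increment $g_t-\xi_t$ is a function of the true iterates $W_{t-1},W_{0:t-2}$, which themselves depend on $\subTrainingSet$, so the prior cannot literally be a kernel of $\subTestingSet$ alone; as in \citet{negrea_data-dependent_prior}, this is reconciled by letting the prior's $t$-th transition condition on the realized trajectory prefix inside the chain rule for KL — which is precisely where the conditional bounded-second-moment assumptions are consumed to keep each per-step KL finite and the chain rule valid. This is further complicated here by the non-causality of the omniscient trajectory, since $\Delta g_t$ depends on the full sequences $g_{1:T},W_{0:T}$; exactly as in the proof of \cref{theorem:omniscient_trajectory} I would treat $(W_{0:T},V,W_0,\indices)$ as fixed side information when identifying the posterior transition $P_{\sgldWeight_t\mid\sgldWeight_{0:t-1},\cdot}$ as the Gaussian with mean $\sgldWeight_{t-1}-g_t+\Delta g_t$ and covariance $\sigma_t^2 I$, and carry $\indices$ through all conditionings because $\Delta g_t$ — and hence the ghost trajectory — now additionally depends on it. The individual-sample machinery of \cref{corollary:individual} is not needed; the relevant structure is the subset-based argument, and zeroing $\Delta g_{1:T}$ throughout returns \citet{negrea_data-dependent_prior}'s bound verbatim.
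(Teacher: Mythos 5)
The global architecture you propose — peel off the $m$ prior samples to reduce $n$ to $n-m$, change trajectory twice (original $\to$ omniscient $\to$ SGLD-like) to isolate the penalty term, then control the residual gap via the golden formula with a $\subTestingSet$-only data-dependent prior, chained over time steps — matches the paper's proof exactly up to the final step. The gap is in that final step, and it is precisely the issue the paper identifies as the technical core of the argument.

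Your proposal identifies the posterior transition $\prob{\sgldWeight_t \mid \sgldWeight_{0:t-1}, \cdot}$ as the Gaussian $\gaussian[\sgldWeight_{t-1} - g_t + \Delta g_t][\sigma_t^2 I]$ by ``treating $(\weight_{0:T}, V, \weight_0, \indices)$ as fixed side information,'' and then applies the closed-form Gaussian KL. This is not a legitimate move: the chain rule of KL that Proposition 2.6 of \citet{negrea_data-dependent_prior} provides decomposes the trajectory-level KL into stepwise KLs conditioned on $(\sgldWeight_{0:t-1}, \instanceIndices, \instanceSubTestingSet, \instanceSubTrainingSet, v)$ — and under \emph{that} conditioning, $\weight_{t-1}$ (and hence $g_t$ and $\Delta g_t$) is \emph{not} determined, because the SGLD-like prefix $\sgldWeight_{0:t-1}$ was obtained by injecting independent Gaussian noise and corresponds to a nontrivial posterior over $\weight_{0:T}$. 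So both the posterior and prior transitions are Gaussian \emph{mixtures}, not Gaussians, and the closed-form Gaussian KL does not apply. If you instead enlarge the conditioning to include $\weight_{0:T}$ so that the transitions really are Gaussian, you have changed which KL you are decomposing: you would then need to control how much $\weight_{0:T}$ itself leaks about $\subTrainingSet$, which is exactly the unsmoothed (possibly infinite-MI) quantity the SGLD-like trajectory was introduced to avoid. Your appeal to ``exactly as in the proof of \cref{theorem:omniscient_trajectory}'' is also off target: that proof never takes a Gaussian-vs.-Gaussian KL; it uses \cref{lemma:key}, whose statement bounds the conditional MI directly by an expectation over the side information $\Delta$ — a different mechanism that is not in play once you have committed to the KL chain rule of \citet{negrea_data-dependent_prior}.

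The paper resolves this by invoking \cref{lemma:coupling} (Lemma 4 of \citet{neu_information-theoretic_2021}): for $X, Y$ with bounded second moments and an independent $\epsilon \sim \gaussian[0][\sigma^2 I]$, $\kl{\prob{X+\epsilon}}{\prob{Y+\epsilon}} \le \tfrac{1}{2\sigma^2}\,\ex{\norm{X-Y}^2}$ under \emph{any} coupling of $X$ and $Y$. The paper then constructs the coupling explicitly: it samples a ghost original trajectory $M_{0:T}$ from the true posterior $\prob{\weight_{0:T} \mid \cond}$, forms the true increment $\retroM_t$ and the $\subTestingSet$-predicted increment $\retroM'_t$ on the \emph{same} $M_{t-1}$, then adds a shared Gaussian. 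The marginal of $\sgldM_t$ matches the posterior transition, the marginal of $\sgldM'_t$ is $\subTrainingSet$-free (hence a valid prior), and the coupled difference $\retroM_t - \retroM'_t$ is exactly $\xi_t - \Delta g_t$ evaluated on the coupled trajectory, whose conditional distribution equals the true posterior. That is how the trajectory term lands as $\ex{\norm{\xi_t - \Delta g_t}^2}$ without a local-gradient-sensitivity penalty. Without this coupling lemma (or an equivalent variational bound on KL between Gaussian mixtures), your proposed chain of equalities does not close. This is also where the conditional bounded-second-moment assumptions are actually consumed — as hypotheses of \cref{lemma:coupling}, not merely to ``keep each per-step KL finite'' as you suggest.
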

\begin{remark}
    The omniscient trajectory additionally depends on $U$ and then $\xi_t$, the two featuring random variables of \citet{negrea_data-dependent_prior}'s SGLD bound.
\end{remark}

\begin{proof}
In this proof, we combine Theorem 2.4 and Theorem 3.1 of \citet{negrea_data-dependent_prior} with our technique.
As \citet{negrea_data-dependent_prior}, let $\subTrainingSet \defeq \trainingSet_{(1,2, \dots, n) \setminus \indices}$.
Due to extra dependence on $(\indices, V)$ of the omniscient trajectory, the SGLD-like trajectory is determined by $\prob{\sgldWeight_{0:T} \mid (\indices, \trainingSet, V)}$, violating the assumptions of \citet{negrea_data-dependent_prior}'s Theorem 2.4.
Therefore we must re-prove this data-dependent prior bound, during which we insert the auxiliary trajectories:
\begin{multline}
            \gen
    =       \ex{\poploss{\weight_T} - \emploss[\subTrainingSet]{\weight_T}}\\
    =       \ex{\poploss{\sgldWeight_T} - \emploss[\subTrainingSet]{\sgldWeight_T}} + \ex{\Delta_{\Gamma_T}^{\sum_t \sigma_t^2}(\weight_T, \subTrainingSet) - \Delta_{\Gamma_T}^{\sum_t \sigma_t^2}(\weight_T, \trainingSet')},
\end{multline}
where the first equality is because given $(\trainingSet, \weight_T)$, $\emploss[\subTrainingSet]{\weight_T}$ is an unbiased estimator on the empirical loss.
With the penalty term matching with the statement of the corollary, we focus on the generalization error of the SGLD-like trajectory, which can be bounded by 
\begin{align}
    &\ex{\poploss{\sgldWeight_T} - \emploss[\subTrainingSet]{\sgldWeight_T}}
    =        \ex[\indices, \subTestingSet, V]{\ex{\poploss{\sgldWeight_T} - \emploss[\subTrainingSet]{\sgldWeight_T} \mid \indices, \subTestingSet, V}}\\
    =&      \ex[\indices, \subTestingSet, V]{\gen[\mu^{n-m}][\prob{\sgldWeight_{T} \mid \indices, \subTrainingSet, \subTestingSet, V}]}\\
    \le&    \ex[\indices, \subTestingSet, V]{\sqrt{\frac{2 R^2}{n - m} I(\sgldWeight_T; \subTrainingSet \mid \indices=\indices, \subTestingSet=\subTestingSet, V = V)} }. \quad (\text{\cref{lemma:MI_bound}})
\end{align}
By the ``golden formula'' of MI (Eq.(8.7) of \citet{golden_formula}) stating $I(X; Y) = \ex[Y]{\kl{\prob{X \mid Y}}{\prob{X}}} \le \ex[Y]{\kl{\prob{X \mid Y}}{Q_X}}$, for any data-dependent prior $\ddp$ over $\weights^{T+1}$ under the Definition 2.1 of \citet{negrea_data-dependent_prior}, we have
\begin{multline}
    \ex{\poploss{\sgldWeight_T} - \emploss[\subTrainingSet]{\sgldWeight_T}}\\
    \le    \ex[\indices, \subTestingSet, V]{\sqrt{\frac{2 R^2}{n - m} \ex[\subTrainingSet \mid \indices, \subTestingSet, V]{\kl{\prob{\sgldWeight_{0:T}\mid \indices, \subTestingSet, \subTrainingSet, V}}{\ddp[\sgldWeight_{0:T}'][\indices, \subTestingSet, V]}}}}
\end{multline}

\NewDocumentCommand{\truedistribution}{O{T}}{\prob{\sgldWeight_{0:#1} \mid \instanceIndices, \instanceSubTestingSet, \instanceSubTrainingSet, v}}
\NewDocumentCommand{\conditruedistribution}{O{t} O{\sgldWeight}}{\prob{\sgldWeight_{#1} \mid #2_{0:#1-1}, \instanceIndices, \instanceSubTestingSet, \instanceSubTrainingSet, v}}

Now we turn to bound the KL divergence given any realization $(\instanceIndices, \instanceSubTestingSet, \instanceSubTrainingSet, v)$ of the conditioned variables.
We restrict the data-dependent prior to have the same distribution as the real distribution for $\sgldWeight_0$, \ie{} $\prob{\sgldWeight_0 \mid \instanceIndices, \instanceSubTestingSet, \instanceSubTrainingSet, v} = \prob{\sgldWeight_0} = \ddp[\sgldWeight_0']$.
By Proposition 2.6 of \citet{negrea_data-dependent_prior}, we have
\begin{multline}
    \kl{\truedistribution}{\ddp[\sgldWeight_T]}
    \le \kl{\truedistribution}{\ddp}\\
    \le \sum_{t=1}^T \ex[\sgldWeight_{0:t-1}\sim \truedistribution[t-1]]{\kl{\conditruedistribution}{\condiddp}} \label{eq:kl_to_bound}
\end{multline}

\newcommand{\cond}{\bm{c}}
\RenewDocumentCommand{\conditruedistribution}{}{\prob{\sgldWeight_{t} \mid \cond}}
We fix a condition $\cond = (\instanceIndices, \instanceSubTestingSet, \instanceSubTrainingSet, v, \instanceSgldWeight_{0:t-1})$ and turn to bound the KL divergences in the above expectation. 
We also instantiate the data-dependent prior to the prior designed by \citet{negrea_data-dependent_prior} for SGLD, which uses samples in $\subTestingSet$ to predict the update using $B_t$ so that the error between the predicted and the true update eventually becomes the trajectory term. 
The gradients and updates depend on the model weight and so do the predictions. Therefore, the data-dependent prior must somehow access the $(t-1)$-th weight of the original trajectory to make predictions. In \citet{negrea_data-dependent_prior}'s proof for SGLD, the data-dependent prior technique is directly applied to the original trajectory and thus the $(t-1)$-th original weight can be directly found in the condition $\cond$.
However, in our case where the technique is applied to the auxiliary SGLD-like trajectory, we only have the latest step of the SGLD-like trajectory in the condition, which is not the original SGD weight but corresponds to many $\weight_{t-1}$ or $\retroWeight_{t-1}$ on the SGD or the omniscient trajectory, given that the SGLD-like trajectory is constructed by adding Gaussian noises to the omniscient trajectory.
Directly predicting and comparing gradients at the SGLD-like trajectory will make the final result related to the SGLD-like trajectory. Switching back using local smoothness along the trajectory will result in a ``local gradient sensitivity'' term as \citet{neu_information-theoretic_2021} that accumulates very fast along the training process \citep{wang_generalization_2021}.
The ideal adapted proof would be somehow ``tracing back'' to the SGD weight $\weight_{t-1}$ that leads to the SGLD weight in the condition, making predictions based on the SGD weight $\weight_{t-1}$, and comparing it with the update $g_t(\weight_{t-1})$ on the SGD trajectory, followed by averaging over the distribution of $\weight_{t-1}$ given the SGLD-like trajectory weight.
This intuition suggests a coupling between the true and the predicted update through $\weight_{t-1}$, which reminds us of a lemma by \citet{neu_information-theoretic_2021}:
\begin{lemma}[Lemma 4 of \citet{neu_information-theoretic_2021}]\label{lemma:coupling}
    Let $X$ and $Y$ be random variables taking values in $\reals^d$ with bounded second moments and let $\sigma > 0$. Letting $\epsilon\sim \gaussian[0][\sigma^2 I]$ be independent of $(X, Y)$, the KL divergence between the distributions of $X + \epsilon$ and $Y + \epsilon$ is bounded as
    \begin{align}
        \kl{\prob{X+\epsilon}}{\prob{Y+\epsilon}} \le \frac{1}{2 \sigma^2}\ex[]{\norm{X - Y}^2},
    \end{align}
    where the expectation is taken over any joint distribution with \emph{any coupling} between $X$ and $Y$ as long as the marginals are still $\prob{X}, \prob{Y}$, respectively.
\end{lemma}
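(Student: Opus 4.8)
The plan is to reduce the claim to two classical facts: the joint convexity of the KL divergence and the closed form of the KL divergence between two isotropic Gaussians of equal covariance. First I would fix an arbitrary coupling $\pi$ of $(X,Y)$, i.e., a joint law on $\reals^d \times \reals^d$ whose marginals are $\prob{X}$ and $\prob{Y}$, and take $\epsilon \sim \gaussian[0][\sigma^2 I]$ independent of $(X,Y)$. Conditioning on the realized value $(x,y)$ of the pair $(X,Y)$, the conditional law of $X+\epsilon$ is $\gaussian[x][\sigma^2 I]$ and that of $Y+\epsilon$ is $\gaussian[y][\sigma^2 I]$; marginalizing over $\pi$, the law of $X+\epsilon$ is the mixture $\int \gaussian[x][\sigma^2 I]\,\dd\pi(x,y)$ and the law of $Y+\epsilon$ is the mixture $\int \gaussian[y][\sigma^2 I]\,\dd\pi(x,y)$. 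The key observation is that both are mixtures against the \emph{same} mixing measure $\pi$, which is exactly the situation in which KL is convex.

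Next I would push the KL divergence inside the mixture via a latent-variable / data-processing argument. Take the latent pair $\Theta=(X,Y)\sim\pi$, and let $P_{\Theta,U}$ be the joint law of $(\Theta, U)$ with $U=X+\epsilon$ and $Q_{\Theta,U}$ the joint law with $U=Y+\epsilon$; both have $\Theta$-marginal equal to $\pi$. By the chain rule for KL, $\kl{P_{\Theta,U}}{Q_{\Theta,U}} = \kl{\pi}{\pi} + \ex[(x,y)\sim\pi]{\kl{\gaussian[x][\sigma^2 I]}{\gaussian[y][\sigma^2 I]}} = \ex[(x,y)\sim\pi]{\kl{\gaussian[x][\sigma^2 I]}{\gaussian[y][\sigma^2 I]}}$, and by the data-processing inequality (dropping $\Theta$), $\kl{\prob{X+\epsilon}}{\prob{Y+\epsilon}} \le \kl{P_{\Theta,U}}{Q_{\Theta,U}}$. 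Combining the two displays gives $\kl{\prob{X+\epsilon}}{\prob{Y+\epsilon}} \le \ex[(x,y)\sim\pi]{\kl{\gaussian[x][\sigma^2 I]}{\gaussian[y][\sigma^2 I]}}$; equivalently this is just joint convexity of KL applied to the two mixtures. Then I would insert the standard identity $\kl{\gaussian[x][\sigma^2 I]}{\gaussian[y][\sigma^2 I]} = \norm{x-y}^2 / (2\sigma^2)$ — a one-line computation, since the two Gaussians share covariance $\sigma^2 I$ so only the mean-shift term survives — to obtain $\kl{\prob{X+\epsilon}}{\prob{Y+\epsilon}} \le \frac{1}{2\sigma^2}\ex{\norm{X-Y}^2}$ under $\pi$. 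Since $\pi$ was an arbitrary coupling, the bound holds for every coupling, which is the statement.

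I do not expect a genuine obstacle: the argument is entirely classical, and the bounded-second-moment hypothesis enters only to guarantee the right-hand side is finite. The two points worth a sentence of care are (i) that all the measures involved are mutually absolutely continuous — this is automatic because convolving with the nondegenerate Gaussian $\gaussian[0][\sigma^2 I]$ produces densities, so the KL divergences are honest integrals of log-density ratios rather than $+\infty$ by singularity — which legitimizes the chain-rule and data-processing steps; and (ii) that the conclusion is an upper bound valid for \emph{each} coupling separately, so no optimization over $\pi$ is performed inside the proof (the infimum over couplings, a Wasserstein-type quantity, is what is exploited later when the lemma is applied).
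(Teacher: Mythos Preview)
Your argument is correct: representing both $\prob{X+\epsilon}$ and $\prob{Y+\epsilon}$ as Gaussian mixtures indexed by the \emph{same} coupling $\pi$, then invoking joint convexity of KL (via the chain rule plus data processing, as you do) and the closed-form KL between equal-covariance Gaussians, is exactly the standard proof of this inequality. The paper does not supply its own proof of this lemma --- it is quoted verbatim as Lemma~4 of \citet{neu_information-theoretic_2021} and used as a black box --- so there is nothing to compare against beyond noting that your derivation matches the original.
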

The arbitrariness of the coupling allows a specific one through $\weight_{t-1}$. Therefore, we turn to construct the coupling of the data-dependent prior.

\NewDocumentCommand{\prediction}{O{\weight} O{, #1_{0:t-2}}}{\left(\frac{\size{\instanceIndices_t}}{b} p_t(#1_{t-1}, \instanceTrainingSet_{\instanceIndices_t} #2) + \frac{b - \size{\instanceIndices_t}}{b} p_t(#1_{t-1}, \instanceSubTestingSet #2)\right)}

Since the coupling involves multiple random variables, it must start from a joint distribution.
For any condition $(\instanceIndices, \instanceSubTestingSet, \instanceSubTrainingSet, v, \instanceSgldWeight_{0:t-1})$, let $Q_{M_{0:T}, \retroM_t, \retroM_t', \sgldM_t, \sgldM_t' \mid \instanceIndices, \instanceSubTestingSet, \instanceSubTrainingSet, v, \instanceSgldWeight_{0:t-1}}$ be the joint distribution over $\weights^5$ given by the Markov chain $M_{0: T} \to (\retroM_{t}, \retroM_t') \to (\sgldM_t, \sgldM_t')$, where $M_{0: T}$ is sampled from $\prob{\weight_{0: T} \mid \instanceIndices, \instanceSubTestingSet, \instanceSubTrainingSet, v, \instanceSgldWeight_{0:t-1}}$ and 
\begin{align}
    \retroM_t \defeq& \instanceSgldWeight_{t-1} - (g_t(M_{t-1}, \instanceTrainingSet, v, M_{0: t-2}) - \Delta g_t(\instanceTrainingSet, v, g_{1:T}, M_{0:T}, u)),\\
    \retroM'_t \defeq& \instanceSgldWeight_{t-1} - \prediction[M],\label{eq:ddp}\\
    \sgldM_{t} =& \retroM_t  + N_t',
    \sgldM_{t}'= \retroM'_t + N_t',
\end{align}
where $j_t \defeq j \cap b_t$ is the samples in $j$ contained in the current batch, $N_t' \sim \gaussian[0][\sigma_t^2 I]$ is independent of other random variables in the random process defining $Q$. 
By assumption, terms defining $\retroM_t$ and $\retroM_t'$ have bounded second moments, satisfying the assumption of \cref{lemma:coupling}.
It can be easily verified that the marginal distribution $Q_{\sgldM_{t} \mid \cond} = \conditruedistribution$.
Moreover, all used samples in \cref{eq:ddp} are contained in $\subTestingSet$. As a result, the marginal distribution of $M'_t$ is independent of $\subTrainingSet$. Therefore, by setting the data-dependent prior $\condiddp = Q_{\sgldM_t' \mid \instanceIndices, \instanceSubTestingSet, (\instanceSubTrainingSet)'(\instanceIndices, \instanceSgldWeight_{0:t-1}, \instanceSubTestingSet, v), v, \instanceSgldWeight_{0:t-1}}$, where $(\instanceSubTrainingSet)'(\cdot)$ is a function that outputs some (conditionally) supported realization of $\subTrainingSet$, we have $\condiddp = Q_{\sgldM_t' \mid \cond}$.
With the help of this coupling and \cref{lemma:coupling}, we can bound the KL divergence in \cref{eq:kl_to_bound} by
\begin{align}
    &\kl{\conditruedistribution}{\condiddp}
    =  \kl{Q_{\sgldM_t \mid \cond}}{Q_{\sgldM_t' \mid \cond}}\\
    =&  \kl{Q_{\retroM_t + N'_t \mid \cond}}{Q_{\retroM_t' + N'_t \mid \cond}}
    \le \frac{1}{2\sigma_t^2} \ex[(\retroM_t, \retroM_t') \sim Q_{\retroM_t, \retroM_t' \mid \cond}]{\norm{\retroM_t - \retroM_t'}^2} \quad (\text{\cref{lemma:coupling}})\\
    =&  \frac{1}{2\sigma_t^2} \mathbb{E}_{Q}\Biggl[\biggl\|g_t(M_{t-1}, \instanceTrainingSet, v, M_{0:t-2})  - \Delta g_t(\instanceTrainingSet, v, g_{1:T}, M_{0:T}, u) \\& \quad\quad\quad\quad - \prediction[M]\biggr\|^2 \mid \cond\Biggr] \quad \text{\cref{eq:ddp}}\\
    =&  \frac{1}{2\sigma_t^2} \mathbb{E}_P\Biggl[\biggl\| g_t(\weight_{t-1}, \instanceTrainingSet, v, \weight_{0:t-2}) - \Delta g_t(\instanceTrainingSet, v, g_{1:T}, \weight_{0:T}, u) \\& \quad\quad\quad\quad - \prediction[\weight] \biggr\|^2 \mid \cond\Biggr]\\
    =&  \frac{1}{2\sigma_t^2} \ex{\norm{\xi_t - \Delta g_t}^2 \mid \cond},
\end{align}
where the penultimate step is because, by construction, we have $Q_{M_{0:T}\mid \cond} = \prob{\weight_{0:T} \mid \cond}$.

After plugging everything together, interchanging the order of the expectation and the square root by the latter's concavity, and putting expectations together, the corollary is proved.

\end{proof}
\subsection{Generalization of \modification[\rone]{Stable Algorithms and GD} on CLB Problems}\label{appendix:minimax}

\textbf{\cref{theorem:minimax}} \textit{
    Assume $n, d \in \nats^+$, $L, D > 0$ and $\eta > 0$, $T \in \nats^+$. \modification[\rone]{Assume the algorithm starts from a fixed initialization $\weight_0 \defeq \instanceWeight_0 \in \weights$ and the whole trajectory remains within $\weights$, \ie{} $\weight_{0:T} \in \weights^{T+1}$.}  
    \begin{modified}[\rone]
        For any CLB problem $(\weights, \samples, \ell) \in \convexOptimization_{L, D}$ and any data distribution $\mu \in \mathcal{M}_1(\samples)$, we have the following data-dependent and -agnostic bounds that recover stability bounds:
        \begin{multline}
            \gen \\
            \le \inf_{\Delta G^{1:n}}\frac{L D}{n} \sum_{i=1}^n \sqrt{
                    2 I\left(\weight_T + \Delta G^i; \sample_i \mid \sample_{-i}\right)
                }  + \frac{1}{n} \sum_{i=1}^n \ex{\Delta_{\Delta G^i}(\weight_T, \sample_i) - \Delta_{\Delta G^i}(\weight_T, \trainingSet')}\\
            \le \frac{2 L}{n} \sum_{i=1}^n \ex{\norm{\weight_T - \ex{\weight_T \mid \sample_{-i}}} }.
        \end{multline}
        If the algorithm is a GD algorithm using projected subgradients of step size $\eta$ and step count $T$, we have 
        \begin{align}
            \gen \le 8 L^2 \sqrt{T} \eta + \frac{8 L^2 T \eta}{n}.
        \end{align}
    \end{modified}
    Therefore, we have the following worst-case generalization error bound for CLB and GD:
    \begin{align}
        \sup_{(\weights, \samples, \ell) \in \convexOptimization_{L, D}} \sup_{\mu \in \mathcal{M}_1(\samples)} \genGD \le 8 L^2 \sqrt{T} \eta + \frac{8 L^2 T \eta}{n}.
    \end{align}
}

\begin{proof}
Let $(\weights, \samples, \ell) \in \convexOptimization_{L, D}$ be an SCO problem and $\mu \in \mathcal{M}_1(\samples)$ be a distribution of samples. Therefore, the sample loss $\ell$, the empirical loss $\emploss{\cdot}$ and the population loss $\emploss{\cdot}$ are $L$-Lipschitz \wrt{} the weight and $\weights$ is bounded with a diameter $D$ and convex. Let $\weight_{0:T}$ be the trajectory given by GD, where $\weight_t \in \weights$.

The proof essentially resembles the uniform stability argument of \citet{bassily_stability_2020} through the superior expressivity added to the MI bounds by the omniscient trajectory.
Since uniform stability considers replacing one sample in the training set, we need to focus on one sample instead of the entire training set. To keep other samples ``unchanged'' in the replacement, we also need to condition on other samples. Therefore, we will start from the ``individual sample'' technique \citep{individual_technique}. This observation motivates us to use \cref{corollary:individual}. However, losses in CLB problems are not sub-Gaussian in general. Therefore, we need variants similar to Theorem 11 of \citet{haghifam_limitations_2023} that replaces sub-Gaussianity with Lipschitzness and boundedness. 
By repeating the proof of \cref{corollary:individual} but with Theorem 11 of \citet{haghifam_limitations_2023} instead of \cref{lemma:MI_bound}, we obtain
\begin{multline}
        \gen \le \frac{L D}{n} \sum_{i=1}^n \sqrt{
            2 I\left(\weight_T + \sum_{t=1}^T \Delta g_{t}^i; \sample_i \mid \sample_{-i}\right)
        } \\ + \frac{1}{n} \sum_{i=1}^n \ex{\Delta_{\Gamma^i_T}(\weight_T, \sample_i) - \Delta_{\Gamma^i_T}(\weight_T, \trainingSet')}, \label{eq:SCO_individual_MI_bound}
\end{multline}
where $\set{\Delta g_{1:T}^i}_{i=1}^n$ is a family of omniscient perturbations, which moves the terminal within $\weights$, \ie{} $\weight_T + \Gamma^i_T \in \weights$.

\begin{modified}[\rone]
    \begin{remark}
        The SGLD-like trajectory is not used in this proof. This is because the SGLD-like trajectory is used only to bound the MI of the omniscient trajectory. As one will see later, the omniscient trajectory in this proof has a very special form (constant) with a trivial MI bound (0). As a result, an MI bound through the SGLD-like trajectory is no longer needed.
    \end{remark}
\end{modified}

This proof proceeds by bounding the penalty terms in \cref{eq:SCO_individual_MI_bound} with the help of the inherited Lipschitzness of $\ell$ in the empirical and the population loss:
\begin{align}
    \abs{\Delta_{\Gamma^i_T}(\weight_T, \sample_i)}
        \defeq&  \abs{\emploss[\sample_i]{\weight_T + \Gamma^i_T} - \emploss[\sample_i]{\weight_T}} \le L \norm{\Gamma^i_T},\\
    \abs{\ex{\Delta_{\Gamma^i_T}(\weight_T, \trainingSet')}}
        \defeq& \abs{\ex{\poploss{\weight_T + \Gamma^i_T} - \poploss{\weight_T}}} \le L \ex{\norm{\Gamma^i_T}}.
\end{align}
Therefore, we have the following bound for the penalty terms:
\begin{align}
    \abs{\frac{1}{n} \sum_{i=1}^n \ex{\Delta_{\Gamma^i_T}(\weight_T, \sample_i) - \Delta_{\Gamma^i_T}(\weight_T, \trainingSet')}}
    \le    \frac{2 L}{n} \sum_{i=1}^n \ex{\norm{\Gamma^i_T}}.
\end{align}
Plugging this bound to \cref{eq:SCO_individual_MI_bound} leads to
\begin{align}
    \gen \le \frac{L D}{n} \sum_{i=1}^n \sqrt{I(\weight_T + \Gamma^i_T; \sample_{i} \mid \sample_{-i})} + \frac{2 L}{n} \sum_{i=1}^n \ex{\norm{\Gamma^i_T}}. 
\end{align}

By setting $\Delta g^i_t = g_t - \ex{g_t \mid \sample_{-i}}$, we have $\weight_T + \Gamma_T^i = \ex{\weight_T \mid \sample_{-i}}$. By convexity of $\weights$ and \modification[\rone]{that $\weight_T \in \weights$}, we have $\weight_T + \Gamma_T^i \in \weights$. Notably, given $\sample_{-i}$, $\weight_T + \Gamma_T^i$ is constant regardless of $\sample_i$. Therefore, we have $I(\weight_T + \Gamma_T^i; \sample_i \mid \sample_{-i}) = 0$ and
\begin{align}
    \gen \le \frac{2 L}{n} \sum_{i=1}^n \ex{\norm{\Gamma^i_T}}. \label{eq:get_rid_of_MI}
\end{align}
Note that $\ex{\norm{\weight_T - \ex{\weight_T \mid \sample_{-i}}} \mid \sample_{-i}}$ is the expected distance to the center given $\sample_{-i}$. By \cref{corollary:expected_distance_to_center}, this value is smaller than $\ex{\norm{\weight_T - \weight'_T} \mid \sample_{-i}}$, where $\weight'_T$ is the terminal weight trained by the same $\sample_{-i}$ and an independently sampled $i$-th sample. 

\modification[\rone]{Now we turn to GD.}
By Theorem 3.2 of \citet{bassily_stability_2020}, given $\sample_{-i}$, $\norm{\weight_T - \weight'_T}$ of GD has a data-agnostic upperbound $4 L \sqrt{T} \eta + \frac{4 L T \eta}{n}$. Plugging this bound, we obtain
\begin{align}
    \gen
    \le& \frac{2 L}{n} \sum_{i=1}^n \ex[\sample_{-i}]{\ex{\norm{\weight_T - \ex{\weight_T \mid \sample_{-i}}} \mid \sample_{-i}}} \\
    \le& \frac{2 L}{n} \sum_{i=1}^n \ex[\sample_{-i}]{\ex{\norm{\weight_T - \weight'_T} \mid \sample_{-i}}} \\
    \le& 8 L^2 \sqrt{T} \eta + \frac{8 L^2 T \eta}{n}.
\end{align}

\end{proof}

\begin{remark}\label{remark:unknown}
    Although adding the omniscient trajectory can address the limitation of representative information-theoretic bounds on CLB problems, we currently do not know whether it is exactly our technique that makes it happen. This is because, between the Gaussian-perturbed individual-sample (C)MI bound considered by \citet{haghifam_limitations_2023} and the omniscient bound, there exist bounds derived by non-isotropic Gaussian perturbations, bounds by  non-Gaussian but general independent perturbations, and bounds by general weight-dependent perturbations (Rate-Distortion bounds \citep{sefidgaran_rate-distortion_2022}).
    It is possible some of these bounds can already address the limitation but we have not obtained any positive or negative results for them before the submission deadline. Nevertheless, our technique makes the proof extremely simple and without it the proof will be at least much longer. For example, it will be harder to make the (conditional) MI diminish without knowing $\sample_{-i}$.
\end{remark}

\begin{modified}[\rone]
\subsection{Extension to $\epsilon$-Learners on CLB Problems}\label{appendix:epsilon_learners}

\citet{attias_information_2024} have found that any $\epsilon$-learners have CMI of at least $\Omega(1/\epsilon)$ on CLB problems, indicating a CMI-accuracy trade-off. Here, a learning algorithm is an $\epsilon$-learner is for CLB problem $(\weights, \samples, \sampleLoss)$ if fir every data distribution $\mu$ over $\samples$, the excess generalization risk of the algorithm is at most $\epsilon$.

Our \cref{theorem:minimax} and its proof have given an intuitive alternative to the trade-off: although GD or stable learners have large CMI themselves, they are quite close to learners with low CMI.
However, \cref{theorem:minimax} assumes GD or stable algorithms, yet \citet{attias_information_2024}'s trade-off covers more general algorithms.
Therefore, we explore whether our technique and alternative can extend to more (expectation-)$\epsilon$-learners in \cref{theorem:epsilon_learner}. It states our technique can indeed extends to more $\epsilon$-learners under some assumptions. It also states if one sees the omniscient trajectory augmented MI as a new information measure, then the information-accuracy trade-off is penetrated because both can vanish as $n \to \infty$. However, the result is still partial and preliminary, because we only covers $\epsilon$-learners that are also $O(\epsilon)$-optimizers, \ie{} they are ``well-behaved'' in the sense that its excess optimization error is not too large compared to the excess generalization error $\epsilon$. 
Nevertheless, we believe this assumption is rather gentle for well-behaved learners used practically, \eg{} deep models. 

\newcommand{\anyAlgo}{\mathcal{A}}
\begin{theorem}\label{theorem:epsilon_learner}
    Assume $d \in \nats^+$, $L, D > 0$ and let $(\weights, \samples, \sampleLoss) \in \convexOptimization_{L, D}$ be a CLB problem. 
    Assume for any $\instanceWeight \in \weights$ and $\instanceSample \in \samples$, we have $\sampleLoss(\instanceWeight, \instanceSample) \in [-L D, + L D]$. If not, shift the loss functions. This shifting does not affect the excess generalization risks or the excess optimization errors, which this theorem mainly assume on.
    
    Let $\epsilon > 0$ be a function of sample number $n$. Let $\set{\anyAlgo_n: \samples^n \to \weights}$ be a family of expectation-$\epsilon$-learners for $(\weights, \samples, \sampleLoss)$. That is, for every sufficiently large $n$, for every distribution $\mu$ over $\samples$, one has $\ex[\trainingSet \sim \mu^{n}]{\poploss{\anyAlgo_n (\trainingSet)}} - \inf_{\instanceWeight \in \weights} \poploss{\instanceWeight} \le \epsilon$.

    Assume $\set{\anyAlgo_n}$ is also an $O(\epsilon)$-optimizer. That is, $\anyAlgo_n$ has an excess optimization error satisfying $\ex[\trainingSet \sim \mu^n]{\emploss{\anyAlgo_n(\trainingSet)} - \inf_{\instanceWeight} \emploss{\instanceWeight}} \le O(\epsilon)$ for any sufficiently large $n$. 
    
    Assume $n \in \nats^+$ is large enough so that $\epsilon$ bounds the excess generalization risk and $O(\epsilon)$ bounds the excess optimization error.
    Then, we have 
    \begin{align}
        \gen
        \le&\inf_{\set{\Delta G^i}_{i=1}^n}\frac{L D}{n} \sum_{i=1}^n \sqrt{
            2 I\left(\weight + \Delta G^i; \sample_i \mid \sample_{-i}\right)
        } + \frac{1}{n} \sum_{i=1}^n \ex{\Delta_{\Delta G^i}(\weight, \sample_i) - \Delta_{\Delta G^i}(\weight, \trainingSet')}\\
        \le& O(\epsilon) + O\left(\frac{L D}{\sqrt{n}}\right),
    \end{align}
    where $\weight \defeq \anyAlgo_n(\trainingSet)$ is the output of the algorithm, and $\set{\Delta G^i}_{i=1}^n$ is a family of (one-step) omniscient perturbations that additionally depend on $i$ and $\sample_{-i}$.
\end{theorem}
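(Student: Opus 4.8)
The plan is to reproduce the architecture of the proof of \cref{theorem:minimax} — individual-sample technique plus a single-step omniscient perturbation — and to replace its ``stability of GD'' ingredient by an ``accuracy $+$ convexity'' ingredient that works for an arbitrary well-behaved accurate learner.

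\emph{First inequality.} This is obtained exactly as the opening step of \cref{theorem:minimax}. Since CLB losses are not sub-Gaussian in general, I would rerun the proof of \cref{corollary:individual} with the Lipschitz-and-bounded mutual-information bound (Theorem~11 of \citet{haghifam_limitations_2023}, which trades the sub-Gaussian parameter $R$ for $LD$) in place of \cref{lemma:MI_bound}, applied to the one-step omniscient perturbation $\Delta G^i$ of the terminal weight $\weight=\anyAlgo_n(\trainingSet)$. As in \cref{theorem:minimax}, no SGLD-like trajectory is needed, because the perturbation will be chosen so that the resulting conditional mutual information is already under control. This gives, for every family $\set{\Delta G^i}$ of one-step omniscient perturbations keeping the perturbed terminal inside $\weights$,
\[
\gen \le \frac1n\sum_{i=1}^n\Bigl(LD\sqrt{2\,I(\weight+\Delta G^i;\sample_i\mid\sample_{-i})} + \ex{\Delta_{\Delta G^i}(\weight,\sample_i)-\Delta_{\Delta G^i}(\weight,\trainingSet')}\Bigr),
\]
and hence also the infimum over such families.

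\emph{Second inequality.} I would instantiate $\Delta G^i$ so that $\weight+\Delta G^i$ equals a reference point that lies in $\weights$ (automatic from convexity of $\weights$) and is $\sigma(\sample_{-i})$-measurable, the natural choice being $\weight+\Delta G^i=\ex{\weight\mid\sample_{-i}}$ as in \cref{theorem:minimax}; then $I(\weight+\Delta G^i;\sample_i\mid\sample_{-i})=0$, so the information part drops out. For the penalty part, $\ell(\cdot,\instanceSample)$ being $L$-Lipschitz propagates to both $\emploss[\sample_i]{\cdot}$ and $\poploss{\cdot}$, so $\ex{\Delta_{\Delta G^i}(\weight,\sample_i)-\Delta_{\Delta G^i}(\weight,\trainingSet')}\le 2L\,\ex{\norm{\weight-\ex{\weight\mid\sample_{-i}}}}$, and \cref{corollary:expected_distance_to_center} bounds this leave-one-out deviation by $2L\,\ex{\norm{\weight-\weight'}}$, the on-average replace-one parameter stability, where $\weight'$ is the output on the dataset obtained by resampling $\sample_i$.

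\emph{From accuracy to $O(\epsilon)+O(LD/\sqrt n)$ — the main obstacle.} It remains to bound this quantity, equivalently $\gen$ itself, by $O(\epsilon)+O(LD/\sqrt n)$; unlike the GD case there is no a-priori stability, so this is where the CLB structure and the accuracy hypotheses must be used. I would split $\gen=\bigl(\ex{\poploss{\weight}}-\poploss{\weight^\star}\bigr)+\bigl(\poploss{\weight^\star}-\ex{\emploss{\weight}}\bigr)$ with $\weight^\star\in\argmin_{\instanceWeight\in\weights}\poploss{\instanceWeight}$; the first bracket is $\le\epsilon$ by the expectation-$\epsilon$-learner hypothesis, and for the second, $\ex{\emploss{\weight}}\ge\ex{\inf_{\instanceWeight\in\weights}\emploss{\instanceWeight}}$ reduces the task to controlling $\poploss{\weight^\star}-\ex{\inf_{\instanceWeight}\emploss{\instanceWeight}}$, which convexity handles: the subgradient inequality at $\weight^\star$ together with first-order optimality of $\weight^\star$ for $\poploss{}$ yields $\inf_{\instanceWeight}\emploss{\instanceWeight}\ge\emploss{\weight^\star}-D\norm{\nabla\emploss{\weight^\star}-\nabla\poploss{\weight^\star}}$ (with subgradients when $\ell$ is non-smooth), and taking expectations bounds the last term by $LD/\sqrt n$ because $\nabla\emploss{\weight^\star}-\nabla\poploss{\weight^\star}$ is an average of $n$ i.i.d.\ mean-zero vectors of norm $\le L$. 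The genuinely delicate points — and the reason the statement is flagged as preliminary — are that (i) to make the ``new information measure'' $\frac1n\sum_i I(\weight+\Delta G^i;\sample_i\mid\sample_{-i})$ meaningful rather than identically zero, one instead wants a reference point that still depends on $\sample_i$ (a Gaussian-smoothed near-minimizer of $\emploss{}$), which reintroduces an SGLD-like step and a smoothness/penalty trade-off that is exactly what the $O(\epsilon)$-optimizer hypothesis is used to close; and (ii) the convexity step must be made uniform over all $\mu\in\mathcal M_1(\samples)$, handling boundary minimizers and subgradients.
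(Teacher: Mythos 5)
The first inequality is correct and is obtained exactly as in the paper. For the second inequality, however, there is a genuine logical gap. You instantiate $\Delta G^i = \ex{\weight\mid\sample_{-i}} - \weight$ and bound the resulting penalty by Lipschitzness, arriving at $2L\,\ex{\norm{\weight-\ex{\weight\mid\sample_{-i}}}}\le 2L\,\ex{\norm{\weight-\weight'}}$; you then declare this ``equivalently $\gen$'' and proceed to bound $\gen$ directly. The two quantities are not equivalent: $\gen$ is a \emph{lower} bound for the informational right-hand side (via the first inequality), not an upper bound for it, so your elegant direct bound $\gen\le\epsilon+O(LD/\sqrt n)$ --- decomposition at the population minimizer $\instanceWeight^\star$, trivial lower bound $\ex{\emploss{\weight}}\ge\ex{\inf_{\instanceWeight}\emploss{\instanceWeight}}$, subgradient inequality at $\instanceWeight^\star$ plus $O(L/\sqrt n)$ concentration of the empirical gradient at $\instanceWeight^\star$ --- does not, as written, control the right-hand side of the first inequality.

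The gap is easy to close, and in a way that makes your proof strictly simpler than the paper's: take $\Delta G^i=\instanceWeight^\star-\weight$ rather than $\ex{\weight\mid\sample_{-i}}-\weight$. This is still a legal omniscient perturbation (it depends on $\weight$ and, through $\instanceWeight^\star$, on $\mu$), it keeps $\weight+\Delta G^i=\instanceWeight^\star\in\weights$, it still zeroes the conditional MI because $\instanceWeight^\star$ is a constant, and it makes the penalty telescope exactly to $\gen$: since $\ex{\sampleLoss(\instanceWeight^\star,\sample_i)}=\poploss{\instanceWeight^\star}$ for each $i$, one computes $\frac1n\sum_{i=1}^n\ex{\Delta_{\Delta G^i}(\weight,\sample_i)-\Delta_{\Delta G^i}(\weight,\trainingSet')}=\ex{\poploss{\weight}}-\ex{\emploss{\weight}}=\gen$. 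Thus $\inf_{\set{\Delta G^i}}[\cdots]\le\gen$, and your direct bound on $\gen$ finishes the second inequality. Notably this repaired route never invokes the $O(\epsilon)$-optimizer hypothesis at all.

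The paper takes a different road. It retains $\Delta G^i=\ex{\weight\mid\sample_{-i}}-\weight$ but bounds the penalty not by Lipschitz parameter stability but by \cref{lemma:convex} ($\ex{\abs{f(\ex W)-f(W)}}\le 2\ex{f(W)}$ for nonnegative convex $f$) applied to the population excess loss and to the leave-one-out empirical excess loss. The population part gives $2\epsilon$; the empirical part produces the expected excess optimization error $\ex{\emploss{\weight}}-\ex{\inf_{\instanceWeight}\emploss{\instanceWeight}}$ --- which is where the $O(\epsilon)$-optimizer hypothesis is actually consumed --- plus a mismatch between the leave-one-out minimum $\ex{\inf_{\instanceWeight}\emploss[\sample_{-i}]{\instanceWeight}}$ and the full minimum $\ex{\inf_{\instanceWeight}\emploss{\instanceWeight}}$, which the paper closes by running GD on both datasets and invoking GD's uniform stability (with an artificial-sample device) to get the $O(LD/\sqrt n)$ term. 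Your closing speculation (i) about what the optimizer hypothesis is for therefore does not match the paper's actual use of it; interestingly, your repaired argument suggests the hypothesis may be dispensable.
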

\begin{remark}
    We assume the algorithm $\set{\anyAlgo_n}$ is deterministic but the extension to random ones is straightforward.
\end{remark}
\begin{proof}
    After repeating the initial steps of the proof and obtaining \cref{eq:SCO_individual_MI_bound}, we have
    \begin{multline}
        \gen
        \le\frac{L D}{n} \sum_{i=1}^n \sqrt{
            2 I\left(\weight + \Delta G^i; \sample_i \mid \sample_{-i}\right)
        } \\ + \frac{1}{n} \sum_{i=1}^n \ex{\Delta_{\Delta G^i}(\weight, \sample_i) - \Delta_{\Delta G^i}(\weight, \trainingSet')}\\
        \le\frac{L D}{n} \sum_{i=1}^n \sqrt{
            2 I\left(\weight + \Delta G^i; \sample_i \mid \sample_{-i}\right)
        } + \frac{1}{n} \sum_{i=1}^n \ex{\abs{\emploss{\weight + \Delta G^i} - \emploss{\weight}}} \\+ \frac{1}{n} \sum_{i=1}^n \ex{\abs{\poploss{\weight + \Delta G^i} - \poploss{\weight}}}
        , \label{eq:SCO_individual_MI_bound2}
    \end{multline}

    After applying a similar omniscient perturbation $\Delta G^i \defeq \ex{\weight \mid \sample_{-i}} - \weight$ as in the proof of \cref{theorem:minimax}, we have
    \begin{align}
        \gen
        \le&\frac{L D}{n} \sum_{i=1}^n \sqrt{
            2 I\left(\ex{\weight \mid Z_{-i}}; \sample_i \mid \sample_{-i}\right)
        } + \frac{1}{n} \sum_{i=1}^n \ex{\abs{\emploss{\ex{\weight \mid Z_{-i}}} - \emploss{\weight}}} \\ &\qquad\qquad\qquad\qquad\qquad\qquad\qquad\qquad+ \frac{1}{n} \sum_{i=1}^n \ex{\abs{\poploss{\ex{\weight \mid Z_{-i}}} - \poploss{\weight}}}\\
        =& \frac{1}{n} \sum_{i=1}^n \ex{\abs{\emploss{\ex{\weight \mid Z_{-i}}} - \emploss{\weight}}} + \frac{1}{n} \sum_{i=1}^n \ex{\abs{\poploss{\ex{\weight \mid Z_{-i}}} - \poploss{\weight}}}
        .\label{eq:epsilon_learner_all_terms}
    \end{align}

    \newcommand{\optimalPopLoss}{\poploss{\instanceWeight^*}}
    We first bound the population loss difference by applying \cref{lemma:convex} to the difference with convex $f(\instanceWeight) = \poploss{\instanceWeight} - \optimalPopLoss \ge 0$, where $\instanceWeight^*$ is the weight with optimal generalization risk:
    \begin{align}
        \ex{\abs{\poploss{\ex{\weight \mid Z_{-i}}}  - \poploss{\weight}}}
        =&  \ex[Z_{-i}]{\ex{\abs{(\poploss{\ex{\weight \mid Z_{-i}}} - \optimalPopLoss) - (\poploss{\weight} - \optimalPopLoss)} \mid Z_{-i}}}\\
        \le& 2 \ex[Z_{-i}]{\ex{f(\weight) \mid Z_{-i}}}
        =  2 (\ex{\poploss{\weight}} - \optimalPopLoss)
        \le    2 \epsilon.
    \end{align}

    \NewDocumentCommand{\empiricalMinimizer}{O{\trainingSet}}{\instanceWeight_{#1}^*}
    \NewDocumentCommand{\optimalEmpLoss}{O{\trainingSet}}{\emploss[#1]{\empiricalMinimizer[#1]}}
    Now we bound the empirical loss difference. Let $\empiricalMinimizer[\instanceTrainingSet]$ be the \emph{empirical} risk minimizer of training set $\instanceTrainingSet$. The main difficulty is that different $\weight$ corresponds to different $S$, and one cannot find an $f$ to be $\emploss{\cdot} - \optimalEmpLoss$ and $\emploss[\trainingSet']{\cdot} - \optimalEmpLoss[\trainingSet']$ at the same time. Fortunately, with the individual technique, we can put weights corresponding to $Z_{-i}$ together. Most of their training set is the same, while the only different sample only contributes $1/n$ of the loss, which vanishes as $n \to \infty$. Therefore, we have
    \begin{align}
        \ex{\abs{\emploss{\ex{\weight \mid Z_{-i}}} - \emploss{\weight}}}
        =&  \ex[Z_{-i}]{\ex{\abs{\emploss{\ex{\weight \mid Z_{-i}}} - \emploss{\weight}} \mid Z_{-i}}},
    \end{align}
    where
    \begin{align}
        \ex{\abs{\emploss{\ex{\weight \mid Z_{-i}}} - \emploss{\weight}} \mid Z_{-i}}
        \le& 
                \frac{n-1}{n} \ex{\abs{\emploss[Z_{-i}]{\ex{\weight \mid Z_{-i}}} - \emploss[Z_{-i}]{\weight}} \mid Z_{-i}}\\
            &+  \frac{1}{n} \ex{\abs{\emploss[Z_{i}]{\ex{\weight \mid Z_{-i}}} - \emploss[Z_{i}]{\weight}} \mid Z_{-i}}\\
            (\text{Lipschitzness and } & \text{boundedness of CLB problems})\\
        \le&    \frac{n-1}{n} \ex{\abs{\emploss[Z_{-i}]{\ex{\weight \mid Z_{-i}}} - \emploss[Z_{-i}]{\weight}} \mid Z_{-i}} + \frac{L D}{n}.
    \end{align}
    Applying \cref{lemma:convex} with convex $f(\instanceWeight) \defeq \emploss[Z_{-i}]{\instanceWeight} - \optimalEmpLoss[Z_{-i}] \ge 0$ leads to
    \begin{align}
        &\ex{\abs{\emploss{\ex{\weight \mid Z_{-i}}} - \emploss{\weight}} \mid Z_{-i}}\\
        \le&    \frac{n-1}{n} \cdot 2 \ex{f(\weight) \mid Z_{-i}} + \frac{L D}{n}\\
        \le&    \frac{n-1}{n} \cdot 2 \ex{\emploss[Z_{-i}]{\weight} - \optimalEmpLoss[Z_{-i}] \mid Z_{-i}} + \frac{L D}{n}\\
        \Bigl(\frac{1}{n}&\ex{\emploss[Z_i]{\weight} - \optimalEmpLoss[Z_i] \mid Z_{-i}} \ge 0\Bigr)\\
        \le& 2 \ex{\emploss{\weight} - \left(\frac{n-1}{n}\optimalEmpLoss[Z_{-i}] + \frac{1}{n} \optimalEmpLoss[Z_i] \right) \mid Z_{-i}} + \frac{L D}{n}\\
    \end{align}
    Taking expectation over $Z_{-i}$ leads to
    \begin{multline}
        \ex{\abs{\emploss{\ex{\weight \mid Z_{-i}}} - \emploss{\weight}}}\\
        \le 2 \ex{\emploss{\weight}} - 2 \left(\frac{n-1}{n}\ex{\optimalEmpLoss[\trainingSet_{n-1}]} + \frac{1}{n} \ex{\optimalEmpLoss[Z]}\right) + \frac{L D}{n}
        ,\label{eq:epsilon_learner_individual}
    \end{multline}
    where $\trainingSet_{n-1} \sim \mu^{n-1}$.
    We need to relate $\frac{n-1}{n}\ex{\optimalEmpLoss[\trainingSet_{n-1}]} + \frac{1}{n} \ex{\optimalEmpLoss[Z]}$ to $\ex{\optimalEmpLoss}$.
    We can bound $\frac{1}{n} \ex{\optimalEmpLoss[Z]}$ by $\frac{L D}{n}$ again by the assumption that losses are bounded in $[- L D, + LD]$, leaving $\frac{n-1}{n}\ex{\optimalEmpLoss[\trainingSet_{n-1}]}$ and $\ex{\optimalEmpLoss}$, which somehow forms the stability of the empirical risk minimizer. Since (projected) GD with fixed initialization has been proved to be stable and has been proved to approximate the empirical minimizer, we use (projected) GD to bridge the empirical minimizers.

    \NewDocumentCommand{\GDn}{m}{\text{GD}_{\eta_{#1}, T_{#1}}}
    According to \citet{f_orabona_last_2020} and  Eq.(1) of \citet{haghifam_limitations_2023}, on CLB problems, (projected) GD algorithm $\GDn{}: \samples^* \to \weights$ with step size $\eta$ and step count $T$ and a fixed initialization has an excess optimization error $\frac{D^2}{2 \eta T} + \frac{(\log T + 2) \eta L^2}{2}$. As a result, we can approximate the empirical minimizers using GDs with errors
    \begin{align}
        \abs{\ex{\optimalEmpLoss[\trainingSet_{n-1}]} - \ex{\emploss[\trainingSet_{n-1}]{\GDn{n-1}(\trainingSet_{n-1})}}} \le& \frac{D^2}{2 \eta_{n-1} T_{n-1}} + \frac{(\log T_{n-1} + 2) \eta_{n-1} L^2}{2},\\
        \abs{\ex{\optimalEmpLoss[\trainingSet]} - \ex{\emploss[\trainingSet_{n}]{\GDn{n}(\trainingSet)}}} \le& \frac{D^2}{2 \eta_{n} T_{n}} + \frac{(\log T_{n} + 2) \eta_{n} L^2}{2},
    \end{align}
    for any $(\eta_{n-1}, T_{n-1})$ and any $(\eta_n, T_n)$.
    \NewDocumentCommand{\artificialSample}{O{S_{n-1}}}{\instanceSample_{#1}}
    We assign $\eta_{n-1} = \eta_n, T_{n-1}=T_{n}$ and select them suitably as in \citet{haghifam_limitations_2023}'s Eq.(3), which bounds the approximation errors by $O(L D / \sqrt{n})$ at the same time:
    \begin{align}
        \abs{\ex{\optimalEmpLoss[\trainingSet_{n-1}]} - \ex{\emploss[\trainingSet_{n-1}]{\GDn{n}(\trainingSet_{n-1})}}} \le& O\left(\frac{L D}{\sqrt{n}}\right),\\
        \abs{\ex{\optimalEmpLoss[\trainingSet]} - \ex{\emploss[\trainingSet_{n}]{\GDn{n}(\trainingSet)}}} \le& O\left(\frac{L D}{\sqrt{n}}\right).
    \end{align}
    We then need to relate $\ex{\emploss[\trainingSet_{n-1}]{\GDn{n}(\trainingSet_{n-1})}}$ and $\ex{\emploss[\trainingSet_{n}]{\GDn{n}(\trainingSet)}}$, which is the removal-based stability of GD. To this end, for each $\instanceTrainingSet_{n-1}$, we construct an artificial sample $\artificialSample$ such that
    \begin{align}
        \sampleLoss(\instanceWeight, \artificialSample) \defeq \emploss[\trainingSet_{n-1}]{\instanceWeight},
    \end{align}
    and denote $\trainingSet_{n-1}^+ \defeq \trainingSet_{n-1} \cup \set{\artificialSample}$. 
    By construction, we have $\emploss[\trainingSet_{n-1}^+]{\instanceWeight} = \emploss[\trainingSet_{n-1}]{\instanceWeight}$ for any $\instanceWeight \in \weights$, \ie{} the optimizations using $\trainingSet_{n-1}$ and $\trainingSet_{n-1}^+$ happen on the same loss landscape. Since GD relies on (sub-)gradients and thus only relies on loss landscape, we have $\GDn{n}(\trainingSet_{n-1}) = \GDn{n}(\trainingSet_{n-1}^+)$. If we pair $\trainingSet_{n-1}$ and $\trainingSet_{n}$, then $\trainingSet_{n-1}^+$ and $\trainingSet_{n}$ only differs by one sample, allowing us to apply the replacement-based uniform stability for GD on CLB from \citet{bassily_stability_2020}:
    \begin{align}
        &\abs{\ex{\emploss[\trainingSet_{n-1}]{\GDn{n}(\trainingSet_{n-1})}} - \ex{\emploss[\trainingSet_{n}]{\GDn{n}(\trainingSet)}}}\\
        \le&    \ex[\trainingSet_{n-1}]{\ex{\abs{\emploss[\trainingSet_{n-1}]{\GDn{n}(\trainingSet_{n-1})} - \emploss[\trainingSet_{n}]{\GDn{n}(\trainingSet)}} \mid \trainingSet_{n-1}}}\\
        \le&    L \cdot \ex[\trainingSet_{n-1}]{\ex{\norm{\GDn{n}(\trainingSet_{n-1})- \GDn{n}(\trainingSet)} \mid \trainingSet_{n-1}}}\\
        =&    L \cdot \ex[\trainingSet_{n-1}]{\ex{\norm{\GDn{n}(\trainingSet_{n-1}^+)- \GDn{n}(\trainingSet)} \mid \trainingSet_{n-1}}}\\
        \le& O\left(L^2 \sqrt{T_n} \eta_n + \frac{L^2 T_n \eta_n}{n}\right).
    \end{align}
    With the same selection of $(T_n, \eta_n)$, the above difference can be bounded by $O\left(\frac{L D}{\sqrt{n}}\right)$.

    As a result, \cref{eq:epsilon_learner_individual} can be bounded by
    \begin{align}
        &\ex{\abs{\emploss{\ex{\weight \mid Z_{-i}}} - \emploss{\weight}}}\\
        \le& 2 \left(\ex{\emploss{\weight}} - \frac{n-1}{n} \ex{\optimalEmpLoss}\right) + O\left(\frac{L D}{n}\right) + O\left(\frac{L D}{\sqrt{n}}\right)\\
        \le& 2 \left(\ex{\emploss{\weight}} - \ex{\optimalEmpLoss}\right) + O\left(\frac{L D}{n}\right) + O\left(\frac{L D}{\sqrt{n}}\right)\\
        \le& 2 \cdot O(\epsilon) + O\left(\frac{L D}{n}\right) + O\left(\frac{L D}{\sqrt{n}}\right).
    \end{align}

    Plugging everything back to \cref{eq:epsilon_learner_all_terms} finishes the proof.
\end{proof}

\end{modified}

\begin{modified}[\Bnc]
    
\subsection{Extension to SGD and Smooth Losses}\label{appendix:smooth}

In this subsection, we apply our technique to SGD under smooth losses.
We will prove an omniscient information-theoretic bound and then show it recovers some existing stability-based bounds.

To prove the omniscient bound, we need a basic form of information-theoretic bounds like \cref{lemma:MI_bound} and Theorem 11 of \citet{haghifam_limitations_2023}. This is done by results from \cref{lemma:W2_bound} to \cref{corollary:stability_W2_bound}.
After that, we make the basic bound omniscient in \cref{theorem:omniscient_smooth}. 
Finally, we bound the omniscient bound by the stability-based bound in \cref{proposition:comparison_smooth}.

\begin{lemma}[2-Wasserstain Distance Generalization Bound under Smoothness]\label{lemma:W2_bound}
    Assume for any sample $\instanceSample \in \samples$, $\sampleLoss(\cdot, \instanceSample)$ is non-negative, differentiable in $\reals^d$ and $\beta$-smooth, \ie{} for any $\instanceWeight, \instanceWeight' \in \reals^d$
    \begin{align}
        \norm{\nabla \sampleLoss(\instanceWeight, \instanceSample) - \nabla \sampleLoss(\instanceWeight', \instanceSample)} \le \beta \norm{\instanceWeight - \instanceWeight'}.
    \end{align}
    Then we have the following 2-Wasserstain-based information-theoretic (individual-sample) bound:
    \begin{align}
        \gen
        \le&    \frac{\beta}{\gamma} \ex{\emploss{\weight}} + \frac{\beta + \gamma}{2 n} \sum_{i=1}^n \ex[\sample_i]{\wass[2]{\prob{\weight | Z_i}}{\prob{\weight}}},
    \end{align}
    where $\gamma > 0$ is a constant, and $\wass[2]{\cdot}{\cdot}$ denotes 2-Wasserstein distance, the optimal transport under \emph{squared} $L_2$ norm.
\end{lemma}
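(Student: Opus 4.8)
The plan is to combine the individual-sample decomposition of the generalization error with an optimal-transport argument, using $\beta$-smoothness together with non-negativity of $\sampleLoss(\cdot,\instanceSample)$ to control the transport cost. First I would write $\gen = \frac1n\sum_{i=1}^n\big(\ex[\sample_i]{\ex[\weight\sim\prob{\weight}]{\sampleLoss(\weight,\sample_i)}} - \ex[\sample_i]{\ex[\weight\sim\prob{\weight\mid\sample_i}]{\sampleLoss(\weight,\sample_i)}}\big)$, using that $\sample_i\sim\mu$ (so $\ex[\weight\sim\prob{\weight}]{\poploss{\weight}}=\ex[\sample_i]{\ex[\weight\sim\prob{\weight}]{\sampleLoss(\weight,\sample_i)}}$ with $\weight$ independent of $\sample_i$) and that $\ex{\emploss{\weight}}=\frac1n\sum_i\ex{\sampleLoss(\weight,\sample_i)}$ for the jointly distributed $(\weight,\sample_i)$. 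For a fixed realization $\sample_i=\instanceSample$ I would pick an optimal coupling $\pi$ of $\prob{\weight\mid\sample_i=\instanceSample}$ and $\prob{\weight}$ attaining $\wass[2]{\prob{\weight\mid\sample_i=\instanceSample}}{\prob{\weight}}$ — recall here $\wass[2]{\cdot}{\cdot}$ is the optimal transport \emph{cost} under the squared Euclidean distance — and represent the per-sample gap as $\ex[(\weight,\weight_0)\sim\pi]{\sampleLoss(\weight_0,\instanceSample)-\sampleLoss(\weight,\instanceSample)}$ with $\weight\sim\prob{\weight\mid\sample_i=\instanceSample}$ and $\weight_0\sim\prob{\weight}$.

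Next I would expand $\sampleLoss(\weight_0,\instanceSample)$ about $\weight$ and invoke the descent lemma, $\sampleLoss(\weight_0,\instanceSample)-\sampleLoss(\weight,\instanceSample)\le\langle\nabla\sampleLoss(\weight,\instanceSample),\weight_0-\weight\rangle+\tfrac{\beta}{2}\norm{\weight_0-\weight}^2$, then split the inner product by Young's inequality, $\langle\nabla\sampleLoss(\weight,\instanceSample),\weight_0-\weight\rangle\le\tfrac{1}{2\gamma}\norm{\nabla\sampleLoss(\weight,\instanceSample)}^2+\tfrac{\gamma}{2}\norm{\weight_0-\weight}^2$, and finally apply the self-bounding property of non-negative $\beta$-smooth functions, $\norm{\nabla\sampleLoss(\instanceWeight,\instanceSample)}^2\le2\beta\,\sampleLoss(\instanceWeight,\instanceSample)$, which follows from $0\le\sampleLoss(\instanceWeight-\tfrac1\beta\nabla\sampleLoss(\instanceWeight,\instanceSample),\instanceSample)\le\sampleLoss(\instanceWeight,\instanceSample)-\tfrac1{2\beta}\norm{\nabla\sampleLoss(\instanceWeight,\instanceSample)}^2$. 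Collecting terms pointwise under $\pi$ gives $\sampleLoss(\weight_0,\instanceSample)-\sampleLoss(\weight,\instanceSample)\le\tfrac{\beta}{\gamma}\sampleLoss(\weight,\instanceSample)+\tfrac{\beta+\gamma}{2}\norm{\weight_0-\weight}^2$; taking $\ex[\pi]{\cdot}$ replaces the last term by $\wass[2]{\prob{\weight\mid\sample_i=\instanceSample}}{\prob{\weight}}$ precisely because $\pi$ is optimal, and the first becomes $\tfrac{\beta}{\gamma}\ex[\weight\sim\prob{\weight\mid\sample_i=\instanceSample}]{\sampleLoss(\weight,\instanceSample)}$. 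Taking expectation over $\sample_i$, summing over $i$, dividing by $n$, and noting $\frac1n\sum_i\ex[\sample_i]{\ex[\weight\sim\prob{\weight\mid\sample_i}]{\sampleLoss(\weight,\sample_i)}}=\ex{\emploss{\weight}}$ yields the claimed inequality.

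The descent-lemma and Young manipulations are routine; the one genuinely load-bearing fact is the gradient self-bounding inequality, which is the only place non-negativity enters and is exactly what converts the leftover $\norm{\nabla\sampleLoss}^2$ term into an \emph{empirical}-risk term $\tfrac{\beta}{\gamma}\ex{\emploss{\weight}}$ rather than a population-risk term — this is what makes a Wasserstein-based statement possible with no sub-Gaussian or boundedness hypothesis on the loss. The only mild technical nuisance I would flag is measurability: I need the optimal coupling to be chosen as a measurable Markov kernel in $\instanceSample$ so that $\ex[\sample_i]{\cdot}$ of the transport cost is well defined, which holds by standard measurable-selection results for optimal transport plans and which I would simply cite.
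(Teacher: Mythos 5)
Your proposal is correct and follows essentially the same route as the paper: individual-sample decomposition, coupling each conditional posterior $\prob{\weight\mid\sample_i}$ with the marginal $\prob{\weight}$, descent lemma plus Young's inequality, then the self-bounding property $\norm{\nabla\sampleLoss}^2\le 2\beta\,\sampleLoss$ to convert the gradient-norm term into an empirical-risk term. The only cosmetic difference is that the paper works with an $\epsilon$-near-optimal coupling and lets $\epsilon\to 0$ at the end, whereas you invoke an exactly optimal coupling together with a measurable-selection argument to justify integrating over $\sample_i$ --- both are standard and equivalent ways to handle the same technical point.
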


\begin{proof}
    Let constant $\gamma > 0$.
    Given any index $i$ and any instance of the $i$-th training sample $\instanceSample_i$, let $\pi^{\epsilon}_{\instanceSample_i}$ be the coupling that approximates the 2-Wasserstein distance $\wass[2]{\prob{\weight | \sample_i = \instanceSample_i}}{\prob{\weight}}$ between $\prob{\weight | \sample_i = \instanceSample_i}$ and $\prob{\weight}$ by an error at most $\epsilon > 0$.
    Then for any $\epsilon > 0$, we have
    \begin{align}
        \gen
        =&      \frac{1}{n} \sum_{i=1}^n \ex[(\weight, Z_i), \weight']{\sampleLoss(\weight', \sample_i) - \sampleLoss(\weight, \sample_i)}\\
        =&      \frac{1}{n} \sum_{i=1}^n \ex[Z_i]{\ex[(\weight, \weight') \sim \pi_{\sample_i}^\epsilon]{\sampleLoss(\weight', \sample_i) - \sampleLoss(\weight, \sample_i)}}\\
        \le&    \frac{1}{n} \sum_{i=1}^n \ex[Z_i]{\ex[(\weight, \weight') \sim \pi_{\sample_i}^\epsilon]{(\weight' - \weight)^\top \nabla \sampleLoss(\weight, \sample_i) + \frac{\beta}{2} \norm{\weight' - \weight}^2}}.
    \end{align}
    We then follow \citet{lei2020fine} to handle the inner product as in their Appendix B:
    \begin{align}
        (\weight' - \weight)^\top \nabla \sampleLoss(\weight, \sample_i)
        \le&    \norm{\weight' - \weight} \cdot \norm{\nabla \sampleLoss(\weight, \sample_i)}\\
        \le&    \frac{\gamma}{2} \norm{\weight' - \weight}^2 + \frac{1}{2\gamma} \norm{\nabla \sampleLoss(\weight, \sample_i)}^2. 
    \end{align}
    Thanks to the self-bounding property of positive smooth functions (Lemma A.1 of \citet{lei2020fine}),  we have $\norm{\nabla \sampleLoss(\weight, \sample_i)}^2 \le 2 \beta \cdot \sampleLoss(\weight, \sample_i)$ and
    \begin{align}
        (\weight' - \weight)^\top \nabla \sampleLoss(\weight, \sample_i)
        \le&    \frac{\gamma}{2} \norm{\weight' - \weight}^2 + \frac{\beta}{\gamma} \sampleLoss(\weight, \sample_i). 
    \end{align}
    Plugging this back leads to
    \begin{align}
        \gen
        \le&    \frac{1}{n} \sum_{i=1}^n \ex[Z_i]{\ex[(\weight, \weight') \sim \pi_{\sample_i}^\epsilon]{\frac{\beta}{\gamma} \sampleLoss(\weight, \sample_i) + \frac{\beta + \gamma}{2} \norm{\weight' - \weight}^2}}\\
        \le&    \frac{\beta}{\gamma} \ex{\emploss{\weight}} + \frac{\beta + \gamma}{2 n} \sum_{i=1}^n \ex[Z_i]{\wass[2]{\prob{\weight | Z_i}}{\prob{\weight}} + \epsilon}. 
    \end{align}
    By arbitrariness of $\epsilon > 0$, we have
    \begin{align}
        \gen
        \le&    \frac{\beta}{\gamma} \ex{\emploss{\weight}} + \frac{\beta + \gamma}{2 n} \sum_{i=1}^n \ex[\sample_i]{\wass[2]{\prob{\weight | Z_i}}{\prob{\weight}}}.
    \end{align}

\end{proof}

\begin{lemma}\label{lemma:add_condition_in_W2}
    For any random variables $(X, Y, Z)$ such that $Y$ is independent of $Z$, we have
    \begin{align}
        \ex[Y]{\wass[2]{\prob{X \mid Y}}{\prob{X}}} \le \ex[Z]{\ex[Y\sim \prob{Y \mid Z}]{\wass[2]{\prob{X \mid Y, Z}}{\prob{X \mid Z}}}}.
    \end{align}
\end{lemma}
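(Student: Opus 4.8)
The plan is to reduce the claim to the convexity of the (squared) $2$-Wasserstein cost under mixtures. The key observation is that, although $\prob{X\mid Y}$ and $\prob{X}$ are in general unrelated, the hypothesis $Y\perp Z$ forces both of them to be averages of the families $\bigl\{\prob{X\mid Y=y,\,Z=z}\bigr\}_z$ and $\bigl\{\prob{X\mid Z=z}\bigr\}_z$ respectively, against the \emph{same} mixing measure $\prob{Z}$. Once this is noted, one may transport mass ``fibrewise in $z$'' and average the transport plans, which yields the inequality.

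Concretely, first I would disintegrate $X$ over $Z$ and use $\prob{Z\mid Y}=\prob{Z}$ to write, for $\prob{Y}$-almost every realization $y$ of $Y$,
\[
    \prob{X\mid Y=y} \;=\; \ex[z\sim\prob{Z}]{\prob{X\mid Y=y,\,Z=z}}, \qquad \prob{X} \;=\; \ex[z\sim\prob{Z}]{\prob{X\mid Z=z}},
\]
as mixtures of probability measures. Next, fixing $y$ and a tolerance $\eps>0$, I would choose for each $z$ a coupling $\pi_{y,z}$ of $\bigl(\prob{X\mid Y=y,Z=z},\,\prob{X\mid Z=z}\bigr)$ whose $\norm{\cdot}^2$-transport cost is within $\eps$ of $\wass[2]{\prob{X\mid Y=y,Z=z}}{\prob{X\mid Z=z}}$, and set $\pi_y\defeq\ex[z\sim\prob{Z}]{\pi_{y,z}}$. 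By the two mixture identities above its marginals are exactly $\prob{X\mid Y=y}$ and $\prob{X}$, so $\pi_y$ is an admissible coupling and
\[
    \wass[2]{\prob{X\mid Y=y}}{\prob{X}} \;\le\; \ex[(X,X')\sim\pi_y]{\norm{X-X'}^2} \;\le\; \eps \;+\; \ex[z\sim\prob{Z}]{\wass[2]{\prob{X\mid Y=y,Z=z}}{\prob{X\mid Z=z}}}.
\]
Letting $\eps\to0$, taking $\ex[Y]{\cdot}$ on both sides, exchanging the order of integration over $Y$ and $Z$ (Tonelli, the integrand being nonnegative), and finally rewriting the resulting inner average over $Y$ as $\ex[Y\sim\prob{Y\mid Z}]{\cdot}$ --- which is legitimate precisely because $Y\perp Z$ --- yields the stated inequality.

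The only step I expect to require genuine care is the measurability of $(y,z)\mapsto\pi_{y,z}$, needed so that $\pi_y$ is a well-defined coupling-valued average; on Polish state spaces this follows from a standard measurable-selection argument (selecting $\eps$-optimal couplings, or optimal ones, which exist since the cost $\norm{\cdot}^2$ is lower semicontinuous). Everything else --- the disintegration, the mixture inequality for $\mathbb{W}_2$, and the Tonelli exchange --- is routine.
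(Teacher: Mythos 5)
Your proposal is correct and matches the paper's proof in all essential respects: both fix an $\epsilon$-optimal coupling $\pi^\epsilon_{y,z}$ of $\bigl(\prob{X\mid Y=y,Z=z},\,\prob{X\mid Z=z}\bigr)$ for each $(y,z)$, average it against $\prob{Z}$ (using $Y\perp Z$ to identify $\prob{Z\mid Y}$ with $\prob{Z}$ and thereby match the marginals $\prob{X\mid Y=y}$ and $\prob{X}$), bound the resulting coupling's cost from below by $\wass[2]{\prob{X\mid Y}}{\prob{X}}$, and send $\epsilon\to0$. The only differences are cosmetic --- you argue from the LHS up while the paper argues from the RHS down, and you explicitly flag the measurable-selection issue for $(y,z)\mapsto\pi^\epsilon_{y,z}$, which the paper leaves implicit --- neither of which constitutes a different approach.
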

\begin{proof}
    Let $\pi^\epsilon_{y, z}$ be the coupling that approximates the 2-Wasserstein distance $\wass[2]{\prob{X \mid y, z}}{\prob{X\mid y}}$ between $\prob{X \mid y, z}$ and $\prob{X \mid y}$ by an error at most $\epsilon > 0$. Then for any $\epsilon > 0$, we have
    \begin{align}
        \ex[Z]{\ex[Y\sim \prob{Y \mid Z}]{\wass[2]{\prob{X \mid Y, Z}}{\prob{X \mid Z}}}}
        =&  \ex[Y]{\ex[Z]{\wass[2]{\prob{X \mid Y, Z}}{\prob{X \mid Z}}}} \quad \quad \text{(Independence between $Y, Z$)}\\
        \ge&    \ex[Y]{\ex[Z]{\ex[(X, X') \sim \pi_{Y, Z}^\epsilon]{\norm{X - X'}^2} - \epsilon}}\\
        =&    \ex[Y]{\ex[(Z, X, X') \sim \prob{Z} \circ \pi_{Y, Z}^\epsilon]{\norm{X - X'}^2}} - \epsilon\\
        \ge&    \ex[Y]{\wass[2]{\prob{X\mid Y}}{\prob{X}}} - \epsilon.
    \end{align}
    The lemma follows the arbitrariness of $\epsilon > 0$.
\end{proof}

\cref{lemma:W2_bound} is very similar to the fact that $I(X; Y) \le I(X; Y \mid Z)$ if $Y$ is independent of $Z$. Following this similarity, we write (conditional) expected 2-Wasserstain distances similar to (conditional) MI, or equivalently, replace the KL-divergence in MI with the Wasserstain distance to compare the prior and posterior:
\begin{definition}
    For any random variables $(X, Y, Z)$, let
    \begin{align}
        \wI(X; Y) \defeq& \ex[Y]{\wass[2]{\prob{X \mid Y}}{\prob{X}}},\\
        \wI(X; Y \mid Z) \defeq& \ex[Z]{\ex[Y\sim \prob{Y \mid Z}]{\wass[2]{\prob{X \mid Y, Z}}{\prob{X \mid Z}}}}.
    \end{align}
\end{definition}
\begin{lemma}\label{lemma:wass_info_under_independence}
    If $X$ and $Y$ are independent given $Z$, then $\wI(X; Y \mid Z) = 0$.
\end{lemma}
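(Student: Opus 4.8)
The plan is to unfold the definition of $\wI(X; Y \mid Z)$ and use the conditional independence directly. By definition,
\begin{align}
    \wI(X; Y \mid Z) = \ex[Z]{\ex[Y \sim \prob{Y \mid Z}]{\wass[2]{\prob{X \mid Y, Z}}{\prob{X \mid Z}}}},
\end{align}
so it suffices to show the integrand vanishes for ($P_{Y,Z}$-almost) every realization $(y,z)$. First I would invoke the assumed conditional independence: $X \perp Y \mid Z$ means precisely that $\prob{X \mid Y = y, Z = z} = \prob{X \mid Z = z}$ for $P_{Y,Z}$-almost every $(y,z)$. Hence, for such $(y,z)$, the two arguments of the 2-Wasserstein distance are the same measure.

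Next I would use the elementary fact that the 2-Wasserstein distance of a distribution to itself is zero, witnessed by the diagonal (identity) coupling, whose transport cost $\ex{\norm{X - X'}^2}$ under $X = X'$ is zero; since $\wass[2]{\cdot}{\cdot} \ge 0$ always, this forces $\wass[2]{\prob{X\mid Z=z}}{\prob{X\mid Z=z}} = 0$. Combining the two steps, $\wass[2]{\prob{X\mid Y=y, Z=z}}{\prob{X\mid Z=z}} = 0$ for $P_{Y,Z}$-almost every $(y,z)$, and taking the (iterated) expectation of a nonnegative function that is almost everywhere zero gives $\wI(X; Y \mid Z) = 0$.

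There is essentially no obstacle here beyond bookkeeping: the only care needed is that conditional distributions are defined only up to null sets, so the equality $\prob{X\mid Y,Z} = \prob{X\mid Z}$ and the resulting vanishing of the Wasserstein term hold almost surely rather than pointwise, which is enough since we integrate against $P_{Y,Z}$. This mirrors the analogous trivial fact that $I(X;Y\mid Z) = 0$ under conditional independence, and indeed the statement is the Wasserstein-information analogue of \cref{lemma:wass_info_under_independence}'s KL counterpart used implicitly in the surrounding development.
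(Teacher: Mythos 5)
Your argument is correct and is exactly the intended one: the paper states this lemma without proof, implicitly treating it as immediate from the definition, and your write-up simply spells out the obvious steps (conditional independence gives $\prob{X\mid Y,Z}=\prob{X\mid Z}$ almost surely, a Wasserstein distance of a measure to itself is zero via the identity coupling, and the iterated expectation of an a.e.-zero nonnegative integrand is zero). One small quibble of exposition, not substance: your closing sentence refers to the statement as the ``Wasserstein-information analogue of [the same lemma]'s KL counterpart,'' which is circular; you presumably meant the analogue of the standard fact that $I(X;Y\mid Z)=0$ under conditional independence.
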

It leads to the following corollary:

\begin{corollary}[Stability-Style 2-Wasserstain Generalization Bound]\label{corollary:stability_W2_bound}
    Under the same assumptions as \cref{lemma:W2_bound}, we have
    \begin{align}
        \gen
        \le&    \frac{\beta}{\gamma} \ex{\emploss{\weight}} + \frac{\beta + \gamma}{2 n} \sum_{i=1}^n \wI(\weight; \sample_i)\\
        \le&    \frac{\beta}{\gamma} \ex{\emploss{\weight}} + \frac{\beta + \gamma}{2 n} \sum_{i=1}^n \wI(\weight; \sample_i \mid \sample_{-i})\\
    \end{align}
\end{corollary}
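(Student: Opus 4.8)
The plan is to read off both inequalities directly from \cref{lemma:W2_bound} and \cref{lemma:add_condition_in_W2}, with essentially no new computation. First I would observe that the bound of \cref{lemma:W2_bound} is, by the definition $\wI(\weight;\sample_i)\defeq\ex[\sample_i]{\wass[2]{\prob{\weight\mid\sample_i}}{\prob{\weight}}}$, literally the same as
\[
    \frac{\beta}{\gamma}\ex{\emploss{\weight}}+\frac{\beta+\gamma}{2n}\sum_{i=1}^n\wI(\weight;\sample_i).
\]
So the first inequality of the corollary is just a notational rewriting of \cref{lemma:W2_bound} and requires no further argument.

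For the second inequality, the key point is that the training samples $\sample_1,\dots,\sample_n\sim\mu^n$ are I.I.D., so each $\sample_i$ is independent of the remaining tuple $\sample_{-i}$. This lets me apply \cref{lemma:add_condition_in_W2} with $X\defeq\weight$, $Y\defeq\sample_i$, $Z\defeq\sample_{-i}$; the hypothesis "$Y$ independent of $Z$" is exactly this I.I.D. independence. The conclusion of that lemma states
\[
    \wI(\weight;\sample_i)=\ex[\sample_i]{\wass[2]{\prob{\weight\mid\sample_i}}{\prob{\weight}}}\le\ex[\sample_{-i}]{\ex[\sample_i\sim\prob{\sample_i\mid\sample_{-i}}]{\wass[2]{\prob{\weight\mid\sample_i,\sample_{-i}}}{\prob{\weight\mid\sample_{-i}}}}}=\wI(\weight;\sample_i\mid\sample_{-i}).
\]
Summing this over $i\in[n]$ and substituting into the first inequality gives the stated bound.

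There is no real technical obstacle here: the corollary is a repackaging of \cref{lemma:W2_bound} followed by a one-line invocation of \cref{lemma:add_condition_in_W2}. The only thing worth a sentence of care is confirming that the "add a condition" step goes in the right direction — conditioning on the complement $\sample_{-i}$ can only \emph{increase} the Wasserstein information, mirroring the familiar $I(\weight;\sample_i)\le I(\weight;\sample_i\mid\sample_{-i})$ for mutual information — which is precisely what \cref{lemma:add_condition_in_W2} delivers under the I.I.D. assumption on the sample, and noting that the definitions of $\wI(\cdot;\cdot)$ and $\wI(\cdot;\cdot\mid\cdot)$ match the two sides of that lemma verbatim.
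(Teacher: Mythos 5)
Your proof is correct and takes essentially the same approach as the paper: the paper's one-line proof (``$\sample_i$ is independent of $\sample_{-i}$'') is precisely the observation you make explicit, namely that \cref{lemma:W2_bound} gives the first inequality verbatim and \cref{lemma:add_condition_in_W2} applied with $X=\weight$, $Y=\sample_i$, $Z=\sample_{-i}$ gives the second. Spelling out the instantiation is a reasonable expansion of the paper's terse argument, and your check of the direction of the conditioning inequality is exactly the right thing to verify.
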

\begin{proof}
    $\sample_i$ is independent of $\sample_{-i}$.
\end{proof}

\newcommand{\smoothCondition}{\sample_{-i}}
\newcommand{\individualSample}{\sample_i}

\begin{theorem}[Omniscient 2-Wasserstain Bound under Smoothness]\label{theorem:omniscient_smooth}
    Assume $\sampleLoss$ is non-negative, differentiable and $\beta$-smooth. 
    Let $\gamma_1 > 0, \gamma_2 > \beta$.
    Let $\set{\Delta G^i}_{i=1}^n$ be a family of omniscient (output-weight) perturbations, each of which additionally depends on $\sample_{-i}$. Then we have
    \begin{align}
        \gen
        \le    &\frac{\beta}{\gamma_1} \frac{1}{n}\sum_{i=1}^n \ex{\sampleLoss(\weight + \Delta G^i, \individualSample)} + \frac{\beta + \gamma_1}{2 n} \sum_{i=1}^n \wI(\weight + \Delta G^i; \individualSample \mid \smoothCondition) \\&\quad\quad\quad\quad+ \frac{1}{n} \sum_{i=1}^n \ex{\Delta_{\Delta G^i}(\weight, \sample_i) - \Delta_{\Delta G^i}(\weight, \sample')},
    \end{align}
    and
    \begin{align}
        \gen \le&    \frac{1}{1 - \frac{\beta}{\gamma_2}}\biggl(\frac{\beta}{\gamma_1 n}\sum_{i=1}^n \ex{\sampleLoss(\weight + \Delta G^i, \sample_i)} + \frac{\beta + \gamma_1}{2 n} \sum_{i=1}^n \wI(\weight + \Delta G^i; \individualSample \mid \smoothCondition) \\&\quad\quad\quad\quad+ \frac{2 \beta}{\gamma_2} \ex{\emploss{\weight}} + \frac{\beta + \gamma_2}{n} \sum_{i=1}^n \norm{\Delta G^i}^2\biggr).
        \label{eq:omniscient_smooth}
    \end{align}
\end{theorem}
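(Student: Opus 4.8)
The plan is to combine the ``omniscient change of output'' trick (the first lines of the proof of \cref{corollary:individual}) with a conditional version of the smoothness-based $2$-Wasserstein generalization bound (\cref{lemma:W2_bound}/\cref{corollary:stability_W2_bound}), obtaining the first displayed inequality; then, for the second inequality, I would bound the resulting penalty terms by smoothness and self-bounding and rearrange to isolate $\gen$. First I would write $\gen=\frac1n\sum_{i=1}^n\ex{\poploss{\weight}-\sampleLoss(\weight,\sample_i)}$ and, for each $i$, add and subtract $\ex{\poploss{\weight+\Delta G^i}-\sampleLoss(\weight+\Delta G^i,\sample_i)}$. Using $\ex{\Delta_{\Delta G^i}(\weight,\sample_i)}=\ex{\sampleLoss(\weight+\Delta G^i,\sample_i)-\sampleLoss(\weight,\sample_i)}$ and $\ex{\Delta_{\Delta G^i}(\weight,\sample')}=\ex{\poploss{\weight+\Delta G^i}-\poploss{\weight}}$ (the fresh sample $\sample'$ being independent of $(\weight,\Delta G^i)$), the leftover telescoping term for index $i$ is exactly $\ex{\Delta_{\Delta G^i}(\weight,\sample_i)-\Delta_{\Delta G^i}(\weight,\sample')}$, so $\gen=\frac1n\sum_i\ex{\poploss{\weight+\Delta G^i}-\sampleLoss(\weight+\Delta G^i,\sample_i)}+\frac1n\sum_i\ex{\Delta_{\Delta G^i}(\weight,\sample_i)-\Delta_{\Delta G^i}(\weight,\sample')}$.

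Next I would fix $i$, set $M^i\defeq\weight+\Delta G^i$, and re-run the proof of \cref{lemma:W2_bound} while \emph{additionally conditioning on} $\sample_{-i}$ (which is independent of $\sample_i$ and of the internal randomness, while $\Delta G^i$ may depend on it). Concretely: couple $\prob{M^i\mid\sample_i,\sample_{-i}}$ with the prior $\prob{M^i\mid\sample_{-i}}$ via a near-optimal $2$-Wasserstein coupling, apply $\beta$-smoothness of $\sampleLoss(\cdot,\sample_i)$ to get the quadratic upper bound, split the inner product by Young's inequality with parameter $\gamma_1$, and invoke the self-bounding property $\norm{\nabla\sampleLoss(M^i,\sample_i)}^2\le 2\beta\,\sampleLoss(M^i,\sample_i)$ (Lemma A.1 of \citet{lei2020fine}, as in \cref{lemma:W2_bound}). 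Averaging over $\sample_{-i}$ gives $\ex{\poploss{M^i}-\sampleLoss(M^i,\sample_i)}\le\frac{\beta}{\gamma_1}\ex{\sampleLoss(M^i,\sample_i)}+\frac{\beta+\gamma_1}{2}\,\wI(M^i;\sample_i\mid\sample_{-i})$. Summing over $i$, dividing by $n$, and re-attaching the penalty terms yields the first displayed inequality of the theorem.

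For the second inequality I would upper bound the penalty $P\defeq\frac1n\sum_i\ex{\Delta_{\Delta G^i}(\weight,\sample_i)-\Delta_{\Delta G^i}(\weight,\sample')}$ term by term. For the empirical part use the descent lemma $\sampleLoss(\weight+\Delta G^i,\sample_i)-\sampleLoss(\weight,\sample_i)\le\nabla\sampleLoss(\weight,\sample_i)^\top\Delta G^i+\tfrac\beta2\norm{\Delta G^i}^2$; for the population part use the \emph{lower} quadratic bound valid for (possibly non-convex) $\beta$-smooth functions, $\poploss{\weight+\Delta G^i}\ge\poploss{\weight}+\nabla\poploss{\weight}^\top\Delta G^i-\tfrac\beta2\norm{\Delta G^i}^2$, i.e.\ $\poploss{\weight}-\poploss{\weight+\Delta G^i}\le-\nabla\poploss{\weight}^\top\Delta G^i+\tfrac\beta2\norm{\Delta G^i}^2$ (here $\poploss{\cdot}$ inherits non-negativity and $\beta$-smoothness, so self-bounding gives $\norm{\nabla\poploss{\weight}}^2\le2\beta\,\poploss{\weight}$). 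Splitting both inner products by Young's inequality with parameter $\gamma_2$ and using self-bounding, $P\le\frac{\beta}{\gamma_2}\ex{\emploss{\weight}}+\frac{\beta}{\gamma_2}\ex{\poploss{\weight}}+\frac{\beta+\gamma_2}{n}\sum_i\ex{\norm{\Delta G^i}^2}$; substituting $\ex{\poploss{\weight}}=\gen+\ex{\emploss{\weight}}$ gives $P\le\frac{2\beta}{\gamma_2}\ex{\emploss{\weight}}+\frac{\beta}{\gamma_2}\gen+\frac{\beta+\gamma_2}{n}\sum_i\ex{\norm{\Delta G^i}^2}$. Plugging this into the first inequality produces a self-referential bound on $\gen$; since $\gamma_2>\beta$ we may move $\tfrac{\beta}{\gamma_2}\gen$ to the left and divide by $1-\beta/\gamma_2>0$, obtaining \eqref{eq:omniscient_smooth}.

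\textbf{Main obstacle.} There is no deep difficulty, but two points need care. First, the conditional $2$-Wasserstein argument must be set up so that the ``prior'' $\prob{M^i\mid\sample_{-i}}$ is genuinely independent of $\sample_i$ — this is exactly where one exploits that $\Delta G^i$ is permitted to depend on $\sample_{-i}$ while marginalizing out $\sample_i$ (paralleling how \cref{corollary:individual} handles the individual-sample reduction). Second, one must use the correct direction of the smoothness inequality when upper bounding $\poploss{\weight}-\poploss{\weight+\Delta G^i}$: the standard descent-lemma upper bound is the wrong direction, so the two-sided quadratic estimate for smooth (non-convex) functions is needed, and the constraint $\gamma_2>\beta$ must be tracked through the final rearrangement.
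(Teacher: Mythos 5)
Your proposal is correct and matches the paper's proof essentially line for line: both start by replaying the individual-sample omniscient decomposition with \cref{corollary:stability_W2_bound} in place of \cref{lemma:MI_bound}, then bound the penalty via smoothness, Young's inequality with $\gamma_2$, and the self-bounding property, and finally rearrange the resulting $\tfrac{\beta}{\gamma_2}\gen$ term using $\gamma_2>\beta$. The only cosmetic difference is that the paper bounds $\abs{\Delta_{\Delta G^i}(\weight,\cdot)}$ directly via $\abs{f(x+h)-f(x)}\le\abs{\nabla f(x)^\top h}+\tfrac\beta2\norm{h}^2$, whereas you spell out the two one-sided quadratic inequalities for the empirical and population parts separately; both routes land on the same penalty estimate.
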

\begin{proof}
    This proof is very similar to the proof of \cref{theorem:minimax}.
    By repeating the proof of \cref{corollary:individual} but with \cref{corollary:stability_W2_bound} instead of \cref{lemma:MI_bound}, we obtain the first inequality in the theorem statement:
    \begin{multline}
        \gen
        \le    \frac{\beta}{\gamma_1} \frac{1}{n}\sum_{i=1}^n \ex{\sampleLoss(\weight + \Delta G^i, \sample_i)} + \frac{\beta + \gamma_1}{2 n} \sum_{i=1}^n \wI(\weight + \Delta G^i; \individualSample \mid \smoothCondition) \\+ \frac{1}{n} \sum_{i=1}^n \ex{\Delta_{\Delta G^i}(\weight, \sample_i) - \Delta_{\Delta G^i}(\weight, \sample')}.
    \end{multline}
    The penalty terms can be bounded the same way as in \cref{lemma:W2_bound}:
    \begin{align}
        \abs{\Delta_{\Delta G^i}(\weight, \instanceSample)}
        \le&    \abs{(\Delta G^i)^\top \nabla \sampleLoss(\weight, \instanceSample)} + \frac{\beta}{2} \norm{\Delta G^i}^2\\
        \le&    \norm{\Delta G^i} \norm{\nabla \sampleLoss(\weight, \instanceSample)} + \frac{\beta}{2} \norm{\Delta G^i}^2\\
        \le&    \frac{\gamma_2}{2} \norm{\Delta G^i}^2 + \frac{\beta}{\gamma_2} \sampleLoss(\weight, \instanceSample) + \frac{\beta}{2} \norm{\Delta G^i}^2\\
        =&    \frac{\beta}{\gamma_2} \sampleLoss(\weight, \instanceSample) + \frac{\beta + \gamma_2}{2} \norm{\Delta G^i}^2.
    \end{align}
    Therefore, we have
    \begin{align}
        \gen
        \le    \frac{\beta}{\gamma_1} \frac{1}{n}\sum_{i=1}^n \ex{\sampleLoss(\weight + \Delta G^i, \sample_i)} + \frac{\beta + \gamma_1}{2 n} \sum_{i=1}^n \wI(\weight + \Delta G^i; \individualSample \mid \smoothCondition) \\+ \frac{\beta}{\gamma_2} \left(\ex{\emploss{\weight}} + \ex{\poploss{\weight}}\right) + (\beta + \gamma_2) \frac{1}{n} \sum_{i=1}^n \ex{\norm{\Delta G^i}^2}.
    \end{align}
    Population loss $\ex{\poploss{\weight}}$ appears at the right of the inequality. To move it to the left, we pair it with a virtual empirical loss term and moving the consequent $\gen$ to the left:
    \begin{align}
        \gen
        \le    \frac{\beta}{\gamma_1} \frac{1}{n}\sum_{i=1}^n \ex{\sampleLoss(\weight + \Delta G^i, \sample_i)} + \frac{\beta + \gamma_1}{2 n} \sum_{i=1}^n \wI(\weight + \Delta G^i; \individualSample \mid \smoothCondition) \\+ \frac{\beta}{\gamma_2} \left(2\ex{\emploss{\weight}} + \gen\right) + \frac{\beta + \gamma_2}{n} \sum_{i=1}^n \ex{\norm{\Delta G^i}^2},\\
        (1 - \frac{\beta}{\gamma_2})\gen
        \le    \frac{\beta}{\gamma_1} \frac{1}{n}\sum_{i=1}^n \ex{\sampleLoss(\weight + \Delta G^i, \sample_i)} + \frac{\beta + \gamma_1}{2 n} \sum_{i=1}^n \wI(\weight + \Delta G^i; \individualSample \mid \smoothCondition) \\+ \frac{2 \beta}{\gamma_2} \ex{\emploss{\weight}} + \frac{\beta + \gamma_2}{n} \sum_{i=1}^n \ex{\norm{\Delta G^i}^2}.
    \end{align}
    Restricting $\gamma_2 > \beta$ allows us to divide the inequality by $1 - \frac{\beta}{\gamma_2}$ without changing the direction of the inequality:
    \begin{align}
        \gen
        \le    \frac{1}{1 - \frac{\beta}{\gamma_2}}\biggl(\frac{\beta}{\gamma_1 n}\sum_{i=1}^n \ex{\sampleLoss(\weight + \Delta G^i, \sample_i)} + \frac{\beta + \gamma_1}{2 n} \sum_{i=1}^n \wI(\weight + \Delta G^i; \individualSample \mid \smoothCondition) \\+ \frac{2 \beta}{\gamma_2} \ex{\emploss{\weight}} + \frac{\beta + \gamma_2}{n} \sum_{i=1}^n \ex{\norm{\Delta G^i}^2}\biggr).
    \end{align}
\end{proof}

Now that we have proved the omniscient bound for smooth losses, we turn to recovering some existing stability-based bounds.

\begin{proposition}\label{proposition:comparison_smooth}
    Under the same setting as \cref{theorem:omniscient_smooth}, we have
    \begin{align}
        \gen \le& \inf_{\set{\Delta G^i}, \gamma_1 > 0, \gamma_2 > \beta} [\text{\rhs{} of \cref{eq:omniscient_smooth}}]\\
        \le& \inf_{\gamma_2 > \beta} \frac{1}{1 - \frac{\beta}{\gamma_2}}\left(\frac{2 \beta}{\gamma_2} \ex{\emploss{\weight}} + \frac{\beta + \gamma_2}{2} \cdot \stability\right),
    \end{align}
    where 
    \begin{align}
        \stability \defeq \ex[\sample_{-i}, V]{\frac{1}{n} \sum_{i=1}^n \ex{\norm{\weight' - \weight}^2 \mid \sample_{-i}, V}}
    \end{align}
    (rephrased in our notation) is exactly the $\ell_2$ on-average model stability in Definition 4 of \citet{lei2020fine}.
\end{proposition}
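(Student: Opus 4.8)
The plan is to obtain the first inequality for free from \cref{theorem:omniscient_smooth} — it is literally \cref{eq:omniscient_smooth} with the infimum taken over all admissible triples $(\set{\Delta G^i}_{i=1}^n, \gamma_1, \gamma_2)$ — and to obtain the second inequality by evaluating \cref{eq:omniscient_smooth} at one carefully chosen omniscient perturbation while sending $\gamma_1 \to \infty$. The perturbation I would use is the leave-one-out ``oracle'' $\Delta G^i \defeq \ex{\weight \mid \sample_{-i}, V} - \weight$, exactly the device used in the proofs of \cref{theorem:minimax} and \cref{theorem:epsilon_learner}; it is a legitimate omniscient perturbation additionally depending on $\sample_{-i}$, and it makes $\weight + \Delta G^i = \ex{\weight \mid \sample_{-i}, V}$.

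Two reductions then collapse the information-type terms. First, conditioned on $\sample_{-i}$ the variable $\ex{\weight \mid \sample_{-i}, V}$ is a deterministic function of the internal randomness $V$; since $V$ is independent of the whole training sample, $V \perp \sample_i \mid \sample_{-i}$, hence $\weight + \Delta G^i \perp \sample_i \mid \sample_{-i}$, and \cref{lemma:wass_info_under_independence} gives $\wI(\weight + \Delta G^i; \sample_i \mid \sample_{-i}) = 0$, so the second summand of the \rhs{} of \cref{eq:omniscient_smooth} vanishes for every $\gamma_1$. Second, $\beta$-smoothness and non-negativity of $\sampleLoss$ bound $\sampleLoss(\weight + \Delta G^i, \sample_i)$ by a quadratic in the perturbed weight, so $\ex{\sampleLoss(\weight + \Delta G^i, \sample_i)}$ is finite and the first summand $\frac{\beta}{\gamma_1 n}\sum_i \ex{\sampleLoss(\weight + \Delta G^i, \sample_i)}$ tends to $0$ as $\gamma_1 \to \infty$. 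The prefactor $1/(1-\beta/\gamma_2)$ does not involve $\gamma_1$, so taking $\inf_{\gamma_1 > 0}$ for this fixed $\Delta G^i$ leaves only $\frac{1}{1-\beta/\gamma_2}\left(\frac{2\beta}{\gamma_2}\ex{\emploss{\weight}} + \frac{\beta+\gamma_2}{n}\sum_i \ex{\norm{\Delta G^i}^2}\right)$.

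It remains to recognize the last term as the on-average stability. Here $\ex{\norm{\Delta G^i}^2} = \ex{\norm{\weight - \ex{\weight \mid \sample_{-i}, V}}^2}$, and applying \cref{lemma:expected_squared_distance_to_center} \emph{conditionally} on $(\sample_{-i}, V)$ — under which $\weight$ and its copy $\weight'$ obtained by resampling the $i$-th point are I.I.D. — yields $\ex{\norm{\weight - \ex{\weight \mid \sample_{-i}, V}}^2 \mid \sample_{-i}, V} \le \tfrac12 \ex{\norm{\weight' - \weight}^2 \mid \sample_{-i}, V}$; taking the outer expectation and averaging over $i$ gives $\frac1n\sum_i \ex{\norm{\Delta G^i}^2} \le \tfrac12 \stability$. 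Substituting and then taking $\inf_{\gamma_2 > \beta}$ produces the claimed bound, and matching $\stability$ with Definition 4 of \citet{lei2020fine} is a pure change of notation.

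The computation is short once the right perturbation is identified; the main (though modest) obstacle is the measurability and independence bookkeeping that forces the Wasserstein-information term to zero — conditioning on $\sample_{-i}$ must turn $\ex{\weight\mid\sample_{-i},V}$ into a $V$-measurable object, and $V \perp \sample_i \mid \sample_{-i}$ must be invoked — together with the mild integrability argument justifying the $\gamma_1 \to \infty$ limit.
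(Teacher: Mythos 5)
Your proposal is correct and follows essentially the same route as the paper: choose the leave-one-out oracle $\Delta G^i = \ex{\weight\mid\sample_{-i},V} - \weight$ so that $\weight + \Delta G^i$ becomes $(\sample_{-i},V)$-measurable and hence conditionally independent of $\sample_i$ (killing the Wasserstein-information term via \cref{lemma:wass_info_under_independence}), send $\gamma_1\to\infty$, and identify the remaining $\frac{1}{n}\sum_i\ex{\norm{\Delta G^i}^2}$ term with $\tfrac12\stability$ via \cref{lemma:expected_squared_distance_to_center} applied conditionally. The only cosmetic differences are that you write the last step as an inequality where the paper records it (correctly) as an equality, and you are slightly more explicit about the conditional-independence bookkeeping and the integrability needed for the $\gamma_1\to\infty$ limit.
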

\begin{remark}
    \cref{proposition:comparison_smooth} recovers the relationship between stability and generalization in \citet{lei2020fine}'s Theorem 2(b) up to constants.
\end{remark}
\begin{proof}
    After setting $\Delta G^i = -W + \ex{W \mid \smoothCondition, V}$ as a function of $\smoothCondition$ and $V$, we have $W + \Delta G^i = \ex{W\mid \smoothCondition, V}$, which is a function of $\individualSample$-independent $\smoothCondition$ and $V$. As a result, $W + \Delta G^i$ is independent of $\individualSample$ and $\wI(W + \Delta G^i; \individualSample \mid \smoothCondition) = 0$ according to \cref{lemma:wass_info_under_independence}. Therefore, we have
    \begin{align}
        \gen 
        \le& \inf_{\set{\Delta G^i}, \gamma_1 > 0, \gamma_2 > 0} [\text{\rhs{} of \cref{eq:omniscient_smooth}}]\\
        \le& \inf_{\gamma_1 > 0, \gamma_2 > \beta} \frac{1}{1 - \frac{\beta}{\gamma_2}}\biggl(\frac{\beta}{\gamma_1 n}\sum_{i=1}^n \ex{\sampleLoss(\weight + \Delta G^i, \sample_i)}+ \frac{2 \beta}{\gamma_2} \ex{\emploss{\weight}} + \frac{\beta + \gamma_2}{n} \sum_{i=1}^n \ex{\norm{\Delta G^i}^2}\biggr)\\
        \le& \inf_{\gamma_2 > \beta} \frac{1}{1 - \frac{\beta}{\gamma_2}}\left(\frac{2 \beta}{\gamma_2} \ex{\emploss{\weight}} + \frac{\beta + \gamma_2}{n} \sum_{i=1}^n \ex{\norm{\ex{\weight \mid \smoothCondition, V} - \weight}^2}\right) \quad \quad \quad \quad (\gamma_1 \to +\infty)\\
    \end{align}
    With a closer look, one can find on-average model stability \citep{lei2020fine} term at \rhs{}:
    \begin{align}
        &   \frac{1}{n} \sum_{i=1}^n \ex{\norm{\ex{\weight \mid \smoothCondition, V} - \weight}^2}
        =   \frac{1}{n} \sum_{i=1}^n \ex[\smoothCondition, V]{\ex{\norm{\ex{\weight' \mid \smoothCondition, V} - \weight}^2 \mid \smoothCondition, V}}\\
        =&  \frac{1}{2 n} \sum_{i=1}^n \ex[\smoothCondition, V]{\ex{\norm{\weight' - \weight}^2 \mid \smoothCondition, V}}
        =   \frac{1}{2} \underbrace{\ex[\trainingSet, \trainingSet', V]{\frac{1}{n} \sum_{i=1}^n \norm{\mathcal{A}(\trainingSet, V) - \mathcal{A}(\trainingSet^{(i)}, V)}^2}}_{\text{$\ell_2$ on-average model stability in Definition 4 of \citet{lei2020fine}}},
    \end{align}
    where the second step follows \cref{lemma:expected_squared_distance_to_center}, $\mathcal{A}(\cdot, v)$ denotes the SGD when the random seed is $v$ and $\trainingSet^{(i)}$ means replacing the $i$-th sample of $\trainingSet$ with the $i$-th sample from $\trainingSet'$.
    
\end{proof}

Now that we have recovered stability arguments, we can directly borrow stability of SGD to derive excess risk bounds. The following results are based on the on-average model stability bound derived by \citet{lei2020fine}.
\newcommand{\optim}{\instanceWeight^*}
\newcommand{\fixed}{\bar{\instanceWeight}}

\begin{proposition}
    Assume the loss is non-negative, convex and $\beta$-smooth. Assume the training algorithm is projected SGD that starts from a fixed $\weight_0 \defeq \instanceWeight_0 \in \weights$ and runs $T$ steps with non-increasing step sizes $\set{\eta_t}_{t=1}^{T+1}$ such that $\eta_t \le 1/2\beta$. 
    Then for any $\gamma > \beta$, we have the following excess risk bound:
    \begin{align}
        &\ex{\poploss{\accumulatedWeight} - \poploss{\optim}}\\
        \le&  \frac{2 \beta}{\gamma_2 - \beta} \poploss{\optim} + \frac{1 + T/n}{n} \frac{4 e \gamma_2 \beta (\beta + \gamma_2)}{\gamma_2 - \beta} \left(\eta_1 \norm{\optim}^2 + 2 \sum_{t=0}^T \eta_{t+1} \left(\sum_{\tau=0}^{t-1} \eta_{\tau+1}^2 \poploss{\optim}\right) / \sum_{\tau=0}^{T} \eta_{\tau+1} \right)\\ & + \frac{\gamma_2 + \beta}{\gamma_2 - \beta} \left((1/2 + \beta \eta_1) \norm{\optim}^2 + 2 \beta \sum_{t=0}^T \eta_{t+1}^2 \poploss{\optim}\right) / \sum_{\tau=0}^{T} \eta_{\tau+1},
    \end{align}
    where $\accumulatedWeight$ is the accumulated weight
    \begin{align}
        \accumulatedWeight \defeq \frac{\sum_{t=0}^T \eta_{t+1} \weight_t}{\sum_{t=0}^T \eta_{t+1}}.
    \end{align}
\end{proposition}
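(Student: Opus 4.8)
The plan is to split the excess population risk into a generalization term and an empirical optimization term, bound the generalization term with \cref{proposition:comparison_smooth} (the stability-style $2$-Wasserstein bound extracted from the omniscient bound \cref{theorem:omniscient_smooth}), bound the optimization term with the classical convex analysis of projected SGD, and then recombine the two pieces while resolving the self-referential empirical-risk factor $1/(1-\beta/\gamma_2)$ produced by \cref{proposition:comparison_smooth}. Since $\optim\in\weights$ is data-independent we have $\ex{\emploss{\optim}}=\poploss{\optim}$, so the starting point is the decomposition
\begin{align}
    \ex{\poploss{\accumulatedWeight}-\poploss{\optim}}
    = \ex{\poploss{\accumulatedWeight}-\emploss{\accumulatedWeight}} + \ex{\emploss{\accumulatedWeight}-\emploss{\optim}},
\end{align}
where the algorithm's output is taken to be the weight-averaged iterate $\accumulatedWeight$, so that \cref{proposition:comparison_smooth} applies verbatim with $\weight=\accumulatedWeight$.

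For the first term, \cref{proposition:comparison_smooth} gives, for any $\gamma_2>\beta$,
\begin{align}
    \ex{\poploss{\accumulatedWeight}-\emploss{\accumulatedWeight}}
    \le \frac{1}{1-\beta/\gamma_2}\left(\frac{2\beta}{\gamma_2}\ex{\emploss{\accumulatedWeight}} + \frac{\beta+\gamma_2}{2}\,\stability\right),
\end{align}
so it remains to supply (i) a bound on the $\ell_2$ on-average model stability $\stability$ of averaged projected SGD and (ii) an a-priori bound on $\ex{\emploss{\accumulatedWeight}}$. For (i) I would reproduce \citet{lei2020fine}'s stability computation: expand the recursion for $\norm{\weight_t-\weight_t'}^2$ between the runs on $\trainingSet$ and on its $i$-th perturbation, use the $1$-expansiveness of a projected gradient step on a convex $\beta$-smooth loss when $\eta_t\le 1/2\beta$, apply the self-bounding inequality $\norm{\nabla\sampleLoss(\instanceWeight,\instanceSample)}^2\le 2\beta\,\sampleLoss(\instanceWeight,\instanceSample)$, and average over $i$ and over the with-replacement minibatch draws; tracking the probability that a step touches the replaced sample yields the $1+T/n$ weighting and an $e$ factor, giving a bound of the shape $\stability\le \frac{8e\beta(1+T/n)}{n}\bigl(\eta_1\norm{\optim}^2 + 2\sum_{t=0}^{T}\eta_{t+1}(\sum_{\tau<t}\eta_{\tau+1}^2\poploss{\optim})/\sum_{\tau=0}^{T}\eta_{\tau+1}\bigr)$, where $\poploss{\optim}$ enters because the empirical losses along the trajectory are themselves controlled in expectation by comparison with $\optim$.

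For the optimization term (and simultaneously for input (ii)) I would invoke the standard convex regret bound for projected SGD on the empirical objective, $\ex{\emploss{\accumulatedWeight}-\emploss{w}}\le \bigl(\frac12\norm{w-\instanceWeight_0}^2 + \sum_{t=0}^{T}\eta_{t+1}^2\ex{\norm{\nabla\sampleLoss(\weight_t,\cdot)}^2}\bigr)/\sum_{\tau=0}^{T}\eta_{\tau+1}$, and then apply self-bounding once more together with a comparator bound on $\ex{\emploss{\weight_t}}$ to convert the gradient-norm sum into $\norm{\optim}^2$- and $\poploss{\optim}$-dependent quantities, obtaining $\ex{\emploss{\accumulatedWeight}-\emploss{\optim}}\le \bigl((1/2+\beta\eta_1)\norm{\optim}^2 + 2\beta\sum_{t=0}^{T}\eta_{t+1}^2\poploss{\optim}\bigr)/\sum_{\tau=0}^{T}\eta_{\tau+1}$; the same expression also bounds $\ex{\emploss{\accumulatedWeight}}-\poploss{\optim}$. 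Substituting this into the stability inequality, using $\frac{1}{1-\beta/\gamma_2}\cdot\frac{2\beta}{\gamma_2}=\frac{2\beta}{\gamma_2-\beta}$, $\frac{2\beta}{\gamma_2-\beta}+1=\frac{\gamma_2+\beta}{\gamma_2-\beta}$ and $\frac{1}{1-\beta/\gamma_2}\cdot\frac{\beta+\gamma_2}{2}=\frac{\gamma_2(\beta+\gamma_2)}{2(\gamma_2-\beta)}$, then plugging in the bound on $\stability$ and adding back the optimization error from the decomposition, collects exactly the three stated groups of terms.

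The main obstacle is the bookkeeping rather than any single estimate: the empirical risk $\ex{\emploss{\accumulatedWeight}}$ and, more delicately, the empirical losses $\ex{\emploss{\weight_t}}$ that appear throughout the stability recursion must be bounded \emph{before} the argument can be closed, so that the self-referential factor $1/(1-\beta/\gamma_2)$ (which already forces $\gamma_2>\beta$) is resolved without circularity; and matching the explicit constants — the $8e$, the $1+T/n$ weighting, and the precise $\eta$-weighted double sums — requires carefully re-deriving \citet{lei2020fine}'s on-average stability bound rather than invoking a black-box stability constant.
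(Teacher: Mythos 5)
Your proposal is correct and produces the same bound with the same constants, but it reorganizes the argument in a way that is worth flagging as a genuine (if small) difference. You treat $\accumulatedWeight$ as the algorithm's output from the outset, apply \cref{proposition:comparison_smooth} \emph{once} to $\accumulatedWeight$, and absorb the averaging inside the stability estimate: by convexity of $\norm{\cdot}^2$ the $\ell_2$ on-average model stability of the averaged iterate is bounded by the $\eta$-weighted average of the per-step stabilities $\stability_t$, so plugging in Lei \& Ying's Theorem 3 and Eq.~(A.5) with $p=n/T$ produces exactly the $\frac{1+T/n}{n}$ and $e$ factors you describe. The paper instead applies \cref{proposition:comparison_smooth} to every intermediate iterate $\weight_t$, forms the $\eta$-weighted sum $\sum_t \eta_{t+1}\gen[\mu^n][\prob{\weight_t\mid\trainingSet}]$, uses \cref{eq:optimization_error_bound} to eliminate $\sum_t\eta_{t+1}\ex{\emploss{\weight_t}}$, and only at the very last line divides by $\sum_\tau\eta_{\tau+1}$ and invokes Jensen on the convex $\poploss{\cdot}$ to pass to $\accumulatedWeight$. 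In short, you apply Jensen on the stability side up front, while the paper applies Jensen on the risk side at the end; both use the same ingredients (Lemma A.2(c), Eq.~(A.5), Theorem 3 of Lei \& Ying, the choice $p=n/T$, and the algebraic identity $\tfrac{2\beta}{\gamma_2-\beta}+1=\tfrac{\gamma_2+\beta}{\gamma_2-\beta}$), and the two inequalities commute so the final constants agree. Your version is arguably cleaner for this particular statement since it applies the omniscient/Wasserstein bound to the actual output; the paper's per-step version keeps the trajectory-level bookkeeping explicit, which would be the more flexible starting point if one later wanted tail-averaged or last-iterate variants. One small correction to the plan: you need not \emph{re-derive} the on-average stability recursion from scratch — the paper simply invokes Theorem 3 of \citet{lei2020fine} as a black box and then substitutes $p=n/T$, and this suffices with your organization too; the only non-black-box step is the convexity-of-$\norm{\cdot}^2$ argument that converts per-step stabilities into a stability for $\accumulatedWeight$, which you should state explicitly rather than leave implicit in "giving a bound of the shape\ldots".
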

\begin{remark}
    In separable settings, \ie{} when $\poploss{\optim} = 0$, the excess risk bound simplifies to 
    \begin{align}
        &\ex{\poploss{\accumulatedWeight} - \poploss{\optim}}\\
        \le&   \frac{1 + T/n}{n} \frac{4 e \gamma_2 \beta (\beta + \gamma_2)}{\gamma_2 - \beta} \left(\eta_1 \norm{\optim}^2\right) + \frac{\gamma_2 + \beta}{(\gamma_2 - \beta) \sum_{\tau=0}^{T} \eta_{\tau+1}} \left((1/2 + \beta \eta_1) \norm{\optim}^2\right).
    \end{align}
    After setting $\eta_t = \eta \le 1/2\beta$ and reparameterizing $\gamma_2 = k \beta$ for $k>1$,  we have
    \begin{align}
        \ex{\poploss{\accumulatedWeight} - \poploss{\optim}}
        \le&   \left(\frac{4 e k \beta^2 \eta}{n} + \frac{T}{n^2} \cdot 4 e k \beta^2 \eta  + \frac{1/2 + \beta \eta}{T \eta} \right) \left(\frac{k+1}{k-1} \norm{\optim}^2\right).
    \end{align}
    By minimizing over $T$, we can obtain the following:
    \begin{align}
        \ex{\poploss{\accumulatedWeight} - \poploss{\optim}}
        \le&   \left(\frac{4 e k \beta^2 \eta}{n} + \frac{2}{n} \cdot \sqrt{4 e k \beta^2  \cdot (1/2 + \beta \eta)} \right) \left(\frac{k+1}{k-1} \norm{\optim}^2\right).
    \end{align}
    Now set $\eta = 1/2\beta$ to obtain the following:
    \begin{align}
        \ex{\poploss{\accumulatedWeight} - \poploss{\optim}}
        \le&   \inf_{k > 1} \frac{2 \beta }{n} \left(e k + 2 \sqrt{e k} \right) \left(\frac{k+1}{k-1} \norm{\optim}^2\right)
        =   O(\beta \norm{\optim}^2 / n),
    \end{align}
    which indicates an $O(1/n)$ sample complexity for smooth, convex and separable settings. This result recovers the Theorem 5 and the $O(1/n)$ rate in \citet{lei2020fine} up to constants.
\end{remark}
\begin{proof}
    This proof is adapted from Appendix C.2 of \citet{lei2020fine}.
    The excess risk can be decomposed into (excess) optimization error and generalization error. The optimization error bound is directly borrowed from \citet{lei2020fine}.The generalization error is bounded by combining the recovered stability bound \cref{proposition:comparison_smooth} and the stability of SGD from \citet{lei2020fine}.

    Let $\optim$ be the weight that achieves the optimal population loss.
    
    \subsubsection{Excess Optimization Error}

    According to Lemma A.2(c) of \citet{lei2020fine}, if the loss is non-negative, convex and $\beta$-smooth, and $\eta_t \le 1/2L$ and non-increasing, then for any constant $\fixed$ and constant $\instanceTrainingSet$, one has
    \begin{align}
        \sum_{\tau=0}^t \eta_{\tau + 1} \ex{\emploss[\instanceTrainingSet]{\weight_\tau} - \emploss[\instanceTrainingSet]{\fixed} \mid \trainingSet=\instanceTrainingSet} \le (1/2 + \beta \eta_1) \norm{\fixed}^2 + 2 \beta \sum_{\tau=0}^t \eta_{\tau+1}^2 \emploss[\instanceTrainingSet]{\fixed}.
    \end{align}
    After setting $\fixed$ to $\optim$ and taking expectation over training sets, we have
    \begin{align}
        \sum_{\tau=0}^t \eta_{\tau+1} \ex{\emploss{\weight_\tau} - \emploss{\optim}} \le (1/2 + \beta \eta_1) \norm{\optim}^2 + 2 \beta \sum_{\tau=0}^t \eta_{\tau+1}^2 \poploss{\optim}. \label{eq:optimization_error_bound}
    \end{align}

    Since the excess training error bound is only given after summing over steps, one has to sum the generalization error bound over steps as well. 

    \subsubsection{Stability and Generalization Error}

    Theorem 3 of \citet{lei2020fine} states that if the loss is non-negative, convex and $\beta$-smooth, and SGD has step size $\eta_t \le 2 / L$, the for any $p > 0$ one has
    \begin{align}
        \stability_{t} \le \frac{8(1 + 1/p) \beta}{n} \sum_{\tau=0}^{t-1} (1 + p/n)^{t - 1 - \tau} \eta_{\tau + 1}^2 \ex{\emploss{\weight_\tau}},
    \end{align}
    where $\stability_t$ is the $\ell_2$ on-average model stability at step $t$:
    \begin{align}
        \stability_t \defeq \ex[\sample_{-i}, V]{\frac{1}{n} \sum_{i=1}^n \ex{\norm{\weight'_t - \weight_t}^2 \mid \sample_{-i}, V}}
    \end{align}
    Let $\gamma_2 > 0$ be a constant.
    Plugging this stability bound into \cref{proposition:comparison_smooth} leads to
    \begin{align}
        & \gen[\mu^n][\prob{\weight_t \mid \trainingSet}]\\
        \le&    \frac{1}{1 - \beta / \gamma_2} \left(\frac{2 \beta}{\gamma_2} \ex{\emploss{\weight_t}} + \frac{\beta + \gamma_2}{2} \stability_t\right)\\
        \le&    \frac{1}{1 - \beta / \gamma_2} \left(\frac{2 \beta}{\gamma_2} \ex{\emploss{\weight_t}} + \frac{\beta + \gamma_2}{2} \frac{8(1 + 1/p) \beta}{n} \sum_{\tau=0}^{t-1} (1 + p/n)^{t - 1 - \tau} \eta_{\tau + 1}^2 \ex{\emploss{\weight_\tau}}\right)\\
        \le&    \frac{1}{1 - \beta / \gamma_2} \left(\frac{2 \beta}{\gamma_2} \ex{\emploss{\weight_t}} +\frac{4(1 + 1/p) (\beta + \gamma_2) \beta (1 + p/n)^{t - 1}}{n} \sum_{\tau=0}^{t-1} \eta_{\tau + 1}^2 \ex{\emploss{\weight_\tau}}\right)\\
    \end{align}
    There are empirical losses on the trajectory at \rhs{} of the above inequality. They can be bounded by Eq. (A.5) of \citet{lei2020fine}, which states given any training set $\instanceTrainingSet$ and any constant $\fixed \in \weights$, one has
    \begin{align}
        \sum_{\tau=0}^{t-1} \eta_{\tau+1}^2 \ex{\emploss[\instanceTrainingSet]{\weight_\tau} \mid \trainingSet = \instanceTrainingSet} \le \eta_1 \norm{\fixed}^2 + 2 \sum_{\tau=0}^{t-1} \eta_{\tau+1}^2 \emploss[\instanceTrainingSet]{\fixed}.
    \end{align}
    Setting $\fixed$ to $\optim$ and taking expectation over training sets, we have
    \begin{align}
        \sum_{\tau=0}^{t-1} \eta_{\tau+1}^2 \ex{\emploss{\weight_\tau}} \le \eta_1 \norm{\optim}^2 + 2 \sum_{\tau=0}^{t-1} \eta_{\tau+1}^2 \poploss{\optim}.
    \end{align}
    Plugging it back leads to
    \begin{align}
        \gen[\mu^n][\prob{\weight_t \mid \trainingSet}]
        \le&    \frac{1}{1 - \beta / \gamma_2} \left(\frac{2 \beta}{\gamma_2} \ex{\emploss{\weight_t}} +\frac{4(1 + 1/p) (\beta + \gamma_2) \beta (1 + p/n)^{t - 1}}{n} \left(\eta_1 \norm{\optim}^2 + 2 \sum_{\tau=0}^{t-1} \eta_{\tau+1}^2 \poploss{\optim}\right)\right)\\
    \end{align}
    As in \citet{lei2020fine}, one can choose $p = n/T$ to have $(1 + p/n)^{t-1} \le (1 + p/n)^{T-1} = (1 + 1/T)^{T-1} < e$. As a result, we have
    \begin{align}
        \gen[\mu^n][\prob{\weight_t \mid \trainingSet}]
        \le&    \frac{1}{1 - \beta / \gamma_2} \left(\frac{2 \beta}{\gamma_2} \ex{\emploss{\weight_t}} +\frac{4(1 + T/n) (\beta + \gamma_2) \beta e}{n} \left(\eta_1 \norm{\optim}^2 + 2 \sum_{\tau=0}^{t-1} \eta_{\tau+1}^2 \poploss{\optim}\right)\right)\\
    \end{align}
    To align the generalization error bounds with the weighted summation form of the optimization error, we weight them by $\eta_{t+1}$ and sum over steps the above inequality:
    \begin{align}
        &\sum_{t=0}^T \eta_{t+1} \gen[\mu^n][\prob{\weight_t \mid \trainingSet}]\\
        \le&    \sum_{t=0}^T  \frac{\eta_{t+1}}{1 - \beta / \gamma_2} \left(\frac{2 \beta}{\gamma_2} \ex{\emploss{\weight_t}} +\frac{4(1 + T/n) (\beta + \gamma_2) \beta e}{n} \left(\eta_1 \norm{\optim}^2 + 2 \sum_{\tau=0}^{t-1} \eta_{\tau+1}^2 \poploss{\optim}\right)\right)\\
        =&  \frac{1}{1 - \beta / \gamma_2} \left(\frac{2 \beta}{\gamma_2} \sum_{t=0}^T \eta_{t+1} \ex{\emploss{\weight_t}} + \frac{4(1 + T/n) (\beta + \gamma_2) \beta e}{n} \sum_{t=0}^T \eta_{t+1} \left(\eta_1 \norm{\optim}^2 + 2 \sum_{\tau=0}^{t-1} \eta_{\tau+1}^2 \poploss{\optim}\right)\right)\\
    \end{align}
    To get rid of the empirical losses $\sum_{t=0}^T \eta_{t+1} \ex{\emploss{\weight_t}}$, we apply \cref{eq:optimization_error_bound} again and obtain
    \begin{align}
        &\sum_{t=0}^T \eta_{t+1} \gen[\mu^n][\prob{\weight_t \mid \trainingSet}]\\
        \le&  \frac{1}{1 - \beta / \gamma_2}\Biggl(\frac{2 \beta}{\gamma_2} \left((1/2 + \beta \eta_1) \norm{\optim}^2 + 2 \beta \sum_{t=0}^T \eta_{t+1}^2 \poploss{\optim} + \sum_{t=0}^T \eta_{t+1} \poploss{\optim}\right) \\&\quad\quad\quad\quad\quad\quad+ \frac{4(1 + T/n) (\beta + \gamma_2) \beta e}{n} \sum_{t=0}^T \eta_{t+1} \left(\eta_1 \norm{\optim}^2 + 2 \sum_{\tau=0}^{t-1} \eta_{\tau+1}^2 \poploss{\optim}\right)\Biggr).\\
    \end{align} 
    After obtaining the above generalization error bound, the bound for excess risks can be obtained by summing up it and the optimization error bound \cref{eq:optimization_error_bound}:
    \begin{align}
        &\sum_{t=0}^T \eta_{t+1} \ex{\poploss{\weight_t} - \poploss{\optim}}\\
        \le& \frac{1}{1 - \beta / \gamma_2} \frac{2 \beta}{\gamma_2} \left((1/2 + \beta \eta_1) \norm{\optim}^2 + 2 \beta \sum_{t=0}^T \eta_{t+1}^2 \poploss{\optim} + \sum_{t=0}^T \eta_{t+1} \poploss{\optim}\right) \\&\quad\quad\quad\quad\quad\quad+ \frac{1}{1 - \beta / \gamma_2} \frac{4(1 + T/n) (\beta + \gamma_2) \beta e}{n} \sum_{t=0}^T \eta_{t+1} \left(\eta_1 \norm{\optim}^2 + 2 \sum_{\tau=0}^{t-1} \eta_{\tau+1}^2 \poploss{\optim}\right)\\ & + (1/2 + \beta \eta_1) \norm{\optim}^2 + 2 \beta \sum_{t=0}^T \eta_{t+1}^2 \poploss{\optim}\\
        \le&  \frac{2 \beta}{\gamma_2 - \beta} \sum_{t=0}^T \eta_{t+1} \poploss{\optim} + \frac{1 + T/n}{n} \frac{4 e \gamma_2 \beta (\beta + \gamma_2)}{\gamma_2 - \beta} \sum_{t=0}^T \eta_{t+1} \left(\eta_1 \norm{\optim}^2 + 2 \sum_{\tau=0}^{t-1} \eta_{\tau+1}^2 \poploss{\optim}\right)\\ & + \frac{\gamma_2 + \beta}{\gamma_2 - \beta} \left((1/2 + \beta \eta_1) \norm{\optim}^2 + 2 \beta \sum_{t=0}^T \eta_{t+1}^2 \poploss{\optim}\right)
    \end{align}
    The stated inequality can be obtained by dividing $\sum_{t=0}^T \eta_{t+1}$ and applying the Jensen's inequality to the convex $\emploss{\cdot}$.
\end{proof}

\end{modified}


\section{Discussion on the Limitation}\label{appendix:limitations}

\begin{modified}[\rfour]
    The major limitation of our bound is that our bound still relies on population gradients and Hessians.
    This limitation harms the applicability of our bound to self-certified algorithms \citep{perez-ortiz_tighter_2021}.

    However, the limitation is not unique to our bound, but is an inherent limitation of the auxiliary trajectory technique. The most essential step of this technique is to switch from the original trajectory to the auxiliary trajectory with better properties. However, one must relate the auxiliary trajectory back to the original trajectory by adding their differences into the bound. In this process, the loss differences are used to measure such differences, resulting in the population loss difference and population statistics. As a result, previous representative works all have explicit reliance on population statistics. See $\trainingSet'$ in \cref{lemma:wang,lemma:neu}. This reliance can be alleviated through some assumptions like $\poploss{\weight_T} \le \ex[\xi \sim \mathcal{N}(0, \sigma I)]{\poploss{\weight_T + \xi}}$ of \citet{wang_generalization_2021}. Nevertheless, one still must verify this assumptions on the population set to rigorously apply them, especially when the model is under-fitted or the generalization is bad so that the output weight is far from local minima of the population loss. To sum up, existing representative results based on auxiliary trajectory must rely on population statistics at least implicitly. 

    In terms of dependence to population/validation statistics, we also optimize the omniscient trajectory using validation statistics, which is a heavier dependence. This may forms unfair comparison with existing bounds. However, we have tried to make the comparison fair by allowing the existing bounds to rely on the validation statistics (see \cref{appendix:population_Hessian}). Even with full access to validation sets, the existing bounds cannot exploit them and are still much numerically looser than ours. Lastly, the results of the existing and our bounds can be seen as not only competitors, but also different trade-offs between the dependence on validation set and bound tightness. Our bound demonstrates how tight a bound can be if one allows heavy dependence on validation sets, while previous works show the looseness when one controls the access to validation sets. Future works can start from these two extremes to achieve a better trade-off or even break it.
\end{modified}

\end{document}